\documentclass{article} 
\usepackage{iclr2026_conference,times}

\usepackage{amsmath,amsfonts,bm}

\def\eqref#1{equation~\ref{#1}}

\def\1{\bm{1}}

\DeclareMathAlphabet{\mathsfit}{\encodingdefault}{\sfdefault}{m}{sl}
\SetMathAlphabet{\mathsfit}{bold}{\encodingdefault}{\sfdefault}{bx}{n}

\DeclareMathOperator*{\argmax}{arg\,max}
\DeclareMathOperator*{\argmin}{arg\,min}

\usepackage{url}

\usepackage{amsthm}

\newtheorem{lemma}{Lemma}

\newtheorem{remark}{Remark}
\usepackage{enumitem}
\usepackage[dvipsnames]{xcolor}
\usepackage[colorlinks=true, linkcolor=ForestGreen, citecolor=ForestGreen, urlcolor=ForestGreen]{hyperref} 
\usepackage{mathtools}
\usepackage{dsfont}
\usepackage{thm-restate}
\usepackage{amssymb}
\usepackage{booktabs}
\usepackage{wrapfig}

\usepackage{graphicx}
\graphicspath{{figs/}}

\DeclareMathOperator*{\maximize}{maximize}

\usepackage{algorithm}
\usepackage[linesnumbered,ruled,vlined,algo2e]{algorithm2e}
\usepackage{minitoc}
\usepackage{mathtools}
\mathtoolsset{showonlyrefs}

\usepackage{tikz}
\usetikzlibrary{positioning, matrix,fit}
\usetikzlibrary {arrows.meta}
\usetikzlibrary {quotes}
\usetikzlibrary{calc}
\usepackage{varwidth}
\usetikzlibrary{shapes.geometric}
\newcommand{\tboxed}[2][inner sep=2.5pt]{\ifmmode
	\tikz[baseline=(X.base),outer sep=0pt]{\node[draw,#1](X){\ensuremath{#2}};}
	\else
	\tikz[baseline=(X.base),outer sep=0pt]{\node[draw,#1](X){#2};}
	\fi
}
\newcommand{\circled}[2][inner sep=1pt]{\ifmmode
	\tikz[baseline=(X.base),outer sep=0pt]{\node[circle,draw,#1](X){\ensuremath{#2}};}
	\else
	\tikz[baseline=(X.base),outer sep=0pt]{\node[circle,draw,#1](X){#2};}
	\fi
}
\newcommand{\rhombed}[2][inner sep=1pt]{\ifmmode
	\tikz[baseline=(X.base),outer sep=0pt]{\node[diamond,draw,#1](X){\ensuremath{#2}};}
	\else
	\tikz[baseline=(X.base),outer sep=0pt]{\node[diamond,draw,#1](X){#2};}
	\fi
}
\newcommand{\trapezed}[2][inner sep=1pt]{\ifmmode
	\tikz[baseline=(X.base),outer sep=0pt]{\node[trapezium,draw,
		trapezium left angle=60,trapezium right angle=120,#1](X){\ensuremath{#2}};}
	\else
	\tikz[baseline=(X.base),outer sep=0pt]{\node[trapezium,draw,
		trapezium left angle=60,trapezium right angle=120,#1](X){#2};}
	\fi
}

\title{TreeGrad-Ranker: Feature Ranking via \\$O(L)$-Time Gradients for Decision Trees}

\author{
	Weida Li$^{1}$, Yaoliang Yu$^{2,3}$, Bryan Kian Hsiang Low$^{1}$\\
	\\
	$^{1}$Department of Computer Science, National University of Singapore, Republic of Singapore \\
	$^{2}$School of Computer Science, University of Waterloo, Canada\\
	$^{3}$ Vector Institute, Canada
	\\
	\texttt{vidaslee@gmail.com,
		yaoliang.yu@uwaterloo.ca,}\\
	\texttt{lowkh@comp.nus.edu.sg}
}

\iclrfinalcopy 
\begin{document}
	\maketitle
	\doparttoc 
	\faketableofcontents 
	
	\begin{abstract}
		We revisit the use of probabilistic values, which include the well-known Shapley and Banzhaf values, to rank features for explaining the local predicted values of decision trees.
		The quality of feature rankings is typically assessed with the  insertion and deletion metrics. 
		Empirically, we observe that co-optimizing these two metrics is closely related to a joint optimization that selects a subset of features to maximize the local predicted value while minimizing it for the complement.
		However, we theoretically show that probabilistic values are generally unreliable for solving this joint optimization. 
		Therefore, we explore deriving feature rankings by directly optimizing the joint objective. 
		As the backbone, we propose TreeGrad, which computes the gradients of the multilinear extension of the joint objective in $O(L)$ time for decision trees with $L$ leaves; these gradients include weighted Banzhaf values. Building upon TreeGrad, we introduce TreeGrad-Ranker, which aggregates the gradients while optimizing the joint objective to produce feature rankings, and TreeGrad-Shap, a numerically stable algorithm for computing Beta Shapley values with integral parameters.
		In particular, the feature scores computed by TreeGrad-Ranker satisfy all the axioms uniquely characterizing probabilistic values, except for linearity, which itself leads to the established unreliability.
		Empirically, we demonstrate that the numerical error of Linear TreeShap can be up to $10^{15}$ times larger than that of TreeGrad-Shap when computing the Shapley value.
		As a by-product, we also develop TreeProb, which generalizes Linear TreeShap to support all probabilistic values. In our experiments, TreeGrad-Ranker performs significantly better on both insertion and deletion metrics. Our code is available at \url{https://github.com/watml/TreeGrad}. 
	\end{abstract}
	
	\section{Introduction}
	So far, decision trees remain competitive as recent	empirical studies show that gradient-boosted decision trees \citep{friedman2001greedy} tend to perform better on larger or irregular tabular datasets compared to neural networks \citep{grinsztajn2022tree,mcelfresh2024neural}. 
	Often, decision trees are deemed easier to interpret due to their rule-based structures.

	In explaining model predictions, the Shapley value \citep{shapley1953value} has gained attention due to its uniqueness in satisfying the axioms of linearity, null, symmetry, and efficiency \citep{lundberg2017unified}. Generally, it is intractable to compute the Shapley value exactly. Thus, many efforts have focused on developing efficient Monte-Carlo methods \citep[e.g.,][]{jia2019towards,covert2021improving,zhang2023efficient,kolpaczki2024approximating,li2024faster,li2024one} or frameworks that train models to predict the Shapley value \citep{jethani2021fastshap,covert2023learning}.
	
	Despite that, the Shapley value for explaining decision trees can be accurately computed in polynomial time \citep{lundberg2020local,yang2021fast}. 	
	To our knowledge, the most efficient algorithm is Linear TreeShap \citep{Yu2022linear}, which runs in $ O(LD) $ time while consuming $ O(D^{2}+L) $ space. Here, $ D $ denotes the depth of the tree, whereas $ L $ refers to the number of leaves.
	Alternatively, \citet{karczmarz2022improved} advocated the use of the Banzhaf value \citep{banzhaf1965weighted} because it is more numerically stable to compute.
	
	On the other hand,  \citet{kwon2022weightedshap} argue that the Shapley value could be mathematically sub-optimal in ranking features and other members of Beta Shapley values  \citep{kwon2022beta} tend to perform better.
	Note that all the concepts mentioned belong to the family of probabilistic values uniquely characterized by the axioms of linearity, dummy, monotonicity, and symmetry \citep{weber1988probabilistic}. Empirically, \citet{kwon2022beta,kwon2022weightedshap,li2023robust} note that there is no universally the best choice of probabilistic values in ranking features or data. 	
	Then, a question arises: how reliable are probabilistic values in ranking features?
	
	The quality of feature rankings are typically measured by the insertion and deletion metrics \citep{petsiuk2018rise,jethani2021fastshap,covert2023learning}. 
	In this work, we start by demonstrating how co-optimizing these two metrics is closely related to a joint optimization that finds a subset of features to maximize the predicted value while its complement minimizes it instead. Using this perspective, we theoretically establish that
	\begin{itemize}
		\item if an attribution method is linear, which include all probabilistic values, then it is no better than random at optimizing the joint objective, as shown in Proposition~\ref{prop:impossibility};

		\item feature rankings induced by probabilistic values can be obtained by substituting a linear surrogate for the joint objective and solving the resulting linearized optimization; operationally, this linearization does not always faithfully represent the original objective.
		
	\end{itemize}
	To circumvent such pitfalls, we explore deriving feature rankings by directly optimizing the joint objective. The focus of our work is on decision trees, and our algorithmic contributions can be summarized as follows:
	\begin{itemize}
		\item As the backbone, we develop TreeGrad (Algorithm~\ref{alg:TreeGrad-full}), which computes the gradients of the multilinear extension of the joint objective in $O(L)$ time and space for decision trees. These gradients include weighted Banzhaf values \citep{li2023robust}.
		
		\item Building on TreeGrad, we introduce TreeGrad-Ranker (Algorithms~\ref{alg:TreeGrad-Ranker with GD} and~\ref{alg:TreeGrad-Ranker with adam}), which aggregates gradients while optimizing the joint objective to produce a vector of feature scores. This design is motivated by the fact that any symmetric semi-value can be expressed as an expectation over this gradient field (see Appendix~\ref{app:semi}). In particular, as proved in Theorem~\ref{thm:nonlinear attr}, the resulting feature scores satisfy all the axioms uniquely characterizing probabilistic values, except for linearity, which itself leads to the unreliability established in Proposition~\ref{prop:impossibility}.
		
		\item Although Linear TreeShap is theoretically guaranteed to compute the Shapley value in polynomial time, as shown in Figure~\ref{fig:inaccuracy}, it suffers from severe error accumulation. Based on TreeGrad, we therefore develop TreeGrad-Shap (Algorithms~\ref{alg:TreeGrad-Shap} and~\ref{alg:vectorized treegrad-shap}), an equally efficient and numerically stable algorithm for computing Beta Shapley values \citep{kwon2022beta} with integral parameters. Empirically, the numerical error of Linear TreeShap can be up to $10^{15}$ times larger than that of TreeGrad-Shap when computing the Shapley value.
		
		\item As a by-product, we propose TreeProb (Algorithm~\ref{alg:TreeProb}), which generalizes Linear TreeShap to all probabilistic values. Specifically, the numerical instability in Linear TreeShap stems from the use of an ill-conditioned Vandermonde matrix. We alleviate this issue by replacing it with a well-conditioned matrix. Although this introduces numerical underflow for deep decision trees, the resulting errors are less severe, as shown in Figure~\ref{fig:inaccuracy}.

		
	\end{itemize}

	
	\section{Background}
	\paragraph{Notation.}
	Let $ N $ be the number of features and write $[N] \coloneqq \{1,2,\dots, N\}$. Then, $\mathbf{x} \in \mathbb{R}^{N}$ denotes a specific sample. For convenience, the cardinality of a set $ S \subseteq [N] $ is denoted by its lower-case letter $ s $. 
	Meanwhile, we write $ S \cup i $ and $ S \setminus i $ instead of $ S \cup \{ i \} $ and $ S \setminus \{ i \} $. 
	
	\begin{figure}
		\centering
		\begin{tikzpicture}[baseline=11ex,scale=0.8]
			\node[shape=circle,inner sep=4.5pt,minimum size=2pt,draw=black,fill=black,text=white] (r) at (0,0) {$r$};
			\node[shape=circle,inner sep=3pt,minimum size=2pt,draw=black] (v1) at (4,-1) {$v_1$};
			\node[shape=rectangle,inner sep=5pt,minimum size=2pt,draw=black] (v2) at (4,1) {$v_2$};
			\node[shape=circle,inner sep=3pt,minimum size=2pt,draw=black] (v4) at (8,0) {$v_4$};
			\node[shape=rectangle,inner sep=5pt,minimum size=2pt,draw=black] (v3) at (8,-2) {$v_3$};
			\node[shape=rectangle,inner sep=5pt,minimum size=2pt,draw=black] (v6) at (12,1) {$v_6$};
			\node[shape=rectangle,inner sep=5pt,minimum size=2pt,draw=black] (v5) at (12,-1) {$v_5$};
			
			\draw [-Stealth] (r)  -- (v1) node[midway, rotate=-14, fill=gray,text=white, inner sep=1pt] {\small $x_{i} \leq \tau_r$};  
			\draw [-Stealth] (r)  -- (v2) node[midway, rotate=14, fill=gray,text=white, inner sep=1pt] {\small $x_{i} > \tau_r$}; 
			\draw [-Stealth] (v1) -- (v3) node[midway, rotate=-14, fill=gray,text=white, inner sep=1pt] {\small $x_{j} \leq \tau_{v_1}$}; 
			\draw [-Stealth] (v1) -- (v4) node[midway, rotate=14, fill=gray,text=white, inner sep=1pt] {\small $x_{j} > \tau_{v_1}$}; 
			\draw [-Stealth] (v4) -- (v5) node[midway, rotate=-14, fill=gray,text=white, inner sep=1pt] {\small $x_{k} \leq \tau_{v_4}$}; 
			\draw [-Stealth] (v4) -- (v6) node[midway, rotate=14, fill=gray,text=white, inner sep=1pt] {\small $x_{k} > \tau_{v_4}$}; 
			
			\node at (2,-.6) [below, rotate=-10] {$l_{e_1}, w_{e_1}$};
			\node at (2, .6) [above, rotate=14] {$l_{e_2}, w_{e_2}$};
			\node at (6,-.4) [above, rotate=14] {$l_{e_4}, w_{e_4}$};
			\node at (6,-1.6) [below, rotate=-10] {$l_{e_3}, w_{e_3}$};
			\node at (10,-.6) [below, rotate=-10] {$l_{e_5}, w_{e_5}$};
			\node at (10,.6) [above, rotate=14] {$l_{e_6}, w_{e_6}$};
			
			\node at (0,-.5) [below] {\begin{varwidth}{1.2cm}
					$c_r = 25$ $ l_r = i$\end{varwidth}};
			
			\node at (4,-1.5) [below] {\begin{varwidth}{1.4cm}
					$c_{v_1} = 22$ $ l_{v_1} = j$\end{varwidth}};
			
			\node at (4.5,1) [right] {\begin{varwidth}{1.4cm}
					$c_{v_2} = 3$ $ \varrho_{v_2} = 0.1$\end{varwidth}};
			
			\node at (8.5,-2) [right] {\begin{varwidth}{1.4cm}
					$c_{v_3} = 2$ $ \varrho_{v_3} = 0.3$\end{varwidth}};
			
			\node at (8,.5) [above] {\begin{varwidth}{1.4cm}
					$c_{v_4} = 20$ $ l_{v_4} = k$\end{varwidth}};
			
			\node at (12.5,-1) [right] {\begin{varwidth}{1.4cm}
					$c_{v_5} = 10$ $ \varrho_{v_5} = 0.8$\end{varwidth}};
			
			\node at (12.5,1) [right] {\begin{varwidth}{1.4cm}
					$c_{v_6} = 10$ $ \varrho_{v_6} = 0.7$\end{varwidth}};
		\end{tikzpicture}
		\caption{An example of a binary decision tree to illustrate our notation. Each node $v$ is covered by $c_v$ instances. Non-leaf nodes \circled{$v$} split a feature $l_v$ using threshold $\tau_v$ while leaf nodes \tboxed{$v$} make prediction $\varrho_v$. Each edge $e = (u,v)$ is equipped with weight $w_e = c_v/c_u$ and feature $l_e = l_u$.}
		\label{fig:decision trees}
	\end{figure}
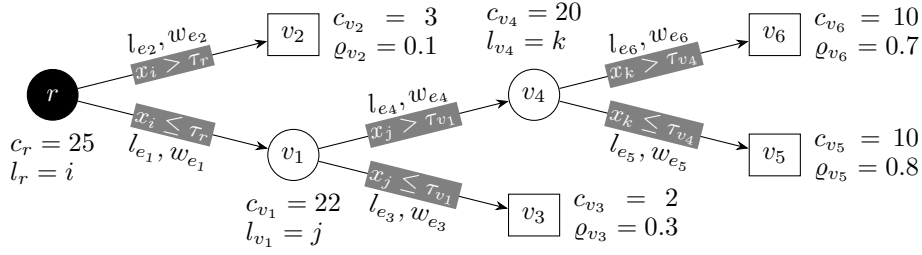
	
	A trained decision tree $ f $ is associated with a directed tree $ \mathcal{T} \coloneqq (V, E) $ where $ V $ and $ E $ refer to all the nodes and edges. $ \mathcal{T} $ is rooted because there exists a unique root $ r \in V $ such that for every $ v \in V $ we can find a path $ P_{v} \subseteq E $ starting from $ r $ to $ v $. Moreover, each non-root node $ v $ has one parent $ m_{v} $, and each non-leaf node $ v  $ contains two children $ a_{v} $ and $ b_{v} $. 
	For each non-leaf node $ v $, it is associated with a threshold $ \tau_{v} \in \mathbb{R} $ concerning a specific feature $ l_{v} \in [N] $; $ \mathbf{x} $ is directed into $ a_{v} $ if $ x_{l_{v}} \leq \tau_{v} $ and $ b_{v} $ otherwise.
	For each $ v \in V $, let $ c_{v} $ be the number of training data covered by the subtree rooted at $ v $. Each edge $ e \coloneqq (u\to v) \in E $ is weighted by $ 0 < \frac{c_{v}}{c_{u}} < 1 $, and we write $ h_{e} \coloneqq v $; 
	plus, $ e $ is labeled with $ l_{u} $. 
	Therefore, $ w_{e} $ and $ l_{e} $ refer to the weight and the label of $ e $.
	Then, we define $ P_{i,v} \coloneqq \{ e \in P_{v} \mid l_{e} = i \} $ and $ F_{v} \coloneqq \{ l_{e} \colon e \in P_{v} \} $.	
	For every edge $ e \in P_{v} $, denote $ e^{\uparrow} $ by its nearest ancestor along $ P_{v} $ that shares the same label with $ e $; $ e^{\uparrow} $ may not exist.
	For each node $ v $,
	let $ L_{v} $ be the set that contains all the leaves of $ \mathcal{T} $ that can be reached from $ v $. 
	Then, we define $ L^{\dagger}_{v} \coloneqq \{ v' \in L_{v} \mid (m_{v} \rightarrow v) \text{ is the nearest edge ancestor of $ v' $ labeled by } l_{m_{v}} \} $.
	For each leaf node $ v \in L_{r} $, let $ \varrho_{v} \in \mathbb{R} $ denote its predicted value. 
	We denote by $ D $ the depth of $\mathcal{T}$ and by $ L $ the number of leaves of $ \mathcal{T} $. Note that $ |V| = 2L - 1 $ and $ D < L $. We assume that all the features in $ [N] $ are used by $ \mathcal{T} $, and thus $ N < |V| $.
	A simple example with $D=3$ and $L=4$ can be found in Figure~\ref{fig:decision trees}.

	For a polynomial $ p(y) $ of degree $ \mathrm{deg}(p) $, we write $ p(y) = \sum_{k=0}^{\mathrm{deg}(p)} p_{k+1}y^{k} $, i.e., we denote its coefficients by the corresponding bold letter $ \mathbf{p} \in \mathbb{R}^{\mathrm{deg}(p) + 1} $. Then, the inner product of two polynomials is $ \langle p(y),\, q(y) \rangle \coloneqq \langle \mathbf{p},\, \mathbf{q} \rangle $.

	\subsection{Probabilistic Values and Semi-Values}
	One popular way of ranking features is the use of probabilistic values, as it uniquely satisfies the axioms of linearity, dummy, monotonicity, and symmetry \citep{weber1988probabilistic}. Precisely, each probabilistic value can be expressed as
	\begin{equation} \label{eq:probabilistic values}
		\begin{gathered}
			\phi_{i}^{\boldsymbol\omega}(f_{\mathbf{x}}) \coloneqq \sum_{S \subseteq [N]\setminus i} \omega_{s+1}\cdot\left[ f_{\mathbf{x}}(S\cup i) - f_{\mathbf{x}}(S) \right],
		\end{gathered}
	\end{equation}
	where $ \boldsymbol\omega \in \mathbb{R}^{N} $ is non-negative with $ \sum_{k=1}^{N}\binom{N-1}{k-1}\omega_{k} = 1 $ and $ f_{\mathbf{x}}(S) \coloneqq \mathbb{E}_{\mathbf{X}_{S^{c}}}[f(\mathbf{x}_{S},\mathbf{X}_{S^{c}})] $ where $\mathbf{x}_{S}$ is the restriction of $\mathbf{x}$ on $S$ and $S^{c} \coloneqq [N]\setminus S$. Following \citet{lundberg2020local}, there are two choices for sampling $\mathbf{X}_{S^{c}}$. One choice is to use the marginal distribution; however, this has been argued to be unreliable, as it uses off-manifold data and thus may produce manipulable explanations \citep{slack2020fooling, frye2021shapley}. We follow the other choice (Algorithm~\ref{alg:f_x}), which may be seen as a convenient approximation of the conditional distribution. This approach has recently been adopted for computing Shapley interactions \citep{muschalik2024beyond}.
	If we further add the consistency axiom, the family of probabilistic values shrinks, and the remaining members are referred to as semi-values \citep{dubey1981value,li2023robust}. Specifically, $ \phi^{\boldsymbol\omega} $ is a semi-value if and only if there exists a Borel probability measure $ \mu $ over the closed interval $ [0,1] $ such that for every $ k \in [N] $,
	\begin{align} \label{eq:coefficients of semi-values}
		\omega_{k} = \int_{0}^{1} t^{k-1}(1-t)^{N-k}\mathrm{d}\mu(t).
	\end{align}
	In particular, weighted Banzhaf values and Beta Shapley values are semi-values, with the latter including the Shapley value \citep{shapley1953value} as a special case. We refer the reader to Appendix~\ref{app:semi} for more details. $ \phi^{\mu}(f_{\mathbf{x}}) $ will be used to denote semi-values.

	\subsection{Linear TreeShap}
	Previously, \citet{Yu2022linear} introduced Linear TreeShap capable of computing the Shapley value $ \phi^{\mathrm{Shap}}(f_{\mathbf{x}}) $ in $ O(LD) $ time and $ O(D^{2}+L) $ space. We notice that a counterpart with the same time and space complexities also appeared in \citet[Appendix E]{karczmarz2022improved}, conveniently referred to  as TreeShap-K.
	Since \citet{Yu2022linear} offer a more compact set of definitions for describing how the relevant terms traverse the tree, we follow the concepts used by Linear TreeShap. In Appendix~\ref{app:algorithms}, we demonstrate how to evolve Linear TreeShap into TreeShap-K.
	
	
	\textbf{Linearization of the approximation of $ f_{S} $.}
	For each $ i \in [N] $ and $ v \in L_{r} $, we define $ x \in \Pi_{i,v} $ if $ \mathbf{x} $ satisfies all the splitting criteria along the path $ P_{v} $ that involves feature $ i $. 
	The convention is $ x \in \Pi_{i,v} $ if $ P_{i,v} = \emptyset $. Then, for each $ i \in [N] $ and $ v \in L_{r} $, a function $ \gamma_{i,v} \colon \mathbb{R}^{N}\to\mathbb{R} $ is defined by $\gamma_{i,v}(\mathbf{x}) \coloneqq \mathds{1}_{\mathbf{x} \in \Pi_{i,v}}\cdot \prod_{e \in P_{i,v}} \frac{1}{w_{e}}$.
	Then, $	f_{\mathbf{x}}(S) = \sum_{v\in L_{r}} R^{v}_{\mathbf{x}}(S)$
	where each path-dependent rule function $ R^{v}_{\mathbf{x}}\colon 2^{[N]} \to \mathbb{R} $ is defined by setting $R^{v}_{\mathbf{x}}(S) \coloneqq \varrho_{v}\cdot \frac{c_{v}}{c_{r}} \prod_{i \in S} \gamma_{i,v}(\mathbf{x})$.
	
	\textbf{The Shapley value of $ R^{v}_{\mathbf{x}} $.}
	From now on, we will omit $ \mathbf{x} $ occasionally for simplicity.
	Since $ \phi^{\mathrm{Shap}}_{i}(f_{\mathbf{x}}) = \sum_{v\in L_{r}}\phi^{\mathrm{Shap}}_{i}(R^{v}_{\mathbf{x}}) $, it suffices to demonstrate how to compute the Shapley value of each $ R^{v}_{\mathbf{x}} $. According to \citet[Lemma 2.2]{Yu2022linear},
	\begin{equation} \label{eq:psi}
		\begin{aligned}
			\phi_{i}^{\mathrm{Shap}}(R^{v}_{\mathbf{x}}) &= \frac{\varrho_{v}c_{v}(\gamma_{i,v}-1)}{c_{r}|F_{v}|}\sum_{S \subseteq F_{v}\setminus i} \binom{|F_{v}|-1}{s}^{-1}\prod_{j\in S}\gamma_{j,v} = (\gamma_{i,v} - 1)\psi\left( \frac{G_{v}(y)}{1 + \gamma_{i,v}\cdot y} \right) 
		\end{aligned}
	\end{equation}
	where $ G_{v}(y) = \frac{\varrho_{v}c_{v}}{c_{r}}\prod_{j \in F_{v}}(1+ \gamma_{j,v}\cdot y) $ is a polynomial of $ y $,\footnote{Since $ \omega_{\ell} = \omega_{n+1-\ell} $ for the Shapley value, $ 1 + \gamma_{k,v}\cdot y $ is interchangeable with $ \gamma_{k,v} + y $.}
	and
	$\psi(p(y)) =  \langle p(y),\, B_{\mathrm{deg}(p)}(y) \rangle$
	where $ B_{d}(y) \coloneqq \frac{1}{d + 1}\sum_{k=0}^{d}\binom{d}{k}^{-1}y^{k} $.
	Since $ \gamma_{i,v} - 1 = 0 $ if $ i \not\in F_{v} $, the focus is on the cases where $ G_{v}(y) $ is divisible by $ 1 + \gamma_{i,v}\cdot y $.
	The key idea is that the coefficient of $ y^{k} $ in $ \prod_{j\in F_{v}\setminus i}(1+\gamma_{j,v}\cdot y) $ is exactly $ \sum_{S\subseteq F_{v}\setminus i}^{|S|=k} \prod_{j\in S}\gamma_{j,v} $, which shows that the Shapley value of interest can be computed using polynomial operations. 
	Before proceeding, for each edge $ e \in E $, define a function $ \gamma_{e} \colon \mathbb{R}^{N} \to \mathbb{R} $ by letting $\gamma_{e}(\mathbf{x}) \coloneqq \mathds{1}_{\mathbf{x} \in \Pi_{e}}\cdot \prod_{e' \in P_{l_{e},h_{e}}} \frac{1}{w_{e'}}$
	where $ \mathbf{x} \in \Pi_{e} $ if $ \mathbf{x} $ satisfies all the criteria along the path $ P_{h_{e}} $ that involves feature $ l_{e} $.
	Conveniently, if $e$ is the last edge in $P_{i,v}$, then $\gamma_{i,v} = \gamma_{e}$.
	While traversing down a path $ P_{v} $, $ G_{v}(y) $ is what Linear TreeShap computes.
	For traversing up the tree, we adopt the computation proposed by \citet[Lemma 5]{karczmarz2022improved} as it is more elegant; precisely,	$\phi_{i}^{\mathrm{Shap}}(f_{\mathbf{x}}) = \sum_{\substack{e\in E\\l_{e}=i}}\sum_{v \in L_{h_{e}}^{\dagger}} (\gamma_{e}-1)\cdot \psi\left( \frac{G_{v}(y)}{1 + \gamma_{e}\cdot y}\right)$.
	
	
	\textbf{Why does Linear TreeShap run in $ O(LD) $ time.}
	The computational efficiency comes from the so-called scaling invariability equality \citep[Proposition 2.1]{Yu2022linear},
	\begin{align}
		\label{eq:scale and sum}
		\psi(p(y)) 
		=\psi\left( p(y)\cdot(1+y)^{k} \right) ~~\mbox{ where }~~ k\geq 0.
	\end{align}
	Accordingly, 
	$ \phi_{i}^{\mathrm{Shap}}(f_{\mathbf{x}}) = \sum\limits_{\substack{e\in E, l_{e}=i}} (\gamma_{e}-1) \psi\bigg( \sum_{v \in L_{h_{e}}^{\dagger}}  \tfrac{G_{v}(y)}{1+\gamma_{e}\cdot y} \cdot(1+y)^{d_{e}-\mathrm{deg}(G_{v})} \bigg) $ where $ d_{e} = \max_{v \in L_{h_{e}}} \mathrm{deg}(G_{v}(y)) $.
	Clearly, the significance of Eq.~\eqref{eq:scale and sum} is that it is now possible to sum over $ \{ \frac{G_{v}(y)}{1+\gamma_{e}\cdot y} \}_{v\in L_{h_{e}}^{\dagger}} $ and then compute $ \sum_{v\in L_{h_{e}}^{\dagger}}\phi_{l_{e}}^{\mathrm{Shap}}(R^{v}_{\mathbf{x}}) $ using $ \psi $ only once. Without Eq.~\eqref{eq:scale and sum}, $ \psi $ would be evaluated $ |L_{h_{e}}^{\dagger}| $ times instead. Please see Appendix~\ref{app:psi} on how $ \psi $ is implemented.
	
	\section{How Unreliable Are Probabilistic Values in Ranking Features?}
	\label{sec:reliability}
	Suppose $f$ outputs the probability of a class, and  let $\boldsymbol\phi \in \mathbb{R}^{N}$ denotes feature scores for the prediction $f(\mathbf{x})$. Then, a higher $\phi_{i}$ indicates that the $i$-th feature contributes more positively to increasing $f_{\mathbf{x}}$. Typically, the quality of $\boldsymbol\phi$ is measured by the insertion and deletion metrics \citep{petsiuk2018rise}. Notice that these two metrics only employ the ranking $\pi:[N]\to[N]$ induced by $\boldsymbol\phi$, i.e., $\phi_{\pi(1)}\geq\phi_{\pi(2)}\geq\dots\geq\phi_{\pi(N)}$. The two metrics are as follows:
	\begin{equation} \label{eq:metrics}
		\begin{gathered}
			\mathrm{Ins}(\pi; f_{\mathbf{x}}) \coloneqq \frac{1}{N}\sum_{k=1}^{N}f_{\mathbf{x}}(S_{k}^{+}(\pi)) \ \text{ and }\  \mathrm{Del}(\pi; f_{\mathbf{x}}) \coloneqq \frac{1}{N}\sum_{k=1}^{N}f_{\mathbf{x}}(S_{k}^{-}(\pi))
		\end{gathered}
	\end{equation}
	where $S_{k}^{+}(\pi) = \{\pi(1),\pi(2),\dots,\pi(k)\}$ and $S_{k}^{-}(\pi) = \{\pi(N),\pi(N-1),\dots,\pi(N-k+1)\}$. A higher $\mathrm{Ins}(\pi; f_{\mathbf{x}})$ indicates that the top-ranked features contribute more positively to increasing $f(\mathbf{x})$, whereas a lowers $\mathrm{Del}(\pi; f_{\mathbf{x}})$ indicates that the bottom-ranked features contribute more negatively. 
	Let $ S_{*}^{+}(\pi) \coloneqq \argmax_{S_{k}^{+}(\pi)}f_{\mathbf{x}}(S_{k}^{+}(\pi)) $ and $S_{*}^{-}(\pi) \coloneqq \argmin_{S_{k}^{-}(\pi)}f_{\mathbf{x}}(S_{k}^{-}(\pi)) $. Then, $S_{*}^{+}(\pi)$ can be viewed as the positive features identified by $\pi$, whereas $S_{*}^{-}(\pi)$ contains negative ones. Empirically, we observe that a higher $\mathrm{Ins}(\pi; f_{\mathbf{x}})$ is often associated with a higher $f_{\mathbf{x}}(S_{*}^{+}(\pi))$, which essentially corresponds to finding a subset $S^{+}$ that maximizes $f_{\mathbf{x}}(S^{+})$. Similarly, the deletion metric is closely related to finding a subset $S^{-}$ that minimizes $f_{\mathbf{x}}(S^{-})$. Meanwhile, it is expected that $S^{+} = [N]\setminus S^{-}$. Therefore, equally co-optimizing $\mathrm{Ins}(\cdot;f_{\mathbf{x}})$ and $\mathrm{Del}(\cdot;f_{\mathbf{x}})$ is closely related to 
	\begin{equation} \label{op:main joint objective}
		\begin{gathered}
			\maximize_{S\subseteq [N]}\, \frac{1}{2}\left( f_{\mathbf{x}}(S) - f_{\mathbf{x}}([N]\setminus S) \right) .
		\end{gathered}
	\end{equation} 
	Next, we will theoretically demonstrate why probabilistic values are generally unreliable in solving the problem~\eqref{op:main joint objective}. 
	
	Consider an arbitrary set function $ U: 2^{[N]} \to \mathbb{R} $ and define $D_{U}$ by letting $D_{U}(S) = \frac{1}{2}(U(S)-U([N]\setminus S))$. 
	We will show that for each probabilistic value $\phi^{\boldsymbol\omega}(U)$, there exist two distinct sets $ S_{1} $ and $ S_{2} $ such that $ \phi^{\boldsymbol\omega}(U) $ can not reliably differentiate between the two hypotheses $ D_{U}(S_{1}) \leq D_{U}(S_{2}) $ and $ D_{U}(S_{1}) > D_{U}(S_{2}) $.   Following \citet{bilodeau2024impossibility,wang2024rethinking}, a hypothesis test is a function $h : \mathbb{R}^{N} \to [0,1]$ 
	that takes as input a contribution vector, such as $\phi^{\boldsymbol\omega}(U)$, and outputs the possibility to reject the null hypothesis $ D_{U}(S_{1}) \leq D_{U}(S_{2}) $. 
	Then, the reliability of a test $ h $ 
	can be bounded.
	\begin{restatable}{proposition}{impossibility} \label{prop:impossibility}
		Suppose $\phi(f_{\mathbf{x}}) \in \mathbb{R}^{N}$ is linear in $f_{\mathbf{x}}$ with $N>3$, which includes all probabilistic values. Then, there exist distinct sets $ S_{1} $ and $ S_{2} $ such that $S_{1} \not= [N]\setminus S_{2}$ and
		\begin{align}
			\mathrm{Rel}(h;S_{1},S_{2}) := \inf_{U: D_{U}(S_{1}) \leq D_{U}(S_{2}) } [1-h(\phi(U))] + \inf_{U: D_{U}(S_{1}) > D_{U}(S_{2})} h(\phi(U)) \leq 1.
		\end{align}
	\end{restatable}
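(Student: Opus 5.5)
Our plan follows the standard impossibility template of \citet{bilodeau2024impossibility}. We look for a single set function $U_0$ whose attribution vanishes, $\phi(U_0)=\mathbf{0}$, but which separates $S_1$ from $S_2$ under $D$, i.e. $D_{U_0}(S_1)\neq D_{U_0}(S_2)$. Linearity then lets us pair, for any base function $U$ in one hypothesis class, a modified function $U+cU_0$ in the other class that has exactly the same attribution. Hence no test $h$ can separate the two classes.

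The first step is to build $U_0$ in the kernel of $\phi$. Viewed as a map from $\mathbb{R}^{2^{[N]}}$ to $\mathbb{R}^{N}$, $\phi$ is linear, so its kernel has dimension at least $2^N-N$. The map $U\mapsto D_U(S_1)-D_U(S_2)$ is a linear functional $\ell$. We need $\ell\neq 0$ on $\ker\phi$, which fails only if $\ell$ lies in the row space of $\phi$, a space of dimension at most $N$.

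The second step is to choose $S_1,S_2$ so that this cannot happen for every linear $\phi$. In coordinates we have $\ell = \tfrac12(\mathbf{e}_{S_1}-\mathbf{e}_{S_1^c}-\mathbf{e}_{S_2}+\mathbf{e}_{S_2^c})$. Since $\phi$ is arbitrary, we cannot use its structure directly, so we use a counting argument over many candidate pairs instead.

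Consider the family of functionals $\ell_{S_1,S_2}$ for pairs with $S_1\neq S_2$ and $S_1\neq S_2^c$. These span the space of "odd" functions $g$ with $g(S)=-g(S^c)$. To check this, fix $S_2=\emptyset$: then $\ell_{S,\emptyset}$ equals $\tfrac12(\mathbf{e}_S-\mathbf{e}_{S^c})$ plus a fixed vector. Differences of such functionals give every $\mathbf{e}_S-\mathbf{e}_{S^c}-(\mathbf{e}_{T}-\mathbf{e}_{T^c})$, and from these, together with one base element, we get the whole odd space, which has dimension $2^{N-1}$.

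For $N>3$ we have $2^{N-1}>N$. So the row space of $\phi$, of dimension at most $N$, cannot contain all of these functionals. Some admissible pair $(S_1,S_2)$ therefore has $\ell\notin\mathrm{row}(\phi)$, which gives $U_0\in\ker\phi$ with $\ell(U_0)\neq0$. We anticipate this step, choosing the pair depending on $\phi$ while excluding $S_1=S_2^c$, to be the main obstacle. In particular, we must make sure that excluding the degenerate pairs (where $\ell$ is $0$ or $2\ell_{\text{diag}}$) still leaves a spanning set.

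The final step is the reliability bound. Take any $U$ with $D_U(S_1)\le D_U(S_2)$, and set $U'=U+cU_0$ with $c$ large and of the sign that makes $D_{U'}(S_1)>D_{U'}(S_2)$. Then $\phi(U')=\phi(U)$, so
\[
\inf_{D_{U}(S_1)\le D_{U}(S_2)}[1-h(\phi(U))]+\inf_{D_{U}(S_1)>D_{U}(S_2)}h(\phi(U)) \le [1-h(\phi(U))]+h(\phi(U'))=1.
\]
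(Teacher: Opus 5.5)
Your proposal is correct and is essentially the paper's argument in dual form. The paper applies rank--nullity to $U\mapsto (U(S)-U([N]\setminus S))_{S}$ restricted to $\ker\phi$ to obtain a kernel element separating $D(S_1)$ from $D(S_2)$, whereas you show that the admissible $\ell_{S_1,S_2}$ span the $2^{N-1}$-dimensional odd space and so cannot all lie in the at most $N$-dimensional row space of $\phi$; both then use linearity to pair two utilities with identical attributions on opposite sides of the hypothesis, and the obstacle you flag is already settled by your $S_2=\emptyset$ computation (taking $T=S^c$ isolates each $\mathbf{e}_S-\mathbf{e}_{S^c}$).
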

	In particular, for a random guess $ \overline{h} \equiv 0.5 $, we have $ \mathrm{Rel}(\overline{h};S_{1},S_{2}) = 1 $. Therefore, this proposition indicates that for each probabilistic value, there are hypotheses such that it can not reliably reject.
	The intuition behind this proposition is that a linear operator maps vectors of dimension $ 2^{N} $ into significantly compressed ones of dimension $ N $, leaving vast room for not detecting non-negligible differences. This may limit how well probabilistic values are in solving the problem~\eqref{op:main joint objective}, or equivalently, ranking features.
	
	\begin{restatable}{proposition}{interpProb}  \label{prop:interpret probabilistic values}
		Let $\pi$ be a ranking induced by a probabilistic value $\phi^{\boldsymbol\omega}(f_{\mathbf{x}})$ such that $\phi_{\pi(1)}^{\boldsymbol\omega}(f_{\mathbf{x}}) \geq \phi_{\pi(2)}^{\boldsymbol\omega}(f_{\mathbf{x}}) \geq \dots \geq \phi_{\pi(N)}^{\boldsymbol\omega}(f_{\mathbf{x}}) $. For every $1\leq k \leq N$, $S^{+}_{k} \coloneqq \{\pi(1),\pi(2),\dots,\pi(k)\}$ is an optimal solution to the problem
		\begin{equation}
			\begin{gathered}
				\maximize_{S \subseteq  [N], |S|=k} \frac{1}{2}\left( g^{*}(S) - g^{*}([N]\setminus S) \right)
			\end{gathered}
		\end{equation}
		where $g^{*}$ is the best linear surrogate for $f_{\mathbf{x}}$, i.e., $g^{*} = \argmin_{g \in \mathcal{L}}\sum_{S\subseteq [N]}\eta_{s+1}(f_{\mathbf{x}}(S)-g(S))^{2}$ with $\mathcal{L} \coloneqq \{ g : g(S) = t_{0} + \sum_{i \in S} t_{i} \}$. The weights satisfy (i) $\eta_{1}, \eta_{N+1} \geq 0$ and (ii) $\eta_{s} = \omega_{s} + \omega_{s-1}$ for $2\leq s\leq N$.
	\end{restatable}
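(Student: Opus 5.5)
Since every $g\in\mathcal{L}$ is additive, the objective of the linearized problem becomes linear in $S$. For $g(S)=t_{0}+\sum_{i\in S}t_{i}$ we have $\tfrac12\bigl(g(S)-g([N]\setminus S)\bigr)=\sum_{i\in S}t_{i}-\tfrac12\sum_{i\in[N]}t_{i}$. Hence, for fixed cardinality $k$, the problem reduces to choosing the $k$ largest coordinates of $\mathbf{t}^{*}$, the coefficients of $g^{*}$. The whole proposition therefore follows once I show that the ordering of $\mathbf{t}^{*}$ agrees with that of $\phi^{\boldsymbol\omega}(f_{\mathbf{x}})$. Concretely, I will aim to prove $t^{*}_{i}=\alpha\,\phi^{\boldsymbol\omega}_{i}(f_{\mathbf{x}})+\beta$ for some $\alpha>0$ and a constant $\beta$ independent of $i$. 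Given this, $S^{+}_{k}$ collects the top-$k$ coordinates of $\mathbf{t}^{*}$ and is optimal, with ties handled the same way on both sides.

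To compute $\mathbf{t}^{*}$ I will write the normal equations of the weighted least squares problem.
\begin{align}
\textstyle\sum_{S}\eta_{s+1}\bigl(f(S)-t_{0}-\sum_{j\in S}t_{j}\bigr)&=0,\\
\textstyle\sum_{S\ni i}\eta_{s+1}\bigl(f(S)-t_{0}-\sum_{j\in S}t_{j}\bigr)&=0\quad\text{for each } i.
\end{align}
Subtracting half of the first equation from the second, or more cleanly taking the equation for $i$ minus the analogous sum over $S\not\ni i$, gives a quantity of the form
\[
\textstyle\sum_{S\ni i}\eta_{s+1}f(S)-\sum_{S\not\ni i}\eta_{s+1}f(S).
\]
I instead pair $S\cup i$ with $S$ for $S\subseteq[N]\setminus i$, and I use the identity $\eta_{s+1}=\omega_{s+1}+\omega_{s}$ with $\omega_{0}=\omega_{N+1}=0$ understood. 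The key computation is the reindexing
\[
\textstyle\sum_{S\ni i}\eta_{s+1}f(S)=\sum_{T\subseteq[N]\setminus i}(\omega_{t+2}+\omega_{t+1})f(T\cup i),
\]
together with the analogous expression for $S\not\ni i$. Combining these, the $\omega_{t+1}$-parts produce $\phi^{\boldsymbol\omega}_{i}$. The leftover parts produce $\sum_{T}\omega_{t+2}f(T\cup i)-\sum_{T}\omega_{t}f(T)$, which after shifting is a sum over all $S\subseteq[N]$ weighted symmetrically in the features. I expect it to be independent of $i$, up to terms that combine with $\phi_{i}$.

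On the linear side, symmetry of the weights in features means the Gram matrix has the form $a I+b\mathbf{1}\mathbf{1}^{\top}$ on the $t_{1..N}$ block, plus constant coupling with $t_{0}$. Solving then gives $t^{*}_{i}=c_{1}\phi^{\boldsymbol\omega}_{i}+c_{2}$, where $c_{1}=1/a$ and
\[
\textstyle a=\sum_{S\ni i,\,j\notin S}\eta_{s+1}=\sum_{s}\binom{N-2}{s-1}\eta_{s+1}>0.
\]
The endpoint weights $\eta_{1},\eta_{N+1}$ only enter through the $t_{0}$ equation, which explains why they are free as long as they are non-negative. Positivity of $a$ needs some $\omega_{k}>0$, which follows from the normalization $\sum_{k}\binom{N-1}{k-1}\omega_{k}=1$.

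The main obstacle is the bookkeeping showing that the residual terms are constant in $i$, i.e.\ exactly matching the $\eta$-weighted pairing to the $\omega$-weighted marginal contributions. A cleaner route I would try first is the known result of Charnes et al./Ruiz et al. (see also \citealt{li2023robust}), which expresses semi-value-type probabilistic values as weighted least squares solutions with weights $\omega_{s}+\omega_{s+1}$-type. I would adapt it to the general probabilistic value setting by direct verification: plug in the candidate $t_{i}=\phi_{i}/a+\beta$ with a suitable $t_{0}$, and check that the normal equations hold.
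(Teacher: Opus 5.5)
Your proof is correct. The reduction to choosing the $k$ largest coordinates of $\mathbf{t}^{*}$ is the same step the paper uses. The difference lies in how you get the affine relation between $\mathbf{t}^{*}$ and $\phi^{\boldsymbol\omega}(f_{\mathbf{x}})$.

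The paper cites \citet[Theorem 2]{li2024one}, which gives $g^{*}(\{i\})-g^{*}(\emptyset)=\phi^{\boldsymbol\omega}_{i}(f_{\mathbf{x}})+C$. You derive the relation from the normal equations instead. The citation is shorter. Your route is self-contained and shows exactly where $\eta_{s}=\omega_{s}+\omega_{s-1}$ is used and why $\eta_{1},\eta_{N+1}$ are free.

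The step you call the main obstacle closes quickly if you subtract the normal equation for $t_{j}$ from the one for $t_{i}$, rather than forming $\sum_{S\ni i}-\sum_{S\not\ni i}$:
\begin{itemize}
\item Only sets containing exactly one of $i,j$ survive. These have sizes $1\le s\le N-1$, where $\eta_{s+1}=\omega_{s}+\omega_{s+1}$.
\item The swap $S\mapsto(S\setminus i)\cup j$ cancels every $t_{0}$ and $t_{k}$ ($k\neq i,j$) term.
\item Expanding $\phi^{\boldsymbol\omega}_{i}-\phi^{\boldsymbol\omega}_{j}$, the sets containing both or neither of $i,j$ cancel. Together this gives $a\,(t^{*}_{i}-t^{*}_{j})=\phi^{\boldsymbol\omega}_{i}-\phi^{\boldsymbol\omega}_{j}$.
\item Pascal's rule and the normalization give $a=\sum_{s=1}^{N-1}\binom{N-2}{s-1}(\omega_{s}+\omega_{s+1})=\sum_{k=1}^{N}\binom{N-1}{k-1}\omega_{k}=1$. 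So your $\alpha$ is exactly $1$, matching the cited theorem.
\end{itemize}
Only the normal equations are used, so the conclusion holds for every minimizer. This matters because $g^{*}$ need not be unique, for example when $\eta_{1}=\eta_{N+1}=0$ and $\boldsymbol\omega$ is supported on $\omega_{1}$.

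Three details in your sketch need correcting:
\begin{itemize}
\item The convention $\omega_{0}=\omega_{N+1}=0$ forces $\eta_{1}=\omega_{1}$ and $\eta_{N+1}=\omega_{N}$, which the statement does not assume. Apply $\eta_{s+1}=\omega_{s}+\omega_{s+1}$ only for $1\le s\le N-1$.
\item $\eta_{N+1}$ does not enter only through the $t_{0}$ equation, since $S=[N]$ appears in every equation for $t_{i}$. This is harmless, because it contributes identically to all of them.
\item Your leftover $\sum_{T}\omega_{t+2}f_{\mathbf{x}}(T\cup i)-\sum_{T}\omega_{t}f_{\mathbf{x}}(T)$ is not independent of $i$. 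It equals $\phi^{\boldsymbol\omega}_{i}$ plus a constant, so that combined equation reads $2a\,t_{i}=2\phi^{\boldsymbol\omega}_{i}+\mathrm{const}$. This is consistent with your $c_{1}=1/a$.
\end{itemize}
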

	Let $ S^{*}_{k} \coloneqq \argmax_{S\subseteq [N], |S|=k} \frac{1}{2}(f_{\mathbf{x}}(S) - f_{\mathbf{x}}([N]\setminus S)) $. Then, $S_{k}^{+}$  can be treated as an approximate solution to $ S^{*}_{k} $. Taking it a step further, 
	\begin{equation} \label{eq:ranking to subset}
		\begin{aligned}
			\maximize_{S\subseteq [N]}\,df_{\mathbf{x}}(S) = \maximize_{W \in \{ S^{*}_{k} \colon 1\leq k\leq N \}}\,df_{\mathbf{x}}(W) \approx \maximize_{W \in \{ \mathcal{S}_{k}^{+} \colon 1\leq k\leq N \}}\,df_{\mathbf{x}}(W) .
		\end{aligned}
	\end{equation}
	where $df_{\mathbf{x}}(S) \coloneq \frac{1}{2}(f_{\mathbf{x}}(S) - f_{\mathbf{x}}([N]\setminus S))$. It suggests that the use of probabilistic values is equivalent to substituting linear surrogates for $f_{\mathbf{x}}$ while optimizing the problem~\eqref{op:main joint objective}.
	
	\textbf{Why not optimize the problem~\eqref{op:main joint objective} directly?} The problem $\maximize_{S}U(S)$ is known to be NP-hard \citep{feige2011maximizing}. 
	Alternatively,  	
	one may resort to a linear surrogate as it is more tractable.  
	However, such a linearization comes at the expense of not accurately representing the behavior of $ f_{\mathbf{x}} $.
	In what follows, we will demonstrate that decision trees are well-structured such that it is practical to solve the problem~\eqref{op:main joint objective} using gradient ascent. Meanwhile, we are able to obtain feature scores that satisfy all the axioms uniquely characterizing probabilistic values, except linearity.
	
	\begin{algorithm}[t]
		\DontPrintSemicolon
		\caption{TreeGrad (Simplified Procedure)}
		\label{alg:TreeGrad}
		\KwIn{Decision tree $ \mathcal{T}=(V,E) $ with root $r$, instance $ \mathbf{x} \in \mathbb{R}^{N} $, point $ \mathbf{z} \in [0, 1]^{N} $}
		\KwOut{Gradient $ \mathbf{g} $}
		
		\SetKwFunction{FMain}{traverse}
		\SetKwProg{Fn}{Function}{:}{}
		\Fn{\FMain{$ v $, $ s=1 $}}{
			$ e \gets (m_{v}\rightarrow v) $
			
			\uIf{$ e^{\uparrow} $ exists}{
				$ s \gets \frac{s}{1-z_{l_{e^{\uparrow}}}+z_{l_{e^{\uparrow}}}\gamma_{e^{\uparrow}}} \cdot (1-z_{l_{e}}+z_{l_{e}}\gamma_{e}) $
			}\uElse{
				$ s \gets s \cdot (1-z_{l_{e}} + z_{l_{e}}\gamma_{e}) $
			}
			
			\uIf{$ v $ is a leaf}{
				$ s \gets \frac{\varrho_{v}c_{v}}{c_{r}} \cdot s $		
			}\uElse{
				$ s \gets $ 			
				\FMain{$ a_{v} $, $ s $}
				$ + $
				\FMain{$ b_{v} $, $ s $}
			}
			\uIf{$ e^{\uparrow} $ exists}{
				$ H[h_{e^{\uparrow}}] \gets H[h_{e^{\uparrow}}] - s $
			}
			
			$ H[v] \gets H[v] + s $
			
			$ g_{l_{e}} \gets g_{l_{e}} + (\gamma_{e} - 1) \cdot \frac{H[v]}{1-z_{l_{e}}+z_{l_{e}}\gamma_{e}} $
			
			\Return $ s $
			
		}
		
		Initialize $ H[v] = 0 $ for every $ v \in V $
		
		Initialize $ \mathbf{g} = \mathbf{0}_{N} $
		
		\FMain{$ a_{r} $}
		
		\FMain{$ b_{r} $}
		
		\Return $\mathbf{g}$
	\end{algorithm}
	
	\begin{algorithm}[t]
		\DontPrintSemicolon
		\caption{TreeGrad-Ranker with Gradient Ascent (GA)}
		\label{alg:TreeGrad-Ranker with GD}
		\KwIn{Decision tree $ \mathcal{T}=(V,E) $ with root $r$, instance $ \mathbf{x} \in \mathbb{R}^{N} $,  learning rate $ \epsilon > 0 $, total number of iterations $ T $}
		\KwOut{A vector of feature scores $ \boldsymbol\zeta $}
		
		Initialize $ \mathbf{z} = 0.5\cdot \mathbf{1}_{N} $ and $ \boldsymbol\zeta = \mathbf{0}_{N} $
		
		\For{$ t = 1, 2,\dots, T $}{
			$ \mathbf{g} \gets \frac{1}{2}(\mathrm{TreeGrad}(\mathcal{T},\, \mathbf{x},\, \mathbf{z}) + \mathrm{TreeGrad}(\mathcal{T},\, \mathbf{x},\, \mathbf{1}_{N}-\mathbf{z})) $
			
			$ \mathbf{z} \gets \mathbf{z} + \epsilon\cdot \mathbf{g} $
			
			$ \mathbf{z} \gets \mathrm{Proj}(\mathbf{z}) $ \tcp*{$ z_{i} \gets \min(\max(0,z_{i}),1) $}
			
			$ \boldsymbol\zeta \gets \frac{t-1}{t}\cdot\boldsymbol\zeta + \frac{1}{t}\cdot\mathbf{g} $
		}
		
		\Return $\boldsymbol\zeta$
	\end{algorithm}
	
	\section{TreeGrad-Ranker}
	\label{sec:treegrad}
	To make $ f_{\mathbf{x}} \colon 2^{[N]} \to \mathbb{R} $ differentiable, the first step is to make its domain continuous. Following \citet{owen1972multilinear}, the multilinear extension of $ f_{\mathbf{x}} $, denoted by $ \overline{f}_{\mathbf{x}} \colon [0,1]^{N} \to \mathbb{R} $, is defined by letting $\overline{f}_{\mathbf{x}}(\mathbf{z}) = \sum_{S\subseteq [N]}\left( \prod_{j\in S}z_{j} \right)\left( \prod_{j\not\in S}(1-z_{j}) \right) f_{\mathbf{x}}(S)$.
	Then, the problem~\eqref{op:main joint objective} is relaxed as
	\begin{align} \label{op:relaxed argmax}
		\maximize_{\mathbf{z} \in [0,1]^{N}}\frac{1}{2}(\overline{f}_{\mathbf{x}}(\mathbf{z})- \overline{f}_{\mathbf{x}}(\mathbf{1}_{N}-\mathbf{z})) .
	\end{align}
	This relaxation is justified by the following proposition.
	\begin{restatable}{proposition}{justifyRelax}
		Suppose the optimal set $ S^{*} $ for the problem~\eqref{op:main joint objective} is unique, then $ \mathbf{1}_{S^{*}} $ whose $ i $-th entry is $ 1 $ if $ i \in S^{*} $ and $ 0 $ otherwise is the unique optimal solution to the relaxed problem~\eqref{op:relaxed argmax}.
	\end{restatable}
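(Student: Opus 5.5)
The plan is to write the relaxed objective as an expectation of the discrete one, and then use uniqueness of the maximizer to force any optimal $\mathbf{z}$ to be a vertex. Set $F(\mathbf{z}) \coloneqq \frac{1}{2}(\overline{f}_{\mathbf{x}}(\mathbf{z})-\overline{f}_{\mathbf{x}}(\mathbf{1}_{N}-\mathbf{z}))$.

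\textbf{Step 1: $F$ is an expectation of $df_{\mathbf{x}}$.} Let $\mathbf{z}\in[0,1]^{N}$ and let $S_{\mathbf{z}}$ be a random subset that contains each $i$ independently with probability $z_{i}$. By definition of the multilinear extension, $\overline{f}_{\mathbf{x}}(\mathbf{z})=\mathbb{E}[f_{\mathbf{x}}(S_{\mathbf{z}})]$. The complement $[N]\setminus S_{\mathbf{z}}$ contains each $i$ independently with probability $1-z_{i}$, so it is distributed as $S_{\mathbf{1}_N-\mathbf{z}}$. Hence $\overline{f}_{\mathbf{x}}(\mathbf{1}_{N}-\mathbf{z})=\mathbb{E}[f_{\mathbf{x}}([N]\setminus S_{\mathbf{z}})]$, and by linearity of expectation, coupling both terms on the same $S_{\mathbf{z}}$,
\begin{equation}
F(\mathbf{z}) = \mathbb{E}\bigl[df_{\mathbf{x}}(S_{\mathbf{z}})\bigr]=\sum_{S\subseteq[N]}p_{\mathbf{z}}(S)\,df_{\mathbf{x}}(S),\qquad p_{\mathbf{z}}(S)=\prod_{j\in S}z_{j}\prod_{j\notin S}(1-z_{j}).
\end{equation}
This coupling of the complement term with the same random set is the only real point, and it is what lets the proof go through.

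\textbf{Step 2: the vertex $\mathbf{1}_{S^{*}}$ is optimal.} Since $p_{\mathbf{z}}$ is a probability distribution on $2^{[N]}$, we get $F(\mathbf{z})\le \max_{S} df_{\mathbf{x}}(S)=df_{\mathbf{x}}(S^{*})$. At the vertex, $p_{\mathbf{1}_{S^{*}}}$ is the point mass at $S^{*}$, so $F(\mathbf{1}_{S^{*}})=df_{\mathbf{x}}(S^{*})$. The upper bound is attained, and $\mathbf{1}_{S^{*}}$ is an optimal solution.

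\textbf{Step 3: uniqueness.} Suppose $F(\mathbf{z})=df_{\mathbf{x}}(S^{*})$. Because $S^{*}$ is the unique maximizer, $df_{\mathbf{x}}(S)<df_{\mathbf{x}}(S^{*})$ for every $S\neq S^{*}$. Equality in the averaging bound therefore forces $p_{\mathbf{z}}(S)=0$ for all $S\ne S^{*}$, so $p_{\mathbf{z}}(S^{*})=1$. Since $p_{\mathbf{z}}(S^{*})$ is a product of factors in $[0,1]$, each factor must equal $1$: $z_{j}=1$ for $j\in S^{*}$ and $1-z_{j}=1$, i.e.\ $z_{j}=0$, for $j\notin S^{*}$. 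Hence $\mathbf{z}=\mathbf{1}_{S^{*}}$.

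Once Step 1's coupling is recognized, Steps 2 and 3 are routine.
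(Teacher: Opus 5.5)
Your proof is correct and takes essentially the same route as the paper. The paper also identifies the relaxed objective with the multilinear extension of $U(S)=\tfrac{1}{2}(f_{\mathbf{x}}(S)-f_{\mathbf{x}}([N]\setminus S))$, bounds this weighted average by the discrete maximum, and uses uniqueness of $S^{*}$ to rule out equality at any non-vertex $\mathbf{z}$. Your Step 1 explicitly justifies that identity (via the complement reindexing) where the paper simply asserts it with ``and thus''. Your Step 3 also handles vertex and non-vertex points uniformly, which is slightly cleaner.
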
	
	Then, according to \citet[Eq. (8)]{owen1972multilinear}, we have
	\begin{equation} \label{eq:gradients}
		\begin{gathered}
			\nabla \overline{f}_{\mathbf{x}}(\mathbf{z}) = \sum_{S\subseteq [N]\setminus i} \bigg( \prod_{j \in S}z_{j} \bigg) \bigg( \prod_{j\not\in S\cup i}(1-z_{j}) \bigg) [f_{\mathbf{x}}(S\cup i) - f_{\mathbf{x}}(S)] .
		\end{gathered}
	\end{equation}
	
	\textbf{An $ O(L) $-time TreeGrad for computing gradients.}
	Since $ \nabla \overline{f}_{\mathbf{x}}(\mathbf{z}) = \sum_{v\in L_{r}} \nabla \overline{R}^{v}_{\mathbf{x}}(\mathbf{z}) $, it suffices to simplify the formula of each $ \nabla \overline{R}^{v}_{\mathbf{x}}(\mathbf{z}) $.

	\begin{restatable}{lemma}{gradientComponent}
		We have 
		$\nabla_{i} \overline{R}^{v}_{\mathbf{x}}(\mathbf{z}) = (\gamma_{i,v}-1)\cdot \frac{H_{v}}{1-z_{i}+z_{i}\gamma_{i,v}}$,
		where $ H_{v} = \frac{\varrho_{v}c_{v}}{c_{r}}\cdot\prod_{j\in F_{v}} (1-z_{j}+z_{j}\gamma_{j,v}) $.
	\end{restatable}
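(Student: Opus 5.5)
The plan is to compute the multilinear extension of $R^{v}_{\mathbf{x}}$ in closed form and then differentiate it. By definition, $R^{v}_{\mathbf{x}}(S) = \frac{\varrho_{v}c_{v}}{c_{r}}\prod_{j\in S}\gamma_{j,v}$. The convention $x\in\Pi_{j,v}$ whenever $P_{j,v}=\emptyset$ gives $\gamma_{j,v}=1$ for every $j\notin F_{v}$. Hence $R^{v}_{\mathbf{x}}(S)$ is a product-form set function, and its multilinear extension factorizes over coordinates:
\begin{equation}
\overline{R}^{v}_{\mathbf{x}}(\mathbf{z}) = \frac{\varrho_{v}c_{v}}{c_{r}}\sum_{S\subseteq[N]}\prod_{j\in S} z_{j}\gamma_{j,v}\prod_{j\notin S}(1-z_{j}) = \frac{\varrho_{v}c_{v}}{c_{r}}\prod_{j\in[N]}\bigl(1-z_{j}+z_{j}\gamma_{j,v}\bigr).
\end{equation}
This is the standard expansion of a product of binomials: each factor chooses either $z_{j}\gamma_{j,v}$ (when $j\in S$) or $1-z_{j}$ (when $j\notin S$). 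For $j\notin F_{v}$ the factor equals $1-z_{j}+z_{j}=1$, so the product reduces to the one over $F_{v}$. This gives $\overline{R}^{v}_{\mathbf{x}}(\mathbf{z})=H_{v}$.

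Next, differentiate with respect to $z_{i}$. The function is affine in $z_{i}$, and the only factor depending on $z_{i}$ is $1-z_{i}+z_{i}\gamma_{i,v}$, whose derivative is $\gamma_{i,v}-1$. Thus $\nabla_{i}\overline{R}^{v}_{\mathbf{x}}(\mathbf{z}) = (\gamma_{i,v}-1)\prod_{j\neq i}(1-z_{j}+z_{j}\gamma_{j,v})\cdot\frac{\varrho_{v}c_{v}}{c_{r}}$. Rewriting the product over $j\ne i$ as $H_{v}/(1-z_{i}+z_{i}\gamma_{i,v})$ yields the claimed formula. As a consistency check, the same result follows from Eq.~\eqref{eq:gradients} applied to $R^{v}_{\mathbf{x}}$: the marginal $R(S\cup i)-R(S)=(\gamma_{i,v}-1)R(S)$, and summing over $S\subseteq[N]\setminus i$ gives the product over $j\ne i$.

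The main obstacle is the division, which is only legitimate when $1-z_{i}+z_{i}\gamma_{i,v}\neq 0$. I would handle it in two cases.
\begin{itemize}
\item If $\gamma_{i,v}>0$: because $\gamma_{i,v}\geq 1$ (it is $1/\prod w_{e}$ with $w_{e}\in(0,1)$), the factor is at least $1$.
\item If $\gamma_{i,v}=0$ (the indicator fails): the factor is $1-z_{i}$, which vanishes at $z_{i}=1$.
\end{itemize}
In the second case the formula is to be read as the product over $j\neq i$, i.e.\ the quotient is interpreted by cancellation, as polynomials in $\mathbf{z}$. For $i\notin F_{v}$ both sides are $0$, since $\gamma_{i,v}-1=0$ and the denominator equals $1$. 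I would state this interpretation explicitly rather than restrict $\mathbf{z}$.
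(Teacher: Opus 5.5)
Your proof is correct and rests on the same product-expansion identity as the paper's. The paper applies it to the gradient sum of Eq.~\eqref{eq:gradients} after discarding the null features outside $F_{v}$, which is what you call your consistency check; you instead apply it to $\overline{R}^{v}_{\mathbf{x}}$ itself and differentiate, which additionally yields $\overline{R}^{v}_{\mathbf{x}}(\mathbf{z})=H_{v}$. Your reading of the quotient by cancellation when $\gamma_{i,v}=0$ and $z_{i}=1$ agrees with the product over $F_{v}\setminus i$ that the paper's proof produces and with the corner case handled in Algorithm~\ref{alg:TreeGrad-full}.
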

	Starting from $r$, write $P_{v} = (e_{1}, e_{2}, \dots, e_{|P_{v}|})$ in the order encountered when traversing down the path $P_{v}$ to the leaf node $v$. Then, running
	\begin{gather}
		\begin{aligned}
			s_{0} \gets 1 \ \text{ and } \ 
			s_{\ell} \gets \begin{cases}
				\cfrac{s_{\ell-1}}{1-z_{k_{\ell}} + z_{k_{\ell}}\gamma_{e_{\ell}^{\uparrow}}} (1-z_{j_{\ell}} + z_{j_{\ell}}\gamma_{e_{\ell}}), & e_{\ell}^{\uparrow} \text{ exists,}\\
				(1-z_{j_{\ell}} + z_{j_{\ell}}\gamma_{e_{\ell}})\cdot s_{\ell-1}, & \text{otherwise,}
			\end{cases}
		\end{aligned}
	\end{gather}	
	where $ j_{\ell}\coloneqq l_{e_{\ell}} $ and $  k_{\ell} \coloneqq l_{e_{\ell}^{\uparrow}} $ until it reaches the leaf node $v$, we have $ \frac{\varrho_{v}c_{v}}{c_{r}}\cdot s_{|P_{v}|} = H_{v} $. While traversing up the tree from leaf nodes, each $ \nabla_{i} \overline{f}_{\mathbf{x}}(\mathbf{z}) $ is computed using the following lemma.

	\begin{restatable}{lemma}{aggregateGradient}
		$\nabla_{i} \overline{f}_{\mathbf{x}}(\mathbf{z}) = \sum_{\substack{e\in E\\l_{e}=i}} (\gamma_{e}-1)\cdot \frac{\sum_{v \in L_{h_{e}}^{\dagger}} H_{v}}{1-z_{i}+z_{i}\gamma_{e}}$.
	\end{restatable}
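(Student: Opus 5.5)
The plan is to start from the decomposition $\nabla_{i}\overline{f}_{\mathbf{x}}(\mathbf{z}) = \sum_{v\in L_{r}} \nabla_{i}\overline{R}^{v}_{\mathbf{x}}(\mathbf{z})$ and apply the preceding lemma to each leaf $v$:
\[
\nabla_{i}\overline{R}^{v}_{\mathbf{x}}(\mathbf{z}) = (\gamma_{i,v}-1)\,\frac{H_{v}}{1-z_{i}+z_{i}\gamma_{i,v}}.
\]
The proof is then a reindexing of this sum over leaves as a sum over edges labeled $i$.

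\textbf{Discarding leaves that do not involve $i$.} First I would drop every leaf with $i\notin F_{v}$. For such $v$ we have $P_{i,v}=\emptyset$, so by convention $\mathbf{x}\in\Pi_{i,v}$ and the empty product gives $\gamma_{i,v}=1$. Hence the summand vanishes. (The denominator equals $1$, so nothing is ill-defined.)

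\textbf{Assigning each remaining leaf to one edge.} For a leaf $v$ with $i\in F_{v}$, let $e$ be the last edge of $P_{i,v}$, i.e.\ the deepest edge on $P_{v}$ with label $i$. As the paper notes, $\gamma_{i,v}=\gamma_{e}$. The key combinatorial claim is that, for every edge $e$ with $l_{e}=i$,
\[
\{v\in L_{r} : i\in F_{v},\ \text{the last edge of } P_{i,v} \text{ is } e\} = L^{\dagger}_{h_{e}} .
\]
To see this, write $e=(m_{h_{e}}\to h_{e})$, so $l_{m_{h_{e}}}=i$. By definition $L^{\dagger}_{h_{e}}$ consists of the leaves $v'$ reachable from $h_{e}$ for which $e$ is the nearest ancestor edge labeled $i$. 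A leaf with $i\in F_{v}$ has a unique deepest $i$-labeled edge on its root path, and that edge $e$ lies on $P_{v}$, so $v$ is reachable from $h_{e}$. Therefore the sets $L^{\dagger}_{h_{e}}$, over edges with $l_{e}=i$, partition the leaves with $i\in F_{v}$.

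\textbf{Regrouping.} Within each block the factors $(\gamma_{e}-1)$ and $1/(1-z_{i}+z_{i}\gamma_{e})$ are common to all leaves, so they factor out and leave $\sum_{v\in L^{\dagger}_{h_{e}}}H_{v}$. This gives the stated formula. The same argument appears for the Shapley analogue of \citet[Lemma 5]{karczmarz2022improved}, with $\psi(\cdot)$ replaced by plain multiplication.

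\textbf{Expected obstacle.} The main point needing care is the partition claim: the phrase ``nearest edge ancestor'' in the definition of $L^{\dagger}$ must be read as the deepest $i$-labeled edge on $P_{v'}$. It also helps to note that $\gamma_{e}$ depends on the entire path $P_{h_{e}}$ up to $e$, and this coincides with $\gamma_{i,v}$ because no later $i$-edge occurs on $P_{v}$. Positivity of the denominator for $z_{i}\in[0,1]$ follows from $\gamma\geq 0$.
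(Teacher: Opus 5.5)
Your argument is correct and is essentially the paper's proof: sum the previous lemma over leaves, group the leaves by their deepest $i$-labeled edge $e$ (i.e., into the sets $L^{\dagger}_{h_e}$), use $\gamma_{i,v}=\gamma_e$, and factor out. You are in fact more careful than the paper, which asserts that the $L^{\dagger}_{h_e}$ partition all of $L_r$; they only cover the leaves with $i\in F_v$, and the remaining leaves contribute zero because $\gamma_{i,v}=1$.

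The one slip is your closing remark on the denominator. From $\gamma\geq 0$ you only get $1-z_i+z_i\gamma_e\geq 0$, and in fact $\gamma_e\in\{0\}\cup(1,\infty)$, so the denominator vanishes when $\gamma_e=0$ and $z_i=1$. This is the corner case the paper handles separately in the full TreeGrad procedure. In that case $H_v/(1-z_i+z_i\gamma_e)$ must be read formally as $\frac{\varrho_v c_v}{c_r}\prod_{j\in F_v\setminus i}(1-z_j+z_j\gamma_{j,v})$, and with that reading your regrouping goes through unchanged.
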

	As shown in Algorithm~\ref{alg:TreeGrad}, lines 2–10 perform the traversing-down procedure, whereas the traversing-up procedure is efficiently carried out in lines 11–15.\footnote{The original implementation of the traversing-up procedure, as in \citep[Algorithms 2--4]{karczmarz2022improved}, is unnecessarily complicated; they compute $\nabla \overline{f}_{\mathbf{x}}(0.5 \cdot \mathbf{1}_{N})$, equivalently the Banzhaf value.}
	In line $14$, it may occur that $1-z_{l_{e}} + z_{l_{e}}\gamma_{e} = 0$ when $\gamma_{e} = 0$ and $z_{l_{e}} = 1$, which makes the division invalid. This issue can be easily addressed by noting that at most one feature receives a non-zero update in $\mathbf{g}$. The full procedure of TreeGrad that accounts for this corner case is presented in Algorithm~\ref{alg:TreeGrad-full}.
	Overall, the total running time is $ O(L) $ while using $ O(L) $ space.
	Equipped with TreeGrad, we can solve the relaxed problem~\eqref{op:relaxed argmax} using gradient ascent, which is shown in Algorithm~\ref{alg:TreeGrad-Ranker with GD}. Specifically, $ \epsilon $ is a learning rate and $ \mathrm{Proj} $ projects each $ \mathbf{z}_{t-1} $ onto $ [0,1]^{N} $, i.e., $z_{i} \gets \min(\max(0, z_{i}), 1)$. 
	We comment that $\boldsymbol\zeta = \nabla \overline{f}_{\mathbf{x}}(0.5\cdot \mathbf{1}_{N})$ when $T=1$, which is exactly the Banzhaf value. 
	Since any symmetric semi-value, which includes the Shapley value and the Banzhaf value, can be expressed as an expectation of this gradient field (see Appendix~\ref{app:semi}), we take the averaged gradient as the vector of feature scores. 
	Moreover, the gradient ascent used in Algorithm~\ref{alg:TreeGrad-Ranker with GD} can be replaced by the ADAM optimizer \citep{kingma2015adam}, resulting in Algorithm~\ref{alg:TreeGrad-Ranker with adam}.
	In particular, both algorithms verify all the axioms that uniquely characterize probabilistic values except for the linearity axiom.
	\begin{restatable}{theorem}{NonlinearAttri} \label{thm:nonlinear attr}
		The vector of feature scores $\boldsymbol\zeta$ returned by TreeGrad-Ranker, either using Algorithm~\ref{alg:TreeGrad-Ranker with GD} or~\ref{alg:TreeGrad-Ranker with adam}, can be expressed as $\zeta_{i} = \sum_{S\subseteq [N]\setminus i}\omega_{i}(S; \mathbf{x})(f_{\mathbf{x}}(S\cup i)-f_{\mathbf{x}}(S))$ where $\omega_{i}(S; \mathbf{x}) > 0$ and $\sum_{S\subseteq [N]\setminus i}\omega_{i}(S; \mathbf{x}) = 1$. Moreover, it verifies the axioms of dummy, equal treatment and monotonicity.
	\end{restatable}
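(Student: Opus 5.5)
The proof rests on the gradient formula in Eq.~\eqref{eq:gradients}. For any $\mathbf{z}\in[0,1]^N$ we have $\nabla_i \overline{f}_{\mathbf{x}}(\mathbf{z}) = \sum_{S\subseteq[N]\setminus i} p_i(S;\mathbf{z})\,[f_{\mathbf{x}}(S\cup i)-f_{\mathbf{x}}(S)]$, where $p_i(S;\mathbf{z}) = \prod_{j\in S}z_j\prod_{j\notin S\cup i}(1-z_j)$. These weights are nonnegative and sum to one over $S\subseteq[N]\setminus i$, since they are the probabilities of a product Bernoulli measure on the coordinates other than $i$. The averaged gradient used in each iteration is $\mathbf{g}_t=\frac12(\nabla\overline{f}_{\mathbf{x}}(\mathbf{z}_t)+\nabla\overline{f}_{\mathbf{x}}(\mathbf{1}_N-\mathbf{z}_t))$. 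It therefore has the same form with weights $q_i^t(S)=\frac12(p_i(S;\mathbf{z}_t)+p_i(S;\mathbf{1}_N-\mathbf{z}_t))$, which again sum to one. For Algorithm~\ref{alg:TreeGrad-Ranker with GD}, $\boldsymbol\zeta=\frac1T\sum_t \mathbf{g}_t$, so $\omega_i(S;\mathbf{x})=\frac1T\sum_t q_i^t(S)$ is a convex combination of probability vectors. The dependence on $\mathbf{x}$ enters only through the trajectory $\mathbf{z}_t$. For the ADAM variant (Algorithm~\ref{alg:TreeGrad-Ranker with adam}) I expect the returned scores to be, as in Algorithm~\ref{alg:TreeGrad-Ranker with GD}, an average of the raw symmetrized gradients, with ADAM only changing how $\mathbf{z}$ is updated. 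The same representation then holds verbatim. If ADAM's scores instead involve the moment estimates, the first moment is still a convex combination of past gradients after bias correction, so the argument goes through.

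\textbf{Strict positivity.} This is the delicate step, because projection can push coordinates of $\mathbf{z}_t$ to $0$ or $1$, which makes some $p_i(S;\mathbf{z}_t)$ vanish. The plan is to use the first iterate $\mathbf{z}_1=0.5\cdot\mathbf{1}_N$. There $q_i^1(S)=2^{-(N-1)}>0$ for every $S$, so after averaging, $\omega_i(S;\mathbf{x})\ge \frac{1}{T}2^{-(N-1)}>0$. The weights at later iterates are nonnegative and can only add to this.

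\textbf{Dummy.} If $i$ is a dummy, then $f_{\mathbf{x}}(S\cup i)-f_{\mathbf{x}}(S)=f_{\mathbf{x}}(\{i\})-f_{\mathbf{x}}(\emptyset)$ for all $S$, and the weights summing to one give $\zeta_i=f_{\mathbf{x}}(\{i\})-f_{\mathbf{x}}(\emptyset)$.

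\textbf{Monotonicity.} If $f_{\mathbf{x}}$ is monotone, every marginal contribution is nonnegative, and positivity of the weights yields $\zeta_i\ge0$.

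\textbf{Equal treatment.} Suppose $i,j$ are symmetric, i.e.\ $f_{\mathbf{x}}(S\cup i)=f_{\mathbf{x}}(S\cup j)$ for all $S\subseteq[N]\setminus\{i,j\}$. I will show by induction on $t$ that $z_{t,i}=z_{t,j}$. The base case holds since both equal $0.5$. For the inductive step, suppose $z_i=z_j$. The transposition $\sigma=(i\,j)$ leaves $f_{\mathbf{x}}$ invariant, and so $\overline f_{\mathbf{x}}(\sigma\mathbf{z})=\overline f_{\mathbf{x}}(\mathbf{z})$. Differentiating gives $\nabla_i\overline f_{\mathbf{x}}(\mathbf{z})=\nabla_j\overline f_{\mathbf{x}}(\sigma\mathbf{z})=\nabla_j\overline f_{\mathbf{x}}(\mathbf{z})$, using $\sigma\mathbf{z}=\mathbf{z}$. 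The same holds at $\mathbf{1}_N-\mathbf{z}$, whose $i$- and $j$-coordinates are also equal, so $g_{t,i}=g_{t,j}$. The gradient-ascent update, the coordinatewise projection, and ADAM's coordinatewise moment updates all preserve the equality for the next iterate. Averaging over $t$ then gives $\zeta_i=\zeta_j$.

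The main obstacle I expect is the precise bookkeeping for the ADAM variant. Both the representation and equal treatment require the moment estimates to be computed coordinatewise and initialized symmetrically; I would verify this directly from the algorithm's update rules.
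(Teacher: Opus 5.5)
Your proof is correct and follows the same route as the paper. Each symmetrized gradient is a convex combination of marginal contributions, averaging preserves this (so dummy and monotonicity are immediate), and equal treatment follows by induction keeping $z_i=z_j$ along the GA/ADAM trajectory. Two of your sub-steps are tighter than the paper's: the paper only shows $\omega_i(S;\mathbf{x})\ge 0$, whereas your observation that the first gradient is taken at $0.5\cdot\mathbf{1}_N$ (contributing $2^{-(N-1)}/T$ to every weight) justifies the strict inequality in the statement, and your invariance-under-$(i\,j)$ plus chain-rule argument replaces the paper's explicit splitting of the sum into $S\ni j$ and $S\not\ni j$.
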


	\section{An $ O(LD) $-Time TreeProb for Computing Probabilistic Values}

	\textbf{Generalize the scaling invariability equality.}
	To compute probabilistic values, the only difference lies in how $\psi$ is defined.
	Therefore, to generalize Linear TreeShap for any probabilistic value $ \phi^{\boldsymbol\omega}(f_{\mathbf{x}}) $, it suffices to generalize the scaling invariability equality in Eq.~\eqref{eq:scale and sum}. 
	This is straightforward by noticing that multiplying by $ (1+y) $ simply introduces a null player.
	
	\begin{figure}[t]
		\centering
		\begin{tabular}{cc}
			\includegraphics[width=0.45\columnwidth]{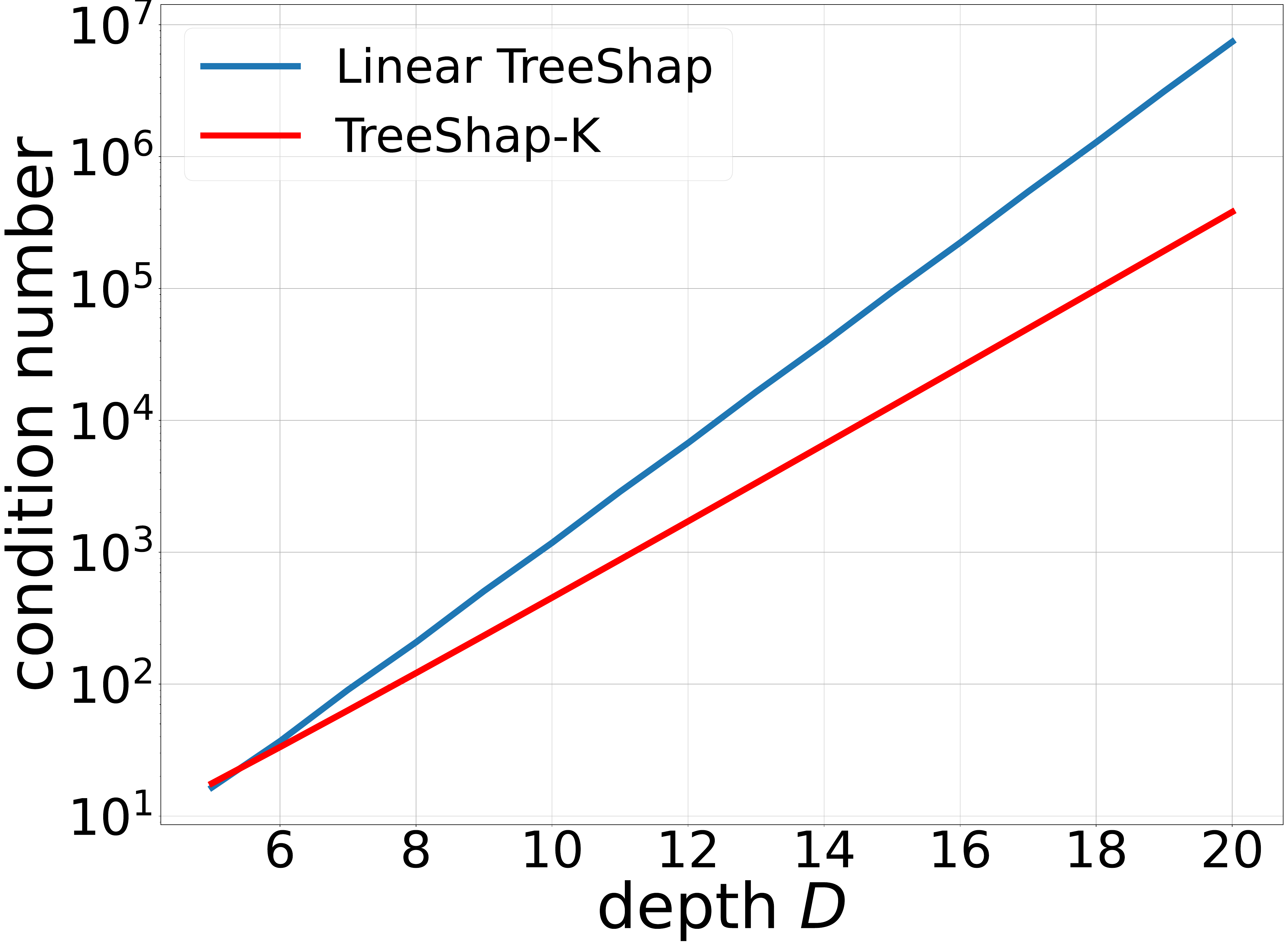}&
			\includegraphics[width=0.47\columnwidth]{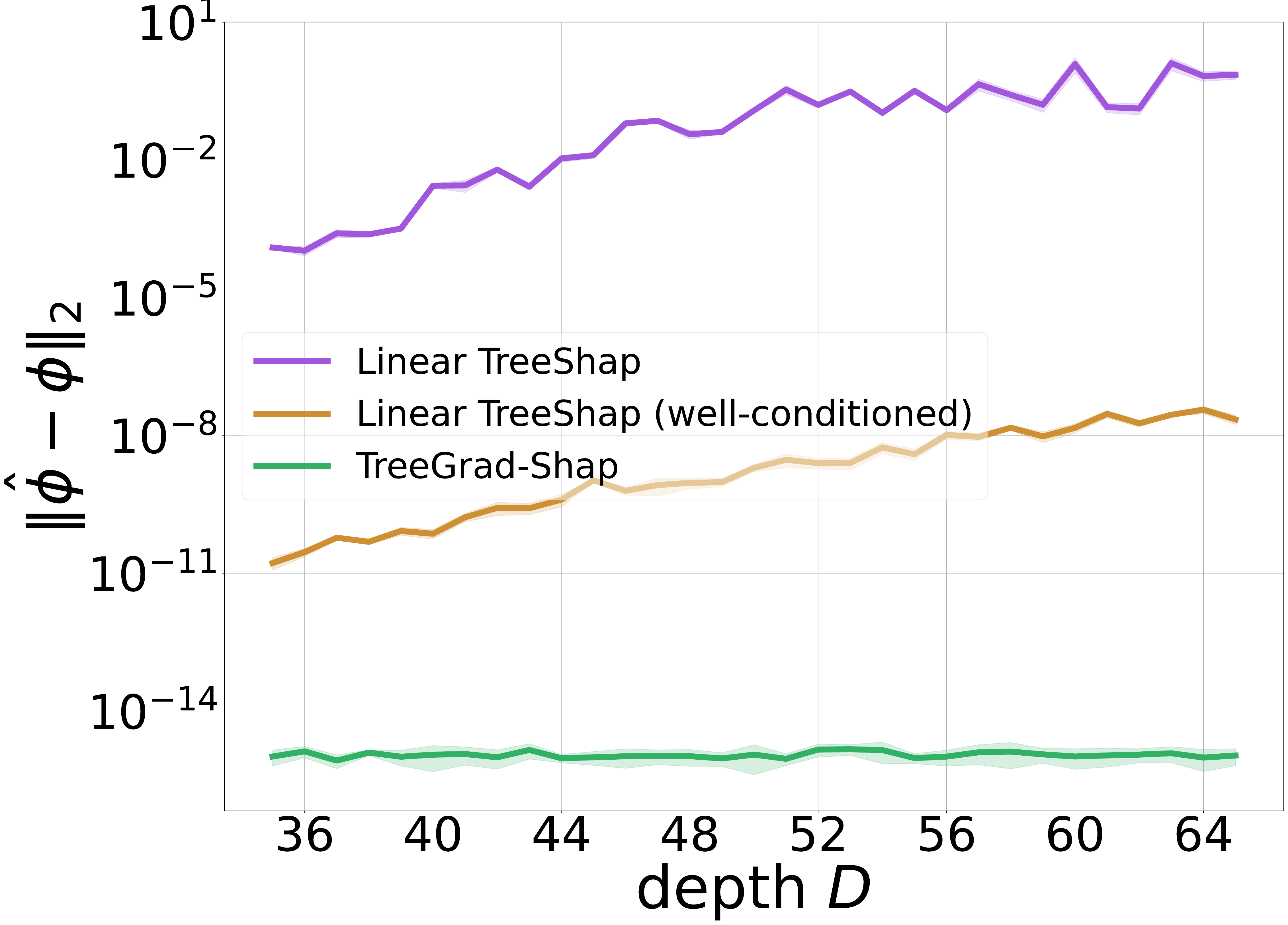}
		\end{tabular}
		\caption{The condition numbers of matrices involved in Linear TreeShap and TreeShap-K, and the inaccuracy of computing the Shapley value.}
		\vspace{-0.5em}
		\label{fig:inaccuracy}
	\end{figure}
	
	\begin{restatable}[Generalization of Eq.~\eqref{eq:scale and sum}]{lemma}{generalizeSI} \label{lem:generalize scaling equality}
		Suppose $ p(y) $ is a polynomial of degree $ N-2 $, we have
		$	\langle p(y)\cdot (1+y),\, \sum_{k=0}^{N-1}\omega_{k+1}\,y^{k} \rangle = \langle p(y),\, \sum_{k=0}^{N-2}(\omega_{k+1}+\omega_{k+2})\,y^{k} \rangle$.    
	\end{restatable}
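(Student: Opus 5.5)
The plan is to prove the identity by expanding both sides in the coefficients of $p$ and comparing them term by term; this is a pure coefficient computation. Write $p(y) = \sum_{k=0}^{N-2} p_{k+1} y^{k}$. Then
\begin{equation}
p(y)(1+y) = \sum_{k=0}^{N-1} (p_{k+1} + p_{k}) y^{k},
\end{equation}
with the conventions $p_{0} = 0$ and $p_{N} = 0$. So $p(y)(1+y)$ has degree $N-1$, and its coefficient vector has exactly the length $N$ of $(\omega_{1},\dots,\omega_{N})$. The inner product on the left is therefore well defined and equals
\begin{equation}
\sum_{k=0}^{N-1}\omega_{k+1}(p_{k+1}+p_{k}).
\end{equation}

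Next I would split this into two sums, $\sum_{k=0}^{N-1}\omega_{k+1}p_{k+1}$ and $\sum_{k=0}^{N-1}\omega_{k+1}p_{k}$.

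In the first sum, the $k=N-1$ term contains $p_{N}=0$, so it equals $\sum_{k=0}^{N-2}\omega_{k+1}p_{k+1}$.

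In the second sum, the $k=0$ term contains $p_{0}=0$. Shifting the index $k\mapsto k+1$ turns it into $\sum_{k=0}^{N-2}\omega_{k+2}p_{k+1}$.

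Adding the two gives $\sum_{k=0}^{N-2}(\omega_{k+1}+\omega_{k+2})p_{k+1}$. By the paper's definition of the inner product of polynomials through their coefficient vectors, this is exactly $\langle p(y),\, \sum_{k=0}^{N-2}(\omega_{k+1}+\omega_{k+2})y^{k}\rangle$, which proves the lemma.

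There is no real obstacle. The only care needed is the boundary bookkeeping: the inner product requires coefficient vectors of equal length, and the degrees $N-2$ and $N-1$ are chosen precisely so that this holds on each side. Concretely, this is what the conventions $p_{0}=p_{N}=0$ enforce.

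I would also note the interpretation suggested in the text. If $p$ encodes the generating polynomial over $N-1$ players, multiplying by $(1+y)$ adds a null player, and the new weights $\omega_{k+1}+\omega_{k+2}$ are the standard consistency relation for probabilistic values under removal of a null player. This recovers Eq.~\eqref{eq:scale and sum} as the Shapley special case. Iterating the lemma handles the multiplication by $(1+y)^{k}$ used in TreeProb.
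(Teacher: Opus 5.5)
Your proof is correct and takes the same route as the paper. Both expand $p(y)(1+y)$ coefficient-wise, split the resulting sum in two, and shift the index in the second sum so that each coefficient of $p$ is paired with $\omega_{k+1}+\omega_{k+2}$. Your explicit conventions $p_{0}=p_{N}=0$ also make the boundary bookkeeping cleaner than the paper's one-line display, which silently switches to writing the coefficient of $y^{k}$ as $p_{k}$ rather than $p_{k+1}$.
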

	
	Then, $ \psi\left( p(y)\cdot (1+y)^{k} \right) = \psi\left( p(y) \right)  $ in Eq.~\eqref{eq:scale and sum} is simply the result of recursively applying the equality in Lemma~\ref{lem:generalize scaling equality} while substituting $ \omega_{\ell} = \frac{(\ell-1)!(N-\ell)!}{N!} $. For semi-values, using Eq.~\eqref{eq:coefficients of semi-values}, their scaling invariability equality can be simplified as
	\begin{equation} \label{eq:polynomial of semivalues}
		\begin{gathered}
			\langle p(y)\cdot (1+y)^{k},\, q^{\mu}_{\mathrm{deg}(p)+k}(y) \rangle = \langle p(y),\, q^{\mu}_{\mathrm{deg}(p)}(y) \rangle,
		\end{gathered}
	\end{equation}
	where $q^{\mu}_{\ell}(y) = \sum_{k=0}^{\ell}\left( \int_{0}^{1} t^{k}(1-t)^{\ell-k}\mathrm{d}\mu(t) \right)y^{k}$.

	\textbf{Inherent numerical instability in Linear TreeShap and TreeShap-K.}
	As detailed in Appendix~\ref{app:psi}, Linear TreeShap involves inverting a Vandermonde matrix based on the Chebyshev nodes of the second kind. However, as proved by \citet[Theorem 4.1]{tyrtyshnikov1994bad}, the condition number of Vandermonde matrices based on real-valued nodes grows exponentially w.r.t. $ D $. Therefore, Linear TreeShap potentially suffers from numerical instability when $ D $ is large.
	Though TreeShap-K is free of Vandermonde matrices, it also faces such a numerical issue as it contains a step that essentially solves an ill-conditioned linear equation; see Appendix~\ref{app:algorithms} for details.
	Figure~\ref{fig:inaccuracy} shows that the condition numbers of the involved matrices are already over $ 100,000 $ when $ D = 20 $. Furthermore, it also concretely demonstrates how inaccurate Linear TreeShap would be. Refer to Appendix~\ref{app:exp} for further experimental details and results.
	
	\textbf{Well-conditioned Vandermonde matrices.}
	According to \citet{pan2016bad}, a Vandermonde matrix of large size is ill-conditioned unless its nodes are more or less spaced equally on the unit circle. Precisely, let $ \{ \chi_{k} \}_{1\leq k\leq n} $ be a list of complex-valued nodes where $ \chi_{k} \coloneqq e^{\frac{i2(k-1)\pi}{D}} $. Then, the condition number of the Vandermonde matrix $ \mathbf{V} \in \mathbb{C}^{D\times D} $ defined by letting $ V_{ij} \coloneqq \chi_{i}^{j-1} $ is perfectly $ 1 $, independent of $ D $. Another nice properties are $ \mathbf{V}^{-1} = \frac{1}{D}\overline{\mathbf{V}} $ and $\mathbf{V}^{\mathsf{T}} = \mathbf{V}$. Therefore, we will use this $ \mathbf{V} $ for encoding and decoding polynomials in our TreeProb algorithm. We refer the reader to Appendix~\ref{app:treeprob} for how TreeProb is implemented. 
	Concurrently, \citet{jiang2025fast} also observe this numerical issue and propose an identical solution when computing the Shapley value of $R^{2}$ for individual features.
	However, this solution is not fully numerically stable; it instead introduces numerical underflow when the number of nodes is large. As demonstrated in Appendix~\ref{app:exp}, the numerical error of TreeProb still blows up as the number of nodes increases.
	
	\begin{figure*}[t]
		\centering
		\begin{tabular}{ccc}
			\multicolumn{3}{c}{\includegraphics[width=0.9\linewidth]{legend_comparison.pdf}} \\
			\includegraphics[width=0.3\linewidth]{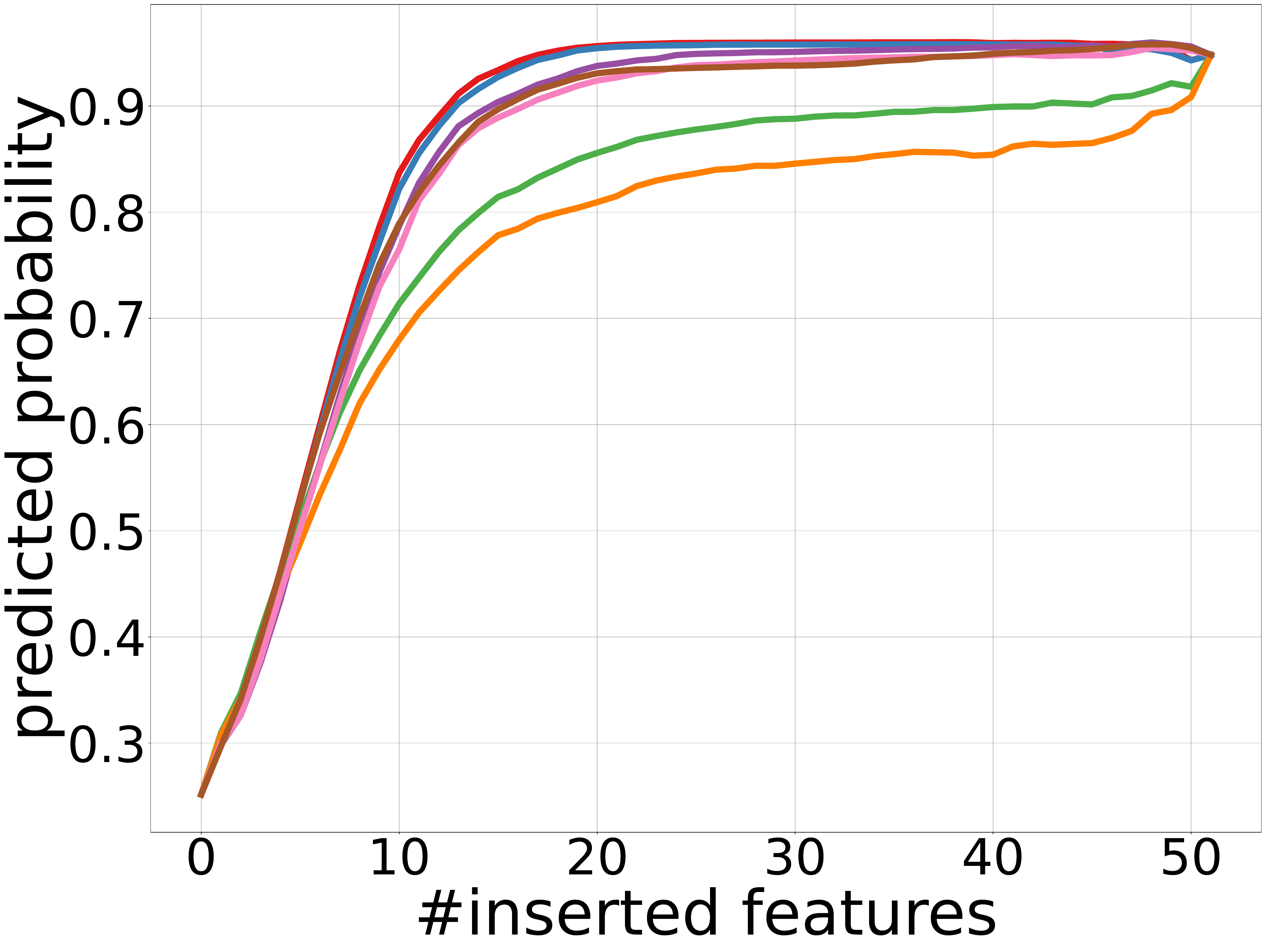} & \includegraphics[width=0.3\linewidth]{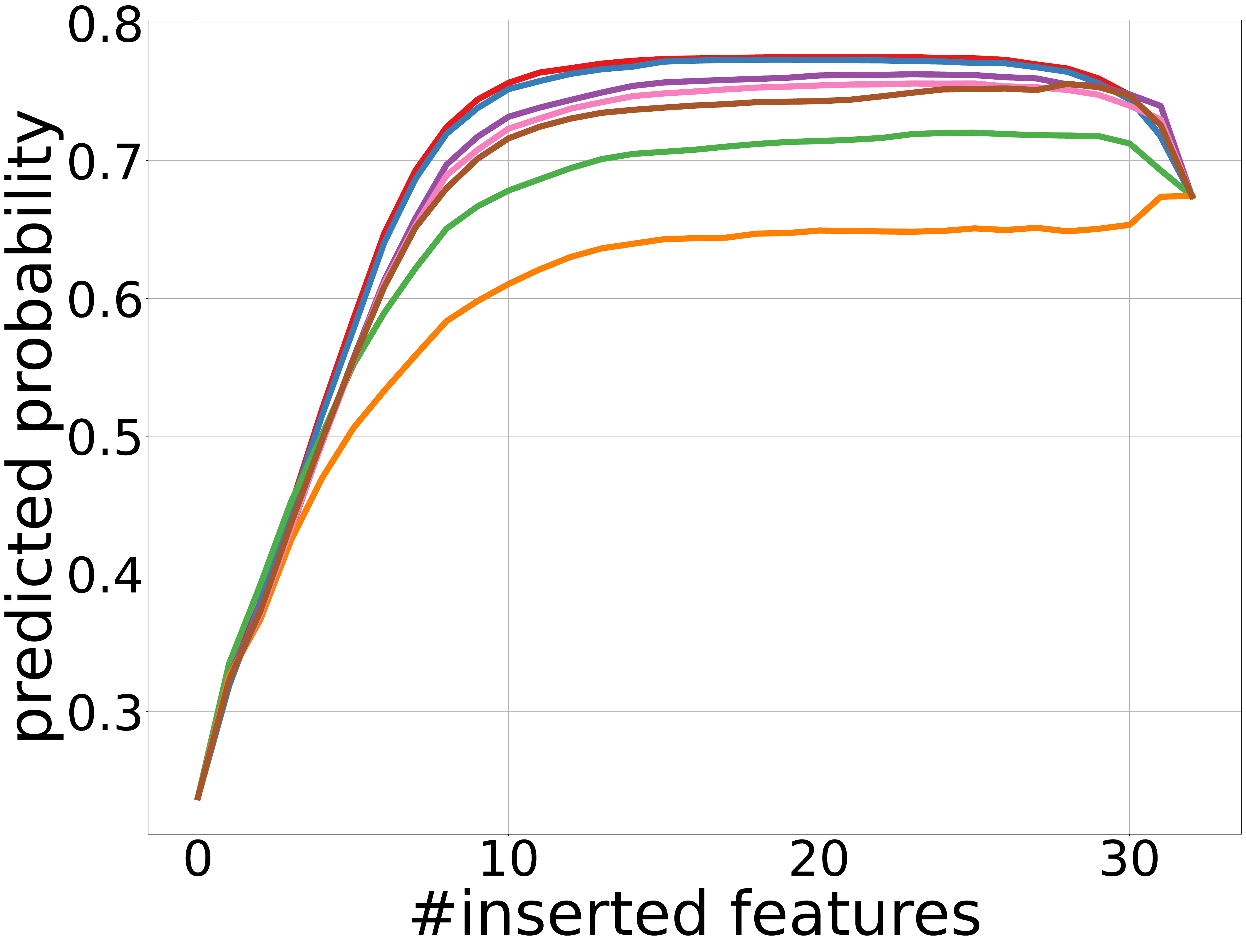} &
			\includegraphics[width=0.3\linewidth]{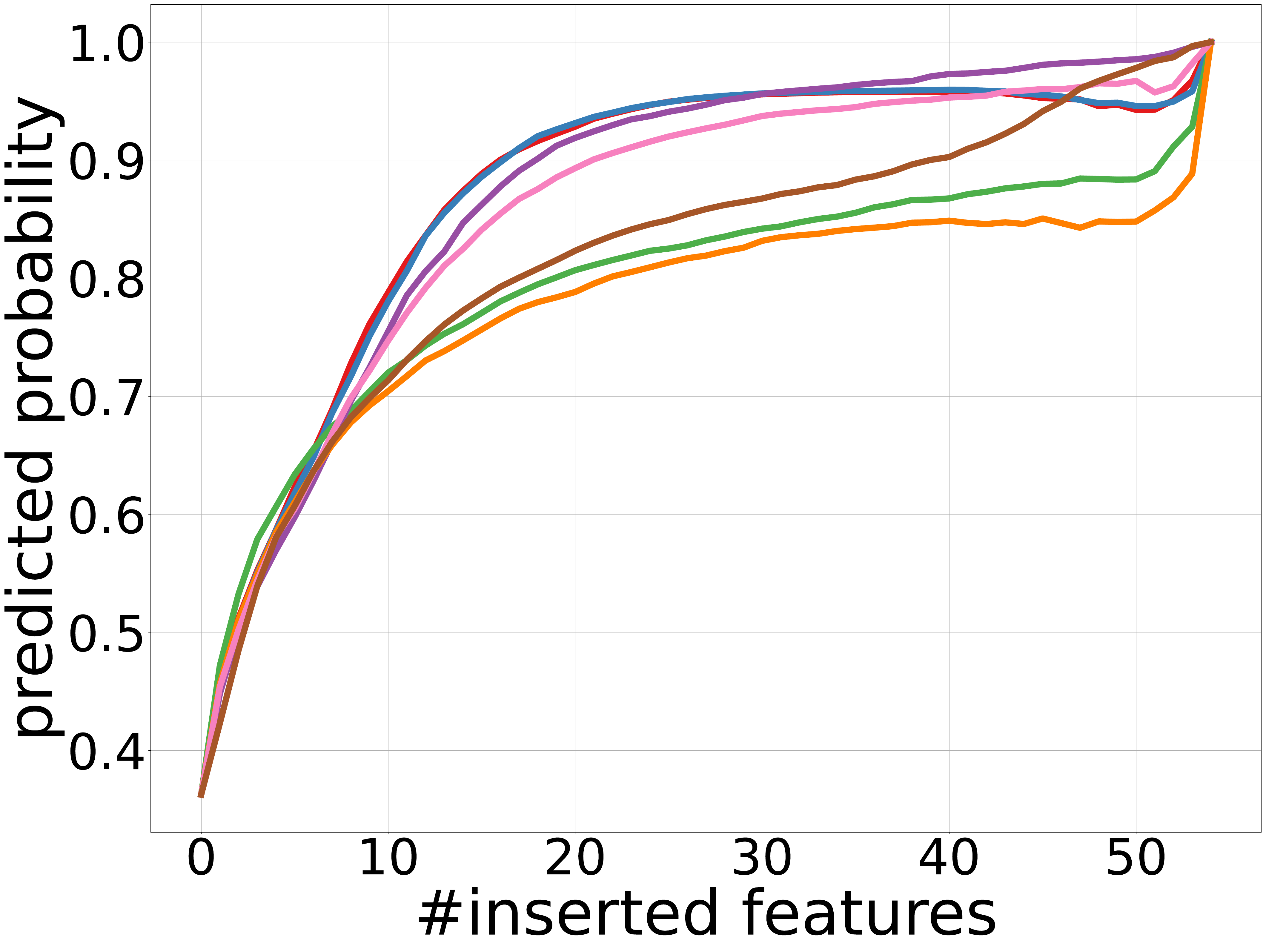} \\
			\includegraphics[width=0.3\linewidth]{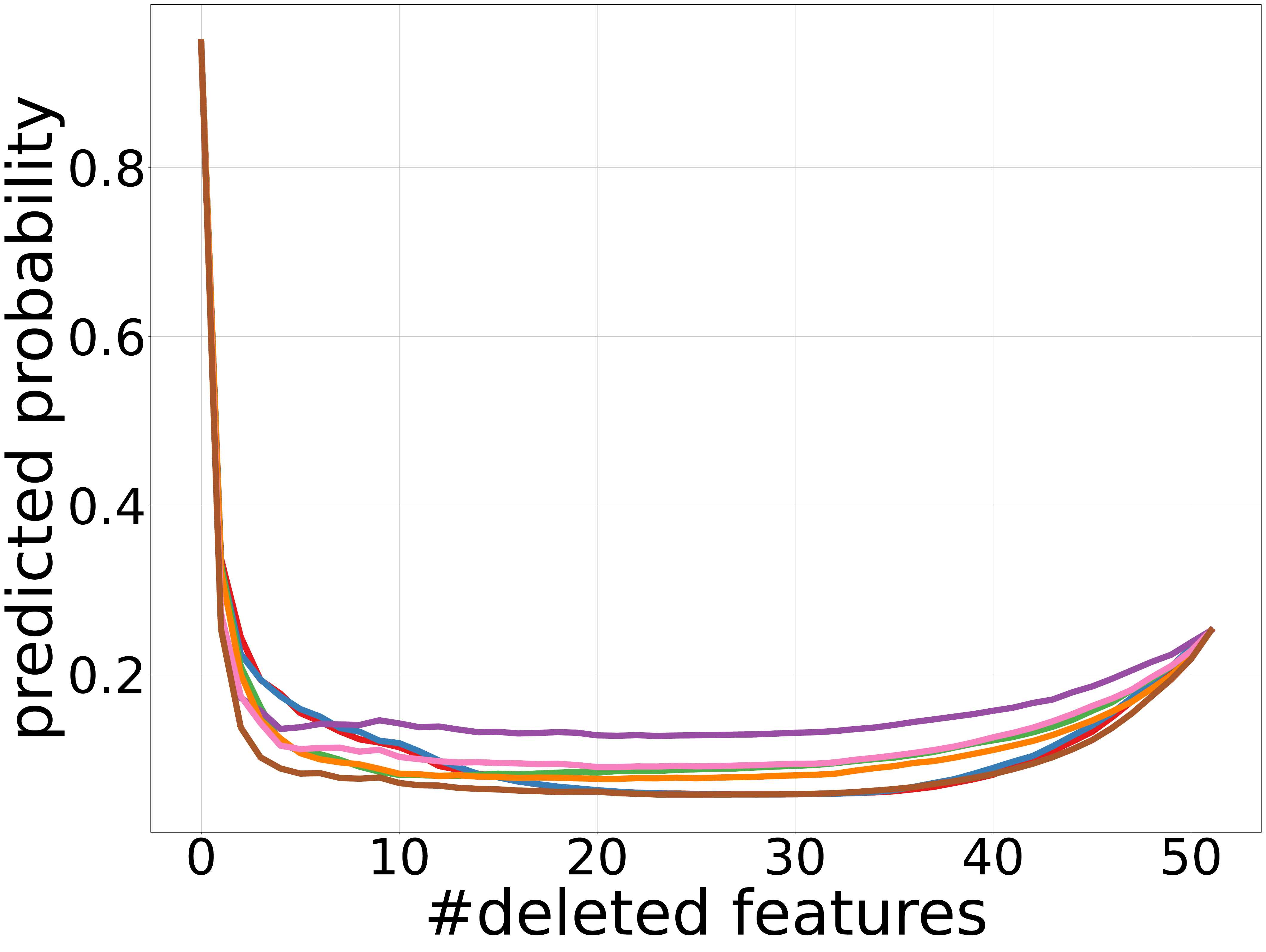} & \includegraphics[width=0.3\linewidth]{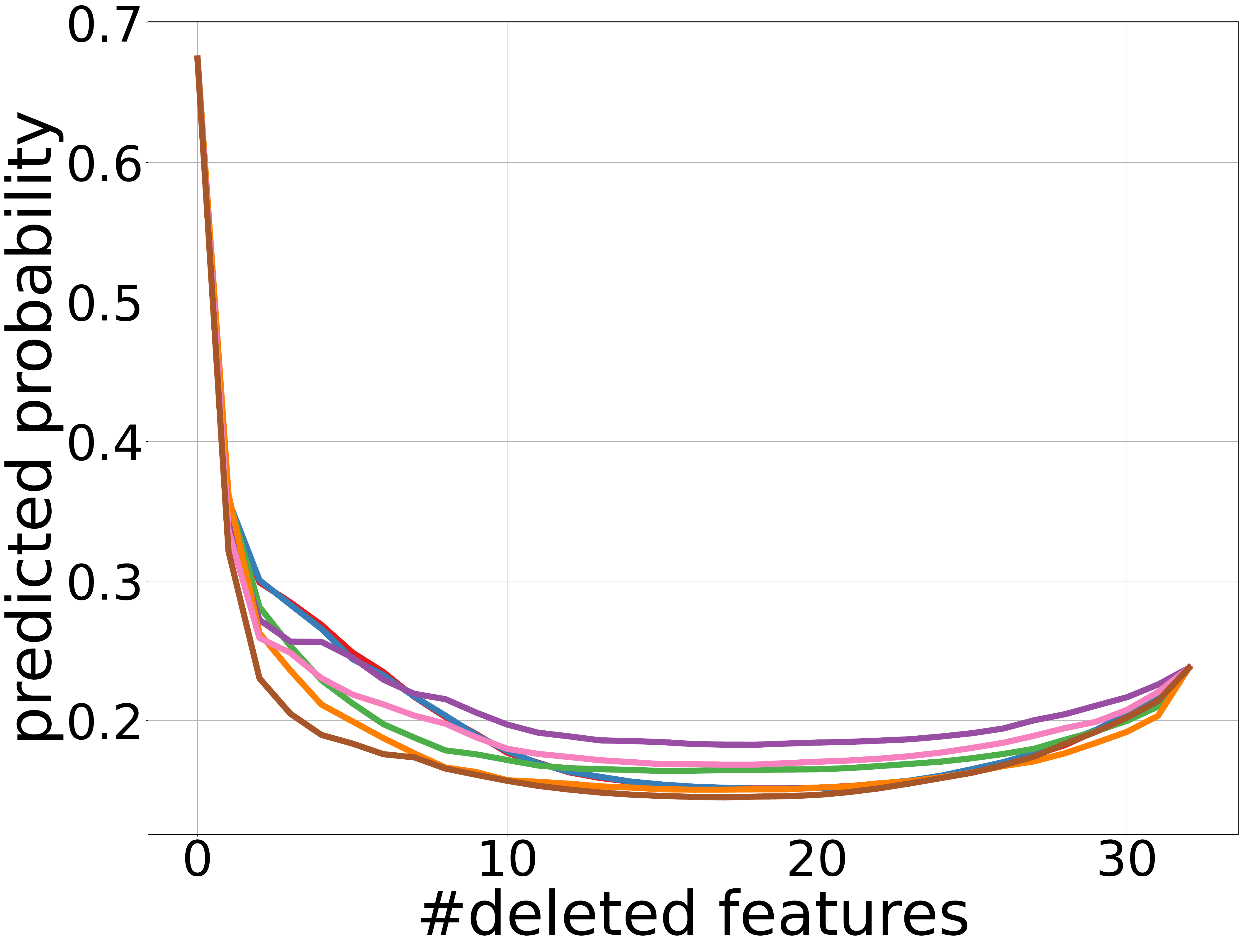} &
			\includegraphics[width=0.3\linewidth]{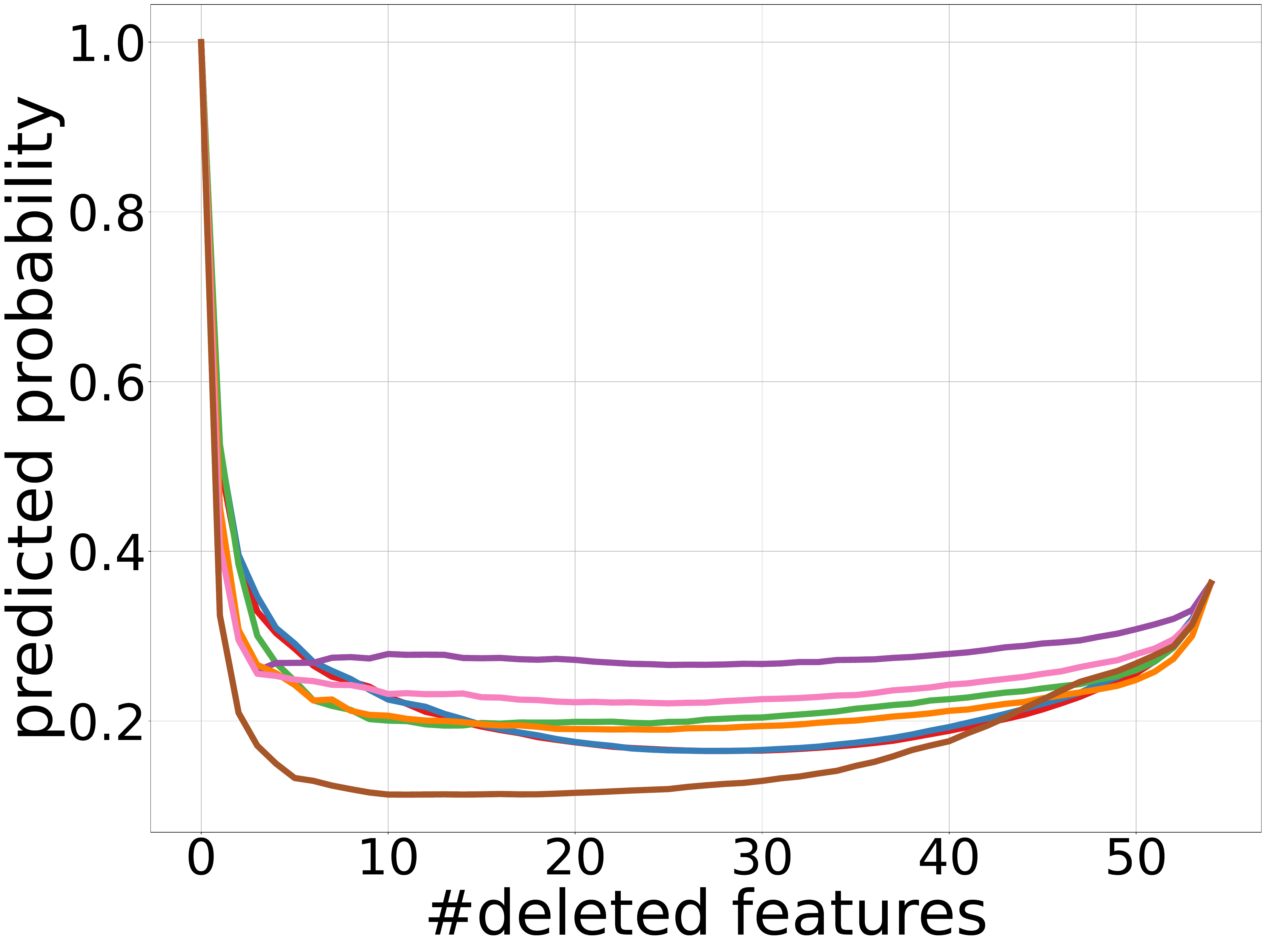} \\
			\phantom{aaaa}FOTP ($D=20$) & \phantom{aaaa}GPSP ($D=10$) & \phantom{aaaa}jannis ($D=40$)
		\end{tabular}
		\caption{The trained models are \textbf{decision trees}. The first row shows the insertion curves, while the second presents the deletion curves. A higher insertion curve or a lower deletion curve indicates better performance. For our TreeGrad-Ranker, we set $ T=100 $ and $ \epsilon=5 $.
		Beta-Insertion selects the candidate Beta Shapley value that maximizes the $\mathrm{Ins}$ metric for each $f_{\mathbf{x}}$, whereas Beta-Deletion minimizes the $\mathrm{Del}$ metric. Beta-Joint selects the candidate that maximizes the joint metric $\mathrm{Ins} - \mathrm{Del}$.
		The output of each $f(\mathbf{x})$ is set to be the probability of \textbf{its predicted class}. }
		\label{fig:cla-pre-first}
	\end{figure*}
	
	\textbf{Numerically accurate algorithms for computing weighted Banzhaf values and Beta Shapley values with integral parameters.} For the Shapley value, $ \mu $ is the uniform distribution \citep[Theorem 1(a$^{\prime}$)]{dubey1981value}, i.e., $\phi^{\mathrm{Shap}}(f_{\mathbf{x}}) = \int_{0}^{1} \nabla \overline{f}_{\mathbf{x}}(t\cdot\mathbf{1}_{N}) \mathrm{d}t$. Then, we have $\phi^{\mathrm{Shap}}(f_{\mathbf{x}}) = \sum_{\ell=1}^{\lceil\frac{D}{2}\rceil}\kappa_{\ell}\cdot\nabla \overline{f}_{\mathbf{x}}(t_{\ell}\cdot\mathbf{1}_{N})$ using the Gauss–Legendre quadrature rule. 
	Since $\kappa_{\ell} > 0$ and $\sum_{\ell}^{\lceil\frac{D}{2}\rceil}\kappa_{\ell} = 1$, the Shapley value is just a weighted sum of finitely many gradients. 
	Likewise, this argument extends to Beta Shapley values with integral parameters, i.e., Beta$(\alpha, \beta)$ with $\alpha, \beta \in \mathbb{N}\setminus\{0\}$.
	This observation immediately leads to our TreeGrad-Shap shown in Algorithms~\ref{alg:TreeGrad-Shap} and~\ref{alg:vectorized treegrad-shap}. Both algorithms run in $O(LD)$ time; the former consumes $O(L)$ space, while the latter vectorizes its for loop and therefore uses $O(D^{2}+L)$ space. Note that TreeGrad-Shap inherits its numerical stability from TreeGrad, which computes weighted Banzhaf values.
	
	\begin{restatable}{corollary}{efficientShapley} \label{thm:lighted Shapley of decision trees}
		TreeGrad-Shap (Algorithm~\ref{alg:vectorized treegrad-shap}) computes Beta Shapley value with integral parameters using $O(LD)$ time and $O(D^{2}+L)$ space. Without vectorization (Algorithm~\ref{alg:TreeGrad-Shap}), its space complexity is $ O(L) $.
	\end{restatable}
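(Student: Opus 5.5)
Beta Shapley values with integral parameters are semi-values whose measure $\mu$ is the Beta$(\alpha,\beta)$ distribution, with density proportional to $t^{\alpha-1}(1-t)^{\beta-1}$. The plan is to show three things: $\phi^{\mu}(f_{\mathbf{x}})$ is an exact finite positive combination of $O(D)$ gradients $\nabla \overline{f}_{\mathbf{x}}(t_\ell\cdot\mathbf{1}_N)$; each gradient costs $O(L)$ time and space by TreeGrad; and the vectorized variant only adds $O(D^2)$ workspace.

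\textbf{Integral representation.} By Eq.~\eqref{eq:coefficients of semi-values} and the gradient formula Eq.~\eqref{eq:gradients} evaluated at $\mathbf{z}=t\mathbf{1}_N$,
\begin{equation}
\phi^{\mu}_i(f_{\mathbf{x}})=\int_0^1 \nabla_i\overline{f}_{\mathbf{x}}(t\mathbf{1}_N)\,\mathrm{d}\mu(t).
\end{equation}
This holds because the coefficient of $[f_{\mathbf{x}}(S\cup i)-f_{\mathbf{x}}(S)]$ in the gradient is $t^{s}(1-t)^{N-1-s}$, and integrating it against $\mu$ gives exactly $\omega_{s+1}$.

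\textbf{Degree bound.} I restrict to the nodes actually visited along a root-to-leaf path. For each leaf $v$, $\nabla_i\overline{R}^v_{\mathbf{x}}(t\mathbf{1}_N)$ is given by the earlier lemma as $(\gamma_{i,v}-1)\frac{\varrho_v c_v}{c_r}\prod_{j\in F_v\setminus i}(1-t+t\gamma_{j,v})$. This is a polynomial in $t$ of degree at most $|F_v|-1\le D-1$, so $\nabla\overline{f}_{\mathbf{x}}(t\mathbf{1}_N)$ is a vector of polynomials of degree at most $D-1$. Multiplying by the integral-parameter Beta density gives an integrand $t^{\alpha-1}(1-t)^{\beta-1}p(t)$.

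\textbf{Quadrature.} I would use Gauss–Jacobi quadrature with weight $t^{\alpha-1}(1-t)^{\beta-1}$ on $[0,1]$, or equivalently Gauss–Legendre applied to the full polynomial of degree at most $D+\alpha+\beta-3$. An $m$-point rule is exact for degree $2m-1$, so $m=\lceil D/2\rceil$ nodes suffice with Gauss–Jacobi, and $O(D)$ nodes with Gauss–Legendre since $\alpha,\beta$ are fixed constants. The weights $\kappa_\ell$ are positive, and their sum is $1$ after normalizing the measure.

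\textbf{Complexity.} Algorithm~\ref{alg:TreeGrad-Shap} loops over the $O(D)$ nodes and calls TreeGrad (Algorithm~\ref{alg:TreeGrad-full}) each time, accumulating $\kappa_\ell\mathbf{g}$. This costs $O(L)$ time per call, hence $O(LD)$ in total. The space is $O(L)$ because the buffers $H$ and $\mathbf{g}$ are reused across iterations. Precomputing the nodes and weights costs $O(D^2)$ time via Golub–Welsch or a similar method, which is $O(LD)$ since $D<L$.

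For the vectorized Algorithm~\ref{alg:vectorized treegrad-shap}, all $O(D)$ quadrature points are carried simultaneously, so each traversal state $s$ becomes a length-$O(D)$ vector. The recursion stack has depth $D$, so live states take $O(D\cdot D)=O(D^2)$ space. The $H$ array must not be stored as an $L\times D$ table. Instead, an $H[v]$ entry is needed only while $v$ lies on the current path or is a pending ancestor $h_{e^\uparrow}$, and such entries live on the current root-to-leaf path. Storing them per depth gives $O(D^2)$, plus $O(L)$ for the tree itself. The time remains $O(LD)$.

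\textbf{Main obstacle.} The step I expect to be hardest is this space accounting: I must verify that in Algorithm~\ref{alg:vectorized treegrad-shap} the $H$ updates at $h_{e^\uparrow}$ touch only ancestors on the active path, so they can be stack-allocated. Exactness of the quadrature and positivity of its weights follow from standard Gaussian quadrature theory.
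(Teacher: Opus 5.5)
Your proposal is correct and follows essentially the same route as the paper: write the semi-value as $\int_0^1 \nabla\overline{f}_{\mathbf{x}}(t\cdot\mathbf{1}_N)\,\mathrm{d}\mu(t)$, bound each leaf's contribution by degree $|F_v|-1\le D-1$, apply an exact Gaussian quadrature rule with $O(D)$ nodes, and call the $O(L)$-time TreeGrad once per node while reusing its buffers. Your argument that each $H[v]$ in the vectorized version is live only while $v$ is on the recursion stack, so $H$ needs $O(D^2)$ rather than $O(LD)$ storage, fills in a step the paper's proof leaves implicit; the only other difference is that the paper's convention writes the Beta density as $t^{\beta-1}(1-t)^{\alpha-1}$, which is a harmless relabeling for the complexity claims.
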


	\section{Empirical Results}
	\textbf{Metrics.}
	The goal of our experiments is to compare TreeGrad-Ranker with the Beta Shapley values using the insertion and deletion metrics \citep{petsiuk2018rise}. We report the averaged insertion and deletion curves and note that the values of $\mathrm{Ins}$ and $\mathrm{Del}$ defined in Eq.~\eqref{eq:metrics} correspond to the areas under the respective curves. Thus, in terms of increasing $f(\mathbf{x})$, a higher insertion curve indicates better ranking performance at the top, while a lower deletion curve reflects better ranking quality at the bottom.

	\textbf{Datasets.}
	Our experiments are conducted using nine datasets from OpenML. Datasets of classification include: (1) first-order-theorem-proving (FOTP), (2) GesturePhaseSegmentationProcessed (GPSP), (3) jannis, (4) spambase, (5) philippine and (6) MinibooNE. Datasets of regression include: (7) Buzzinsocialmedia\_Twitter (BT), (8) superconduct and (9) wave\_energy. We randomly separate each dataset into a training dataset ($ 80\% $) and a test dataset ($ 20\% $). The training set is used to train decision trees, and each $\mathbf{x}$ is selected from the test set. For BT and wave\_energy, we scale its predicted values for better presentation. A summary of the used datasets can be found in Table~\ref{tab:sum}.
	
	\textbf{Baselines.}
	Following \citet{kwon2022beta,kwon2022weightedshap},
	the selected baselines include Beta Shapley values Beta$(\alpha,\beta)$ with $ (\alpha, \beta) \in \{ (16,1), (8,1), (4,1), (2,1), (1,1), (1,2),
	(1,4), (1,8), (1,16)\} $, and the Banzhaf value. Recall that Beta$(1,1)$ corresponds to the Shapley value. To avoid noticeable numerical errors, they are computed using TreeGrad-Shap (Algorithm~\ref{alg:TreeGrad-Shap}) and TreeGrad (Algorithm~\ref{alg:TreeGrad}).\footnote{ These two algorithms are combined into TreeStab in our GitHub repository.} We also compare with a greedy algorithm. Starting from $S_{0}\coloneqq \emptyset$, the $t$-th top feature $\pi(t)$ is selected as $\argmax_{i \in [N]\setminus S_{t-1}} f_{\mathbf{x}}(S_{t-1}\cup i) - f_{\mathbf{x}}([N]\setminus(S_{t-1}\cup i))$, and we update $S_{t} \coloneqq S_{t-1}\cup \pi(t) $.
	
	All decision trees are trained using the scikit-learn library \citep{pedregosa2011scikit}.
	To account for the variability of $f_{\mathbf{x}}$, we randomly select $ 200 $ instances of $\mathbf{x}$ to report the averaged results for each trained decision tree $f$. We fix the random seed to $2025$ to ensure reproducibility. 
	More details and results are deferred to the Appendix~\ref{app:exp}. 
	
	\begin{figure*}[t]
		\centering
		\begin{tabular}{ccc}
			\multicolumn{3}{c}{\includegraphics[width=0.9\linewidth]{legend_comparison.pdf}} \\
			\includegraphics[width=0.3\linewidth]{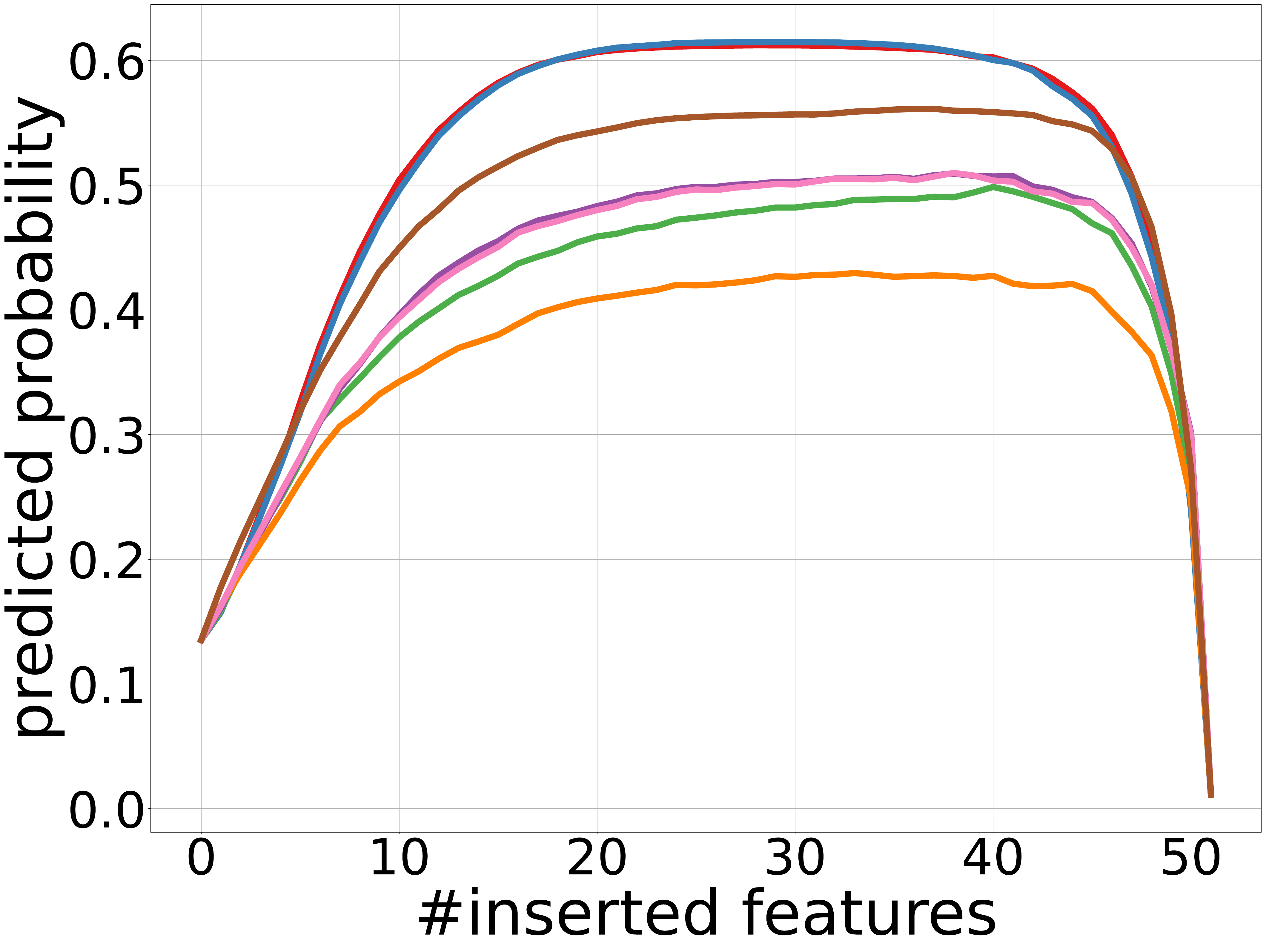} & \includegraphics[width=0.3\linewidth]{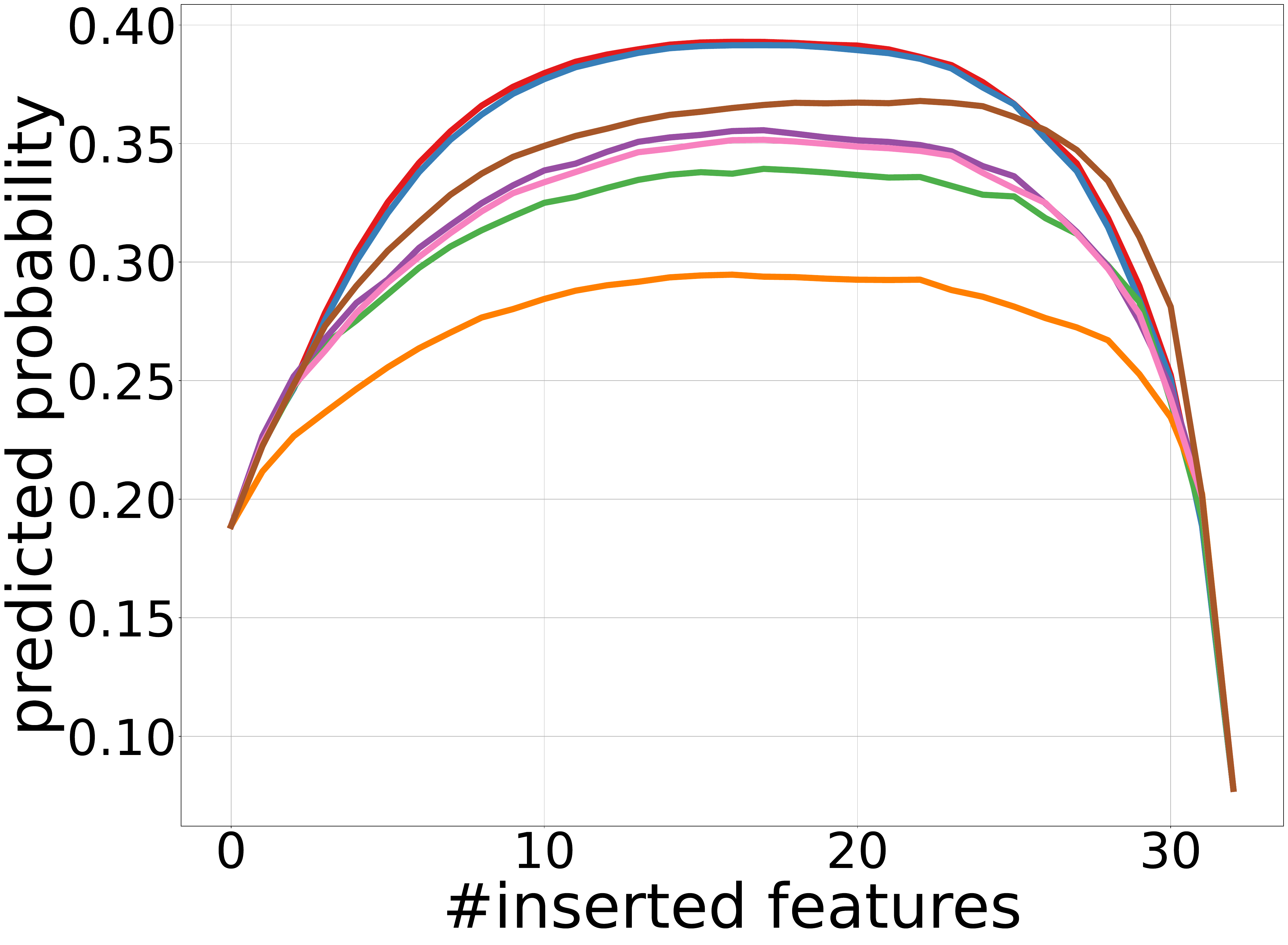} &
			\includegraphics[width=0.3\linewidth]{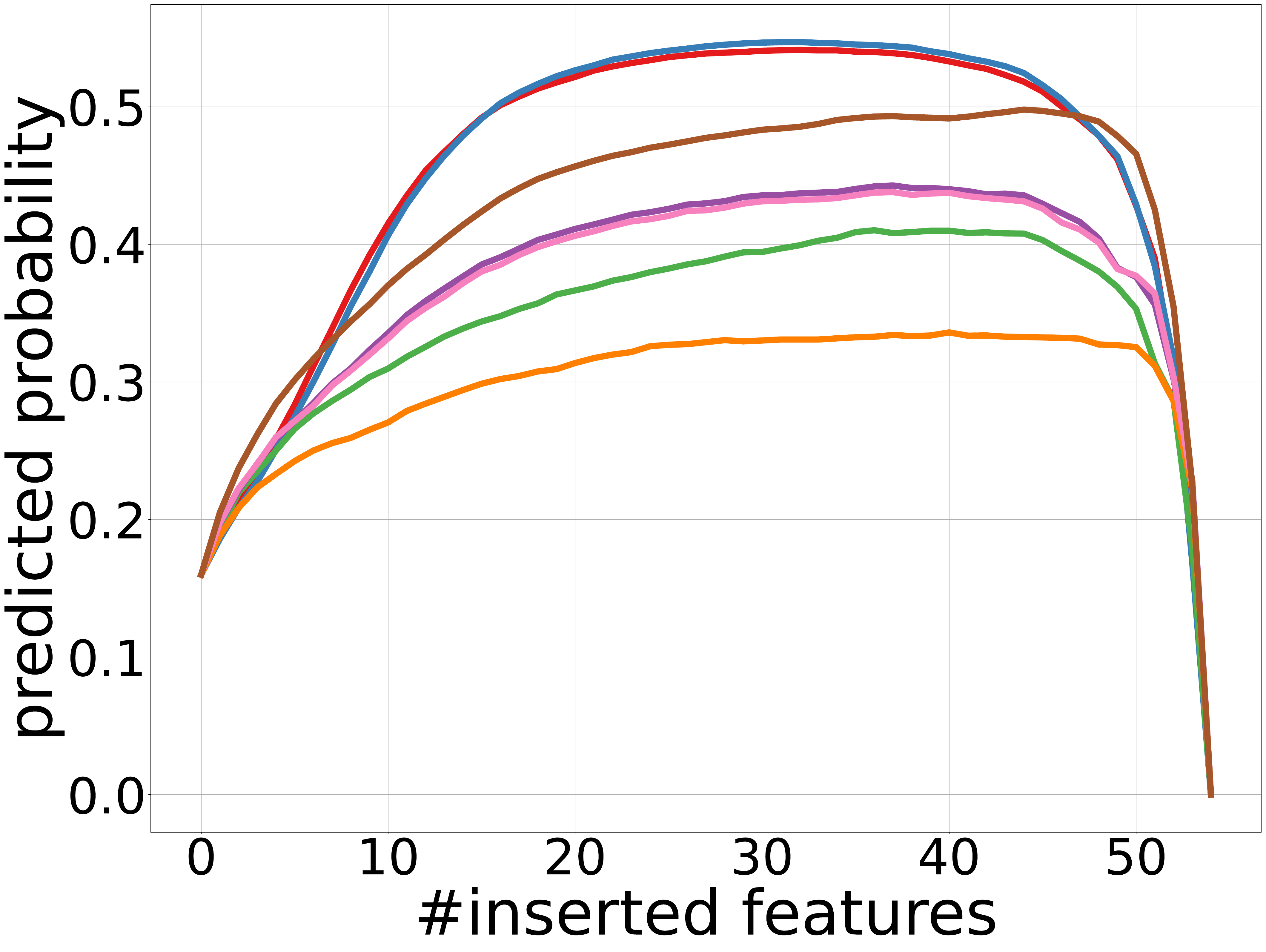} \\
			\includegraphics[width=0.3\linewidth]{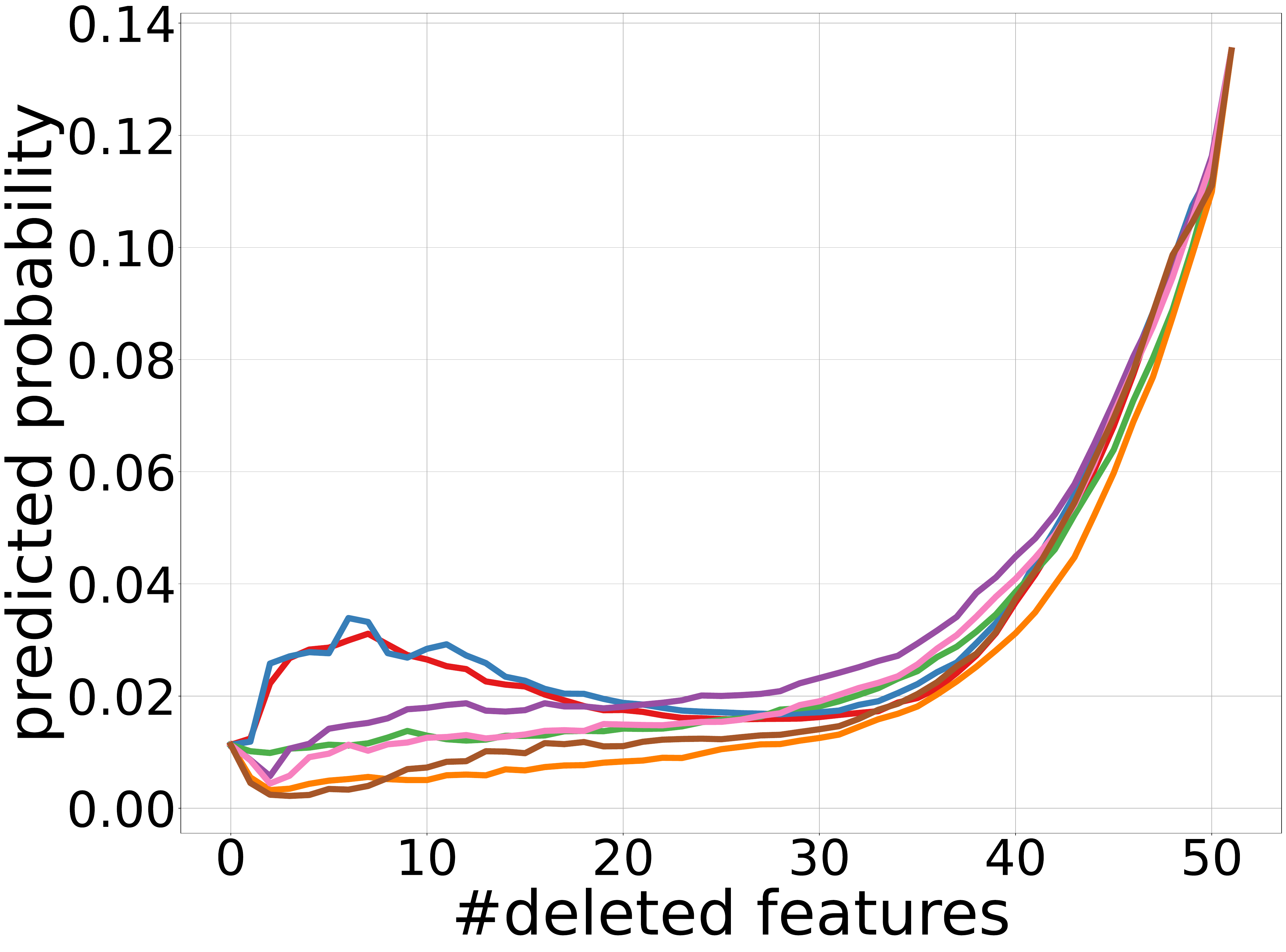} & \includegraphics[width=0.3\linewidth]{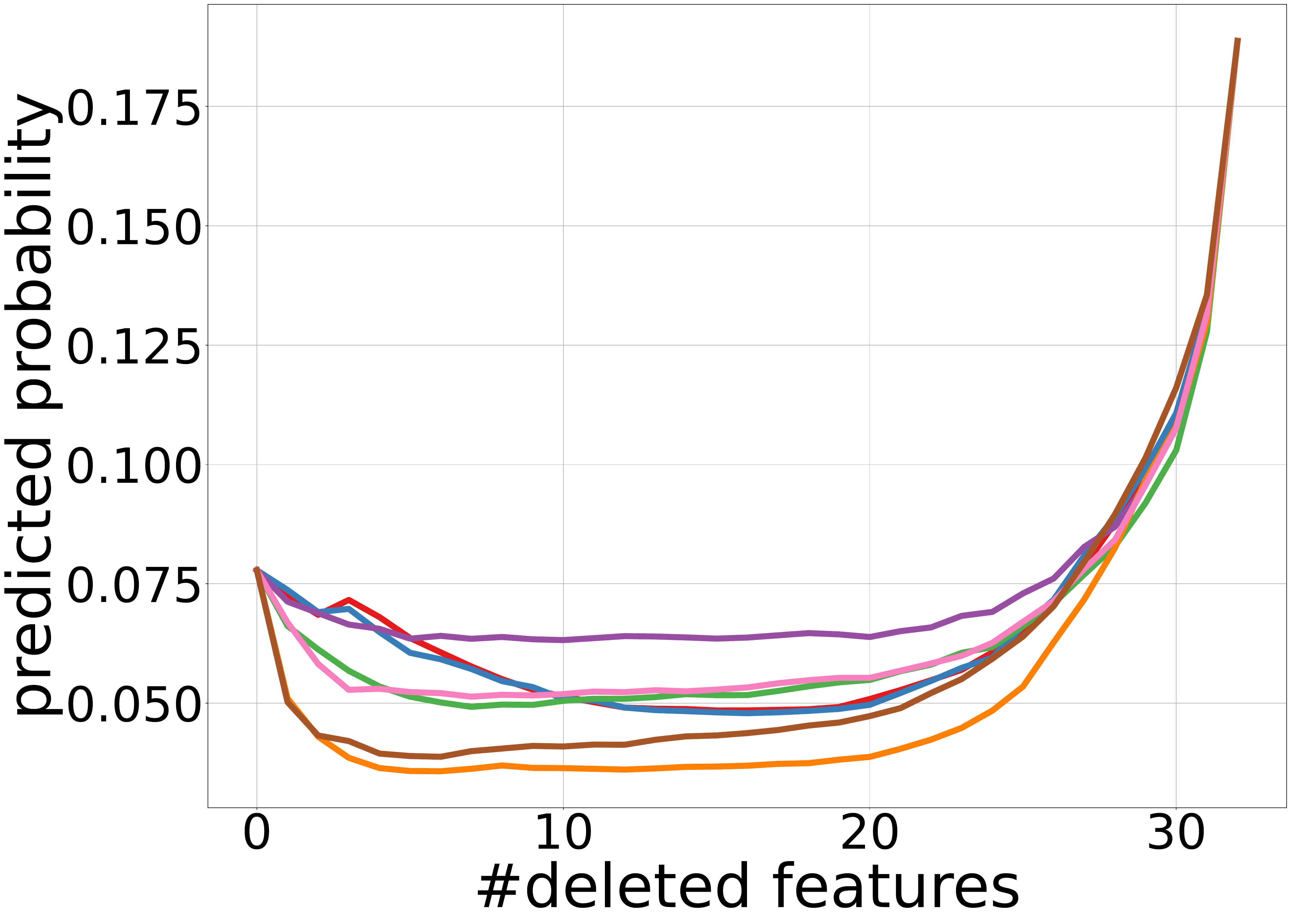} &
			\includegraphics[width=0.3\linewidth]{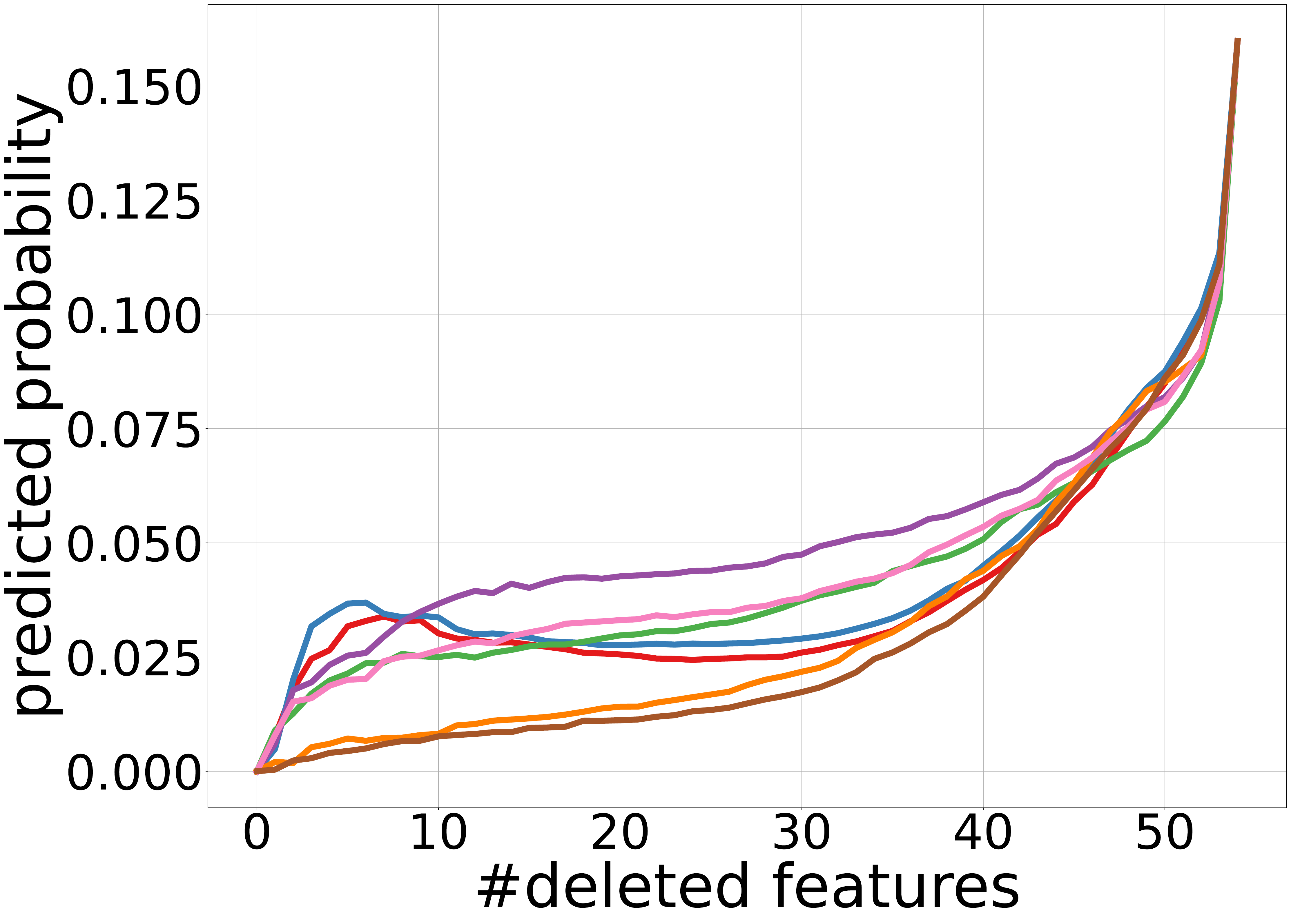} \\
			\phantom{aaaaa}FOTP ($D=20$) & \phantom{aaaaa}GPSP ($D=10$) & \phantom{aaaaa}jannis ($D=40$)
		\end{tabular}
		\caption{The trained models are \textbf{decision trees}. The first row shows the insertion curves, while the second presents the deletion curves. A higher insertion curve or a lower deletion curve indicates better performance. For our TreeGrad-Ranker, we set $ T=100 $ and $ \epsilon=5 $.
		Beta-Insertion selects the candidate Beta Shapley value that maximizes the $\mathrm{Ins}$ metric for each $f_{\mathbf{x}}$, whereas Beta-Deletion minimizes the $\mathrm{Del}$ metric. Beta-Joint selects the candidate that maximizes the joint metric $\mathrm{Ins} - \mathrm{Del}$.
		The output of each $f(\mathbf{x})$ is set to be the probability of \textbf{a randomly sampled non-predicted class}.}
		\label{fig:cla-other-first}
	\end{figure*}
	
	For classification tasks, the output of each $f(\mathbf{x})$ is chosen to be the probability of either (i) its predicted class or (ii) a randomly sampled non-predicted class.
	All the results of decision trees are presented in Figures~\ref{fig:cla-pre-first} and \ref{fig:cla-other-first}, with additional results provided in Appendix~\ref{app:exp}.
	Our observations are as follows: (1) For Beta Shapley values, the one that achieves the best insertion metric often does not achieve the best deletion metric, and vice versa; (2) Our TreeGrad-Ranker significantly outperforms the selected baselines on the insertion metric while remaining competitive on the deletion metric.

	\section{Conclusion}
	In this work, we revisit the evaluation of feature rankings through insertion and deletion metrics and demonstrate their connection to the problem~\eqref{op:main joint objective}. 
	From this perspective, we establish theoretical limitations of probabilistic values, showing that they are no better than random in distinguishing patterns and that their reliance on linearization may render them ineffective. 
	To address these shortcomings, we propose directly solving the problem~\eqref{op:main joint objective} using gradient ascent. We developed TreeGrad to compute these gradients for decision trees in $O(L)$ time. 
	We show that solving problem~\eqref{op:main joint objective} yields feature scores that satisfy all axioms uniquely characterizing probabilistic values, except linearity. Furthermore, our TreeGrad-Shap is the only numerically stable algorithm for computing Beta Shapley values with integral parameters.
	Our experiments show that TreeGrad-Ranker outperforms probabilistic values on both insertion and deletion metrics.
	
	\section*{Acknowledgment}
	This research is supported by the National Research Foundation (NRF), Prime Minister's Office, Singapore under its Campus for Research Excellence and Technological Enterprise (CREATE) programme. The Mens, Manus, and Machina (M3S) is an interdisciplinary research group (IRG) of the Singapore MIT Alliance for Research and Technology (SMART) centre. 
	YY gratefully acknowledges NSERC and CIFAR for funding support. 
	
	\clearpage
	
	\bibliography{iclr2026_conference}
	\bibliographystyle{iclr2026_conference}
	
	\newpage
	\appendix
	{
		\hypersetup{linkcolor=ForestGreen}
		\renewcommand\thepart{}
		\renewcommand\partname{}
		\addcontentsline{toc}{section}{Appendix} 
		\part{} 
		\parttoc 
	}
	

	\section{Semi-Values} \label{app:semi}
	One popular way to rank features is by using the Shapley value, as it uniquely satisfies the axioms of linearity, null, symmetry, and efficiency \citep{shapley1953value}. Based on the Shapley value, the contribution of the $ i $-th feature can be expressed as
	\begin{equation} \label{eq:shapley}
		\begin{gathered}
			\phi_{i}^{\mathrm{Shap}}(f_{\mathbf{x}}) \coloneqq \sum_{S \subseteq [N]\setminus i} \frac{s!(N-1-s)!}{N!}\left[ f_{\mathbf{x}}(S\cup i) - f_{\mathbf{x}}(S) \right] 
		\end{gathered}
	\end{equation}
	where $ f_{\mathbf{x}}(S) \coloneqq \mathbb{E}_{\mathbf{X}_{S^{c}}}[f(\mathbf{x}_{S},\mathbf{X}_{S^{c}})] $ where $\mathbf{x}_{S}$ is the restriction of $\mathbf{x}$ on $S$ and $S^{c} \coloneqq [N]\setminus S$. Following \citet[Algorithm 1]{lundberg2020local}, the computation of $f_{\mathbf{x}}(S)$ is described in Algorithm~\ref{alg:f_x}.

	Recall that each probabilistic value can be expressed as
	\begin{equation}  \label{eq:probabilistic values in app}
		\begin{gathered}
			\phi_{i}^{\boldsymbol\omega}(f_{\mathbf{x}}) \coloneqq \sum_{S \subseteq [N]\setminus i} \omega_{s+1}\cdot\left[ f_{\mathbf{x}}(S\cup i) - f_{\mathbf{x}}(S) \right]
		\end{gathered}
	\end{equation}
	where $ \boldsymbol\omega \in \mathbb{R}^{N} $ is non-negative with $ \sum_{k=1}^{N}\binom{N-1}{k-1}\omega_{k} = 1 $. Besides, $ \phi^{\boldsymbol\omega} $ is a semi-value if and only if there exists a Borel probability measure $ \mu $ over the closed interval $ [0,1] $ such that for every $ k \in [N] $
	\begin{equation}  \label{eq:semi-values in app}
		\begin{gathered}
			\omega_{k} = \int_{0}^{1} t^{k-1}(1-t)^{N-k}\mathrm{d}\mu(t) .
		\end{gathered}
	\end{equation}
	Therefore, $ \phi^{\mu}(f_{\mathbf{x}}) $ is used to denote semi-values. If $ \mu $ is some Dirac measure $ \delta_{\nu} $, the resulting semi-value is a weighted Banzhaf value parameterized by $ \nu \in [0,1] $. If $ \mu(A) \propto \int_{A} z^{\beta-1}(1-z)^{\alpha-1}\mathrm{d}z $, it leads to a Beta Shapley value parameterized by $ (\alpha, \beta) $; in particular, the parameter $ (1,1) $ leads to the Shapley value.
	
	\paragraph{Symmetric semi-values.}
	A semi-value $\phi^{\mu}$ is said to be symmetric if $\omega_{k} = \omega_{N+1-k}$ for every $1\leq k\leq N$. In particular, the Shapley and Banzhaf values are symmetric. Combining Eqs.~\eqref{eq:gradients}, \eqref{eq:probabilistic values in app} and~\eqref{eq:semi-values in app}, we have
	\begin{equation}
		\begin{aligned}
			\phi^{\mu}(f_{\mathbf{x}}) &= \int_{0}^{1} \sum_{S\subseteq [N]\setminus i} t^{k}(1-t)^{N-1-k}[f_{\mathbf{x}}(S\cup i)-f_{\mathbf{x}}(S)]\,\mathrm{d}\mu(t) \\
			&= \int_{0}^{1} \nabla \overline{f}_{\mathbf{x}}(t\cdot \mathbf{1}_{N})\,\mathrm{d}\mu(t) .
		\end{aligned}
	\end{equation}  
	In particular, if $\phi^{\mu}$ is symmetric, we can derive
	\begin{equation}
		\begin{gathered}
			\phi^{\mu}(f_{\mathbf{x}}) = -\int_{0}^{1}\nabla\overline{f}_{\mathbf{x}}((1-t)\cdot\mathbf{1}_{N})\,\mathrm{d}\mu(t) .
		\end{gathered}
	\end{equation}
	Therefore, for symmetric semi-values, we have
	\begin{equation}
		\begin{gathered}
			\phi^{\mu}(f_{\mathbf{x}}) = \int_{0}^{1}\nabla \frac{1}{2}[\overline{f}_{\mathbf{x}}(t\cdot \mathbf{1}_{N}) - \overline{f}_{\mathbf{x}}((1-t)\cdot\mathbf{1}_{N})]\,\mathrm{d}\mu(t).
		\end{gathered}
	\end{equation}
	In other words, symmetric semi-values can be derived from the gradient field of the objective in Eq.~\eqref{op:relaxed argmax}. This motivates averaging the gradients produced by our TreeGrad-Ranker (Algorithms~\ref{alg:TreeGrad-Ranker with GD} and~\ref{alg:TreeGrad-Ranker with adam}) to generate feature scores.

	\section{Polynomial Manipulations in Linear TreeShap} \label{app:psi}
	In manipulating polynomials, the most expensive step is the multiplication of $ p(y) $ and $ (1+y)^{d-\mathrm{deg}(p)} $, the degree of each is at most $ D $ in the context. In general, polynomial multiplication requires $ O(D\log D) $ time to compute using fast Fourier transform \citep[Chapter 30]{cormen2022introduction}, which would lead to time complexity $ O(L D\log D) $ in total for computing $ \phi^{\mathrm{Shap}}(f_{\mathbf{x}}) $. Nevertheless, since the polynomial $ (1+y)^{d-\mathrm{deg}(p)} $ is simple, \citet{Yu2022linear} propose to encode all polynomials of degree at most $ D-1 $ into vectors of dimension $ D $ using a Vandermonde matrix $ \mathbf{V} \in \mathbb{R}^{D\times D} $, i.e., each polynomial $ p(y) $ is encoded by its evaluation $ \mathbf{Vp} $ at $ D $ different points, and then the multiplication can be computed in $ O(D) $ time using the encoded vectors. Since the inverse of Vandermonde matrices can be computed once in $ O(D^{2}) $ time \citep{eisinberg2006inversion}, it does not increase the time complexity in total. 
	Let $ p(y) $ and $ q(y) $ be two polynomials of degree at most $ D-1 $, by \citet[Lemma 2.5]{Yu2022linear}, the formula used to evaluate $ \psi $ is $ \langle p(y),\, q(y) \rangle = \langle \mathbf{p},\, \mathbf{q} \rangle = \langle \mathbf{V}\mathbf{p},\,  (\mathbf{V}^{-1})^{\mathsf{T}}\mathbf{q}\rangle. $.

	\paragraph{Numerical Instability.} Empirically, we observe that numerically error grows as the number of nodes increases. To mitigate this numerical instability, the implementation of Linear TreeShap does not follow the $O(LD)$-time procedure.\footnote{\url{https://github.com/yupbank/linear_tree_shap/blob/main/linear_tree_shap/fast_linear_tree_shap.py}} We illustrate this difference using Algorithm~\ref{alg:TreeProb}. Upon reaching a leaf (line~46), Linear TreeShap computes $p(y) \gets \frac{\varrho_{v} c_{v}}{c_{r}}\cdot p(y)$. In line~48, it performs
		\begin{equation}
			\begin{gathered}
				p_{a}(y) \gets \mathrm{traverse}(a_{v}, p(y)),\ \ p_{b}(y) \gets \mathrm{traverse}(b_{v}, p(y)),\ \ d_{\mathrm{max}} \gets \max(\mathrm{deg}(p_{a}(y)), \mathrm{deg}(p_{b}(y)))\\
				p(y) \gets p_{a}(y)\cdot (1+y)^{d_{\mathrm{max}} - \mathrm{deg}(p_{a}(y))} + p_{b}(y)\cdot(1+y)^{d_{\mathrm{max}} - \mathrm{deg}(p_{b}(y))}
			\end{gathered}
		\end{equation}
		As a result, when evaluating $\psi$ in line~52, the degree of $G[v]$ is not fixed, and may vary from $1$ to $D$. Consequently, given $D$ nodes, the algorithm has to compute $\mathbf{V}_{d}^{-1}$ for every $d \in [D]$, where $\mathbf{V}_{d} \in \mathbb{R}^{d\times d}$ denotes the Vandermonde matrix of the first $d$ nodes. This effectively reduces the number of nodes involved in each computation and thus partially alleviates the inherent numerical instability. However, this strategy increases the time complexity from $O(LD)$ to $O(D^{3} + LD)$.
	
	\begin{figure}[t]
		\begin{algorithm}[H]
			\DontPrintSemicolon
			\caption{TreeProb}
			\label{alg:TreeProb}
			\KwIn{Decision tree $ \mathcal{T}=(V,E) $ with root $r$ and depth $ D $, instance $ \mathbf{x} \in \mathbb{R}^{N} $, polynomial $ q(y) $ of degree $  \min(D, N)-1 $ with non-negative coefficients that satisfies $ \sum_{k=0}^{\min(D,N)-1}\binom{\min(D,N)-1}{k}q_{k+1} = 1 $}
			\KwOut{Probabilistic value $ \boldsymbol\phi $}
			
			\SetKwFunction{FMain}{traverse}
			\SetKwProg{Fn}{Function}{:}{}
			\Fn{\FMain{$ v $, $ p(y) = 1 $}}{
				$ e \gets (m_{v}\rightarrow v) $
				
				\uIf{$ e^{\uparrow} $ exists}{
					$ p(y) \gets \cfrac{p(y)}{1+\gamma_{e^{\uparrow}}\cdot y}\cdot (1+\gamma_{e}\cdot y) $
				}\uElse{
					$ p(y) \gets p(y)\cdot (1+\gamma_{e}\cdot y) $
				}
				
				\uIf{$ v $ is a leaf}{
					$ p(y) \gets \frac{\varrho_{v}c_{v}}{c_{r}} \cdot p(y) \cdot  (1+y)^{\min(D,N)-\mathrm{deg}(p)} $		
				}\uElse{
					
					$ p(y) \gets $ \FMain{$ a_{v} $, $ p(y) $} $ + $ \FMain{$ b_{v} $, $ p(y) $}
					
				}
				\uIf{$ e^{\uparrow} $ exists}{
					$ G[h_{e^{\uparrow}}] \gets G[h_{e^{\uparrow}}] - p(y) $
				}
				
				$ G[v] \gets G[v] + p(y) $
				
				$ \phi_{l_{e}} \gets \phi_{l_{e}} + (\gamma_{e} - 1) \cdot \langle \frac{G[v]}{1+\gamma_{e}\cdot y},\, q(y) \rangle $
				
				\Return $ p(y) $
				
			}
			
			Initialize $ G[v] = 0 $ for every $ v \in V $
			
			Initialize $ \boldsymbol\phi = \mathbf{0}_{N} $
			
			\FMain{$ a_{r} $}
			
			\FMain{$ b_{r} $}
			
			\Return $\boldsymbol\phi$
		\end{algorithm}
	\end{figure}

	\section{Implementation of TreeProb}
	\label{app:treeprob}
	Similar to Linear TreeShap, our TreeProb is composed of two procedures: (i) a traversing-down procedure that for every leaf node $ v \in L_{r} $ accumulates all the polynomial components from the root to $ v $ one by one and (ii) a traversing-up procedure that decodes the accumulated polynomials and distributes the decoded results to all participating features. Starting from the root $r$, write $ P_{v} = (e_{1}, e_{2}, \dots, e_{|P_{v}|}) $ in the order in which the edges are encountered as we traverse down the path $ P_{v} $ to the leaf node $v$. Then, TreeProb accumulates the polynomial components as follows:
	\begin{equation}
		\begin{aligned}
			p_{0}(y) \gets 1 \ \text{ and } \ 
			p_{\ell}(y) \gets \begin{cases}
				\frac{p_{\ell-1}(y)}{1+\gamma_{e_{\ell}^{\uparrow}}\cdot y}\cdot (1+\gamma_{e_{\ell}}\cdot y),
				& e_{\ell}^{\uparrow} \text{ exists,}\\
				p_{\ell-1}(y) \cdot (1+\gamma_{e_{\ell}}\cdot y), & \text{otherwise.}
			\end{cases} 
		\end{aligned}
	\end{equation}
	Once it reaches the leaf node $v$, there is $ \frac{\varrho_{v}c_{v}}{c_{r}}\cdot p_{|P_{v}|}(y) = G_{v}(y) $.	
	While traversing up, the formula we use is
	\begin{equation}
		\begin{gathered}
			\phi_{i}^{\boldsymbol\omega}(f_{\mathbf{x}}) = \sum_{\substack{e\in E\\l_{e}=i}}(\gamma_{e}-1)\langle \cfrac{\sum_{v\in L_{h_{e}^{\dagger}}}G_{v}(y)\cdot (1+y)^{D-\mathrm{deg}(G_{v})}}{1+\gamma_{e}\cdot y},\, q_{D-1}^{\boldsymbol\omega}(y) \rangle.
		\end{gathered}
	\end{equation}
	where
	\begin{equation} \label{eq:polynomials of probabilistic values}
		\begin{gathered}
			q_{D-1}^{\boldsymbol\omega}(y) = \sum_{k=0}^{D-1} \left( \sum_{j=0}^{N-D}\binom{N-D}{j}\omega_{k+j+1} \right) y^{k} ,
		\end{gathered}
	\end{equation}
	which is derived by recursively applying the equality in Lemma~\ref{lem:generalize scaling equality}.
	At first glance, this formula may appear complicated; however, it can be efficiently implemented using lines 47–51 of Algorithm~\ref{alg:TreeProb}.
	For semi-values $\phi^{\mu}(f_{\mathbf{x}})$, we write $q^{\mu}_{D-1}(y)$ instead, which can be computed using Eq.~\eqref{eq:polynomial of semivalues}.	
	
	\begin{remark}
		Note that Eq.~\eqref{eq:polynomials of probabilistic values} is well-defined when $N\geq D$. However, if $N < D$, we can simply set $D \gets N$. By contrast, Eq.~\eqref{eq:polynomial of semivalues} is always well-defined. Therefore, we recommend applying $D\gets \min(D, N)$ before running Algorithm~\ref{alg:TreeProb}.
	\end{remark}
	
	\begin{remark}
		For probabilistic values, it is more convenient to specify $ q_{D-1}^{\boldsymbol\omega}(y) $ instead of deriving it from $ \boldsymbol\omega $. In particular, there must be $ \sum_{k=0}^{D-1}\binom{D-1}{k}q_{k+1} = 1 $ where $ q_{k} $ is the $ k $-th coefficient of $ q_{D-1}^{\boldsymbol\omega}(y) $. 
	\end{remark}

	The pseudo-code of our TreeProb is presented in Algorithm~\ref{alg:TreeProb}. One may think that the space complexity $ O(LD) $. We remark that there are only $ O(D) $ polynomials that are needed on the spot and thus the complexity is $ O(D^{2}+L) $. Therefore,	
	for computing probabilistic values, the time and space complexities of TreeProb are $ O(LD) $ and $ O(D^{2}+L) $, respectively.

	\section{How to Evolve Linear TreeShap into TreeShap-K} \label{app:algorithms}
	\textbf{How to avoid the use of Vandermonde matrices?}
	As discussed in Appendix~\ref{app:psi}, the role of Vandermonde matrices is to save the computation time of the scaling $ G_{v}(y)\cdot(1+y)^{d-\mathrm{deg}(G_{v})} $ performed at every non-leaf node from $ O(D\log D) $ to $ O(D) $. 
	Our observation is that such a scaling can be performed just once for all $ G_{v}(y) $. 
	Starting from the root,
	write $ P_{v} = (e_{1}, e_{2}, \dots, e_{|P_{v}|}) $ in the order encountered when traversing down the path $ P_{v} $ to the leaf node $v$. A modified procedure for collecting the polynomial components is
	\begin{equation}
		\begin{aligned}
			p_{0}(y) \gets (1+y)^{D} \ \text{ and } \ 
			p_{\ell}(y)\gets 
			\begin{cases}
				\cfrac{p_{\ell-1}(y)}{1+\gamma_{e_{\ell}^{\uparrow}}\cdot y}\cdot (1+\gamma_{e_{\ell}}\cdot y),
				& e_{\ell}^{\uparrow} \text{ exists,}\\
				\cfrac{p_{\ell-1}(y)}{1+y}\cdot(1+\gamma_{e_{\ell}}\cdot y), & \text{otherwise.}
			\end{cases} .
		\end{aligned}
	\end{equation}
	A similar technique is employed in \citet[Appendix E]{marichal2011weighted}, and thus their proposed algorithm is free of Vandermonde matrices.	
	It is worth pointing out that the assignment of $ p_{0}(y) $ is performed once and costs $ O(D^{2}) $ time in a recursive way.
	
	By traversing down in this way, we can store each polynomial $ p(y) = \sum_{k=0}^{d}p_{k+1}y^{k} $ as its coefficient vector $ \mathbf{p} \in \mathbb{R}^{D+1} $. For updating $ q(y) \gets p(y)\cdot (1+\gamma\cdot y) $, we simply run $q_{k} = p_{k} + \gamma\cdot p_{k-1}$
	with $ p_{-1} = p_{\mathrm{deg}(p)+2} = 0 $. Note that this process costs $ O(D) $ time and can be reversed in $ O(D) $ time. However, this reverse process is essentially solving a linear equation, and we observe empirically that the corresponding condition number grows exponentially w.r.t. $ D $ if $ \gamma \geq 1 $.

	\paragraph{How TreeShap-K computes the Shapley value.}
	What TreeShap-K stores, instead, is a vector 
	$ \boldsymbol\vartheta^{v} \in \mathbb{R}^{D+1} $ where $\vartheta^{v}_{i}$ represents the $i$-th coefficient of $G_{v}(y)\cdot(1+y)^{D-|F_{v}|}$. If $D=|F_{v}|$, $\vartheta^{v}_{i} = \frac{\varrho_{v}c_{v}}{c_{r}}\sum_{S \subseteq F_{v}}^{|S|=k} \prod_{i\in S}\gamma_{i,v}$.
	Then, while traversing down, 
	each $ \boldsymbol\vartheta^{v} $ can be computed by
	\begin{equation}
		\begin{aligned}
			\boldsymbol\theta_{0}\gets \mathbf{1}_{D+1} \ \text{ and }\ 
			\boldsymbol\theta_{\ell} \gets 
			\begin{cases}
				(\boldsymbol\theta_{\ell-1} \ominus \gamma_{e_{\ell}^{\uparrow}})\oplus \gamma_{e_{\ell}},
				& e_{\ell}^{\uparrow} \text{ exists,}\\
				(\boldsymbol\theta_{\ell-1} \ominus 1)\oplus \gamma_{e_{\ell}},  & \text{otherwise.} 
			\end{cases}
		\end{aligned}
	\end{equation}
	Here, suppose $ \boldsymbol\xi \in  \mathbb{R}^{D+1} $, for $ \boldsymbol\varphi \gets \boldsymbol\xi \oplus \gamma $, it is
	\begin{equation} \label{eq:where is gamma}
		\begin{gathered}
			\varphi_{k} \gets \frac{D+1-k}{D+1}\xi_{k} + \gamma\cdot \frac{k-1}{D+1}\xi_{k-1} ,
		\end{gathered}
	\end{equation}
	whereas $ \boldsymbol\varphi \ominus \gamma $ denotes the reverse process. The convention is $ \xi_{-1} = \xi_{D+1} = 0 $. Then, $ \frac{\varrho_{v}c_{v}}{c_{r}}\cdot\boldsymbol\theta_{|P_{v}|} = \boldsymbol\vartheta^{v} $. 
	What TreeShap-K computes in the traversing-up procedure is
	\begin{equation}
		\begin{gathered}
			\phi_{i}^{\mathrm{Shap}}(f_{\mathbf{x}}) = \sum_{\substack{e\in E\\l_{e}=i}}(\gamma_{e}-1)\cdot\mathrm{sum}( \sum_{v\in L_{h_{e}}^{\dagger}} \boldsymbol\vartheta^{v} \ominus \gamma_{e} )
		\end{gathered}
	\end{equation}
	where $ \mathrm{sum} $ simply adds up all the entries.
	All in all, TreeShap-K proposed by \citet[Appendix E]{karczmarz2022improved} is summarized in Algorithm~\ref{alg:TreeShap-K}. We point out that the $ \ominus $ operation essentially involves solving a linear equation, whose condition number, as shown in Figure~\ref{fig:inaccuracy}, grows exponentially w.r.t. $ D $ (with $\gamma$ set to $1$). Indeed, the substantial inaccuracy of TreeShap-K is already confirmed by \citet[Figure 1]{karczmarz2022improved}.
	
	
	\begin{figure}[t]
		\begin{algorithm}[H]
			\DontPrintSemicolon
			\caption{TreeShap-K}
			\label{alg:TreeShap-K}
			\KwIn{Decision tree $ \mathcal{T}=(V,E) $ with root $r$ and depth $ D $, instance $ \mathbf{x} \in \mathbb{R}^{N} $}
			\KwOut{Shapley value $ \boldsymbol\phi $}
			
			\SetKwFunction{FMain}{traverse}
			\SetKwProg{Fn}{Function}{:}{}
			\Fn{\FMain{$ v $, $ \boldsymbol\theta = \frac{1}{M+1}\mathbf{1}_{M+1}$}}{
				$ e \gets (m_{v}\rightarrow v) $
				
				\uIf{$ e^{\uparrow} $ exists}{
					$ \boldsymbol\theta \gets (\boldsymbol\theta \ominus \gamma_{e^{\uparrow}}) \oplus \gamma_{e}   $
				}\uElse{
					$ \boldsymbol\theta \gets (\boldsymbol\theta \ominus 1) \oplus \gamma_{e} $
				}
				
				\uIf{$ v $ is a leaf}{
					$ \boldsymbol\theta \gets \frac{\varrho_{v}c_{v}}{c_{r}} \cdot \boldsymbol\theta $		
				}\uElse{					
					$ \boldsymbol\theta \gets $ \FMain{$ a_{v} $, $ \boldsymbol\theta $} $ + $ \FMain{$ b_{v} $, $ \boldsymbol\theta $}
				}
				\uIf{$ e^{\uparrow} $ exists}{
					$ \vartheta[h_{e^{\uparrow}}] \gets \vartheta[h_{e^{\uparrow}}] - \boldsymbol\theta $
				}
				
				$ \vartheta[v] \gets \vartheta[v] + \boldsymbol\theta $
				
				$ \phi_{l_{e}} \gets \phi_{l_{e}} + (\gamma_{e} - 1) \cdot \mathrm{sum}(\vartheta[v] \ominus \gamma_{e}) $
				
				\Return $ \boldsymbol\theta $
				
			}
			
			$M \gets \min(D, N)$
			
			Initialize $ \vartheta[v] = \mathbf{0}_{M+1} $ for every $ v \in V $
			
			Initialize $ \boldsymbol\phi = \mathbf{0}_{N} $
			
			\FMain{$ a_{r} $}
			
			\FMain{$ b_{r} $}
			
			\Return $\boldsymbol\phi$
		\end{algorithm}
	\end{figure}

	\section{One More $O(LD)$-Time and $O(D^{2}+L)$-Space Algorithm for Computing the Shapley Value}
	For completeness, we also summarize the technical details of one more $O(LD)$-time and $O(D^{2}+L)$-space algorithm for computing the Shapley value, which was developed by \citet{Yu2022linear} as their first version.\footnote{\url{https://github.com/yupbank/linear_tree_shap/blob/main/linear_tree_shap/linear_tree_shap_numba.py}. The code therein cannot run, but it can be fixed by replacing the first three \texttt{tree.max\_depth} with \texttt{tree.max\_depth + 1} in lines 145-146.} 
	Therefore, we refer to this algorithm as Linear TreeShap V1.
	It is totally different from Linear TreeShap, and is less numerically stable.\footnote{\url{https://github.com/yupbank/linear_tree_shap/blob/main/tests/correctness.csv}.} To our knowledge, its technical details are missing, probably due to its numerical instability. Nevertheless, it is with a moment's thought to decipher Linear TreeShap V1 from the code by observing that it uses $\gamma_{e} - 1$ rather than $\gamma_{e}$ when traversing down the tree. It is based on the fact that
	\begin{equation}
		\begin{gathered}
			\phi^{\mathrm{Shap}}_{i}(f_{\mathbf{x}}) = \int_{0}^{1} E_{i}(t)\,\mathrm{d}t \\
			\text{where }\ E_{i}(t) \coloneqq \sum_{S \subseteq [N]\setminus i} t^{s}(1-t)^{N-1-s} [f_{\mathbf{x}}(S\cup i) - f_{\mathbf{x}}(S)] .
		\end{gathered}
	\end{equation}
	Since $E_{i}(t)$ is a polynomial of degree at most $N-1$, we can express $E_{i}(t)$ as
	\begin{equation}
		\begin{gathered}
			E_{i}(t) = \sum_{d=0}^{N-1}\alpha_{i,d}\cdot t^{d} .
		\end{gathered}
	\end{equation}
	Then,
	\begin{equation}
		\begin{gathered}
			\phi_{i}^{\mathrm{Shap}}(f_{\mathbf{x}}) = \sum_{d=0}^{N-1}\frac{\alpha_{i,d}}{d+1} .
		\end{gathered}
	\end{equation}
	\begin{lemma}
		Replacing $f_{\mathbf{x}}$ with $R^{v}_{\mathbf{x}}$, where $R^{v}_{\mathbf{x}}(S) = \varrho_{v}\cdot\frac{c_{v}}{c_{r}}\prod_{j\in S} \gamma_{j,v}$ for every $S\subseteq [N]$, we have
		\begin{equation}
			\begin{gathered}
				\alpha_{i,d} = \varrho_{v}(\gamma_{i,v} - 1)\cdot\frac{c_{v}}{c_{r}} \sum_{\substack{S\subseteq [N]\setminus i\\s=d}}\, \prod_{j\in S}(\gamma_{j,v} - 1) .
			\end{gathered}
		\end{equation}
	\end{lemma}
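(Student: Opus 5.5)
We prove the formula for $\alpha_{i,d}$ by computing $E_{i}(t)$ in closed form for the single rule function $R^{v}_{\mathbf{x}}$ and then reading off the coefficient of $t^{d}$. Write $C \coloneqq \varrho_{v}c_{v}/c_{r}$ and abbreviate $\gamma_{j} \coloneqq \gamma_{j,v}$.

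\textbf{Step 1: the marginal contributions factor.} Since $R^{v}_{\mathbf{x}}(S) = C\prod_{j\in S}\gamma_{j}$, for every $S\subseteq[N]\setminus i$ we have
\begin{equation}
R^{v}_{\mathbf{x}}(S\cup i)-R^{v}_{\mathbf{x}}(S) = C(\gamma_{i}-1)\prod_{j\in S}\gamma_{j}.
\end{equation}
Substituting this into the definition of $E_{i}(t)$ gives
\begin{equation}
E_{i}(t) = C(\gamma_{i}-1)\sum_{S\subseteq[N]\setminus i}\prod_{j\in S}(t\gamma_{j})\prod_{j\in([N]\setminus i)\setminus S}(1-t).
\end{equation}

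\textbf{Step 2: collapse the sum over subsets.} By the standard expansion of a product of binomials (the same identity underlying the multilinear extension), the sum in Step 1 equals
\begin{equation}
\prod_{j\in[N]\setminus i}\bigl(1-t+t\gamma_{j}\bigr) = \prod_{j\in[N]\setminus i}\bigl(1+t(\gamma_{j}-1)\bigr).
\end{equation}
This is consistent with the lemma on $\nabla_{i}\overline{R}^{v}_{\mathbf{x}}$ evaluated at $\mathbf{z}=t\cdot\mathbf{1}_{N}$.

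\textbf{Step 3: expand and match coefficients.} Expanding $\prod_{j\in[N]\setminus i}\bigl(1+t(\gamma_{j}-1)\bigr)$ as an elementary symmetric expansion in the variables $\gamma_{j}-1$, the coefficient of $t^{d}$ is
\begin{equation}
\sum_{S\subseteq[N]\setminus i,\ s=d}\ \prod_{j\in S}(\gamma_{j}-1).
\end{equation}
Multiplying by the prefactor $C(\gamma_{i}-1)$ gives exactly the stated $\alpha_{i,d}$. The representation $E_{i}(t)=\sum_{d}\alpha_{i,d}t^{d}$ is unique because polynomial coefficients are unique.

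\textbf{Main obstacle.} The only delicate point is Step 2, specifically the rewriting $1-t+t\gamma_{j}=1+t(\gamma_{j}-1)$, which is what makes the $(1-t)^{N-1-s}$ weights disappear. We also note that features $j\notin F_{v}$ have $\gamma_{j}=1$, so they contribute factors of $1$ and are harmless. All remaining steps are bookkeeping.
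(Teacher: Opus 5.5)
Your proof is correct and takes the same approach as the paper: factor the marginal contribution as $\varrho_{v}\frac{c_{v}}{c_{r}}(\gamma_{i,v}-1)\prod_{j\in S}\gamma_{j,v}$, collapse the weighted subset sum into $\prod_{j\in[N]\setminus i}\bigl(1+t(\gamma_{j,v}-1)\bigr)$, and read off the coefficient of $t^{d}$ as the degree-$d$ elementary symmetric polynomial in the $\gamma_{j,v}-1$. The only difference is that you spell out the coefficient extraction, which the paper leaves implicit in ``from which the conclusion follows.''
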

	\begin{proof}
		Observe that
		\begin{equation}
			\begin{aligned}
				E_{i}(t) &= \varrho_{v}(\gamma_{i,v} - 1)\cdot\frac{c_{v}}{c_{r}} \sum_{S\subseteq [N]\setminus i} t^{s}(1-t)^{N-1-s}\prod_{j\in S}\gamma_{j,v} \\
				&=\varrho_{v}(\gamma_{i,v} - 1)\cdot\frac{c_{v}}{c_{r}} \prod_{j \in [N]\setminus i} (1 + t\cdot(\gamma_{j,v}-1)) ,
			\end{aligned}
		\end{equation}
		from which the conclusion follows.
	\end{proof}
	Accordingly, after Linear TreeShap V1 traverses down the path $ P_{v} = (e_{1}, e_{2}, \dots, e_{|P_{v}|}) $, it stores a vector $\boldsymbol\iota^{v} \in \mathbb{R}^{D+1}$ where $\iota_{k}^{v} = \sum_{\substack{S\subseteq F_{v}\\ s=k-1}} \prod_{j\in S}(\gamma_{j,v}-1)$ if $1\leq k\leq |F_{v}|+1$ and $0$ otherwise. One nice property is that, after introducing a null player $k$ (with $\gamma_{k,v}=1$), $\boldsymbol\iota^{v} \in \mathbb{R}^{D+1}$ remains unchanged.
	All in all, while traversing down, 
	each $ \boldsymbol\iota^{v} $ can be computed by
	\begin{equation}
		\begin{gathered}
			\mathbf{c}_{0}\gets \mathbf{e}_{1} \ \text{ and }\ 
			\text{and } \ 
			\mathbf{c}_{\ell} \gets 
			\begin{cases}
				(\mathbf{c}_{\ell-1} \boxminus (\gamma_{e_{\ell}^{\uparrow}}-1))\boxplus (\gamma_{e_{\ell}}-1),
				& e_{\ell}^{\uparrow} \text{ exists,}\\
				\mathbf{c}_{\ell-1}\boxplus (\gamma_{e_{\ell}}-1),  & \text{otherwise.} 
			\end{cases}
		\end{gathered}
	\end{equation}
	where $\mathbf{e}_{1} \in \mathbb{R}^{D+1}$ denotes the vector whose first entry is equal to
	$1$ and all other entries are equal to $0$.
	Here, suppose $ \mathbf{d} \in \mathbb{R}^{D+1} $, for $ \mathbf{c} \gets \mathbf{d} \boxplus (\gamma - 1) $, it is
	\begin{equation}
		\begin{gathered}
			c_{k} = (\gamma - 1)\cdot d_{k-1} + d_{k}
		\end{gathered}
	\end{equation}
	whereas $ \boldsymbol\varphi \boxminus \gamma $ denotes the reverse process. The convention is $d_{-1} = 0$.
	Then, $\boldsymbol\iota^{v} = \varrho_{v}\cdot\frac{c_{v}}{c_{r}} \mathbf{c}_{|P_{v}|}$.
	Algorithm~\ref{alg:lineartreeshap v1} demonstrates how Linear TreeShap V1 runs.
	In our inspection, we observe that the condition number of $\boxminus$ does not even exceed $100$ when $D=100$. However, its numerical error still blows up, which we suspect is due to the repeated use of $\boxminus$.
	
	\begin{figure}[t]
		\begin{algorithm}[H]
			\DontPrintSemicolon
			\caption{Linear TreeShap V1}
			\label{alg:lineartreeshap v1}
			\KwIn{{Decision tree $ \mathcal{T}=(V,E) $ with root $r$ and depth $ D $, instance $ \mathbf{x} \in \mathbb{R}^{N} $}}
			\KwOut{Shapley value $ \boldsymbol\phi $}
			
			\SetKwFunction{FMain}{traverse}
			\SetKwProg{Fn}{Function}{:}{}
			\Fn{\FMain{$ v $, $ \mathbf{c} = \mathbf{e}_{1}$}}{
				$ e \gets (m_{v}\rightarrow v) $
				
				\uIf{$ e^{\uparrow} $ exists}{
					$ \mathbf{c} \gets (\mathbf{c} \boxminus (\gamma_{e^{\uparrow}}-1)) \boxplus (\gamma_{e}-1)   $
				}\uElse{
					$ \mathbf{c} \gets \mathbf{c} \boxplus (\gamma_{e}-1) $
				}
				
				\uIf{$ v $ is a leaf}{
					$ \mathbf{c} \gets \frac{\varrho_{v}c_{v}}{c_{r}} \cdot \mathbf{c} $		
				}\uElse{					
					$ \mathbf{c} \gets $ \FMain{$ a_{v} $, $ \mathbf{c} $} $ + $ \FMain{$ b_{v} $, $ \mathbf{c} $}
				}
				\uIf{$ e^{\uparrow} $ exists}{
					$\mathcal{C}[h_{e^{\uparrow}}] \gets \mathcal{C}[h_{e^{\uparrow}}] - \mathbf{c} $
				}
				
				$ \mathcal{C}[v] \gets \mathcal{C}[v] + \mathbf{c} $
				
				$ \phi_{l_{e}} \gets \phi_{l_{e}} + (\gamma_{e} - 1) \cdot \sum_{k=1}^{M+1}\cfrac{(\mathcal{C}[v] \boxminus (\gamma_{e}-1))_{k}}{k} $
				
				\Return $ \mathbf{c} $
				
			}
			
			$M \gets \min(D, N)$
			
			Initialize $ \mathcal{C}[v] = \mathbf{0}_{M+1} $ for every $ v \in V $
			
			Initialize $ \boldsymbol\phi = \mathbf{0}_{N} $
			
			\FMain{$ a_{r} $}
			
			\FMain{$ b_{r} $}
			
			\Return $\boldsymbol\phi$
		\end{algorithm}
	\end{figure}

	\section{Missing Proofs}
	\label{app:proofs}
	\interpProb*
	\begin{proof}
		As provided by \citet[Theorem 2]{li2024one}, there exists a scalar $ C \in \mathbb{R} $ such that for every $ i \in [N] $
		\begin{equation}
			\begin{gathered}
				g^{*}(\{ i \})-g^{*}(\emptyset) = \phi_{i}^{\boldsymbol\omega}(f_{\mathbf{x}}) + C\cdot \mathbf{1}_{N}
			\end{gathered}
		\end{equation}
		where $ \mathbf{1}_{N} \in \mathbb{R}^{N} $ is the all-one vector. Then, it is clear that $ S_{k}^{+} $ is optimal to the problem
		\begin{equation}
			\begin{gathered}
				\argmax_{S \subseteq [N], |S|=k} \frac{1}{2}(g^{*}(S)-g^{*}([N]\setminus S)),
			\end{gathered}
		\end{equation}
		by which the conclusion follows.
	\end{proof}
	
	\impossibility*
	\begin{proof}
		This result is immediate if we can find two distinct  subsets $ S_{1} $ and $ S_{2} $ satisfying $S_{1}\not= [N]\setminus S_{2}$, and utility functions $ U_{0} $ and $ U_{1} $, such that $ \phi^{\boldsymbol\omega}(U_{0}) = \phi^{\boldsymbol\omega}(U_{1}) $, $D_{U_{0}}(S_{1}) \leq D_{U_{0}}(S_{2})$, and $D_{U_{1}}(S_{1})>D_{U_{1}}(S_{2})$.
		
		Observe that each $U$ can be represented as a vector $\mathbf{x}_{U} \in \mathbb{R}^{2^{N}}$ whose entries correspond to all subsets of $[N]$. 
		Since $\phi$ is a linear operator, its null space $\mathcal{N}$ has dimension at least $2^{N} - N $. It is sufficient to find a vector $\mathbf{x} \in \mathcal{N}$, for which there exists two distinct subsets $S$ and $T$ such that $S\not=[N]\setminus T$ and $|\mathrm{sign}(x_{S}-x_{[N]\setminus S}) - \mathrm{sign}(x_{T}-x_{[N]\setminus T})| = 2$.
		
		Let $\mathcal{P} = \{\{S,[N]\setminus S\} \colon S \subseteq  [N]\}$ and then $|\mathcal{P}|=2^{N-1}$. For each $\{S, [N]\setminus S\} \in \mathcal{P}$, let $\mathrm{Choice}(\{S, [N]\setminus S\})$ randomly choose one of the set it contains. Then, let $\mathcal{I} = \{\mathrm{Choice}(\mathcal{S}) \colon \mathcal{S} \in \mathcal{P}\}$ and we have $|\mathcal{I}| = 2^{N-1}$. Define a linear operator $L$ that maps each $\mathbf{x}$ to $\mathbf{y} \in \mathbb{R}^{2^{N-1}}$ by setting $y_{S} = x_{S} - x_{[N]\setminus S}$ for every $S \in \mathcal{I}$. Notice that the dimension of the null space of $T$ is exactly $2^{N-1}$. Denote the null space of $L$ by  $\mathrm{null}\, L$ and write $\mathrm{dim}$ for dimension.
		By the rank–nullity theorem, there is
		\begin{equation}
			\begin{gathered}
				\mathrm{dim}\,\mathcal{N} = \mathrm{dim}\,\mathrm{null}\, L\rvert_{\mathcal{N}} + \mathrm{dim}\, L(\mathcal{N}) .
			\end{gathered}
		\end{equation} 
		Therefore,
		\begin{equation}
			\begin{gathered}
				\mathrm{dim}\, L(\mathcal{N}) = \mathrm{dim}\,\mathcal{N} - \mathrm{dim}\,\mathrm{null}\, L\rvert_{\mathcal{N}}\geq 2^{N}-N - 2^{N-1} = 2^{N-1} - N .
			\end{gathered}
		\end{equation}
		Given that $N>3$, we have $\mathrm{dim}\, L(\mathcal{N}) > 2$. It indicates that we can find an $\mathbf{x} \in \mathcal{N}$ such that at least two entries of $\mathbf{y} \coloneq L(\mathbf{x})$, say $S$ and $T$, are nonzero. If $\mathrm{sign}(y_{S}) = \mathrm{sign}(y_{T})$, set $T \gets [N]\setminus T$.
	\end{proof}
	
	\NonlinearAttri*
	Let $U:2^{[N]}\to \mathbb{R}$ be a utility function and denote by $\mathcal{G}$ the set that contains all such utility functions. 
	
	The axioms of interest are listed in the below.
	\begin{enumerate}
		\item \textbf{Linearity}: $\phi(c\cdot U + U') = c\cdot \phi(U) + \phi(U') $ for every $U, U' \in \mathcal{G}$ and $c \in \mathbb{R}$.
		
		\item  \textbf{Equal Treatment}: For some $i,j \in [N]$, if $U(S\cup i) = U(S\cup j)$ for every $S \subseteq [N]\setminus \{i,j\}$, then $\phi_{i}(U) = \phi_{j}(U)$.
		
		\item \textbf{Symmetry}: $ \phi(U \circ \pi) = \phi(U) \circ \pi $ for every permutation $ \pi $ of $ [N] $.
		
		\item \textbf{Dummy}: For some $i\in [N]$, if there exists some $C\in \mathbb{R}$ such that $U(S\cup i) = U(S) + C$ for every $S \subseteq  [N]\setminus i$. Then, $\phi_{i}(U) = C$.
		
		\item \textbf{Monotonicity}: For some $i \in [N]$, (i) if $U(S\cup i)\geq U(S)$ for every $S \subseteq [N]\setminus i$, then $\phi_{i}(U)\geq 0$; (ii) if $U(S\cup i)\leq U(S)$ for every $S \subseteq [N]\setminus i$, then $\phi_{i}(U)\leq 0$.
		
		\item \textbf{Efficiency}: $\sum_{i\in [N]}\phi_{i}(U) = U([N]) - U(\emptyset)$ for every $U \in \mathcal{G}$.
	\end{enumerate}
	Setting $C=0$ in the dummy axiom yields the null axiom. As aforementioned, the axioms of linearity, dummy, monotonicity and symmetry uniquely characterize the family of probabilistic values. By \citet[Appendix A]{li2023robust}, the symmetry axiom is equivalent to the axiom of equal treatment in the context. 
	\begin{proof}
		Combing Eqs.~\eqref{eq:gradients} and~\eqref{eq: weights are summed to one}, each $\nabla \overline{f}_{\mathbf{x}}(\mathbf{z})$ can be expressed as 
		\begin{equation}
			\begin{gathered}
				\nabla_{i} \overline{f}_{\mathbf{x}}(\mathbf{z}) = \sum_{S\subseteq  [N]\setminus i} \omega_{i}(S)(f_{\mathbf{x}}(S\cup i)-f_{\mathbf{x}}(S)) 
			\end{gathered}
		\end{equation}
		where $\omega_{i}(S) \geq 0$ and $\sum_{S\subseteq [N]\setminus i}\omega_{i}(S) = 1$ for every $i\in [N]$. Since $\boldsymbol\zeta$ returned by Algorithm~\ref{alg:TreeGrad-Ranker with GD} is an average of gradients, it is clear that for every $i\in [N]$,
		\begin{equation}
			\begin{gathered}
				\zeta_{i} = \sum_{S\subseteq [N]\setminus i} \omega_{i}(S; \mathbf{x})(f_{\mathbf{x}}(S\cup i)-f_{\mathbf{x}}(S))
			\end{gathered}
		\end{equation}
		where $\omega_{i}(S; \mathbf{x})\geq 0$ and $\sum_{S\subseteq  [N]\setminus i}\omega_{i}(S; \mathbf{x}) = 1$. The dependence of $\mathbf{x}$ comes from that different instances of $\mathbf{x}$ would induce different trajectories along which the gradients are accumulated and averaged. 
		Suppose there exist a feature $i\in [N]$ and a constant $C\in \mathbb{R}$ such that $f_{\mathbf{x}}(S\cup i) = f_{\mathbf{x}}(S)+C$ for every $S\subseteq [N]\setminus i$. In this case, 
		\begin{equation}
			\begin{gathered}
				\nabla_{i}\overline{f}_{\mathbf{x}}(\mathbf{z}) = C\cdot \sum_{S\subseteq  [N]\setminus i} \omega_{i}(S) = C \ \text{ for every } \mathbf{z} \in [0,1]^{N} .
			\end{gathered}
		\end{equation}
		On average, it is clear that $\zeta_{i} = C$. Therefore, $\boldsymbol\zeta$ satisfies the dummy axiom. Similarly, one can verify that $\boldsymbol\zeta$ satisfies the monotonicity axiom.
		
		Suppose there exist two features $i$ and $j$ such that $f_{\mathbf{x}}(S\cup i) = f_{\mathbf{x}}(S\cup j)$ for every $S\subseteq  [N]\setminus\{i, j\}$. Observe that $\boldsymbol\zeta$ are averaged over $T$ gradients. At the $t$-th step, we denote the produced gradient by $g(\mathbf{z}^{(t-1)})$. Note that we set $\mathbf{z}^{(0)} = 0.5 \cdot \mathbf{1}_{N}$, which corresponds to the Banzhaf value. Therefore, we already have $g_{i}(\mathbf{z}^{(0)}) = g_{j}(\mathbf{z}^{(0)})$. The next step is to prove by induction that (i) $g_{i}(\mathbf{z}^{(t-1)}) = g_{j}(\mathbf{z}^{(t-1)})$ and $z_{i}^{(t-1)} = z_{j}^{(t-1)}$ (ii) for every $t$. Suppose it is true at the $t$-th step, for the next step, we have 
		\begin{equation}
			\begin{gathered}
				z_{i}^{(t)} = z_{i}^{(t-1)} + g_{i}(\mathbf{z}^{(t-1)}) = z_{j}^{(t-1)} + g_{j}(\mathbf{z}^{(t-1)}) = z_{j}^{(t)} .
			\end{gathered}
		\end{equation}
		Let 
		$w(S; \mathbf{z}) \coloneqq \prod_{i\in S}z_{i}$, $m_{i}(S; \mathbf{z}) \coloneqq \prod_{j\not\in S\cup i}(1-z_{j})$ and $\Delta_{i}f_{\mathbf{x}}(S) \coloneqq f_{\mathbf{x}}(S\cup i) - f_{\mathbf{x}}(S)$. For simplicity, we only consider the gradients of $\overline{f}_{\mathbf{x}}(\mathbf{z})$. The other part $\overline{f}_{\mathbf{x}}(\mathbf{1}_{N}-\mathbf{z})$ can be trivially included.	
		Then,
		\begin{equation}
			\begin{aligned}
				&g_{i}(\mathbf{z}^{(t)}) \\
				=& \sum_{S \subseteq  [N]\setminus i} w(S; \mathbf{z}^{(t)})m_{i}(S; \mathbf{z}^{(t)})\Delta_{i}f_{\mathbf{x}}(S)\\
				=& \sum_{S\subseteq [N]\setminus\{i, j\}}w(S; \mathbf{z}^{(t)})m_{i}(S; \mathbf{z}^{(t)})\Delta_{i}f_{\mathbf{x}}(S) + \sum_{S\subseteq [N]\setminus\{i,j\}}w(S\cup j; \mathbf{z}^{(t)})m_{i}(S\cup j; \mathbf{z}^{(t)})\Delta_{i}f_{\mathbf{x}}(S\cup j)\\
				=& \sum_{S\subseteq [N]\setminus\{i, j\}}w(S; \mathbf{z}^{(t)})m_{j}(S; \mathbf{z}^{(t)})\Delta_{j}f_{\mathbf{x}}(S) + \sum_{S\subseteq [N]\setminus\{i,j\}}w(S\cup i; \mathbf{z}^{(t)})m_{j}(S\cup i; \mathbf{z}^{(t)})\Delta_{j}f_{\mathbf{x}}(S\cup i)\\
				=& \sum_{S \subseteq  [N]\setminus j} w(S; \mathbf{z}^{(t)})m_{j}(S; \mathbf{z}^{(t)})\Delta_{j}f_{\mathbf{x}}(S)\\
				=& g_{j}(\mathbf{z}^{(t)}) .
			\end{aligned}
		\end{equation}
		Since $g_{i}(\mathbf{z}^{(t-1)}) = g_{j}(\mathbf{z}^{(t-1)})$ for every $t$, we have $\zeta_{i} = \zeta_{j}$. Consequently, $\boldsymbol\zeta$ satisfies the axiom of equal treatment. For Algorithm~\ref{alg:TreeGrad-Ranker with adam}, it can be proved similarly.
	\end{proof}
	
	\generalizeSI*
	\begin{proof}
		\begin{equation}
			\begin{gathered}
				\langle p(y) \cdot (1+y),\, \sum_{k=0}^{N-1}\omega_{k+1}y^{k} \rangle = \sum_{k=0}^{N-2}p_{k} \omega_{k+1} + \sum_{k=0}^{N-2} p_{k} \omega_{k+2} = \sum_{k=0}^{N-2}(\omega_{k+1}+\omega_{k+2})p_{k}\\
				= \langle p(y), \sum_{k=0}^{N-2}(\omega_{k+1}+\omega_{k+2})y^{k} \rangle .
			\end{gathered}
		\end{equation}
	\end{proof}

	\efficientShapley*
	\begin{proof}
		By linearity and \citep[Theorem 1(a$^{\prime}$)]{dubey1981value},
		\begin{equation}
			\begin{gathered}
				\phi^{\mathrm{Shap}}(f_{\mathbf{x}}) = \int_{0}^{1}\nabla \overline{f}_{\mathbf{x}}(t\cdot \mathbf{1}_{N})\mathrm{d}t =  \int_{0}^{1}\sum_{v\in L_{r}}\nabla\overline{R}_{\mathbf{x}}^{v}(t\cdot\mathbf{1}_{N})\mathrm{d}t 
			\end{gathered}
		\end{equation} 
		
		Using Eq.~\eqref{eq:gradients}, 
		\begin{equation}
			\begin{gathered}
				\nabla_{i}\overline{R}_{\mathbf{x}}^{v}(t\cdot\mathbf{1}_{N})= \sum_{S\subseteq [N]\setminus i} t^{s}(1-t)^{N-1-s}[R_{\mathbf{x}}^{v}(S\cup i) - R_{\mathbf{x}}^{v}(S)] ,
			\end{gathered}
		\end{equation}
		which corresponds to the weighted Banzhaf value parameterized by $ t $ \citep{li2023robust}.
		Suppose $ j \not\in F_{v} $, and notice that $ R_{\mathbf{x}}^{v}(S\cup j) = R_{\mathbf{x}}^{v}(S) $ for every $ S \subseteq [N] $. Then,
		\begin{equation}
			\begin{gathered}
				\nabla_{i}\overline{R}_{\mathbf{x}}^{v}(t\cdot\mathbf{1}_{N}) = \sum_{S\subseteq [N]\setminus \{ i,j \}} t^{s}(1-t)^{N-2-s}[R_{\mathbf{x}}^{v}(S\cup i) - R_{\mathbf{x}}^{v}(S)] .
			\end{gathered}
		\end{equation} 
		By removing all features not in $ F_{v} $, we have
		\begin{equation} \label{eq:remove null}
			\begin{gathered}
				\nabla_{i}\overline{R}_{\mathbf{x}}^{v}(t\cdot\mathbf{1}_{N}) = \sum_{S\subseteq F_{v}\setminus i} t^{s}(1-t)^{|F_{v}|-1-s}[R_{\mathbf{x}}^{v}(S\cup i) - R_{\mathbf{x}}^{v}(S)] .
			\end{gathered}
		\end{equation}
		Since $ |F_{v}| \leq D $, the degree of the polynomial $ \nabla_{i}\overline{f}_{\mathbf{x}}(t\cdot\mathbf{1}_{N}) = \sum_{v\in L_{r}} \nabla_{i}\overline{R}_{\mathbf{x}}^{v}(t\cdot\mathbf{1}_{N}) $ is at most $ D-1 $. According to the Gauss-Legendre quadrature rule, we have
		\begin{equation}
			\begin{gathered}
				\phi^{\mathrm{Shap}}(f_{\mathbf{x}}) = \sum_{\ell=1}^{\lceil\frac{D}{2}\rceil} \kappa_{\ell}\cdot \nabla \overline{f}_{\mathbf{x}}(t_{\ell}\cdot \mathbf{1}_{N}) .
			\end{gathered}
		\end{equation}
		According to \citet{hale2013fast}, $ \{ t_{\ell} \}_{\ell} $ and $ \{ \kappa_{\ell} \}_{\ell} $ can be computed in $ O(D) $ time.
		
		Since each gradient $ \overline{f}_{\mathbf{x}}(t_{\ell}\cdot \mathbf{1}_{N}) $ can be computed in $ O(L) $ time using $ O(L) $ space, the space complexity is reduced by reusing the same allocated space. This proves the validity of TreeGrad-Shap, as shown in Algorithm~\ref{alg:TreeGrad-Shap}.
	\end{proof}

	\begin{figure}[t]
		\begin{algorithm}[H]
			\DontPrintSemicolon
			\caption{TreeGrad-Shap}
			\label{alg:TreeGrad-Shap}
			\KwIn{Decision tree $ \mathcal{T}=(V,E) $ with root $r$ and depth $ D $, instance $ \mathbf{x} \in \mathbb{R}^{N} $, integral parameter $(\alpha, \beta)$ that specifies some Beta Shaple value}
			\KwOut{Shapley value $ \boldsymbol\phi $}
			
			Initialize $ \boldsymbol\phi = \mathbf{0}_{N} $
			
			$M \gets \min(D, N) + \alpha + \beta - 2$
			
			Generate $ \lceil \frac{M}{2} \rceil $ pair of weights and nodes $ \{ (\kappa_{\ell}, t_{\ell}) \} $ according to the Gauss-Legendre quadrature rule \tcp*{on the closed interval $[0,1]$}
			
			\For{$ \ell = 1,2,\dots,\lceil\frac{M}{2}\rceil $}{
				$ \boldsymbol\phi \gets \boldsymbol\phi + \kappa_{\ell}\cdot \mathrm{TreeGrad}(\mathcal{T},\, \mathbf{x},\, t_{\ell}\cdot\mathbf{1}_{N})\cdot t_{\ell}^{\beta-1}(1-t_{\ell})^{\alpha-1} / B(\alpha,\beta) $.
			}
		\end{algorithm}
	\end{figure}

	\begin{algorithm}[t]
		\DontPrintSemicolon
		\caption{TreeGrad-Shap (Vectorized)}
		\label{alg:vectorized treegrad-shap}
		\KwIn{Decision tree $ \mathcal{T}=(V,E) $ with root $r$, instance $ \mathbf{x} \in \mathbb{R}^{N} $, integral parameter $(\alpha, \beta)$ that specifies some Beta Shaple value}
		\KwOut{Shapley value $\boldsymbol\phi$}
		
		\SetKwFunction{FMain}{traverse}
		\SetKwProg{Fn}{Function}{:}{}
		\Fn{\FMain{$ v $, $ \mathbf{s}=\mathbf{b} $}}{
			$ e \gets (m_{v}\rightarrow v) $
			
			\uIf{$ e^{\uparrow} $ exists}{
				$ \mathbf{s} \gets \mathbf{s} \cdot (1-\mathbf{t}+\gamma_{e}\cdot\mathbf{t}) / (1-\mathbf{t}+\gamma_{e^{\uparrow}}\cdot\mathbf{t})$  \tcp*{element-wise}
			}\uElse{
				$ \mathbf{s} \gets \mathbf{s} \cdot (1-\mathbf{t} + \gamma_{e}\cdot\mathbf{t}) $
			}
			
			\uIf{$ v $ is a leaf}{
				$ \mathbf{s} \gets \frac{\varrho_{v}c_{v}}{c_{r}} \cdot \mathbf{s} $		
			}\uElse{
				$ \mathbf{s} \gets $ 			
				\FMain{$ a_{v} $, $ \mathbf{s} $}
				$ + $
				\FMain{$ b_{v} $, $ \mathbf{s} $}
			}
			\uIf{$ e^{\uparrow} $ exists}{
				$ H[h_{e^{\uparrow}}] \gets H[h_{e^{\uparrow}}] - \mathbf{s} $
			}
			
			$ H[v] \gets H[v] + \mathbf{s} $
			
			$ \phi_{l_{e}} \gets \phi_{l_{e}} + (\gamma_{e} - 1) \cdot
			\langle H[v]/(1-\mathbf{t} + \gamma_{e}\cdot \mathbf{t}),\, \boldsymbol\kappa \rangle $
			
			\Return $ s $
		}

		$M \gets \lceil \frac{\min(D, N) + \alpha + \beta - 2}{2} \rceil $
		
		Generate the weight vector $\boldsymbol\kappa \in \mathbb{R}^{M}$ and the node vector $\mathbf{t} \in \mathbb{R}^{M}$ according to the Gauss-Legendre quadrature rule \tcp*{on the closed interval $[0,1]$}
		
		Compute $\mathbf{b} \in \mathbb{R}^{M}$ by $b_{\ell} \gets t_{\ell}^{\beta-1}(1-t_{\ell})^{\alpha-1} / B(\alpha,\beta)$
		
		Initialize $ H[v] = \mathbf{0}_{M} $ for every $ v \in V $
		
		Initialize $ \boldsymbol\phi = \mathbf{0}_{N} $
		
		\FMain{$ a_{r} $}
		
		\FMain{$ b_{r} $}
		
		\Return $\boldsymbol\phi$
	\end{algorithm}

	\justifyRelax*
	\begin{proof}
		For simplicity, we write $U(S) = \frac{1}{2}(f_{\mathbf{x}}(S)-f_{\mathbf{x}}([N]\setminus S))$, and thus $\overline{U}(\mathbf{z}) = \frac{1}{2}(\overline{f}_{\mathbf{x}}(\mathbf{z}) - \overline{f}_{\mathbf{x}}(\mathbf{1}_{N}-\mathbf{z}))$.
		Let $ U^{*} = \argmax_{S\subseteq [N]} U(S) $. For each $ \mathbf{z} \not\in \{ 0,1 \}^{N}  $, notice that $ \overline{U}(\mathbf{z}) $ is just a weighted average of $ \{ U(S) \}_{S\subseteq [N]} $ because
		\begin{equation} \label{eq: weights are summed to one}
			\begin{gathered}
				\sum_{S \subseteq [N]} \left( \prod_{i\in S} z_{i} \right)\left( \prod_{i\not\in S}(1-z_{i}) \right) = \prod_{i=1}^{N}(1-z_{i}+z_{i}) = 1.
			\end{gathered}
		\end{equation}
		Therefore, $ \overline{U}(\mathbf{z}) \leq U^{*} $. For the sake of contradiction, suppose $ \overline{U}(\mathbf{z}) = U^{*} $, then there must exist two distinct subsets $ S_{1} $ and $ S_{2} $ such that $ U(S_{1}) = U(S_{2}) = U^{*} $, which contradicts the assumption that $S^{*}$ is unique. Therefore, we have $ \overline{f}_{\mathbf{x}}(\mathbf{z}) < f^{*} $. 
	\end{proof}

	\begin{algorithm}[t]
		\DontPrintSemicolon
		\caption{Computation of $f_{\mathbf{x}}$}
		\label{alg:f_x}
		\KwIn{Decision tree $ \mathcal{T}=(V,E) $ with root $r$, instance $ \mathbf{x} \in \mathbb{R}^{N} $, subset $S\subseteq [N]$}
		\KwOut{Realization of $\mathbb{E}_{\mathbf{X}_{S^{c}}}[f(\mathbf{x}_{S},\mathbf{X}_{S^{c}})]$}
		
		\SetKwFunction{FMain}{traverse}
		\SetKwProg{Fn}{Function}{:}{}
		\Fn{\FMain{$ v $}}{
			\uIf{$v$ is a leaf}{
				\Return $\varrho_{v}$
			}
			\uIf{$l_{v} \in S$}{
				\uIf{$x_{l_{v}}\leq \tau_{v}$}{
					\Return \FMain{$a_{v}$}
				}\uElse{
					\Return \FMain{$b_{v}$}
				}
			}\uElse{
				\Return $\frac{c_{a_{v}}}{c_{v}}$ \FMain{$a_{v}$}$+ \frac{c_{b_{v}}}{c_{v}}$ \FMain{$b_{v}$}
			}	
		}
		\Return\FMain{$r$}
	\end{algorithm}

	\begin{algorithm}[t]
		\DontPrintSemicolon
		\caption{Efficient Evaluation of $\overline{f}_{\mathbf{x}}(\mathbf{z})$}
		\label{alg:multilinear evaluation}
		\KwIn{Decision tree $ \mathcal{T}=(V,E) $ with root $r$, instance $ \mathbf{x} \in \mathbb{R}^{N} $, point $ \mathbf{z} \in [0, 1]^{N} $}
		\KwOut{The evaluation of $\overline{f}_{\mathbf{x}}(\mathbf{z})$}
		
		\SetKwFunction{FMain}{traverse}
		\SetKwProg{Fn}{Function}{:}{}
		\Fn{\FMain{$ v $, $ s=1 $}}{
			$ e \gets (m_{v}\rightarrow v) $
			
			\uIf{$ e^{\uparrow} $ exists}{
				$ s \gets \frac{s}{1-z_{l_{e^{\uparrow}}}+z_{l_{e^{\uparrow}}}\gamma_{e^{\uparrow}}} \cdot (1-z_{l_{e}}+z_{l_{e}}\gamma_{e}) $
			}\uElse{
				$ s \gets s \cdot (1-z_{l_{e}} + z_{l_{e}}\gamma_{e}) $
			}
			
			\uIf{$ v $ is a leaf}{
				$ s \gets \frac{\varrho_{v}c_{v}}{c_{r}} \cdot s $		
			}\uElse{
				$ s \gets $ 			
				\FMain{$ a_{v} $, $ s $}
				$ + $
				\FMain{$ b_{v} $, $ s $}
			}
			
			\Return $ s $		
		}
		\Return \FMain{$ a_{r} $} $+$ \FMain{$ b_{r} $
		}
	\end{algorithm}
	
	\begin{algorithm}[t]
		\DontPrintSemicolon
		\caption{TreeGrad-Ranker with ADAM}
		\label{alg:TreeGrad-Ranker with adam}
		\KwIn{Decision tree $ \mathcal{T}=(V,E) $ with root $r$, instance $ \mathbf{x} \in \mathbb{R}^{N} $,  learning rate $ \epsilon > 0 $, iteration $ T $. Default hyperparameters of ADAM are: $\beta_{1} = 0.9$, $\beta_{2}=0.999$ and $\epsilon'=10^{-8}$}
		\KwOut{Vector of feature scores $ \boldsymbol\zeta $}
		
		Initialize $ \mathbf{z} = 0.5\cdot \mathbf{1}_{N} $, $ \boldsymbol\zeta = \mathbf{0}_{N} $, $\mathbf{m}_{0} = \mathbf{0}_{N}$ and $\mathbf{v}_{0} = \mathbf{0}_{N}$
		
		\For{$ t = 1, 2,\dots, T $}{
			$ \mathbf{g} \gets \frac{1}{2}(\mathrm{TreeGrad}(\mathcal{T},\, \mathbf{x},\, \mathbf{z}) + \mathrm{TreeGrad}(\mathcal{T},\, \mathbf{x},\, \mathbf{1}_{N}-\mathbf{z})) $
			
			$ \boldsymbol\zeta \gets \frac{t-1}{t}\cdot\boldsymbol\zeta + \frac{1}{t}\cdot\mathbf{g} $
			
			$\mathbf{m}_{t} \gets \beta_{1}\cdot\mathbf{m}_{t-1} + (1-\beta_{1})\cdot \mathbf{g}$
			
			$\mathbf{v}_{t} \gets \beta_{2}\cdot \mathbf{v}_{t-1} + (1-\beta_{2})\cdot \mathbf{g}^{2}$
			
			$\hat{\mathbf{m}}_{t} \gets \mathbf{m}_{t} / (1-\beta_{1}^{t})$
			
			$\hat{\mathbf{v}}_{t} \gets \mathbf{v}_{t} / (1-\beta_{2}^{t})$
			
			$ \mathbf{z} \gets \mathbf{z} + \epsilon\cdot \hat{\mathbf{m}}_{t}/\sqrt{\hat{\mathbf{v}}_{t}+\epsilon'} $
			
			$ \mathbf{z} \gets \mathrm{Proj}(\mathbf{z}) $ \tcp*{$ z_{i} \gets \min(\max(0,z_{i}),1) $}
		}
		
		\Return $\boldsymbol\zeta$

	\end{algorithm}
	
	\gradientComponent*
	\begin{proof}
		Similar to the derivation of Eq.~\eqref{eq:remove null}, there is
		\begin{equation}
			\begin{gathered}
				\nabla_{i} \overline{R}_{\mathbf{x}}^{v}(\mathbf{z}) = \frac{\varrho_{v}c_{v}}{c_{r}}(\gamma_{i,v}-1)\sum_{S\subseteq F_{v}\setminus i} \left( \prod_{j\in S}z_{j}\gamma_{j,v} \right)\left( \prod_{j\not\in S\cup i} (1-z_{j}) \right) 
			\end{gathered}
		\end{equation}
		Notice that $\sum_{S\subseteq F_{v}\setminus i} \left( \prod_{j\in S}z_{j}\gamma_{j,v} \right)\left( \prod_{j\not\in S\cup i} (1-z_{j}) \right) = \prod_{j \in F_{v}\setminus i}\left( 1-z_{j} + z_{j}\gamma_{j,v} \right)$.
	\end{proof}

	\aggregateGradient*
	\begin{proof}
		Let $ i $ be fixed, one can verify that $ \{ L_{h_{e}}^{\dagger} \}_{e\in E, l_{e}=i} $ forms a partition of $ L_{r} $. Therefore,
		\begin{equation}
			\begin{gathered}
				\nabla_{i}\overline{f}_{\mathbf{x}}(\mathbf{z}) = \sum_{v\in L_{r}} (\gamma_{i,v}-1)\cdot\frac{H_{v}}{1-z_{i}+z_{i}\gamma_{i,v}} = \sum_{\substack{e\in E\\l_{e}=i}}\sum_{v \in L_{h_{e}}^{\dagger}} (\gamma_{i,v}-1)\cdot\frac{H_{v}}{1-z_{i}+z_{i}\gamma_{i,v}} .
			\end{gathered}
		\end{equation}
		By the definition of $ L_{h_{e}}^{\dagger} $, we have $ \gamma_{i,v} = \gamma_{e} $, and thus the conclusion follows.
	\end{proof}

	\section{Experiments} \label{app:exp}
	The full procedure of TreeGrad is presented in Algorithm~\ref{alg:TreeGrad-full}.
	The information of the datasets we employ is summarized in Table~\ref{tab:sum}. 
	
	\textbf{More experimental results.}Using decision trees, additional experimental results are presented in Figures~\ref{fig:cla-pre-second} and~\ref{fig:cla-other-second} for classification tasks, whereas results for regression tasks are shown in Figure~\ref{fig:regression}.  Our TreeGrad-Ranker also works for gradient boosted trees \citep{friedman2001greedy}. We train gradient boosted trees on the same datasets using GradientBoostingClassifier and GradientBoostingRegressor from the scikit-learn library. The corresponding results are presented in Figures~\ref{fig:cla-pre-first-adam}--\ref{fig:regression-adam}.
	For classification tasks, the output of each gradient boosted trees model is set to be the logit of the selected class. 
	
	\begin{table}[t]
		\centering
		\caption{Summary of the datasets used in this work.}
		\label{tab:sum}
		\resizebox{\columnwidth}{!}{
			\begin{tabular}{lrrllc}
				\toprule
				Dataset & \#Instances & \#Features & Link & Task&\#Classes\\ \midrule
				FOTP \citep{bridge2014machine} & $ 6,118 $ & $ 51 $ & \url{https://openml.org/d/1475}& classification&$ 6 $\\
				GPSP \citep{madeo2013gesture}& $ 9,873 $ & $ 32 $ & \url{https://openml.org/d/4538}&classification&$ 5 $\\
				jannis & $ 83,733 $ & $ 54 $ & \url{https://openml.org/d/41168}&classification&$ 4 $\\
				MinibooNE \citep{roe2005boosted} & $ 130,064 $ & $ 50 $ & \url{https://openml.org/d/41150}&classification&$ 2 $\\
				philippine & $ 5,832 $ & $ 308 $ & \url{https://openml.org/d/41145}&classification&$ 2 $\\
				spambase & $ 4,601 $ & $ 57 $ & \url{https://openml.org/d/44}&classification&$ 2 $\\
				BT \citep{kawala2013predictions} & $ 583,250 $ & $ 77 $ & \url{https://openml.org/d/4549}&regression&-\\
				superconduct & $ 21,263 $ & $ 81 $& \url{https://openml.org/d/43174}&regression&-\\
				wave\_energy & $ 72,000 $ & $ 48 $& \url{https://openml.org/d/44975}&regression&-\\
				\bottomrule
		\end{tabular}}
	\end{table}

	\paragraph{More experimental settings.}
	For all trained decision trees, the only hyperparameter we control is the tree depth $D$. The random seed of training is set to 2025 to ensure reproducibility, while all other hyperparameters are left at their scikit-learn defaults. The tree depth used for each dataset is as reported in the corresponding figure. For gradient boosted trees, we set the number of individual decision trees to 5 for all datasets. Table~\ref{tab:statistics} summarizes both the sizes of the trained models and their performance on the test datasets.

	\begin{table}[t]
		\centering
		\caption{The sizes of the trained tree models and their performance on the test datasets. We report test accuracy for classification tasks and the coefficient of determination $R^{2}$ for regression tasks.}
		\label{tab:statistics}
		\resizebox{\columnwidth}{!}{
			\begin{tabular}{lcrrrcrrr}
				\toprule
				Dataset & \multicolumn{4}{c}{Decision Tree} & \multicolumn{4}{c}{Gradient Boosted Trees} \\
				\cmidrule(lr){2-5} \cmidrule(lr){6-9}
				& $R^{2}$/Acc & Depth & \#Leaves & \#Nodes & $R^{2}$/Acc & Depth & \#Leaves & \#Nodes \\
				FOTP & 0.537 & 20 & 1,275 & 2,549 & 0.568 & 20 & 43,103 & 86,176 \\		
				GPSP& 0.498 & 10 & 481 & 961 & 0.572 & 10 & 11,195 & 22,365\\
				jannis & 0.588 & 40 & 11,050 & 22,099 & 0.613 & 40 & 365,809 & 731,598\\
				MinibooNE & 0.899 & 20 & 4,108 & 8,215 & 0.894 & 20 & 25,924 & 51,843\\
				philippine & 0.709 & 15 & 372 & 743 & 0.724 & 15 & 2,572 & 5,139\\
				spambase & 0.925 & 15 & 166 & 331 & 0.922 & 15 & 977 & 1,949\\
				BT & 0.901 & 50 & 408,461 & 816,921 & 0.609 & 50 & 2,042,440 & 4,084,875\\
				superconduct & 0.863 & 10 & 697 & 1,393 & 0.578 & 10 & 3,594 & 7,183\\
				wave\_energy & 0.525 & 30 & 56,814 & 113,627 & 0.465 & 30 & 284,265 & 568,525\\
				\bottomrule
		\end{tabular}}
	\end{table}

	\textbf{Experimental details for producing Figure~\ref{fig:inaccuracy}.} For plotting the condition number of TreeShap-K, we set $\gamma = 1$, which appears in Eq.~\eqref{eq:where is gamma}. For reporting the inaccuracy when computing the Shapley value, the used decision tree $f$ is trained on the dataset Click\_prediction\_small with $11$ features.\footnote{\url{https://openml.org/d/1219}.} With $N=11$, it is tractable to compute the ground truths using Eq.~\eqref{eq:probabilistic values}. While varying the depth $D$, we report the average error $\|\hat{\boldsymbol\phi} - \boldsymbol\phi\|$ over $5$ different instances of $\mathbf{x}$. The algorithm for running Linear TreeShap is from the public repository provided by \citet{Yu2022linear}.\footnote{\url{https://github.com/yupbank/linear_tree_shap/blob/main/linear_tree_shap/fast_linear_tree_shap.py}.} For a fair comparison, the inaccuracy of Linear TreeShap (well-conditioned) is obtained by simply replacing the Vandermonde matrix used in Linear TreeShap with the well-conditioned one.

	\paragraph{Further inaccuracy analysis.} In Algorithm~\ref{alg:TreeProb}, the number of nodes is determined by $M \gets \min(D,N)$. If we include TreeProb in Figure~\ref{fig:inaccuracy}, $M$ would be fixed as $11$, since $N=11$. Consequently, this setting would not reveal when TreeProb becomes numerically unstable. For completeness, we therefore also include TreeProb and TreeGrad-Shap by setting $M \gets D$. 
	In addition, we also compare against the path-dependent TreeShap \citep{lundberg2020local} implemented in the SHAP package.\footnote{\url{https://github.com/shap/shap}.}
	The corresponding results are shown in Figure~\ref{fig:inaccuracy}. Specially, TreeProb (modified) refers to the variant that adopts the same numerical stabilization trick as Linear TreeShap, as described in Appendix~\ref{app:psi}. Clearly, our TreeGrad-Shap is the most numerically stable algorithm, which further implies that TreeGrad itself is numerically stable. The numerical error of TreeProb is already $10^{-8}$ when $M=35$, and increases to $10$ when $M=65$.

	\paragraph{Runtime analysis.}
	We conduct experiments to compare the runtime of Linear TreeShap and our TreeGrad-Shap (Algorithm~\ref{alg:vectorized treegrad-shap}) for computing the Shapley value, and of our TreeGrad (Algorithm~\ref{alg:TreeGrad}) for computing the Banzhaf value. Decision trees trained on the BT dataset are employed, as their sizes are the largest according to Table~\ref{tab:statistics}. The result is presented in Figure~\ref{fig:runtime}. Each curve is averaged over $200$ instances, with shaded areas representing the corresponding standard deviation. In theory, the time complexities of Linear TreeShap and our TreeGrad-Shap are equally efficient. However, as shown in Figure~\ref{fig:inaccuracy}, the numerical error of Linear TreeShap can be up to $10^{15}$ times larger than that of our TreeGrad-Shap. In contrast, the time complexity of our TreeGrad is $O(L)$.
	
	\begin{figure}[t]
		\centering
		\begin{tabular}{cc}
			\includegraphics[width=0.44\columnwidth]{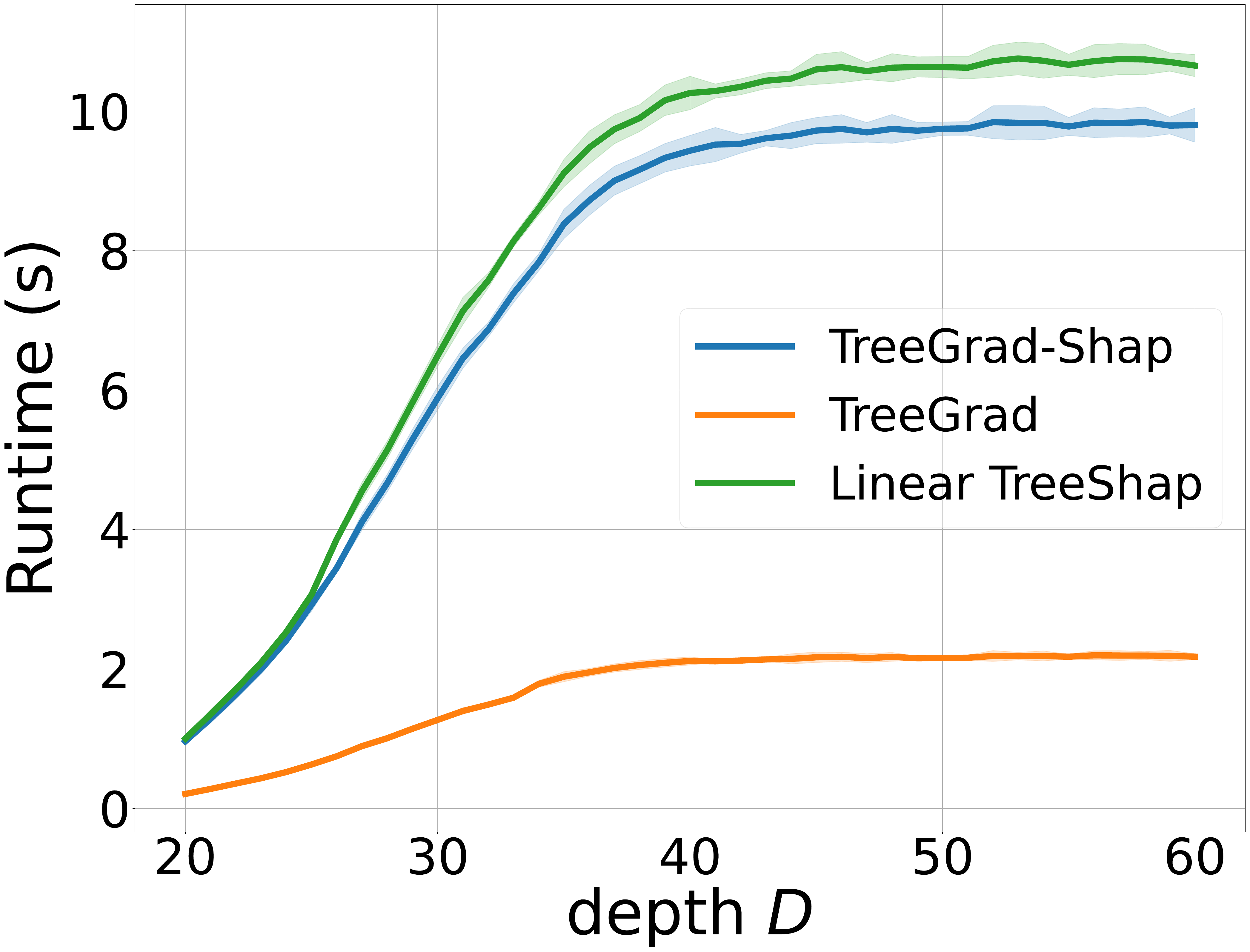} & \includegraphics[width=0.48\columnwidth]{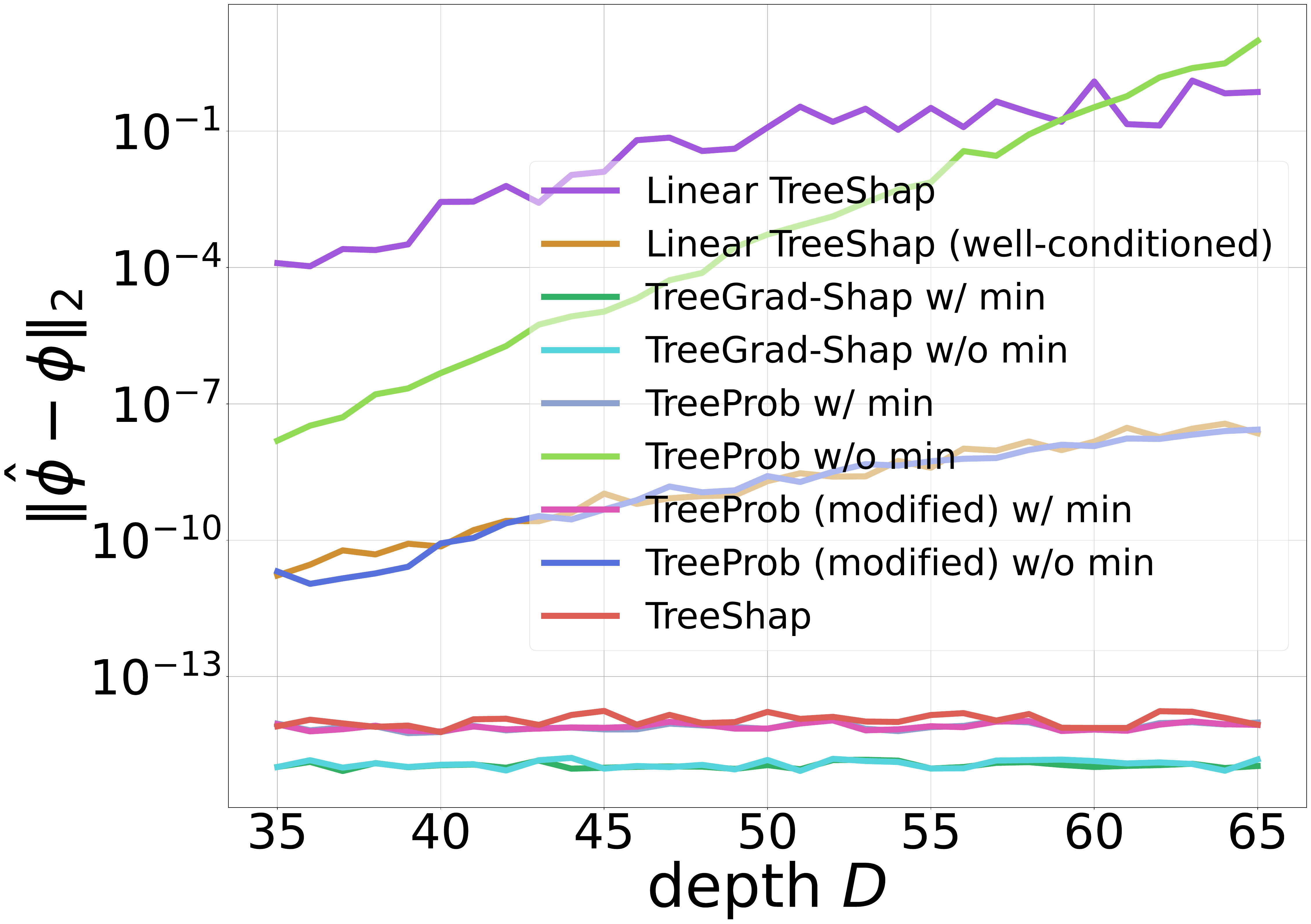}\\
			\phantom{aaaa}(a) & \phantom{aaaaaaaa}(b)
		\end{tabular}
		\caption{(a) The runtime of Linear TreeShap and our TreeGrad-Shap (Algorithm~\ref{alg:vectorized treegrad-shap}) for computing the Shapley value, and of our TreeGrad (Algorithm~\ref{alg:TreeGrad}) for computing the Banzhaf value. (b) The numerical errors of different algorithms.}
		\label{fig:runtime}
	\end{figure}

	\paragraph{How to select $T$ and $\epsilon$ in TreeGrad-Ranker.}
	The learning rate $\epsilon$ is crucial for the convergence of maximizing the objective in Eq.~\eqref{op:relaxed argmax}. In particular, each $\overline{f}_{\mathbf{x}}(\mathbf{z})$ can be efficiently evaluated using Algorithm~\ref{alg:multilinear evaluation}, which is derived from our TreeGrad (Algorithm~\ref{alg:TreeGrad}) by removing the traversing-up procedure. Therefore, the choice of $\epsilon$ can be guided by observing the changes in $\frac{1}{2}(\overline{f}_{\mathbf{x}}(\mathbf{z})-\overline{f}_{\mathbf{x}}(\mathbf{1}_{N}-\mathbf{z}))$ during gradient ascent. Specifically, we select the largest $\epsilon$ in the candidate set $\{\dots, 0.1, 0.5, 1, 5, 10, \cdots\}$ the provides consistent convergence in $\frac{1}{2}(\overline{f}_{\mathbf{x}}(\mathbf{z})-\overline{f}_{\mathbf{x}}(\mathbf{1}_{N}-\mathbf{z}))$, which gives $\epsilon=5$.
	Next, using decision trees, we present the performance across different combinations of $T$ and $\epsilon$.
	The results obtained with gradient ascent are shown in Figures~\ref{fig:cla-pre-first-T-gd}--\ref{fig:regression-T-gd}, whereas those for ADAM are presented in Figures~\ref{fig:cla-pre-first-T-adam}--\ref{fig:regression-T-adam}. Our observations are: (i) the resulting performance improves and converges as $T$ increases; (ii) TreeGrad-Ranker with ADAM converges earlier than gradient ascent, as performance with ADAM reaches near convergence across all the used datasets at $T=10$; (iii) lower $\epsilon$ does not improve performance.

	\begin{algorithm}[t]
		\DontPrintSemicolon
		\caption{TreeGrad (Full Procedure)}
		\label{alg:TreeGrad-full}
		\KwIn{Decision tree $ \mathcal{T}=(V,E) $ with root $r$, instance $ \mathbf{x} \in \mathbb{R}^{N} $, point $ \mathbf{z} \in [0, 1]^{N} $}
		\KwOut{Gradient $ \mathbf{g} $}
		
		\SetKwFunction{FMain}{traverse}
		\SetKwFunction{FMainZ}{traverse-zero}
		\SetKwProg{Fn}{Function}{:}{}
		\Fn{\FMain{$ v $, $ s=1 $}}{
			$ e \gets (m_{v}\rightarrow v) $
			
			\uIf{$1-z_{l_{e}}+z_{l_{e}}\gamma_{e} = 0$}{
				\uIf{$e^{\uparrow}$ exists}{
					$s \gets \frac{s}{1-z_{l_{e^{\uparrow}}}+z_{l_{e^{\uparrow}}}\gamma_{e^{\uparrow}}}$
				}
				\uIf{$v$ is a leaf}{
					$s \gets \frac{\varrho_{v}c_{v}}{c_{r}} \cdot s$
				}\uElse{
					$ s \gets $ 			
					\FMainZ{$ a_{v} $, $ s $, $l_{e}$}
					$ + $
					\FMainZ{$ b_{v} $, $ s $, $l_{e}$}
				}
				$g_{l_{e}} \gets g_{l_{e}} - s$
				
				\Return $0$
			}\uElse{
				\uIf{$ e^{\uparrow} $ exists}{
					$ s \gets \frac{s}{1-z_{l_{e^{\uparrow}}}+z_{l_{e^{\uparrow}}}\gamma_{e^{\uparrow}}} \cdot (1-z_{l_{e}}+z_{l_{e}}\gamma_{e}) $
				}\uElse{
					$ s \gets s \cdot (1-z_{l_{e}} + z_{l_{e}}\gamma_{e}) $
				}
				
				\uIf{$ v $ is a leaf}{
					$ s \gets \frac{\varrho_{v}c_{v}}{c_{r}} \cdot s $		
				}\uElse{
					$ s \gets $ 			
					\FMain{$ a_{v} $, $ s $}
					$ + $
					\FMain{$ b_{v} $, $ s $}
				}
				\uIf{$ e^{\uparrow} $ exists}{
					$ H[h_{e^{\uparrow}}] \gets H[h_{e^{\uparrow}}] - s $
				}
				
				$ H[v] \gets H[v] + s $
				
				$ g_{l_{e}} \gets g_{l_{e}} + (\gamma_{e} - 1) \cdot \frac{H[v]}{1-z_{l_{e}}+z_{l_{e}}\gamma_{e}} $
				
				\Return $ s $
			}
		}
		
		\Fn{\FMainZ{$ v $, $ s $, $l$}}{
			$ e \gets (m_{v}\rightarrow v) $
			
			\uIf{$l_{e}\not= l$}{
				\uIf{$1-z_{l_{e}}+z_{l_{e}}\gamma_{e} = 0$}{
					\Return $0$
				}
				\uIf{$e^{\uparrow}$ exists}{
					$ s \gets \frac{s}{1-z_{l_{e^{\uparrow}}}+z_{l_{e^{\uparrow}}}\gamma_{e^{\uparrow}}} \cdot (1-z_{l_{e}}+z_{l_{e}}\gamma_{e}) $
				}\uElse{
					$ s \gets s \cdot (1-z_{l_{e}} + z_{l_{e}}\gamma_{e}) $
				}				
			}
			\uIf{$v$ is a leaf}{
				$s \gets \frac{\varrho_{v}c_{v}}{c_{r}} \cdot s$
			}\uElse{
				$ s \gets $ 			
				\FMainZ{$ a_{v} $, $ s $, $l$}
				$ + $
				\FMainZ{$ b_{v} $, $ s $, $l$}
			}
			\Return $s$
		}
		
		Initialize $ H[v] = 0 $ for every $ v \in V $
		
		Initialize $ \mathbf{g} = \mathbf{0}_{N} $
		
		\FMain{$ a_{r} $}
		
		\FMain{$ b_{r} $}
		
		\Return $\mathbf{g}$
	\end{algorithm}

	\begin{figure*}[t]
		\centering
		\begin{tabular}{ccc}
			\multicolumn{3}{c}{\includegraphics[width=0.9\linewidth]{legend_comparison.pdf}} \\
			\includegraphics[width=0.3\linewidth]{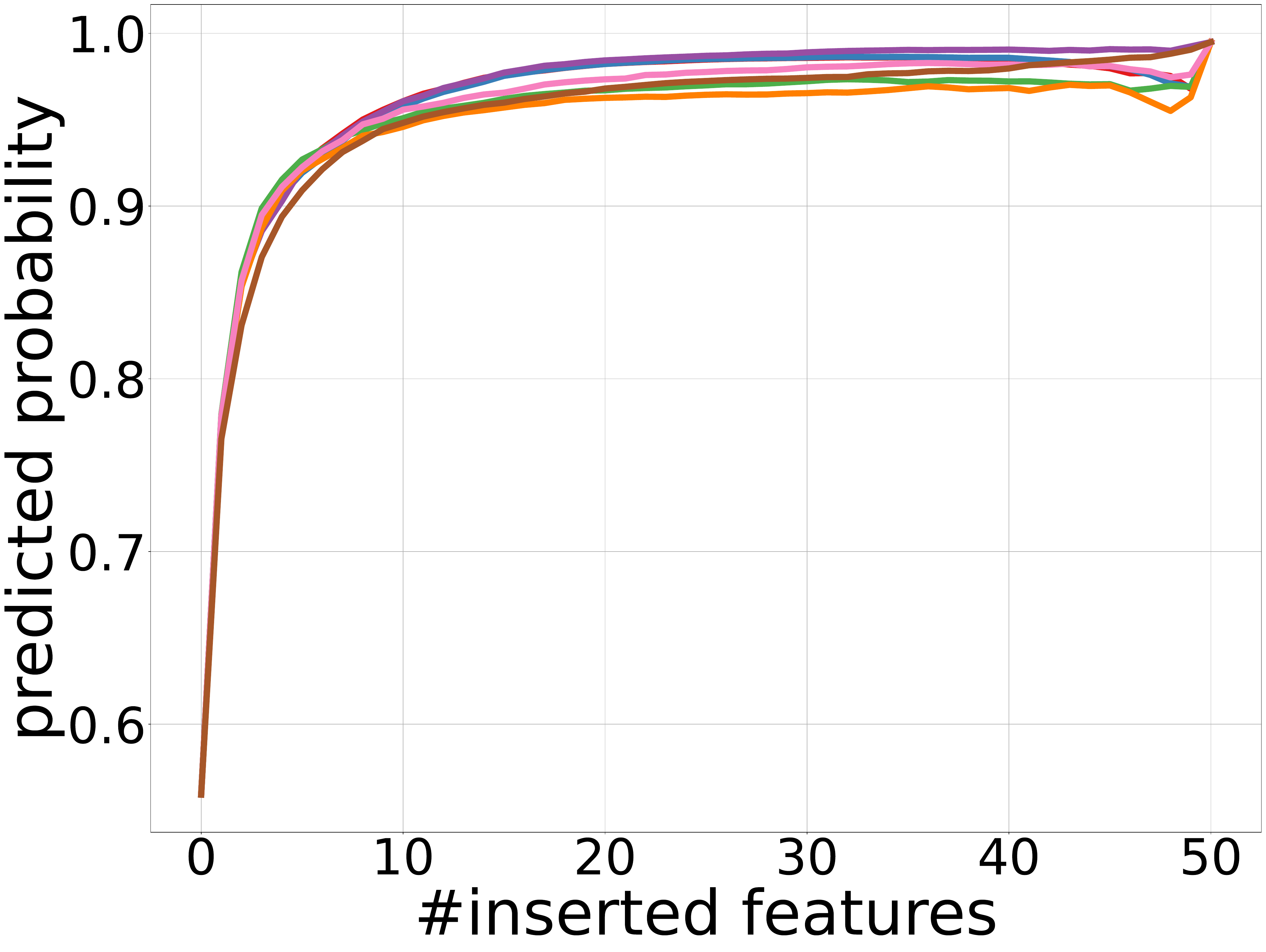} & \includegraphics[width=0.3\linewidth]{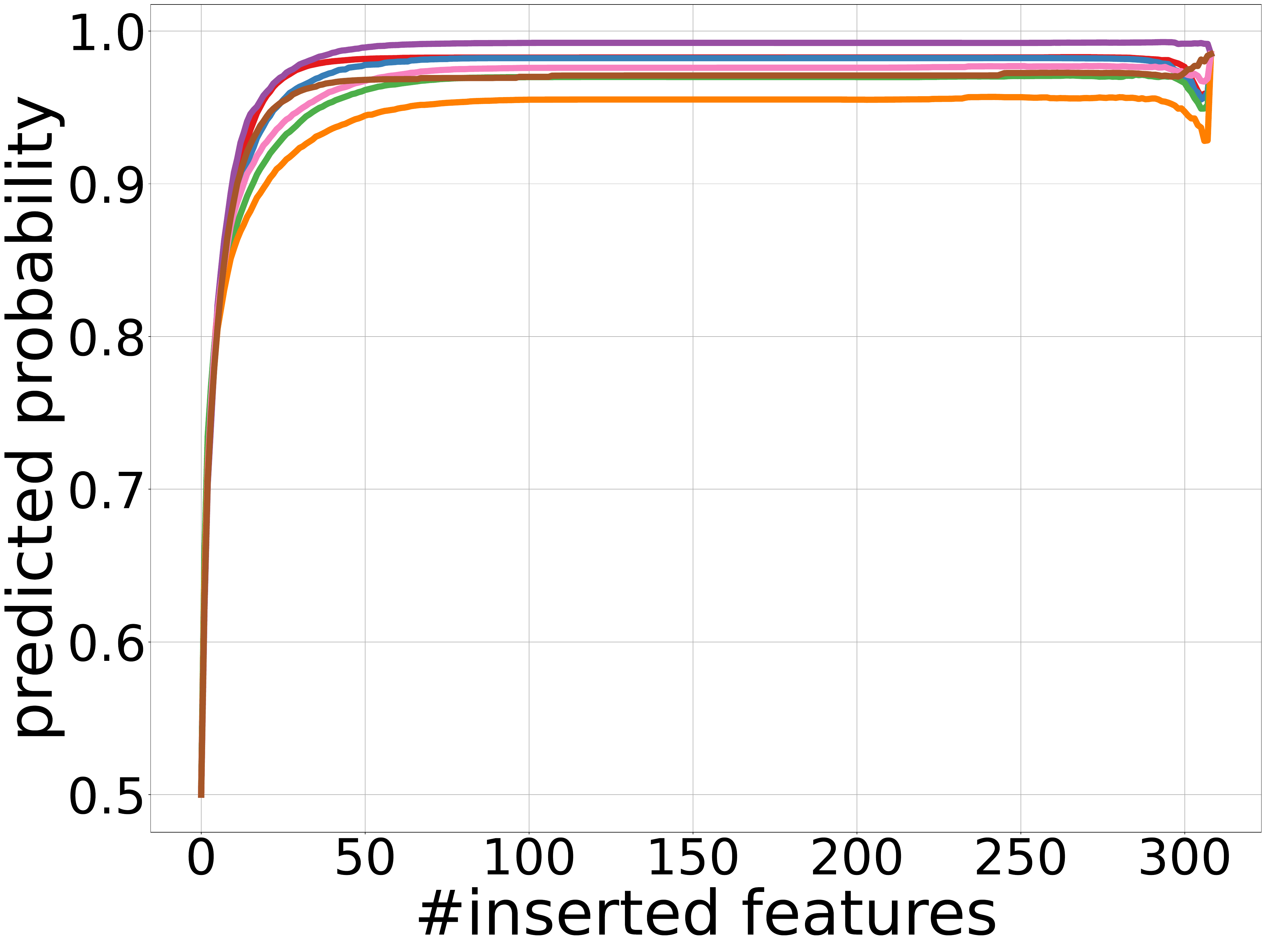} &
			\includegraphics[width=0.3\linewidth]{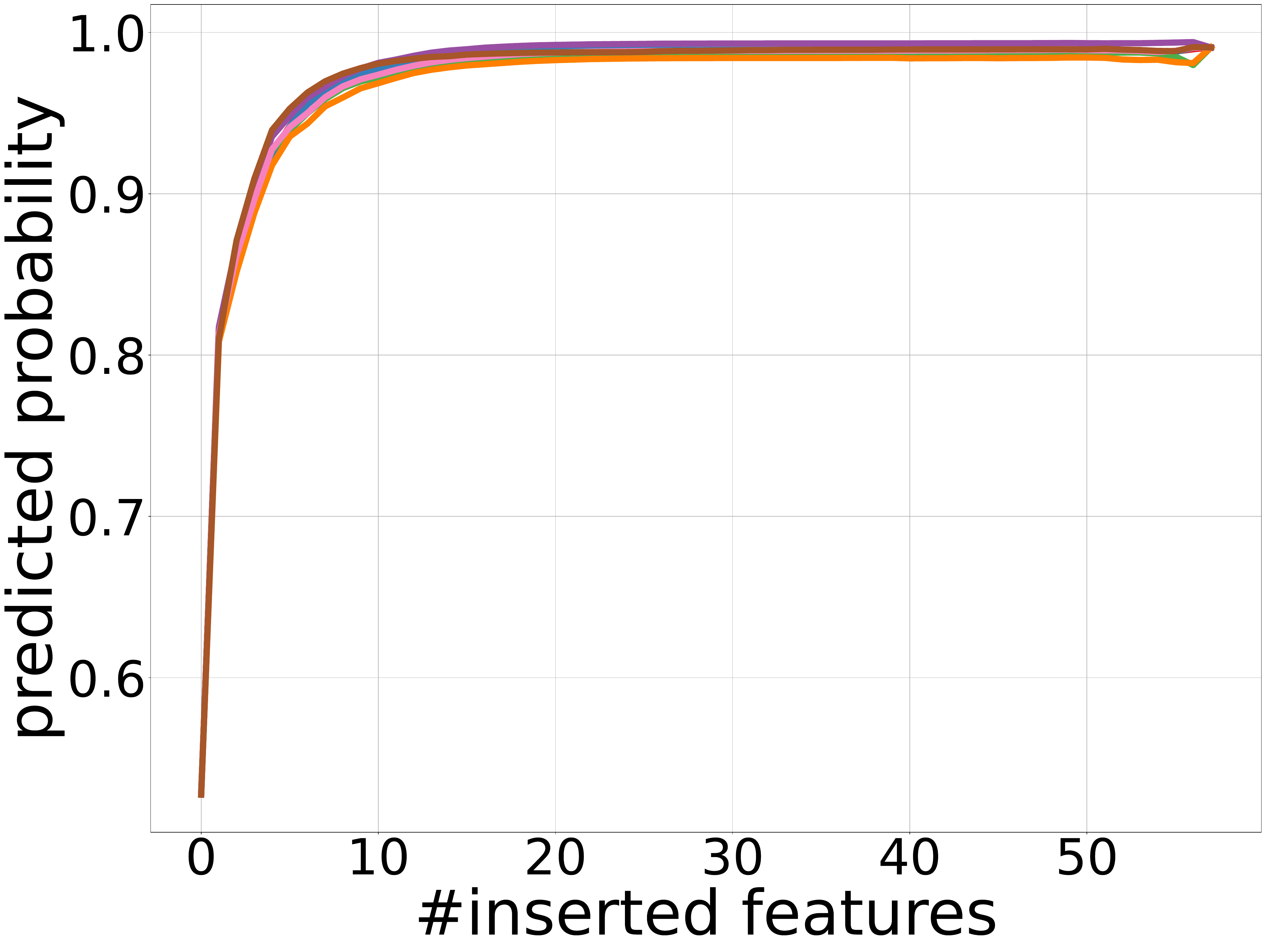} \\
			\includegraphics[width=0.3\linewidth]{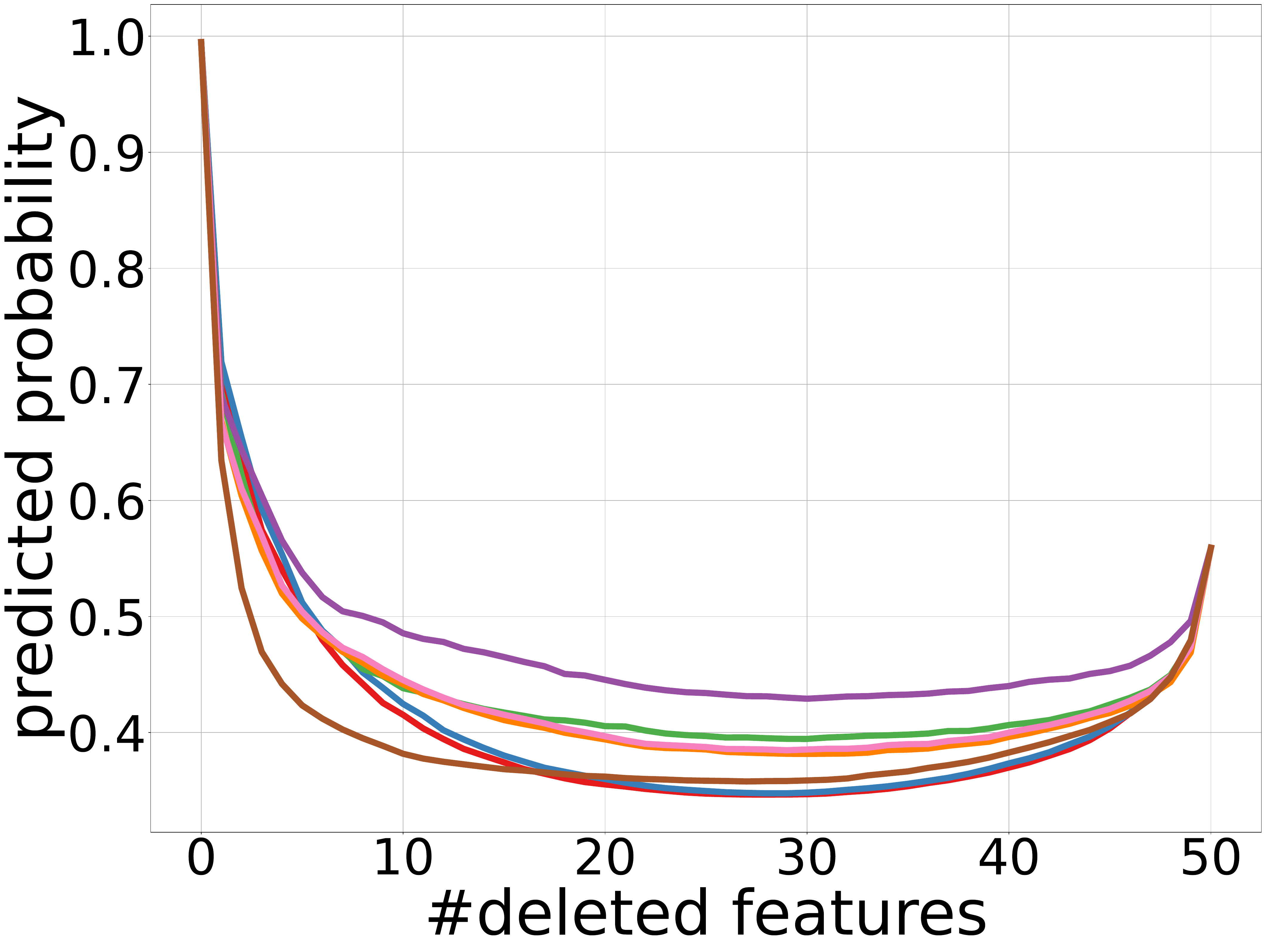} & \includegraphics[width=0.3\linewidth]{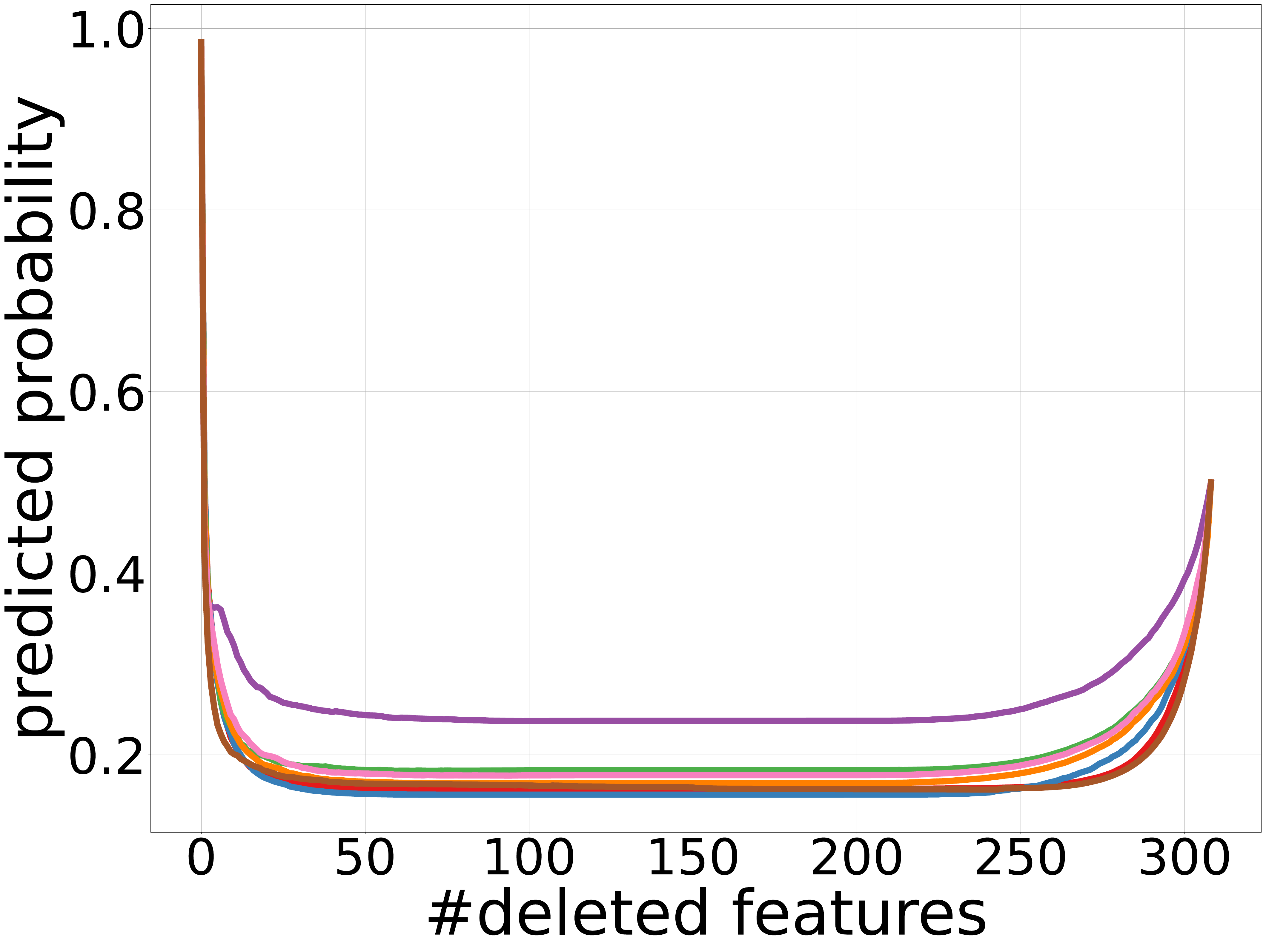} &
			\includegraphics[width=0.3\linewidth]{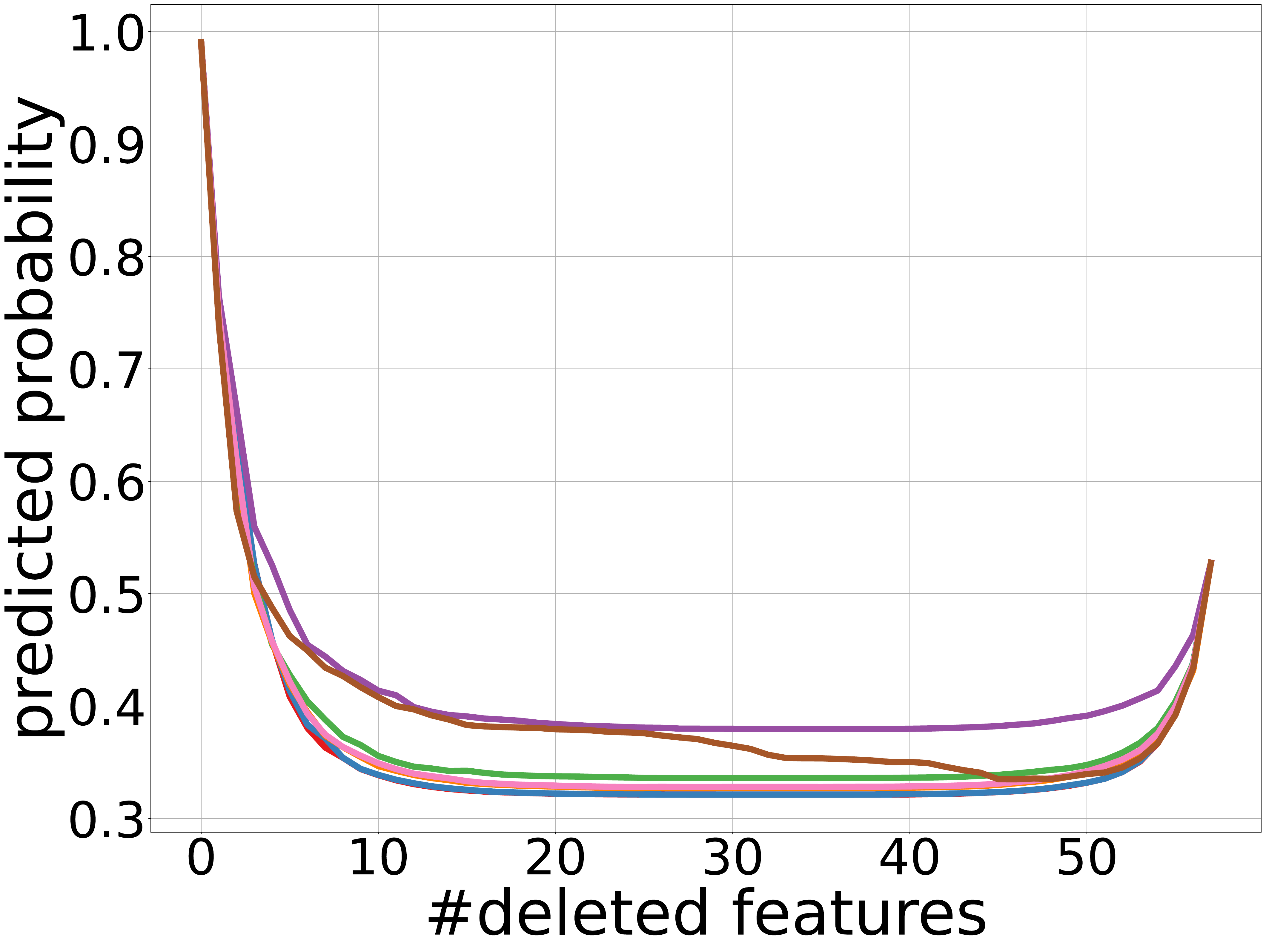} \\
			\phantom{a}MinibooNE ($D=20$) & \phantom{aaaa}philippine ($D=15$) & \phantom{aaaa}spambase ($D=15$)
		\end{tabular}
		\caption{The trained models are \textbf{decision trees}. The first row shows the insertion curves, while the second presents the deletion curves. A higher insertion curve or a lower deletion curve indicates better performance. For our TreeGrad-Ranker, we set $ T=100 $ and $ \epsilon=5 $.
		Beta-Insertion selects the candidate Beta Shapley value that maximizes the $\mathrm{Ins}$ metric for each $f_{\mathbf{x}}$, whereas Beta-Deletion minimizes the $\mathrm{Del}$ metric. Beta-Joint selects the candidate that maximizes the joint metric $\mathrm{Ins} - \mathrm{Del}$. The output of each $f(\mathbf{x})$ is set to be the probability of \textbf{its predicted class}.} 
		\label{fig:cla-pre-second}
	\end{figure*}
	
	\begin{figure*}[t]
		\centering
		\begin{tabular}{ccc}
			\multicolumn{3}{c}{\includegraphics[width=0.9\linewidth]{legend_comparison.pdf}} \\
			\includegraphics[width=0.3\linewidth]{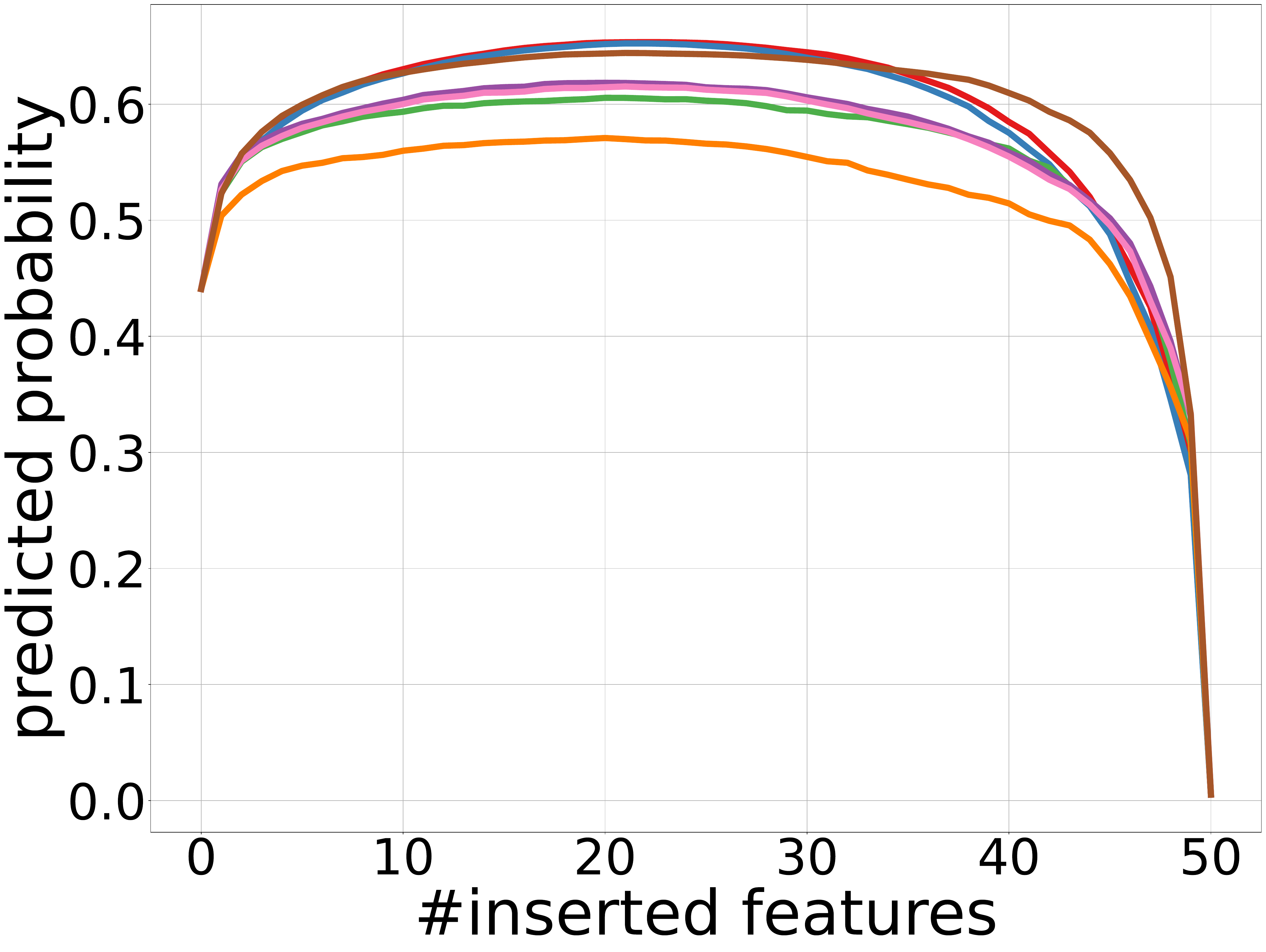} & \includegraphics[width=0.3\linewidth]{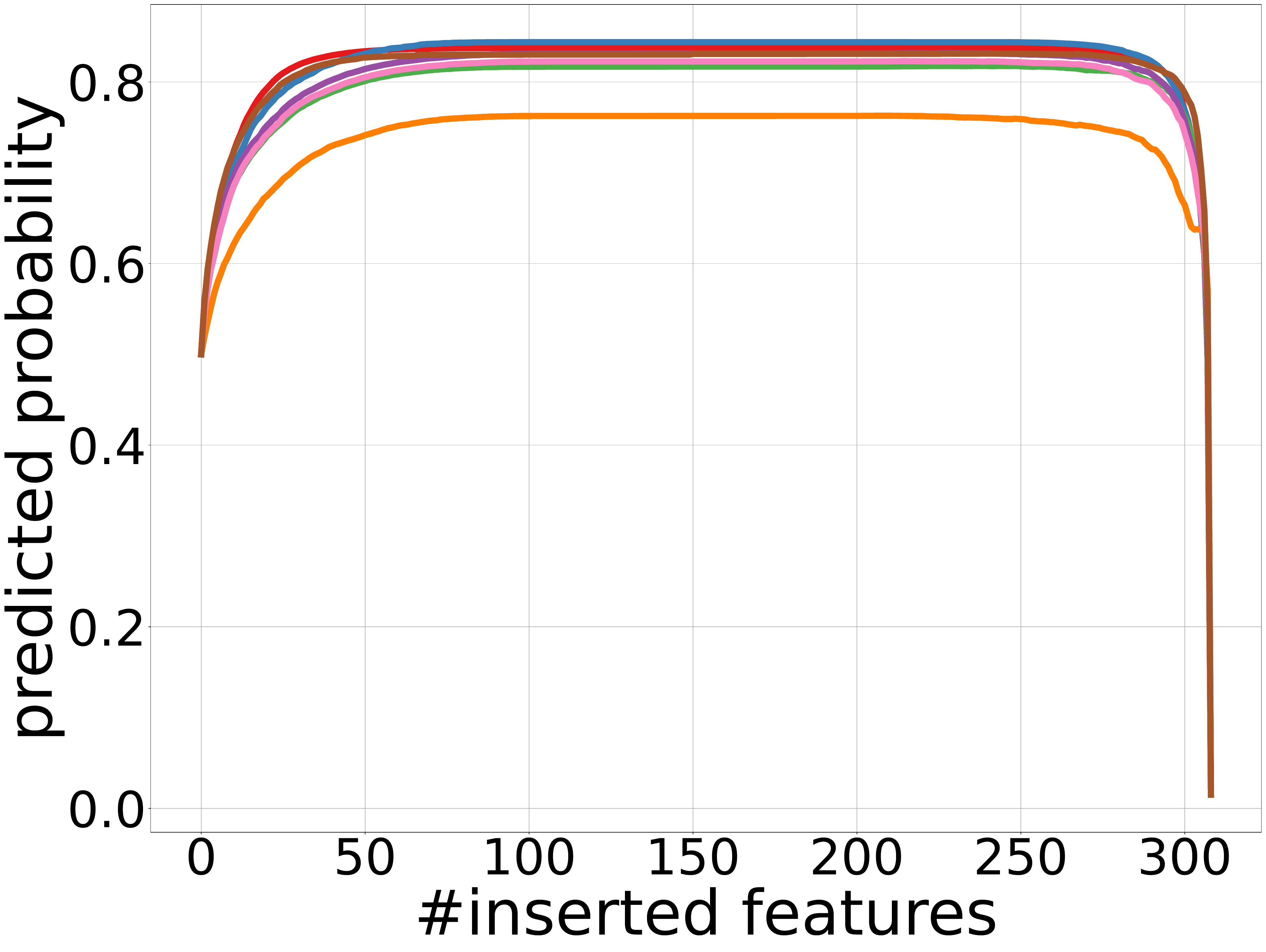} &
			\includegraphics[width=0.3\linewidth]{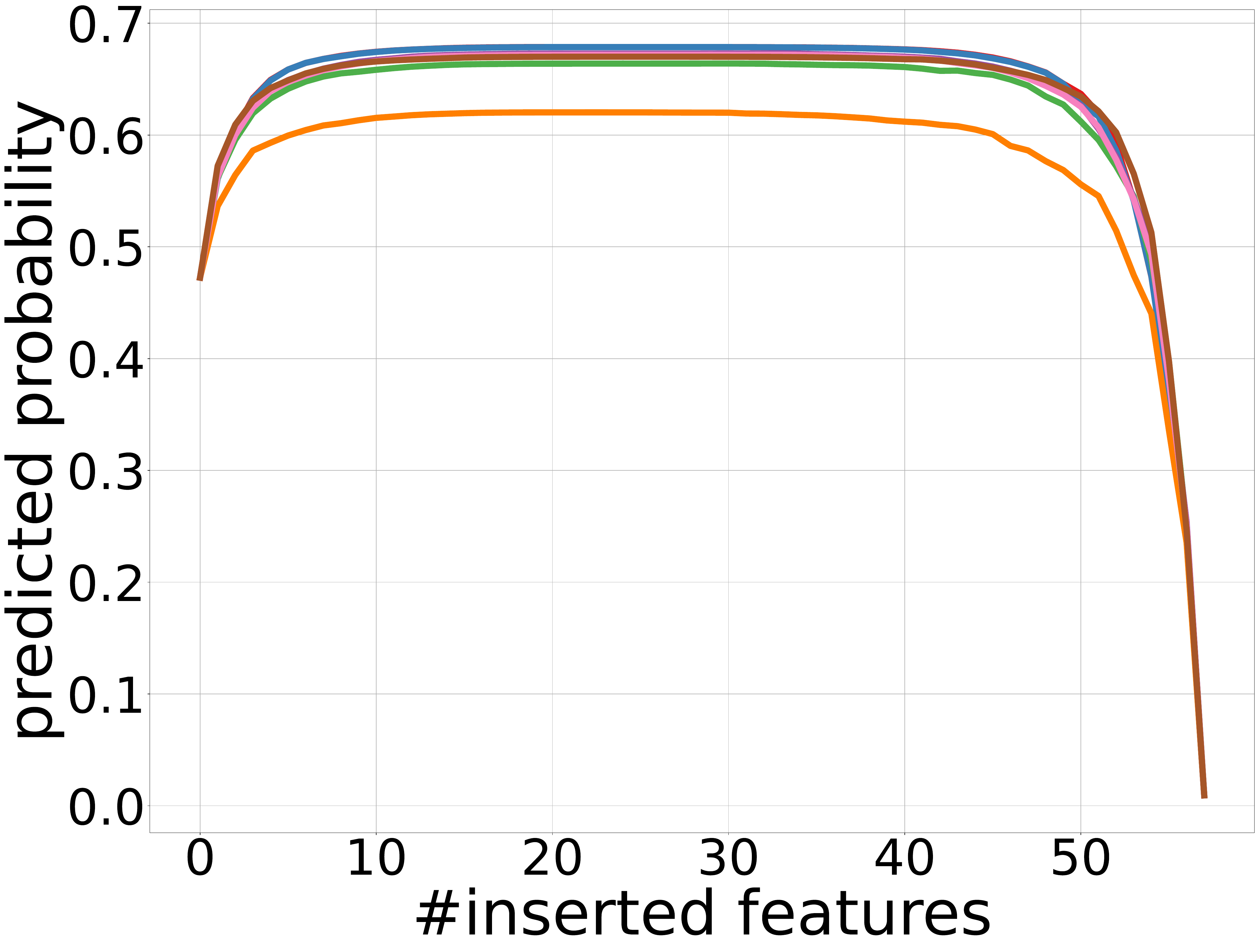} \\
			\includegraphics[width=0.3\linewidth]{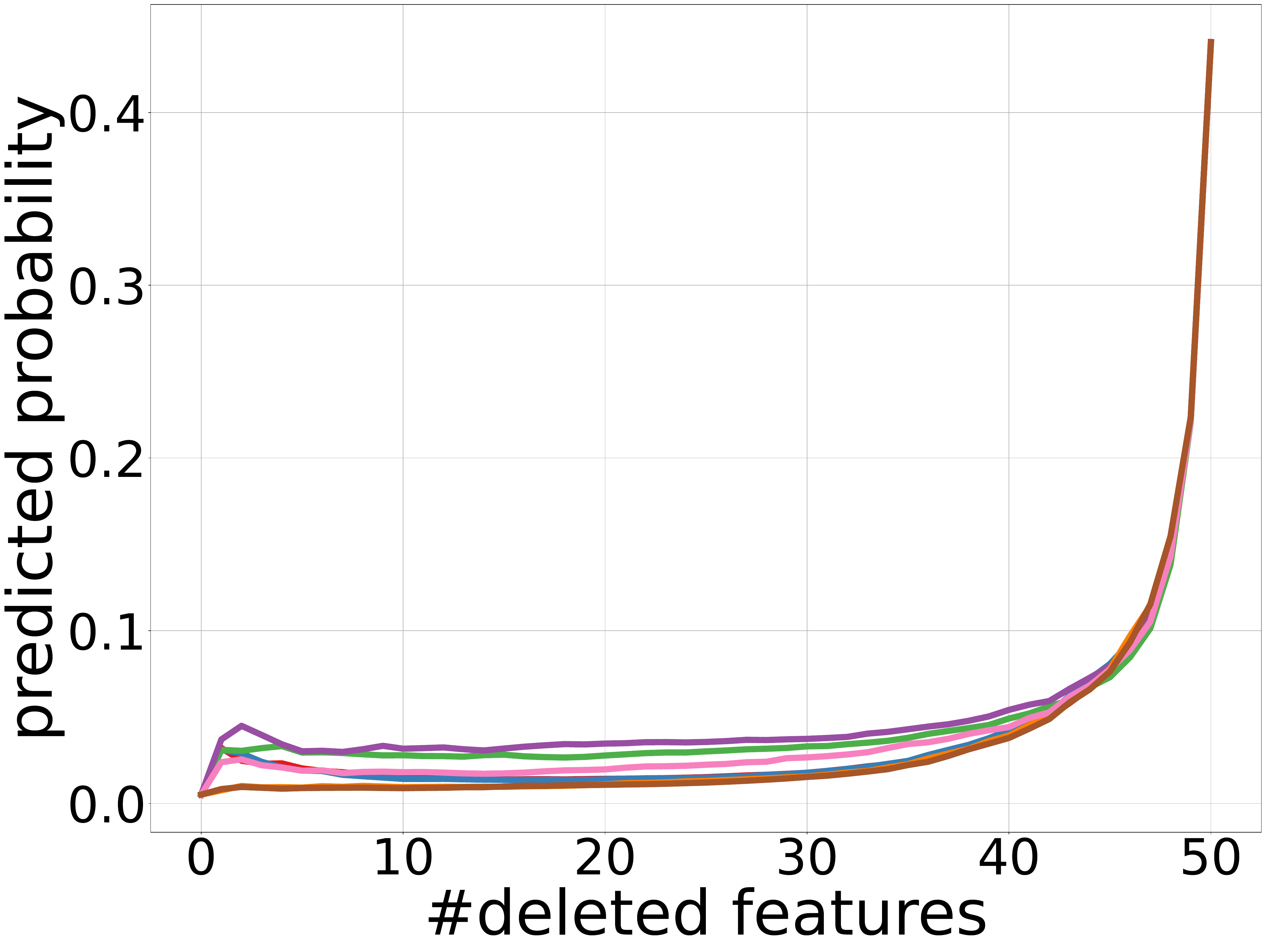} & \includegraphics[width=0.3\linewidth]{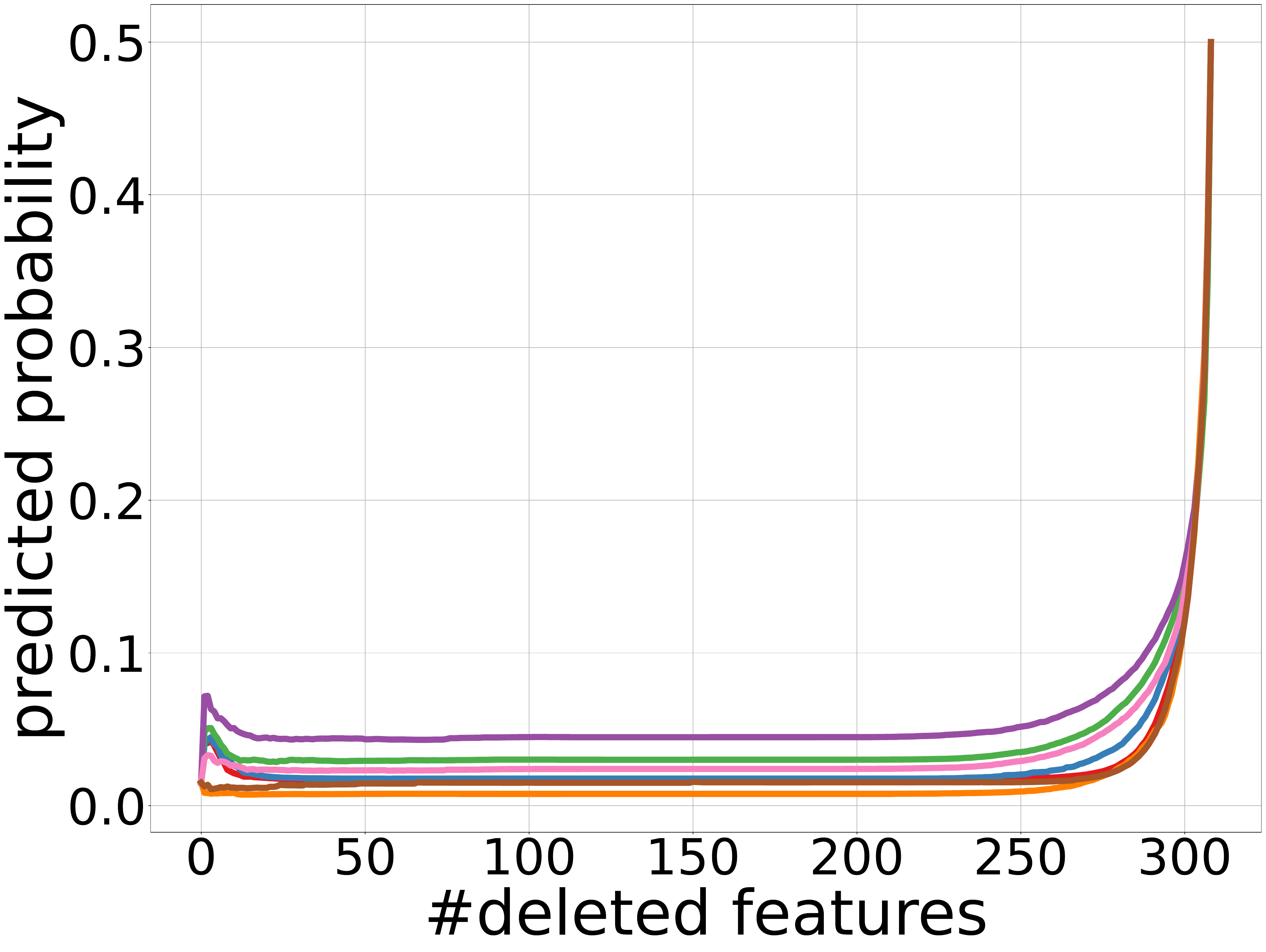} &
			\includegraphics[width=0.3\linewidth]{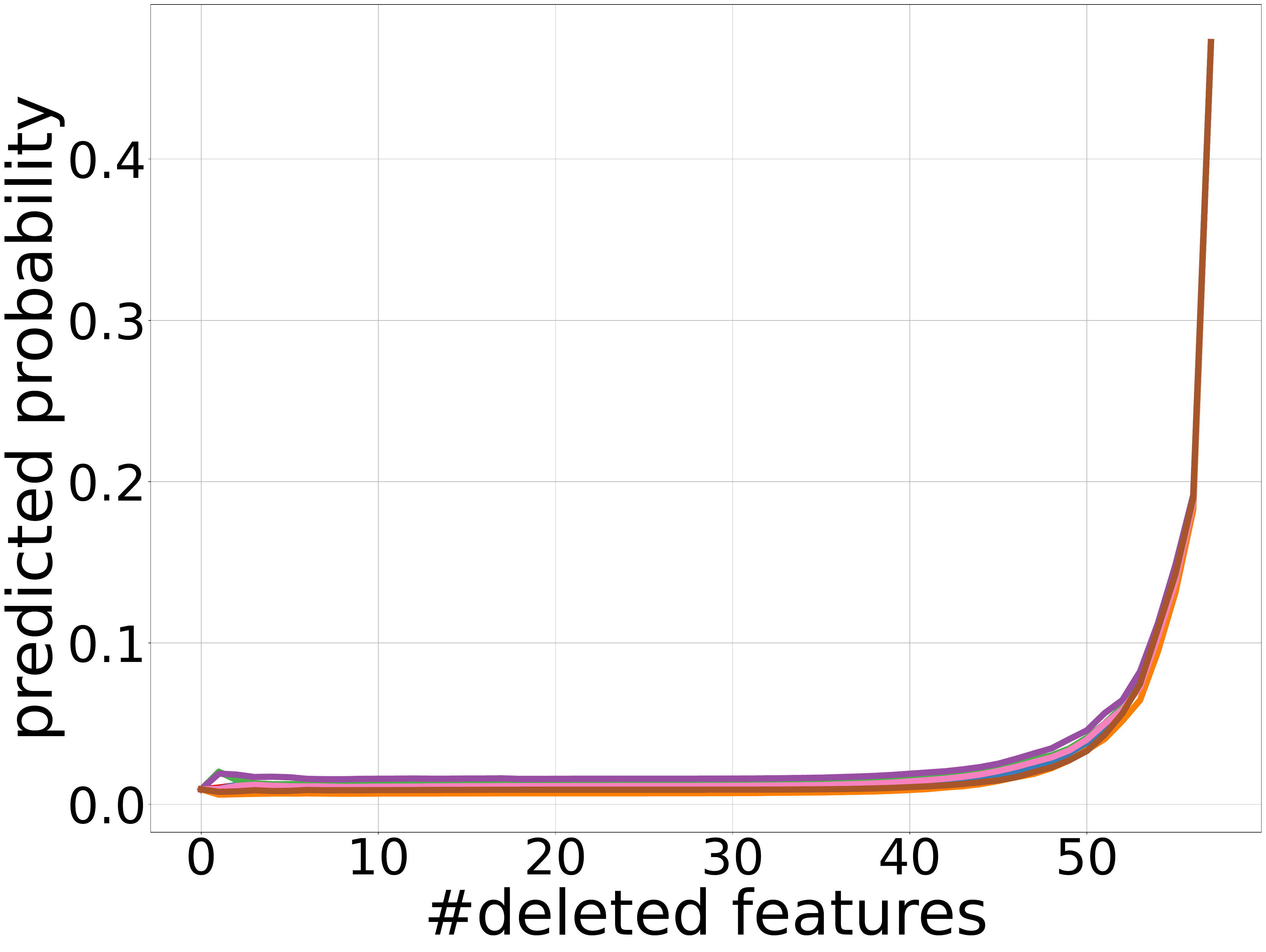} \\
			\phantom{a}MinibooNE ($D=20$) & \phantom{aaaa}philippine ($D=15$) & \phantom{aaaa}spambase ($D=15$)
		\end{tabular}
		\caption{The trained models are \textbf{decision trees}. The first row shows the insertion curves, while the second presents the deletion curves. A higher insertion curve or a lower deletion curve indicates better performance. For our TreeGrad-Ranker, we set $ T=100 $ and $ \epsilon=5 $.
		Beta-Insertion selects the candidate Beta Shapley value that maximizes the $\mathrm{Ins}$ metric for each $f_{\mathbf{x}}$, whereas Beta-Deletion minimizes the $\mathrm{Del}$ metric. Beta-Joint selects the candidate that maximizes the joint metric $\mathrm{Ins} - \mathrm{Del}$. The output of each $f(\mathbf{x})$ is set to be the probability of \textbf{a randomly sampled non-predicted class}.}
		\label{fig:cla-other-second}
	\end{figure*}

	\begin{figure*}[t]
		\centering
		\begin{tabular}{ccc}
			\multicolumn{3}{c}{\includegraphics[width=0.9\linewidth]{legend_comparison.pdf}} \\
			\includegraphics[width=0.3\linewidth]{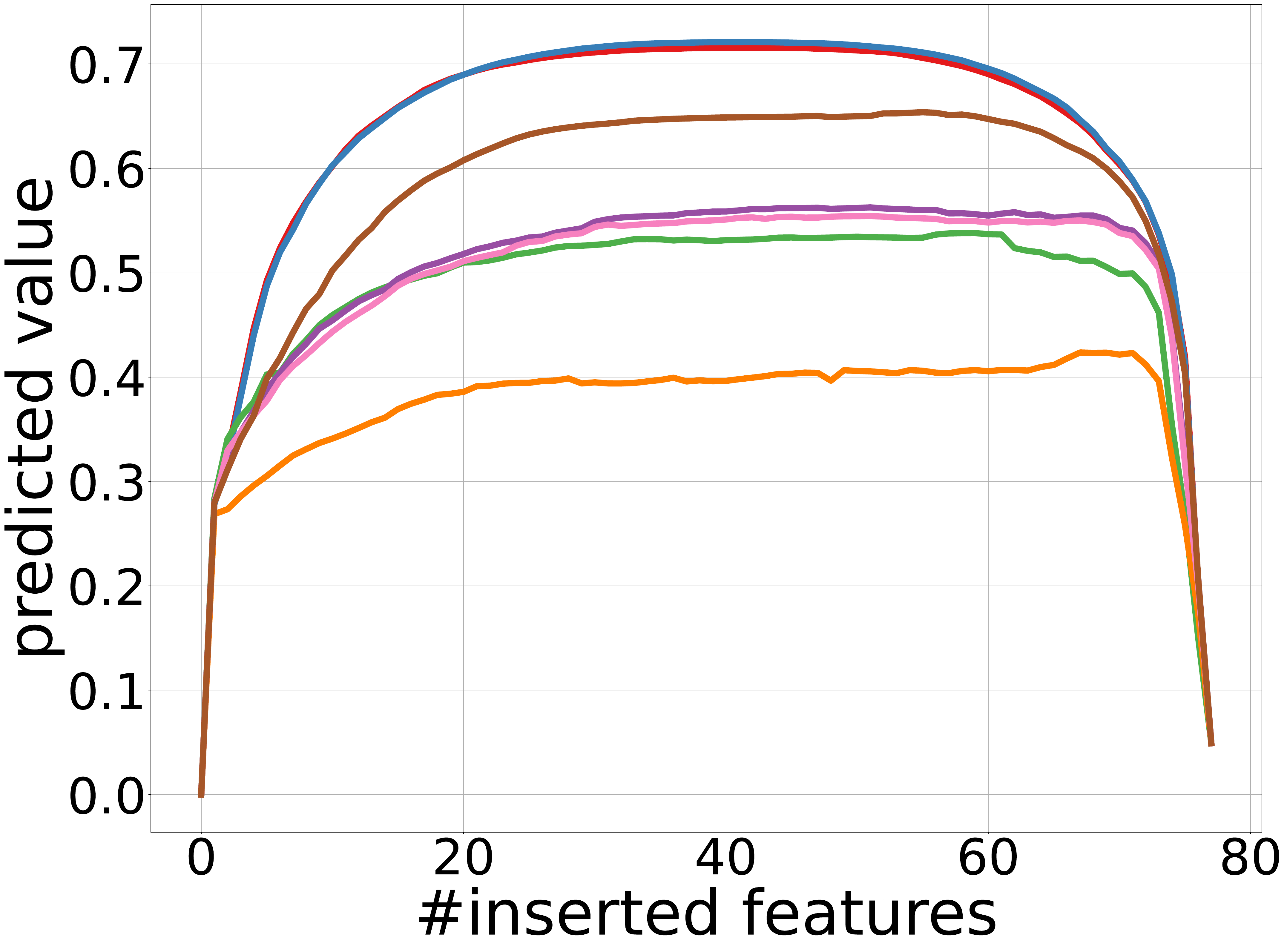} & \includegraphics[width=0.3\linewidth]{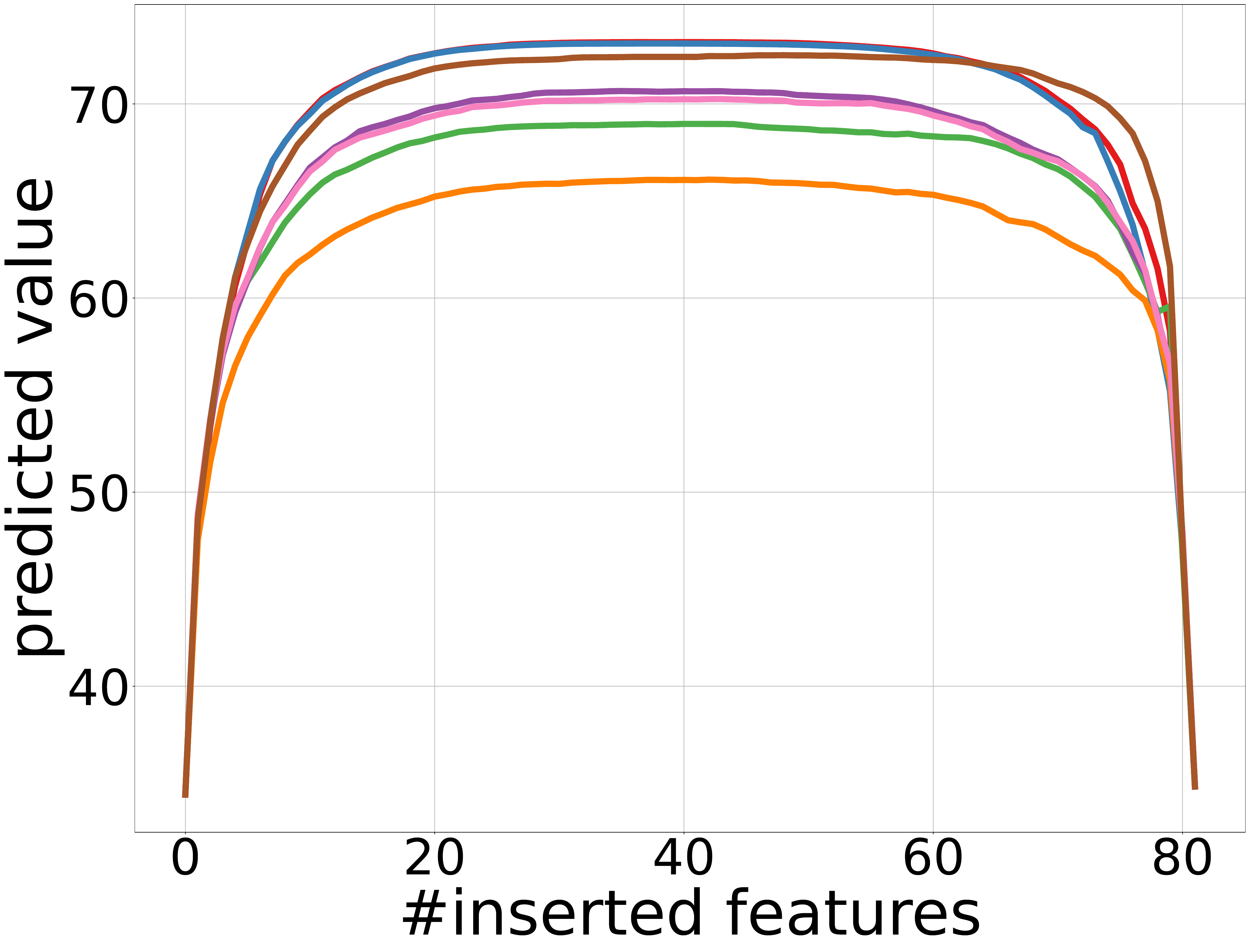} &
			\includegraphics[width=0.3\linewidth]{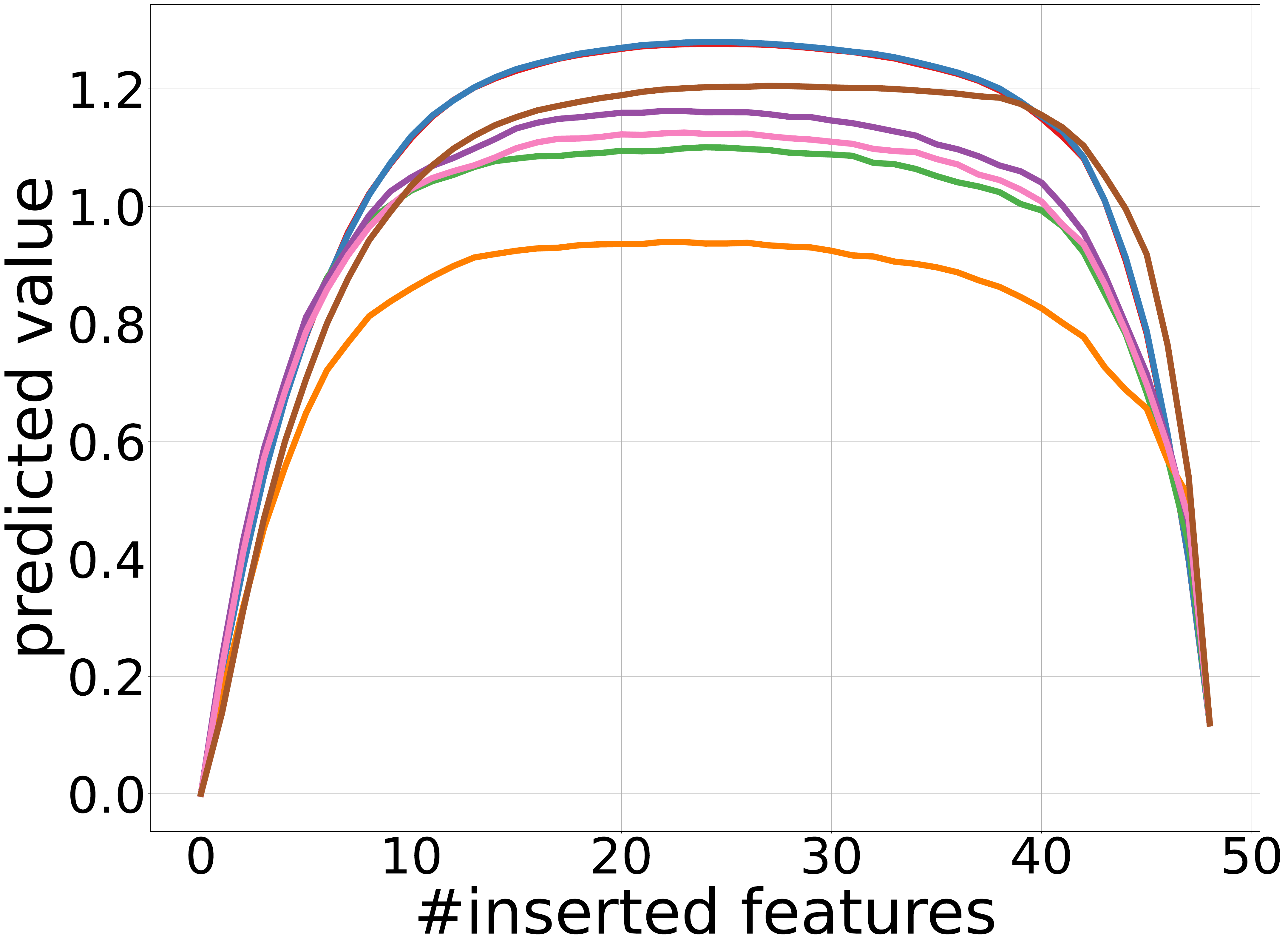} \\
			\includegraphics[width=0.3\linewidth]{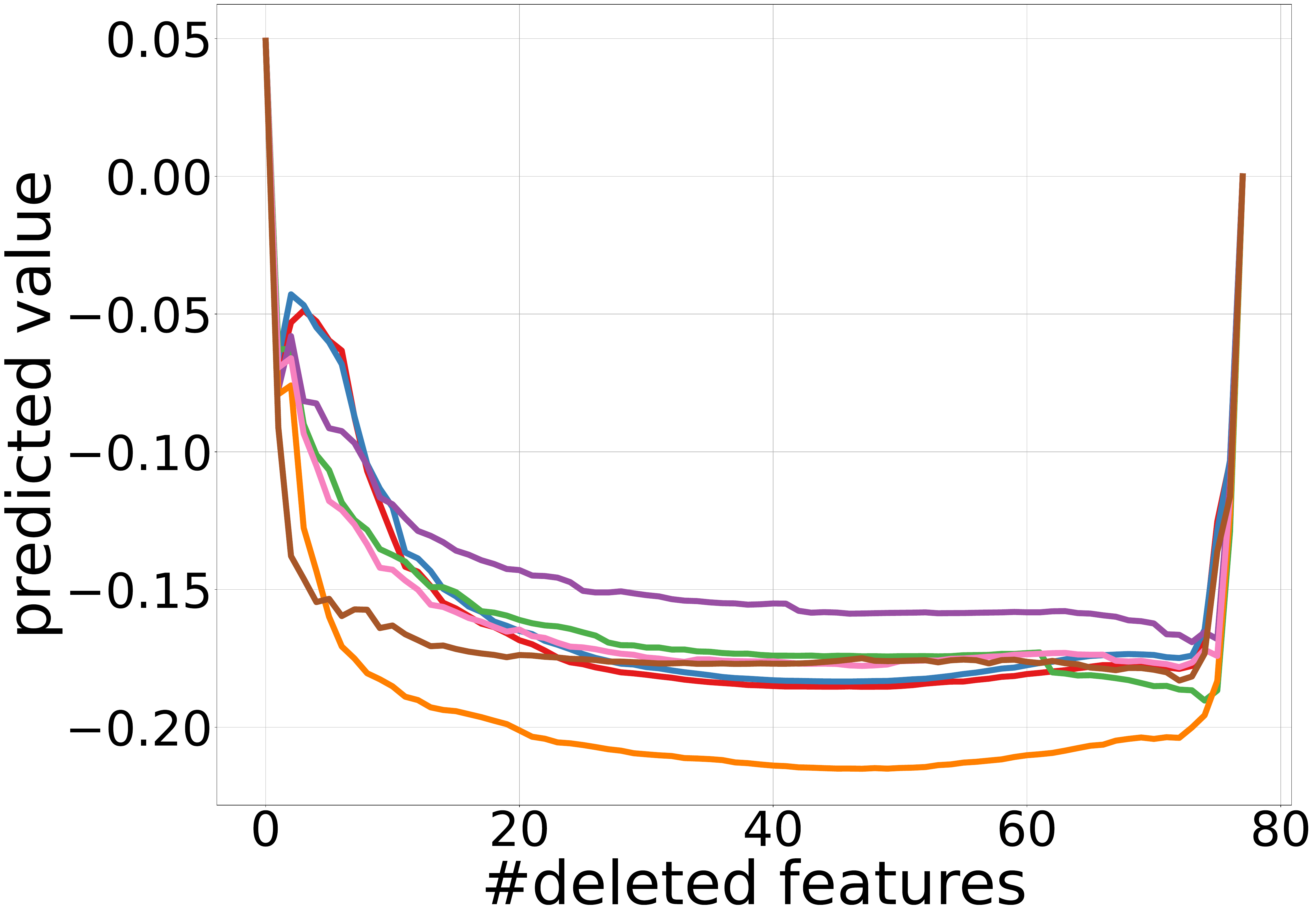} & \includegraphics[width=0.3\linewidth]{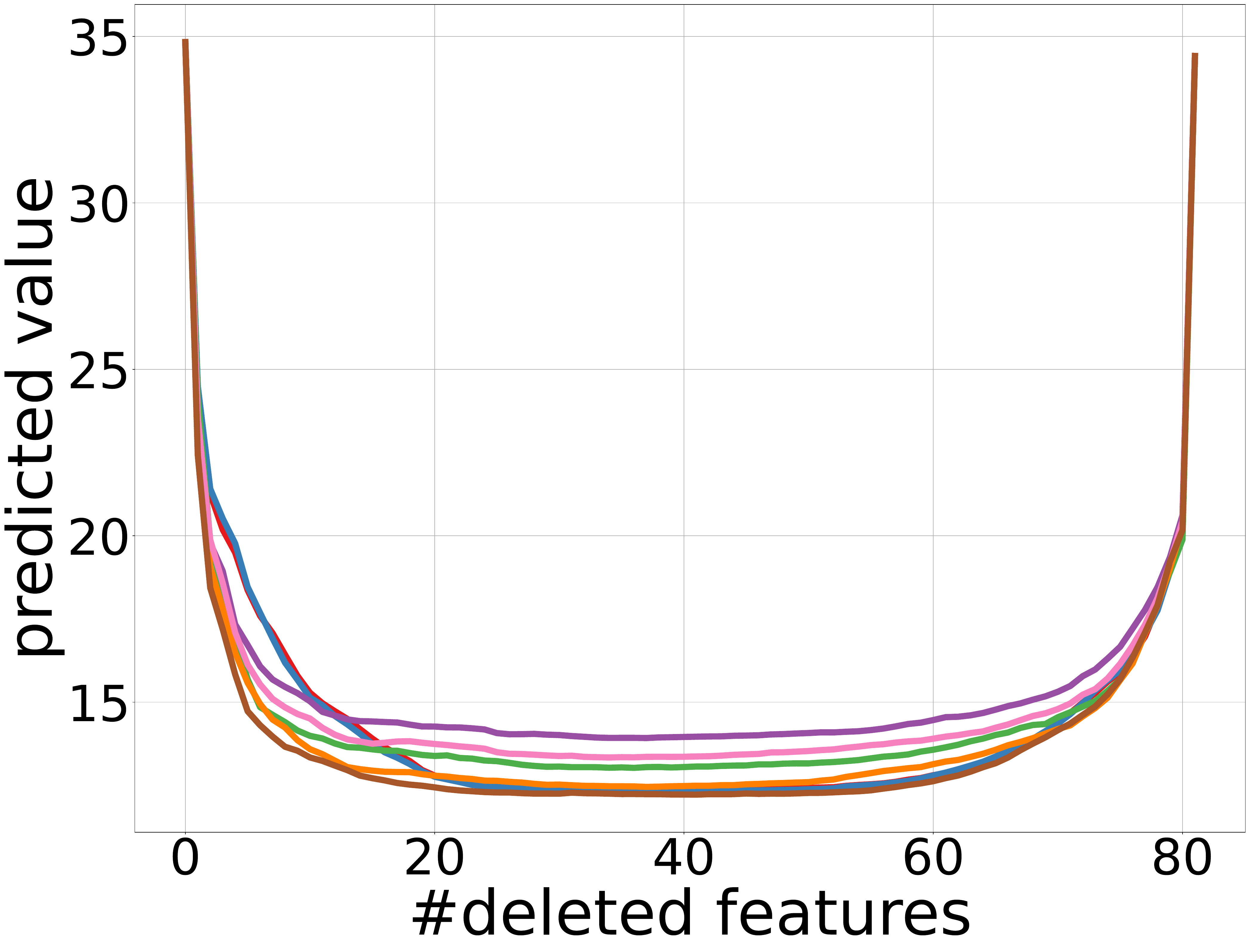} &
			\includegraphics[width=0.3\linewidth]{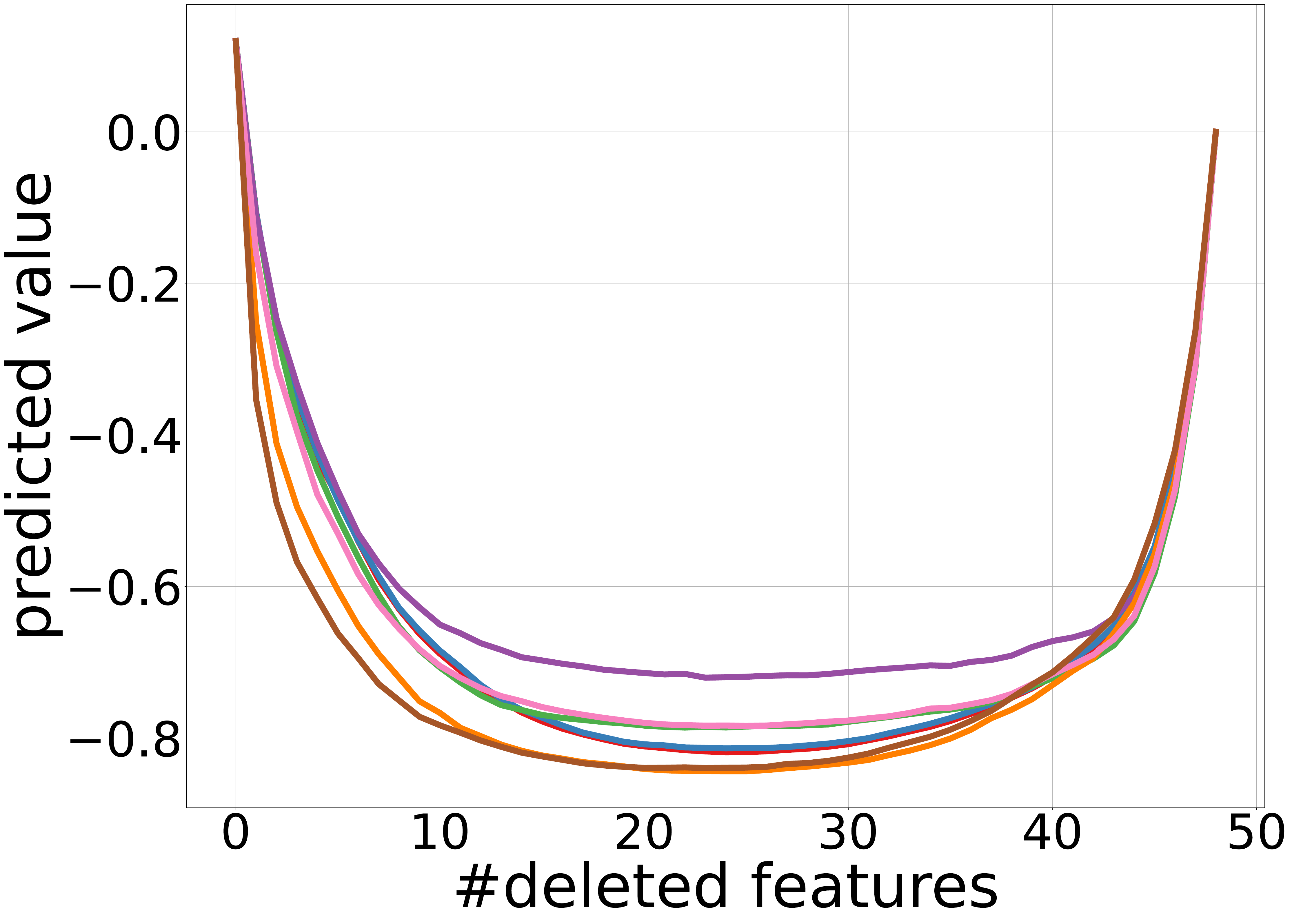} \\
			\phantom{aaaaa}BT ($D=50$) & \phantom{aaaa}superconduct ($D=10$) & \phantom{aaaa}wave\_energy ($D=30$)
		\end{tabular}
		\caption{The trained models are \textbf{decision trees}. The first row shows the insertion curves, while the second presents the deletion curves. A higher insertion curve or a lower deletion curve indicates better performance. For our TreeGrad-Ranker, we set $ T=100 $ and $ \epsilon=5 $.
		Beta-Insertion selects the candidate Beta Shapley value that maximizes the $\mathrm{Ins}$ metric for each $f_{\mathbf{x}}$, whereas Beta-Deletion minimizes the $\mathrm{Del}$ metric. Beta-Joint selects the candidate that maximizes the joint metric $\mathrm{Ins} - \mathrm{Del}$. The datasets used are regression tasks.}
		\label{fig:regression}
	\end{figure*}

	\begin{figure*}[t]
		\centering
		\begin{tabular}{ccc}
			\multicolumn{3}{c}{\includegraphics[width=0.9\linewidth]{legend_comparison.pdf}} \\
			\includegraphics[width=0.3\linewidth]{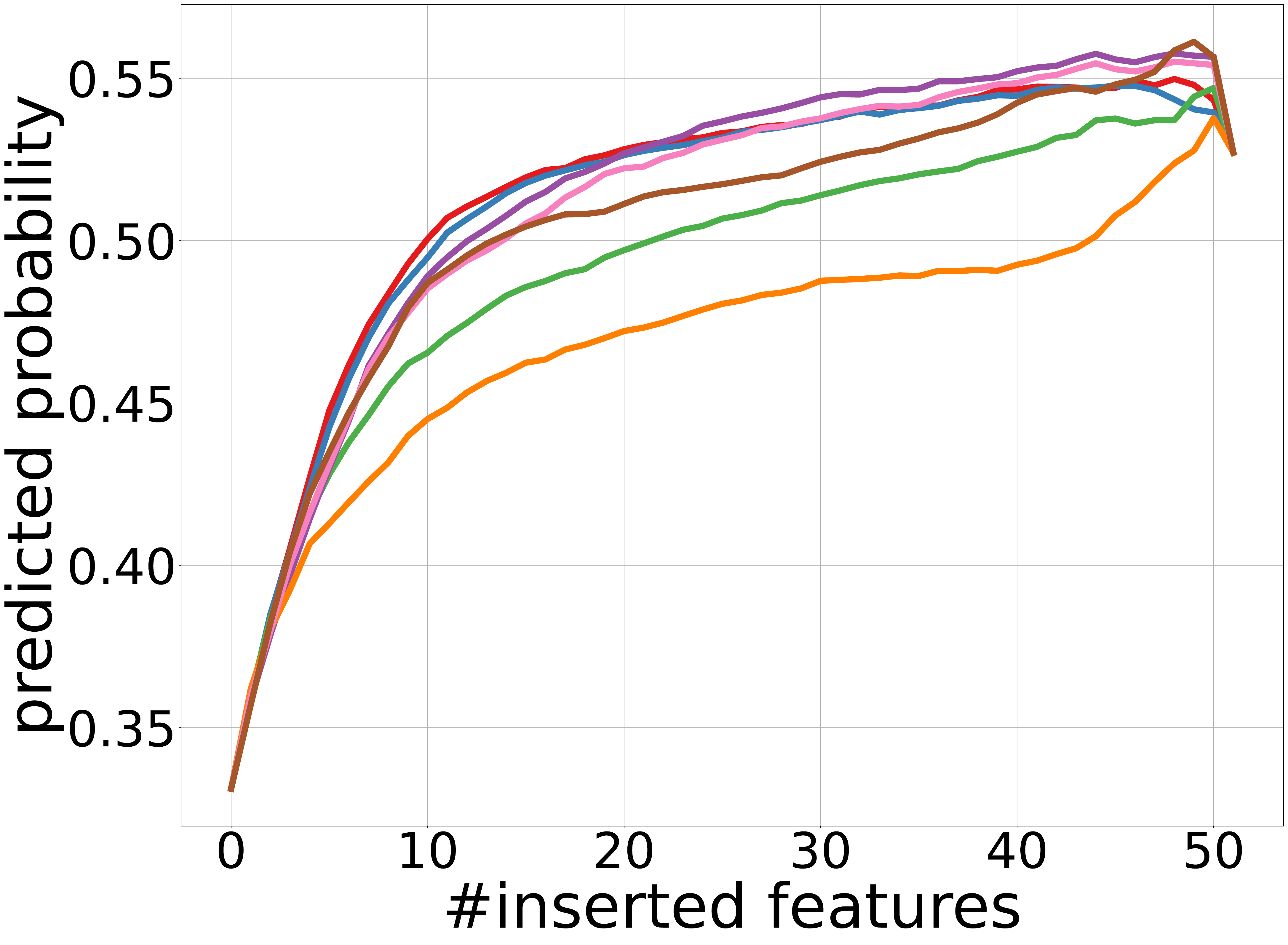} & \includegraphics[width=0.3\linewidth]{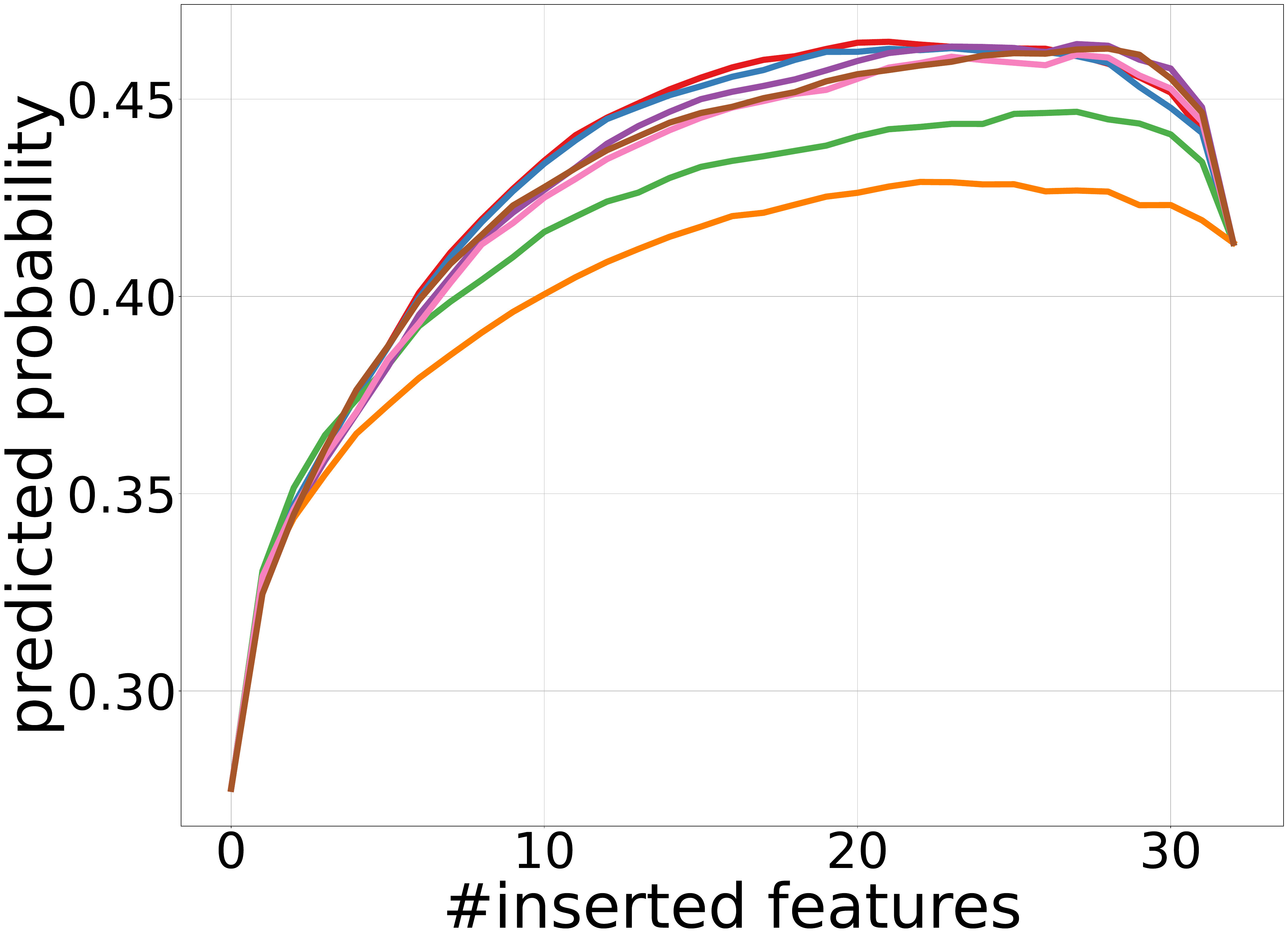} &
			\includegraphics[width=0.3\linewidth]{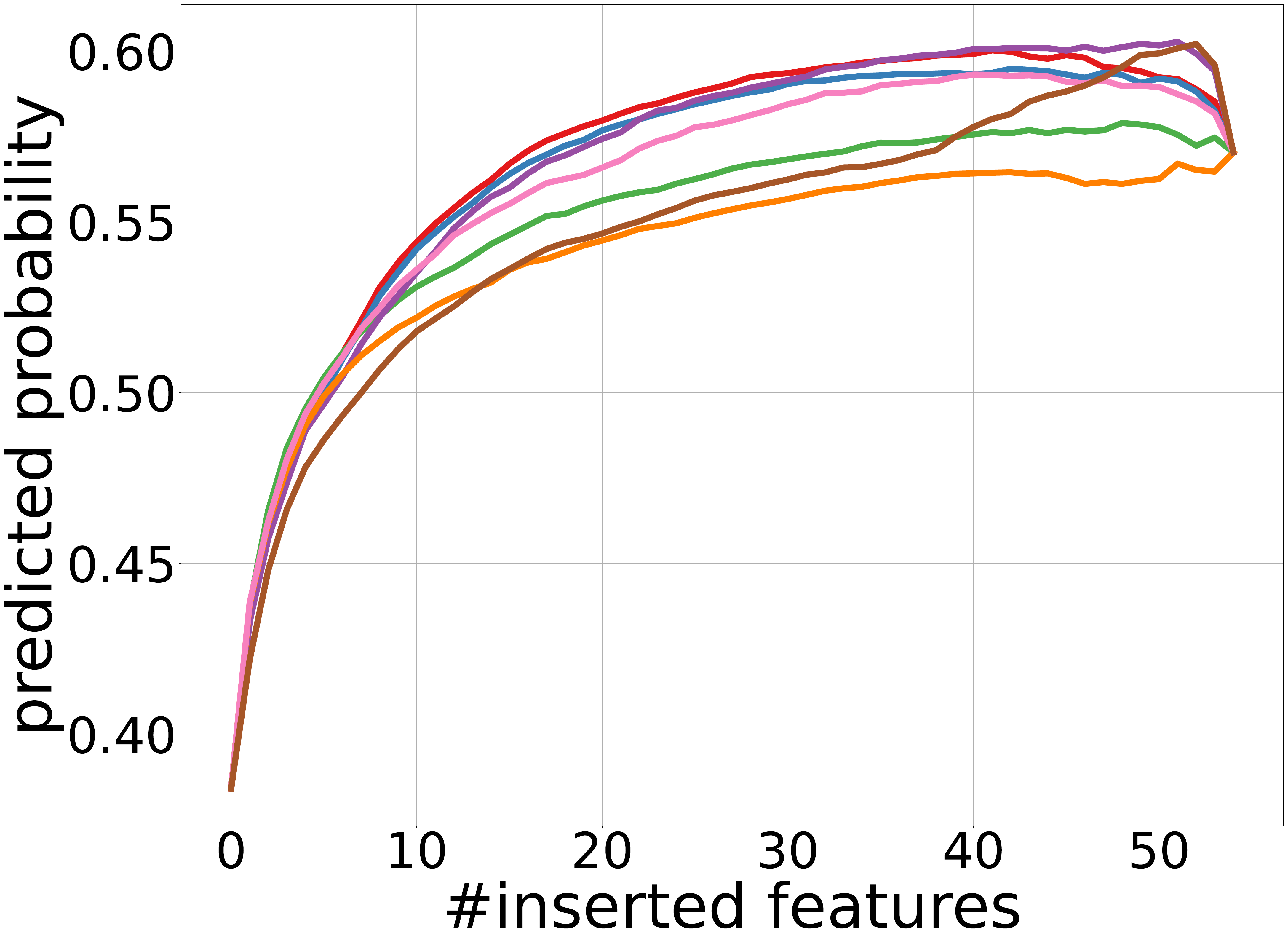} \\
			\includegraphics[width=0.3\linewidth]{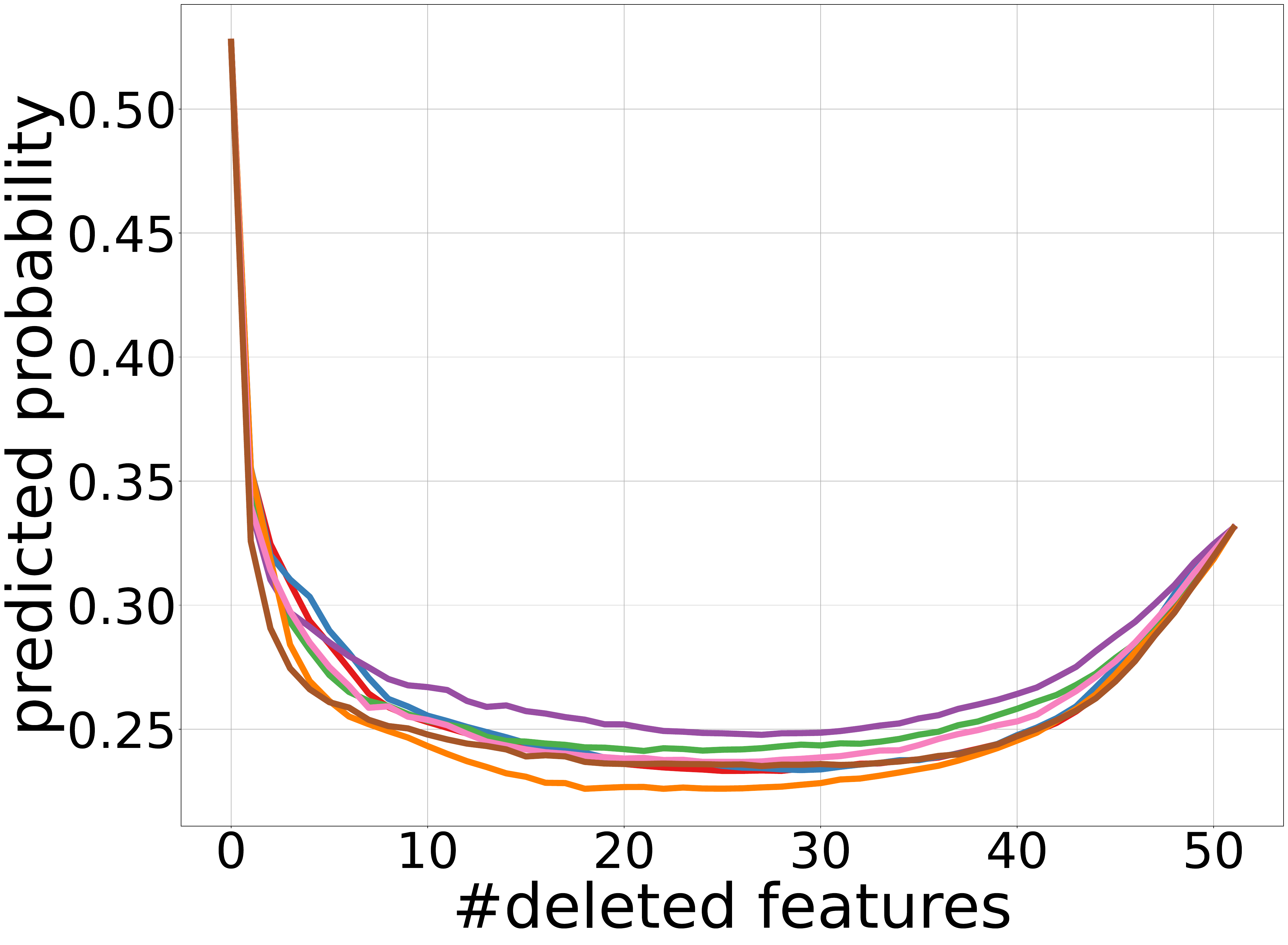} & \includegraphics[width=0.3\linewidth]{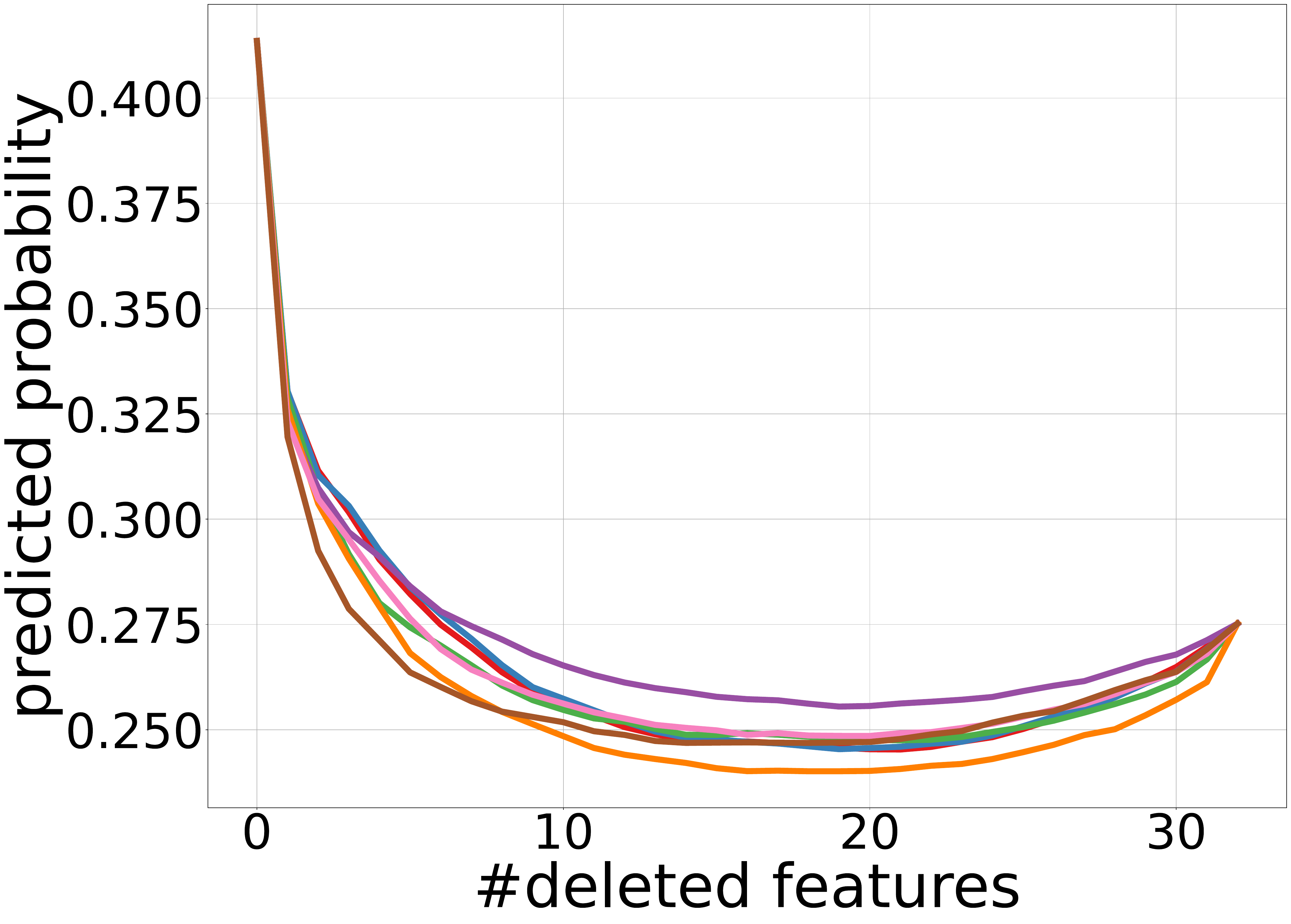} &
			\includegraphics[width=0.3\linewidth]{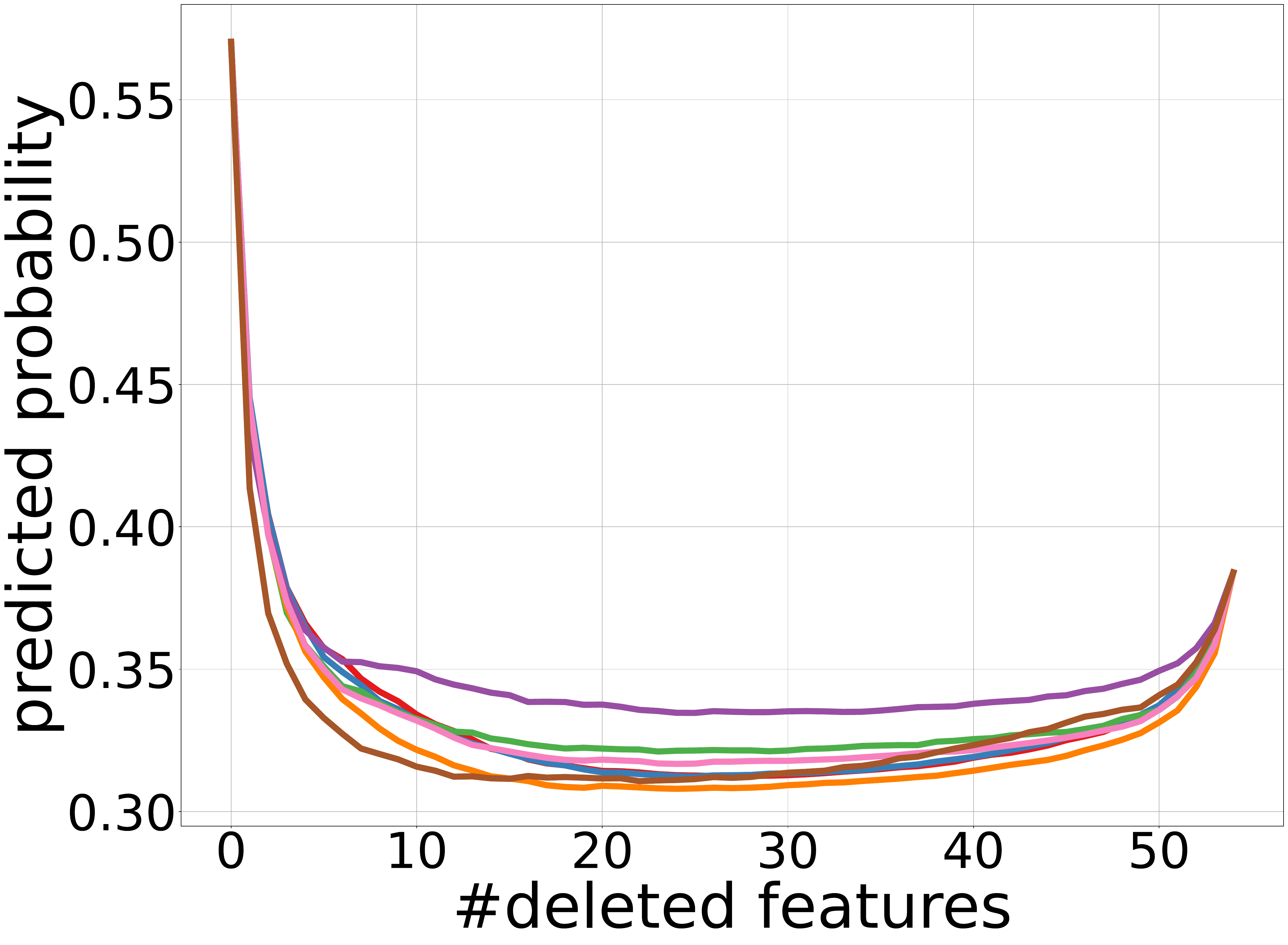} \\
			\phantom{aaaa}FOTP ($D=20$) & \phantom{aaaa}GPSP ($D=10$) & \phantom{aaaa}jannis ($D=40$)
		\end{tabular}
		\caption{All trained models are \textbf{gradient boosted trees} consisting of 5 individual decision trees. The first row shows the insertion curves, while the second presents the deletion curves. A higher insertion curve or a lower deletion curve indicates better performance. For our TreeGrad-Ranker, we set $ T=100 $ and $ \epsilon=5 $.
		Beta-Insertion selects the candidate Beta Shapley value that maximizes the $\mathrm{Ins}$ metric for each $f_{\mathbf{x}}$, whereas Beta-Deletion minimizes the $\mathrm{Del}$ metric. Beta-Joint selects the candidate that maximizes the joint metric $\mathrm{Ins} - \mathrm{Del}$.
		The output of each $f(\mathbf{x})$ is set to be the logit of \textbf{its predicted class}. }
		\label{fig:cla-pre-first-adam}
	\end{figure*}

	\begin{figure*}[t]
		\centering
		\begin{tabular}{ccc}
			\multicolumn{3}{c}{\includegraphics[width=0.9\linewidth]{legend_comparison.pdf}} \\
			\includegraphics[width=0.3\linewidth]{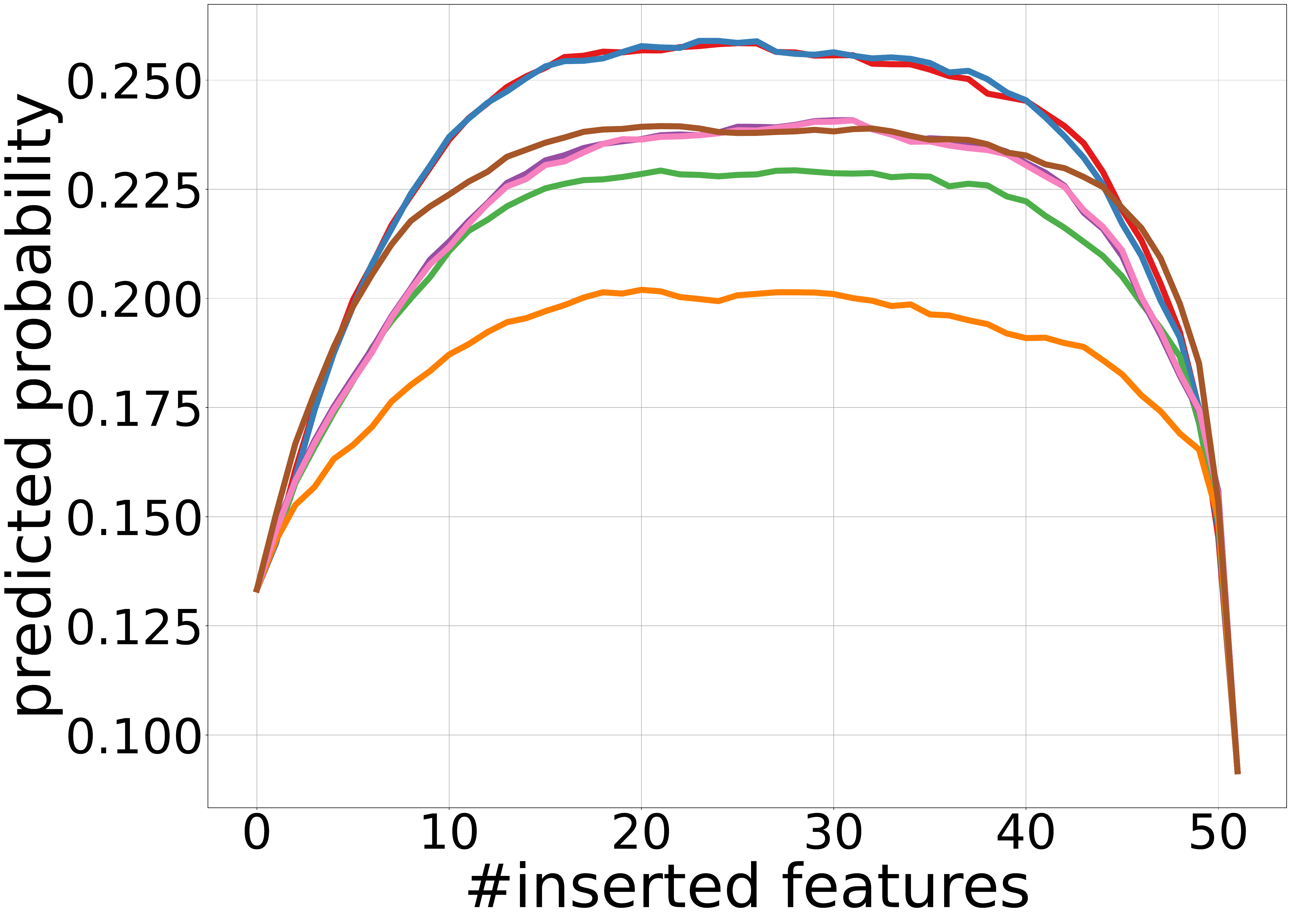} & \includegraphics[width=0.3\linewidth]{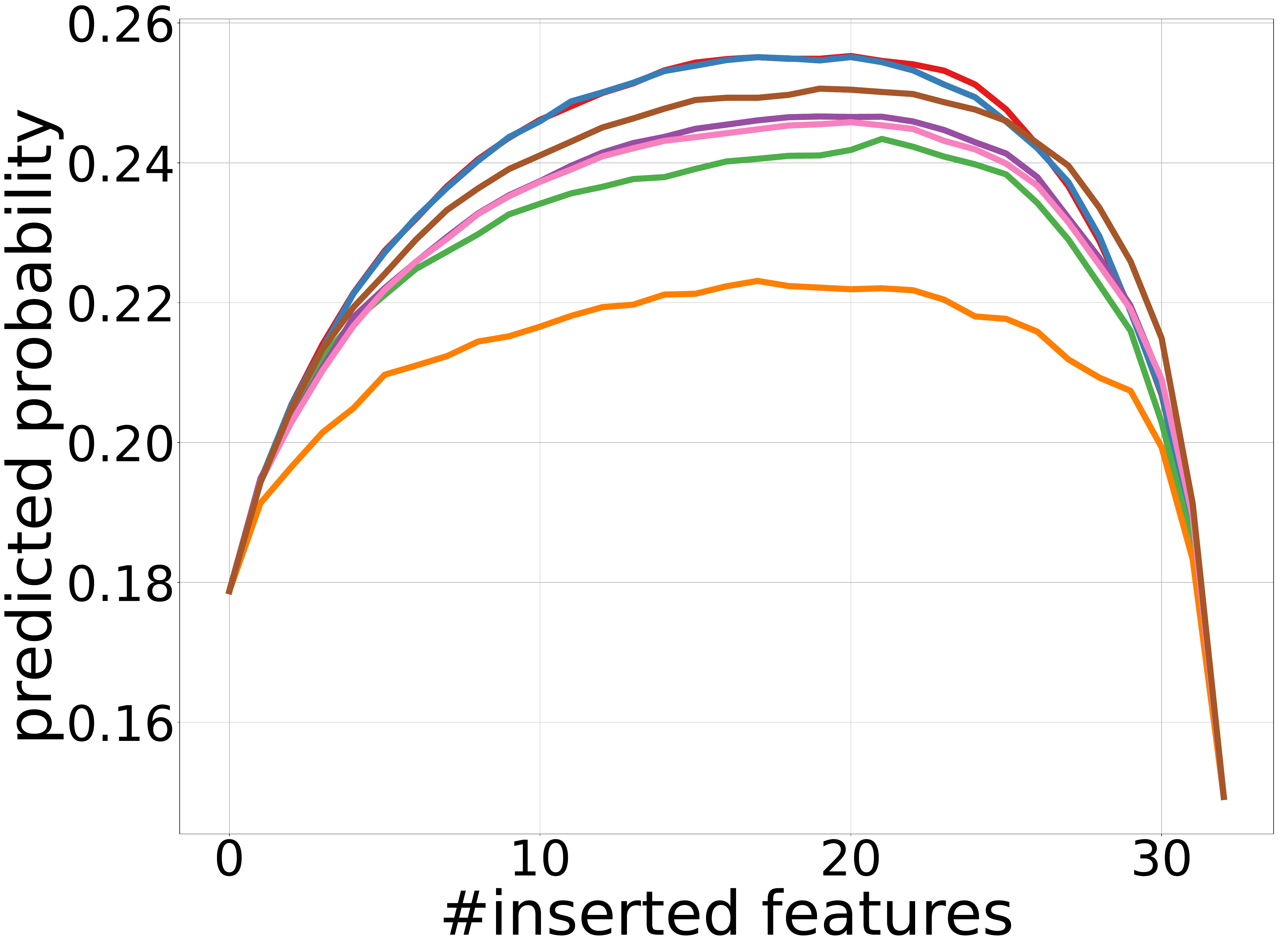} &
			\includegraphics[width=0.3\linewidth]{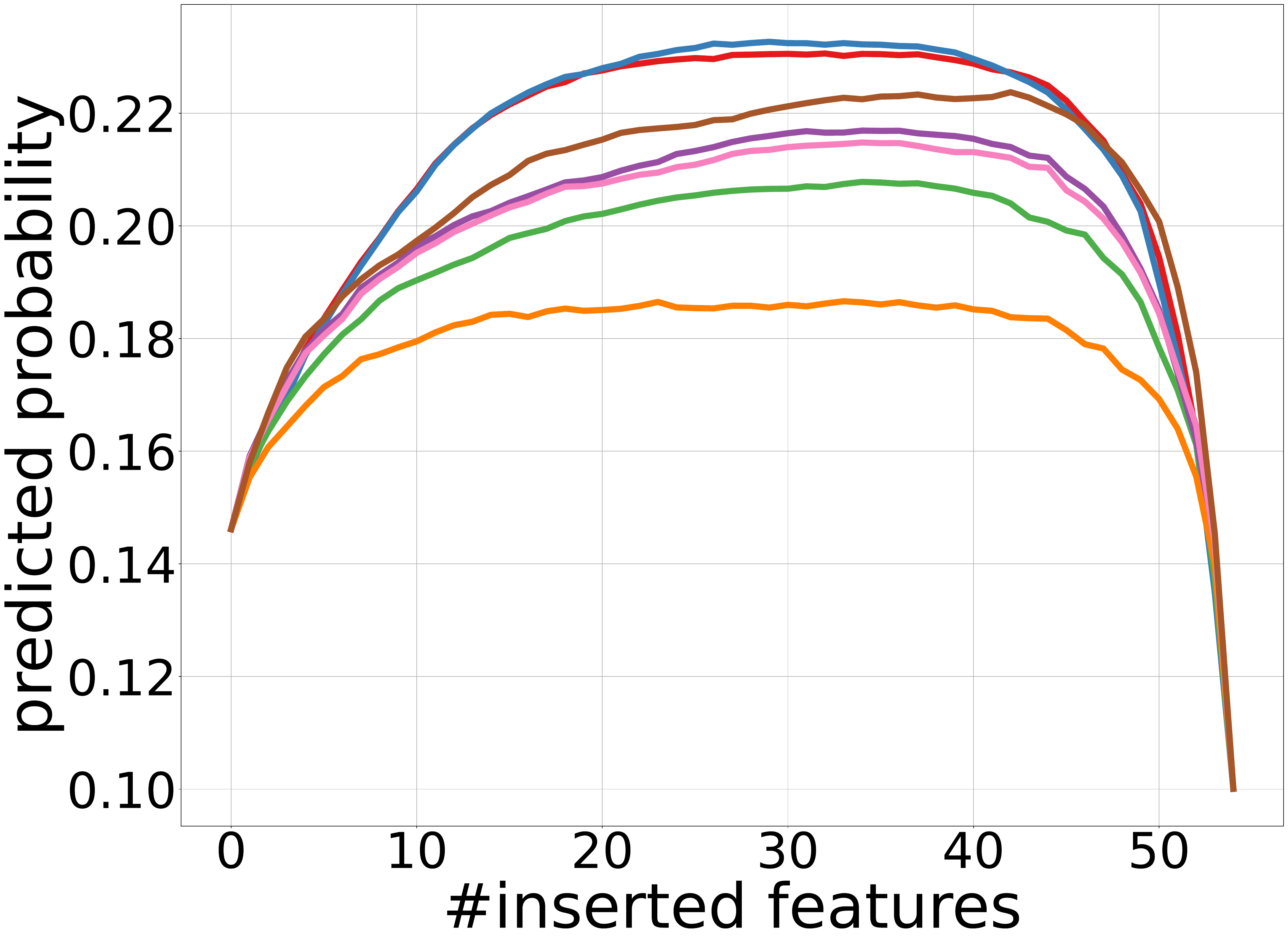} \\
			\includegraphics[width=0.3\linewidth]{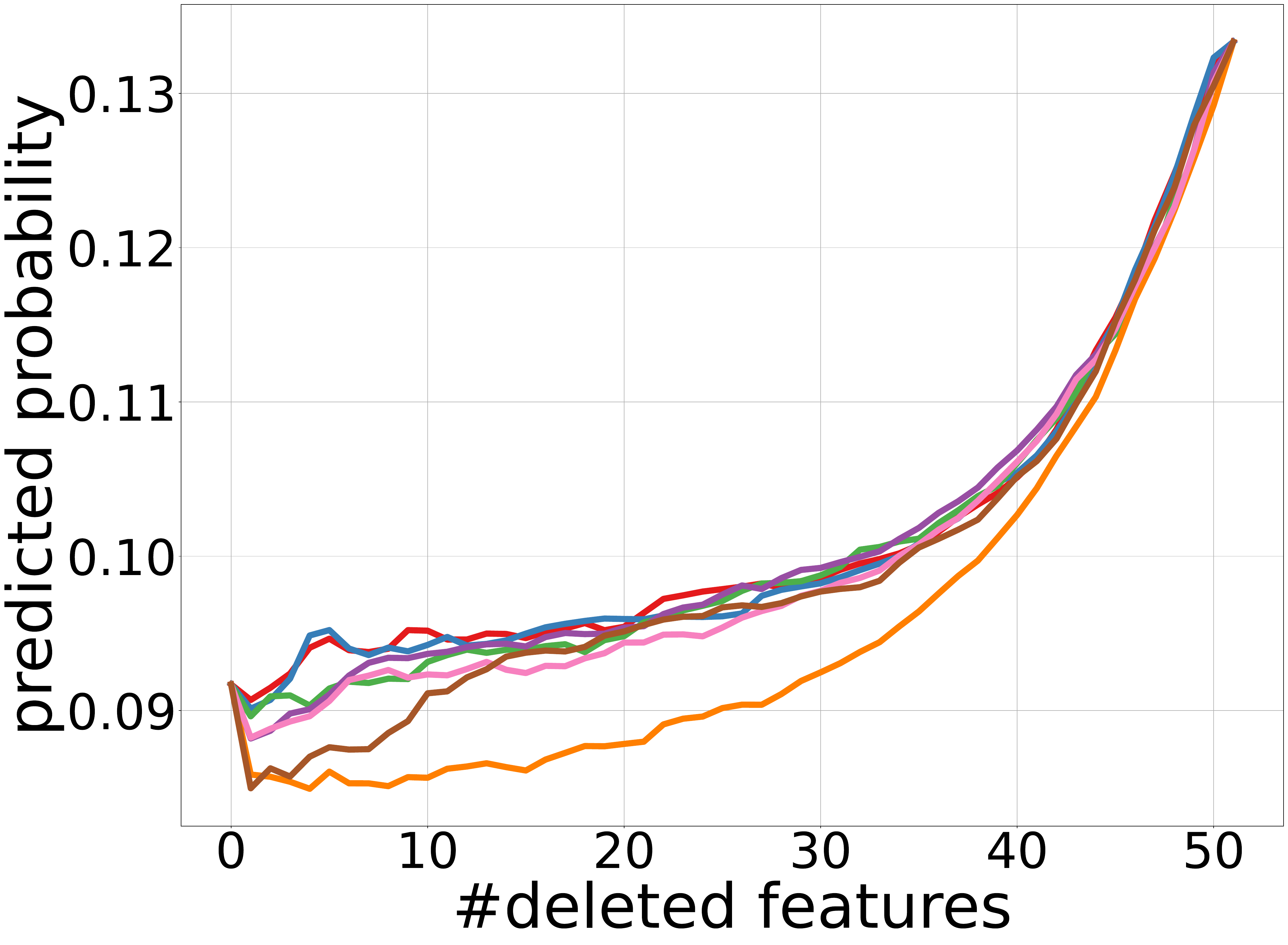} & \includegraphics[width=0.3\linewidth]{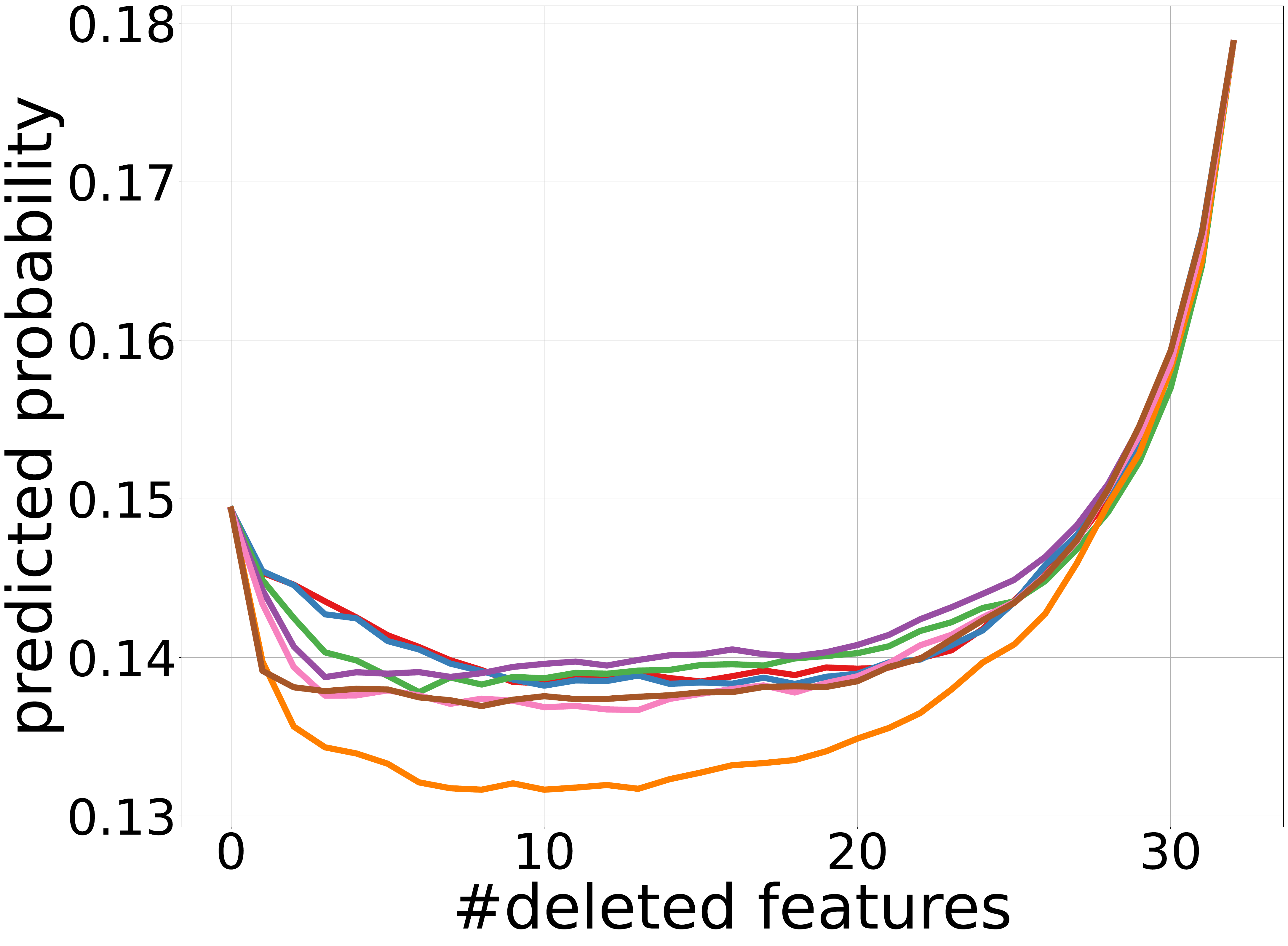} &
			\includegraphics[width=0.3\linewidth]{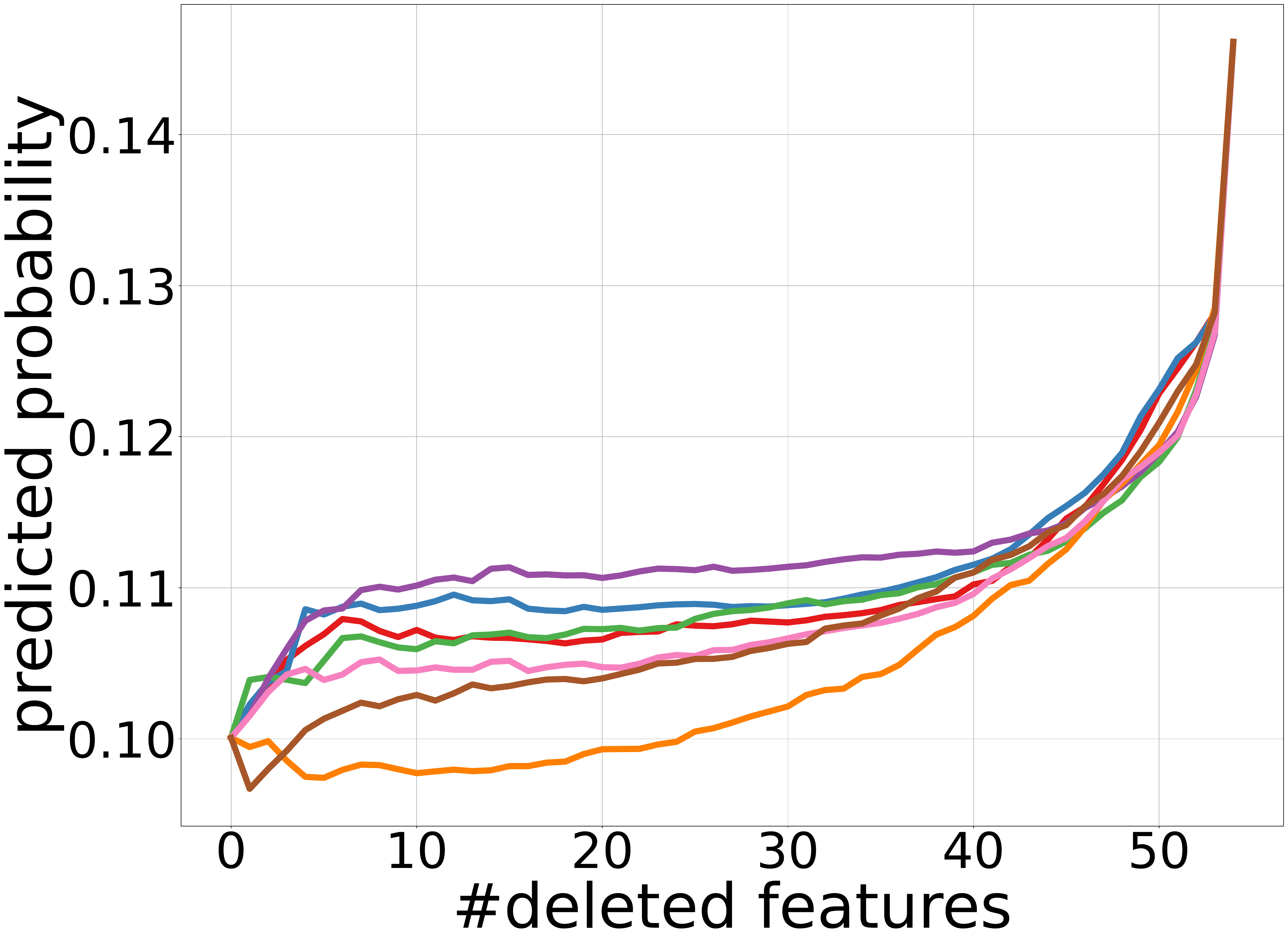} \\
			\phantom{aaaaa}FOTP ($D=20$) & \phantom{aaaaa}GPSP ($D=10$) & \phantom{aaaaa}jannis ($D=40$)
		\end{tabular}
		\caption{All trained models are \textbf{gradient boosted trees} consisting of 5 individual decision trees. The first row shows the insertion curves, while the second presents the deletion curves. A higher insertion curve or a lower deletion curve indicates better performance. For our TreeGrad-Ranker, we set $ T=100 $ and $ \epsilon=5 $.
		Beta-Insertion selects the candidate Beta Shapley value that maximizes the $\mathrm{Ins}$ metric for each $f_{\mathbf{x}}$, whereas Beta-Deletion minimizes the $\mathrm{Del}$ metric. Beta-Joint selects the candidate that maximizes the joint metric $\mathrm{Ins} - \mathrm{Del}$.
		The output of each $f(\mathbf{x})$ is set to be the logit of \textbf{a randomly sampled non-predicted class}.}
		\label{fig:cla-other-first-adam}
	\end{figure*}
	
	\begin{figure*}[t]
		\centering
		\begin{tabular}{ccc}
			\multicolumn{3}{c}{\includegraphics[width=0.9\linewidth]{legend_comparison.pdf}} \\
			\includegraphics[width=0.3\linewidth]{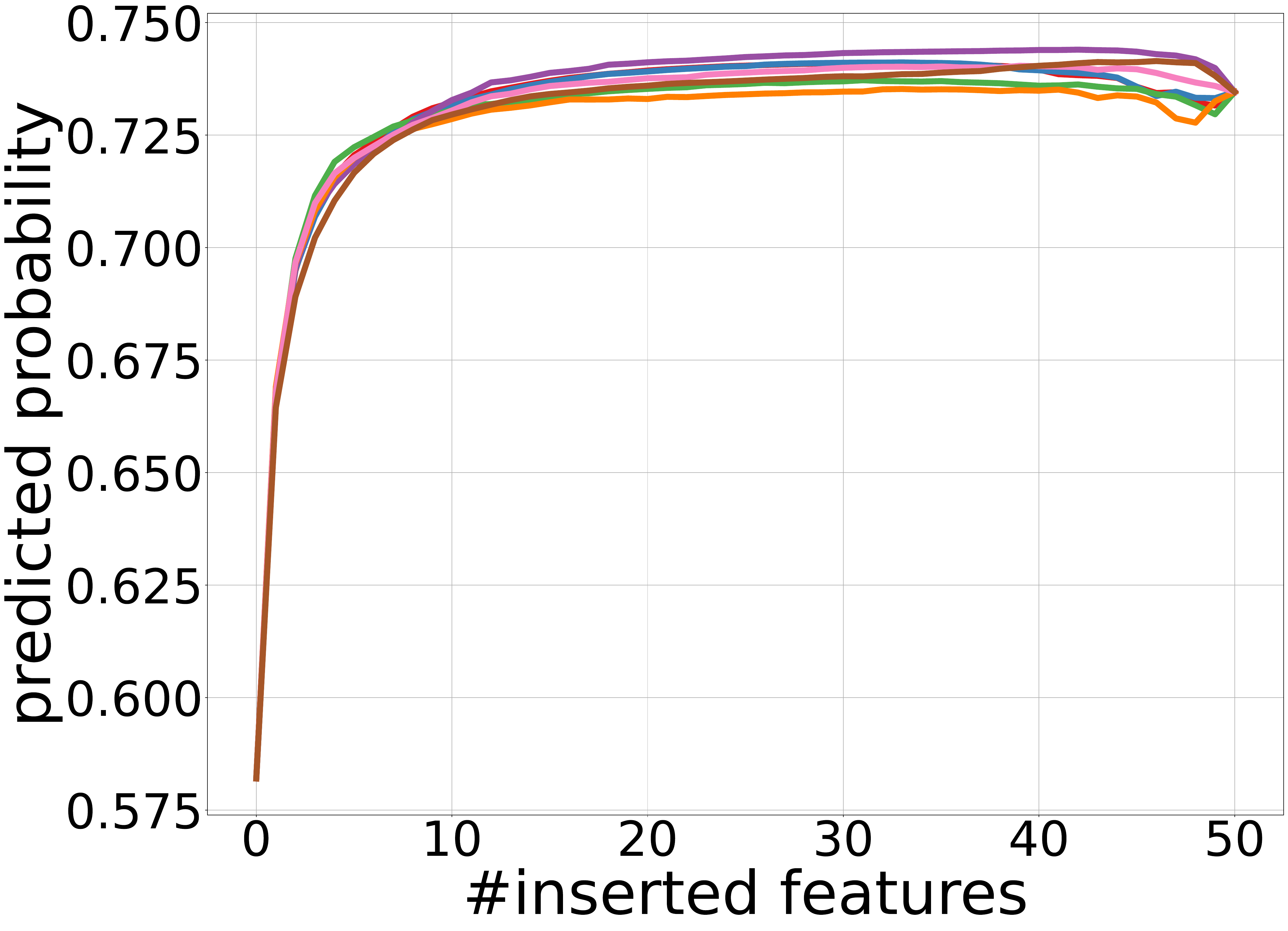} & \includegraphics[width=0.3\linewidth]{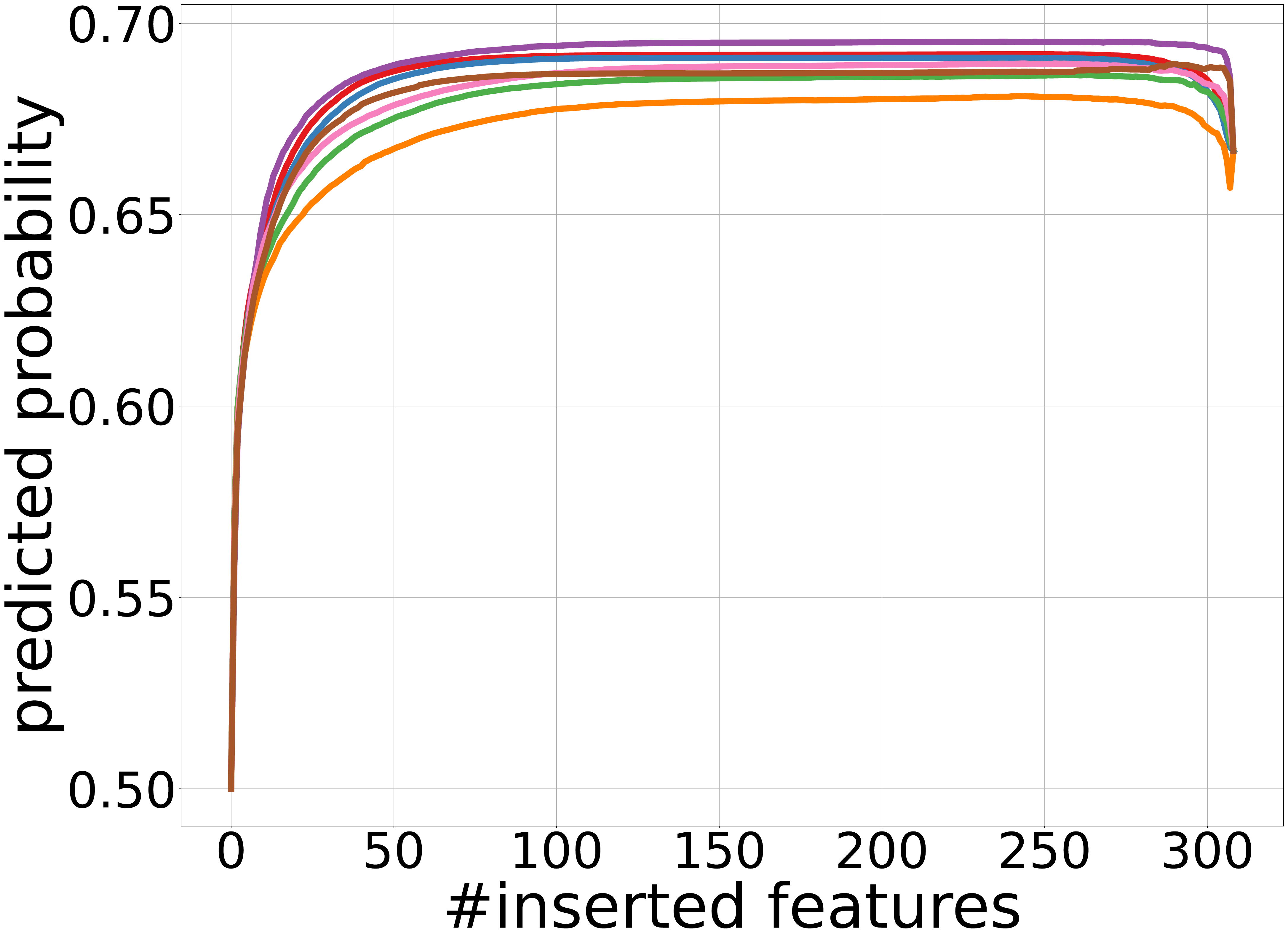} &
			\includegraphics[width=0.3\linewidth]{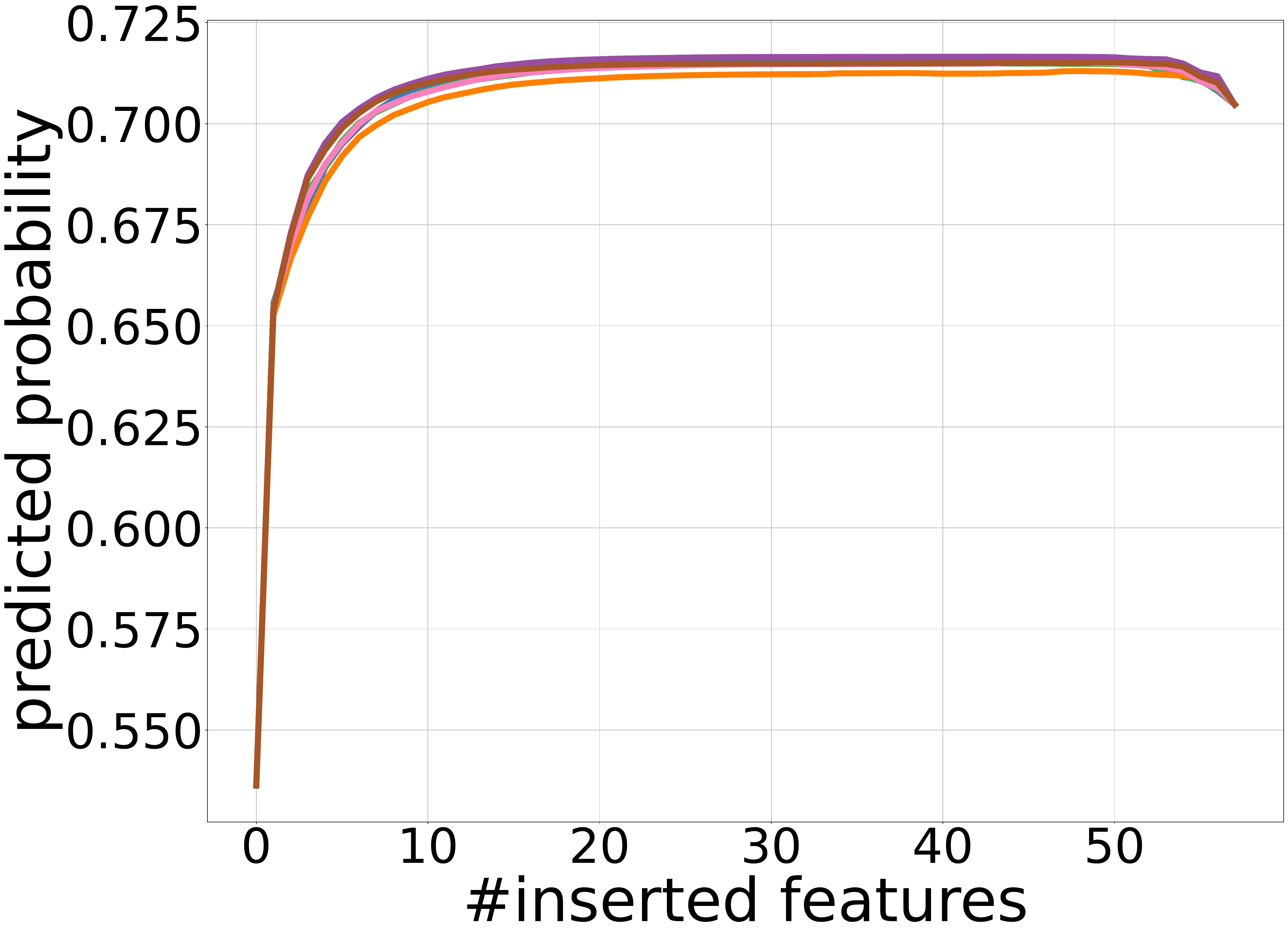} \\
			\includegraphics[width=0.3\linewidth]{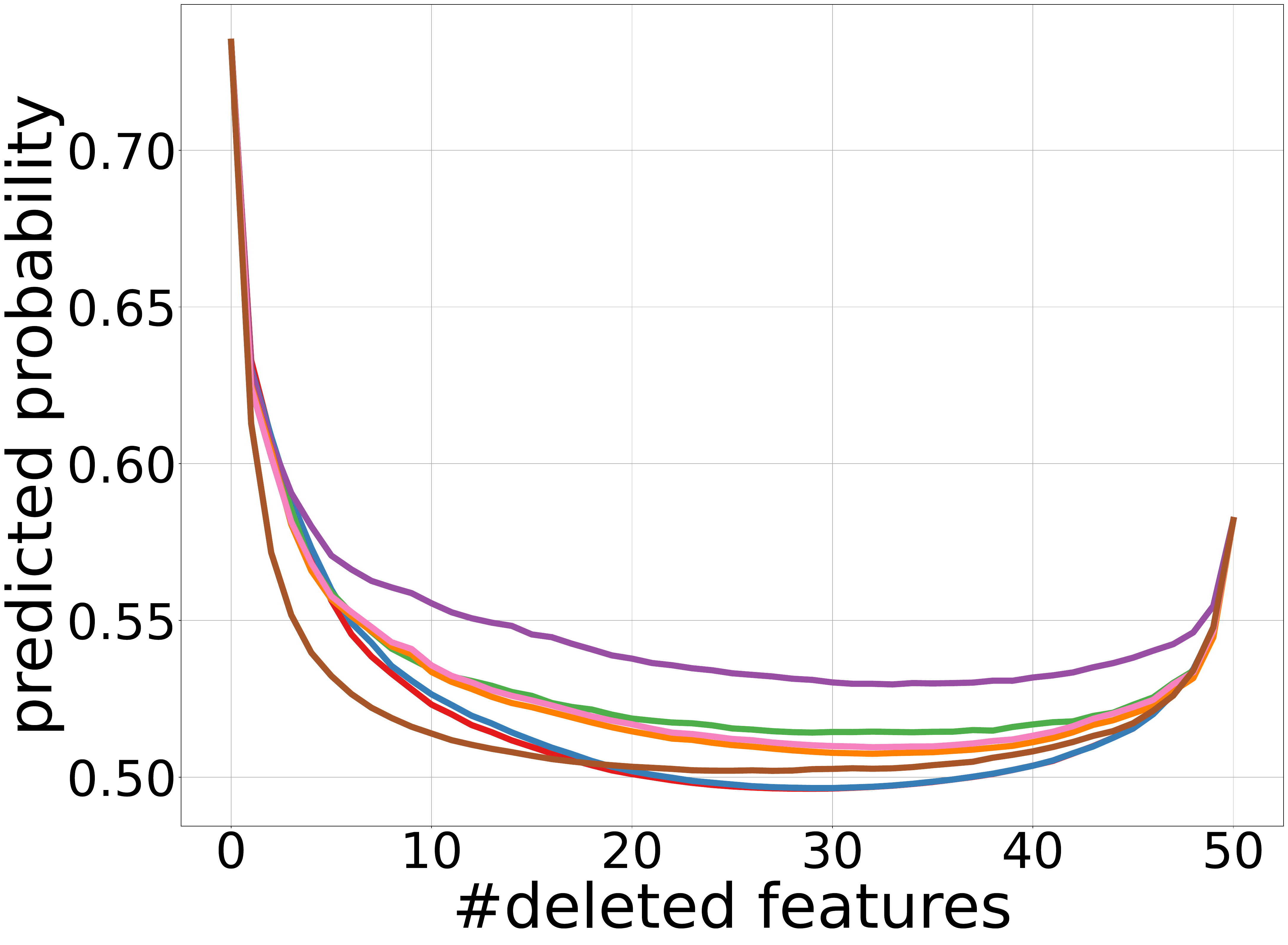} & \includegraphics[width=0.3\linewidth]{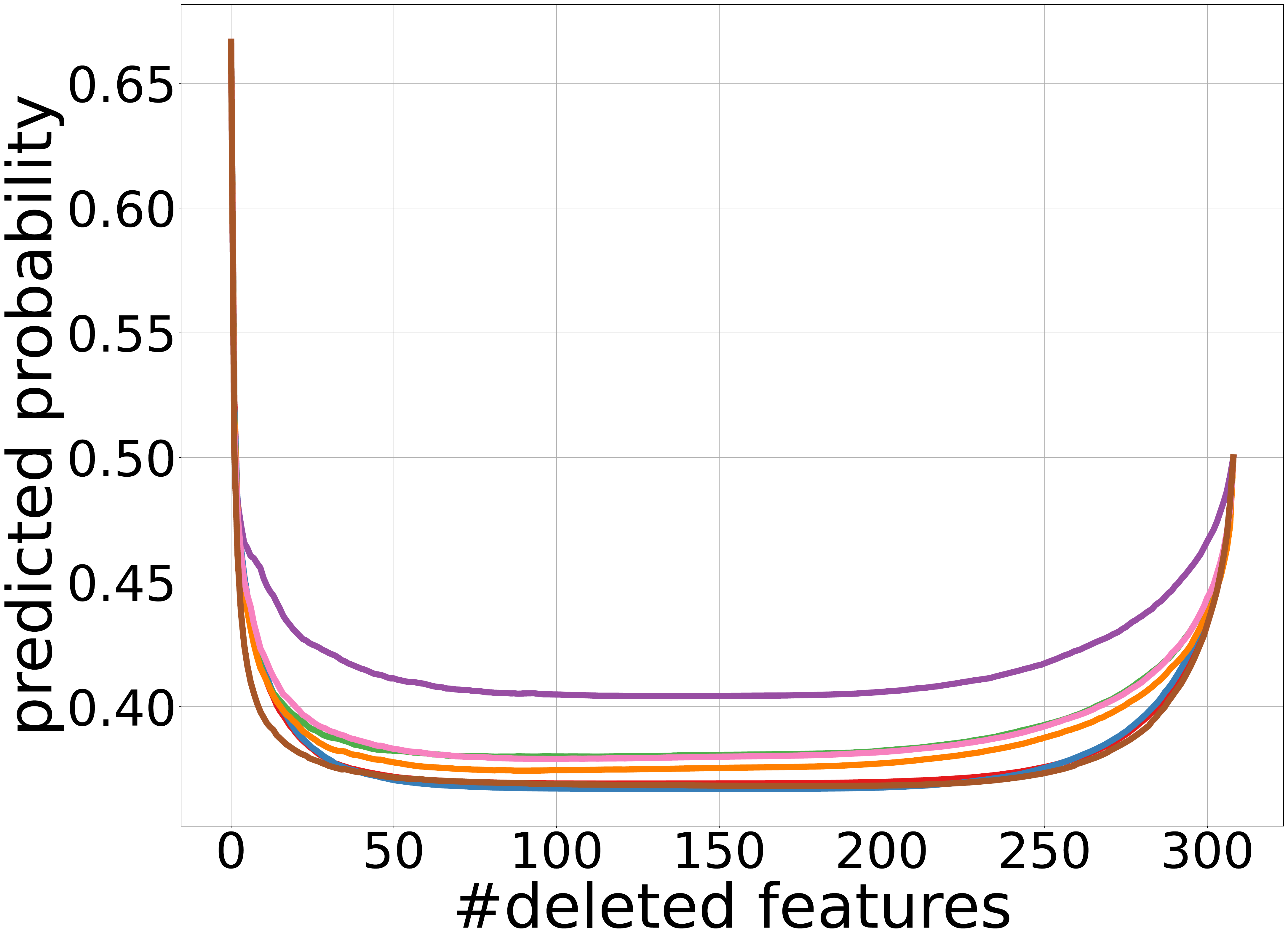} &
			\includegraphics[width=0.3\linewidth]{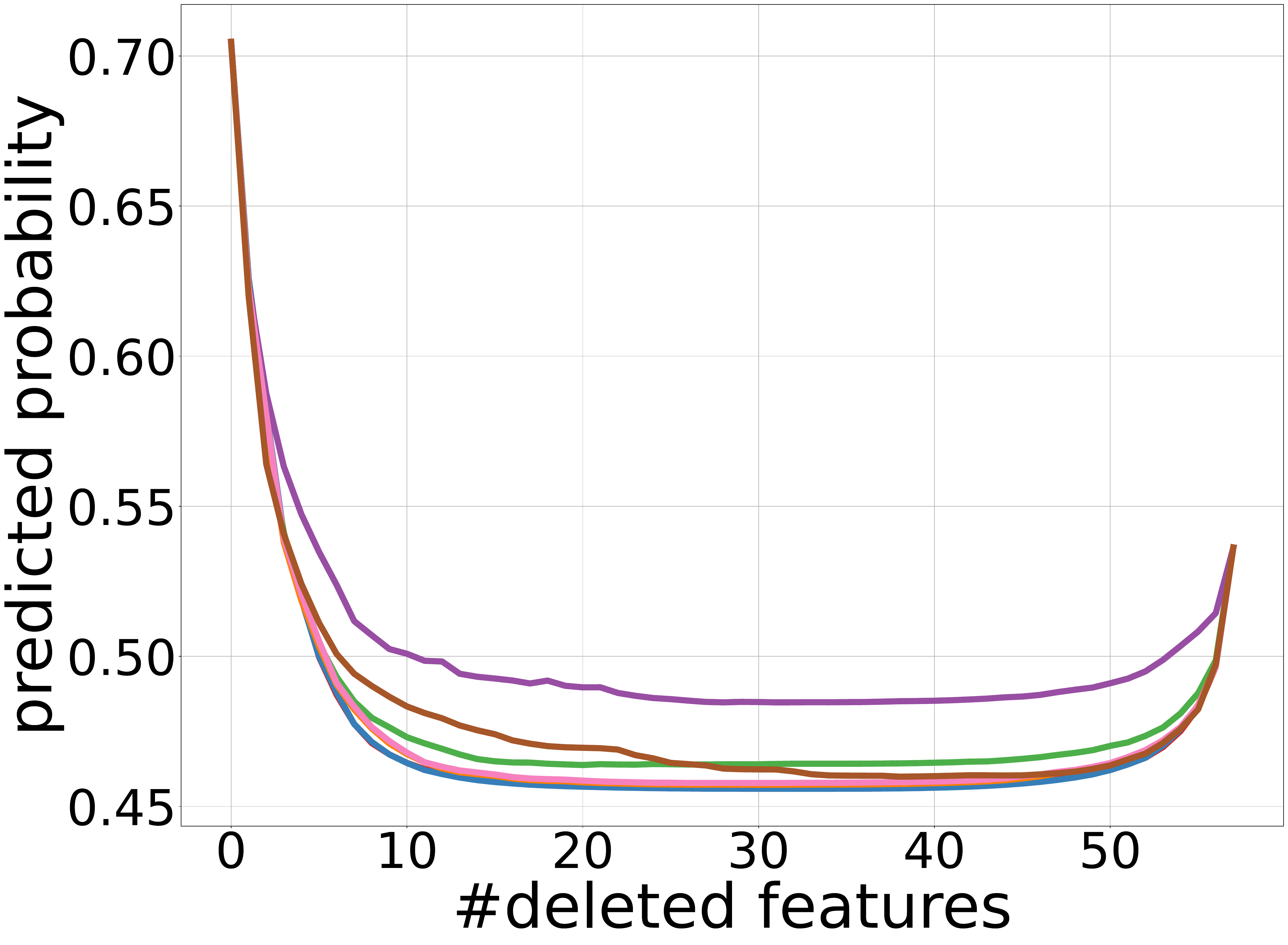} \\
			\phantom{a}MinibooNE ($D=20$) & \phantom{aaaa}philippine ($D=15$) & \phantom{aaaa}spambase ($D=15$)
		\end{tabular}
		\caption{All trained models are \textbf{gradient boosted trees} consisting of 5 individual decision trees. The first row shows the insertion curves, while the second presents the deletion curves. A higher insertion curve or a lower deletion curve indicates better performance. For our TreeGrad-Ranker, we set $ T=100 $ and $ \epsilon=5 $.
		Beta-Insertion selects the candidate Beta Shapley value that maximizes the $\mathrm{Ins}$ metric for each $f_{\mathbf{x}}$, whereas Beta-Deletion minimizes the $\mathrm{Del}$ metric. Beta-Joint selects the candidate that maximizes the joint metric $\mathrm{Ins} - \mathrm{Del}$. The output of each $f(\mathbf{x})$ is set to be the logit of \textbf{its predicted class}.}
		\label{fig:cla-pre-second-adam}
	\end{figure*}
	
	\begin{figure*}[t]
		\centering
		\begin{tabular}{ccc}
			\multicolumn{3}{c}{\includegraphics[width=0.9\linewidth]{legend_comparison.pdf}} \\
			\includegraphics[width=0.3\linewidth]{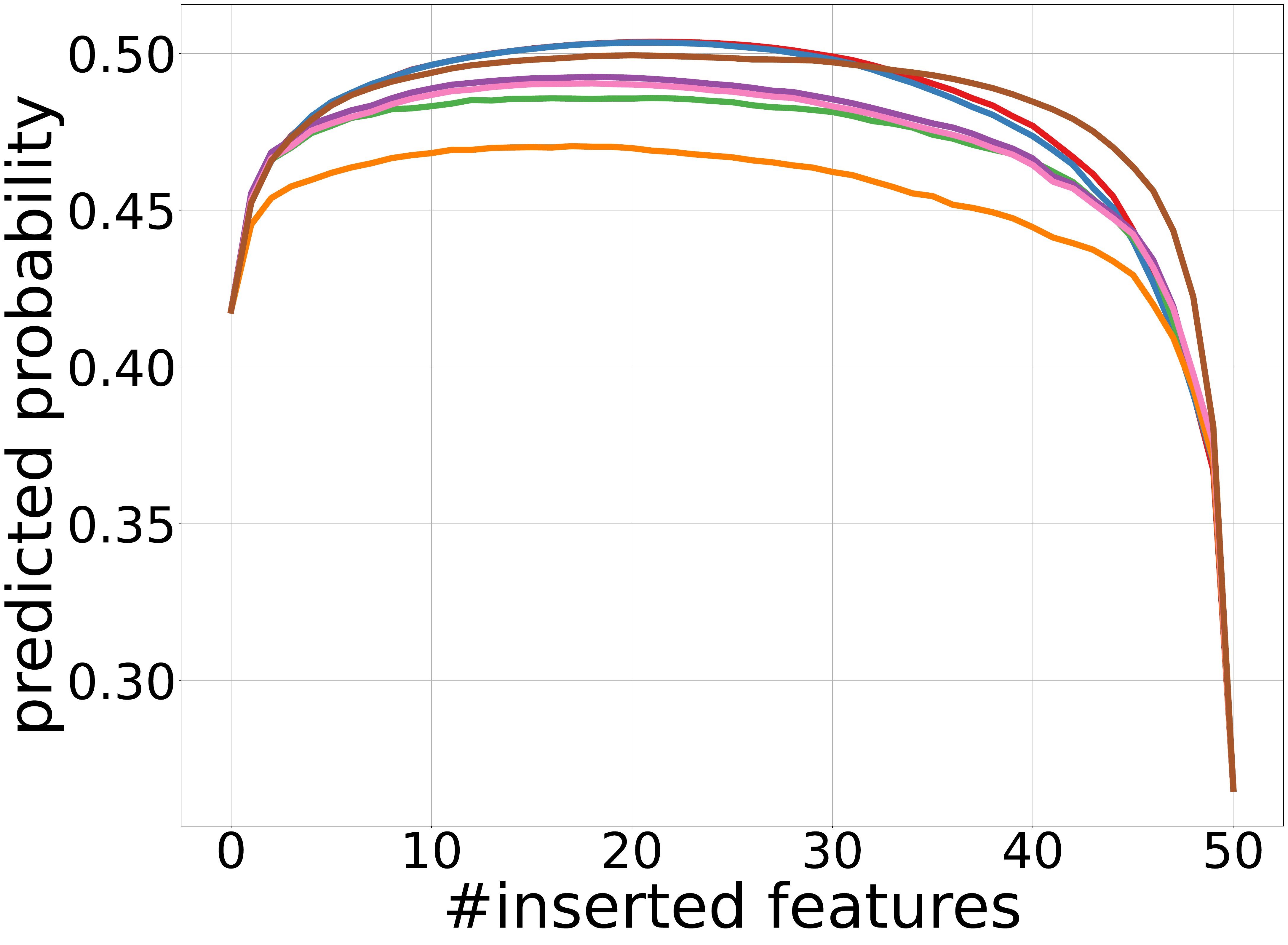} & \includegraphics[width=0.3\linewidth]{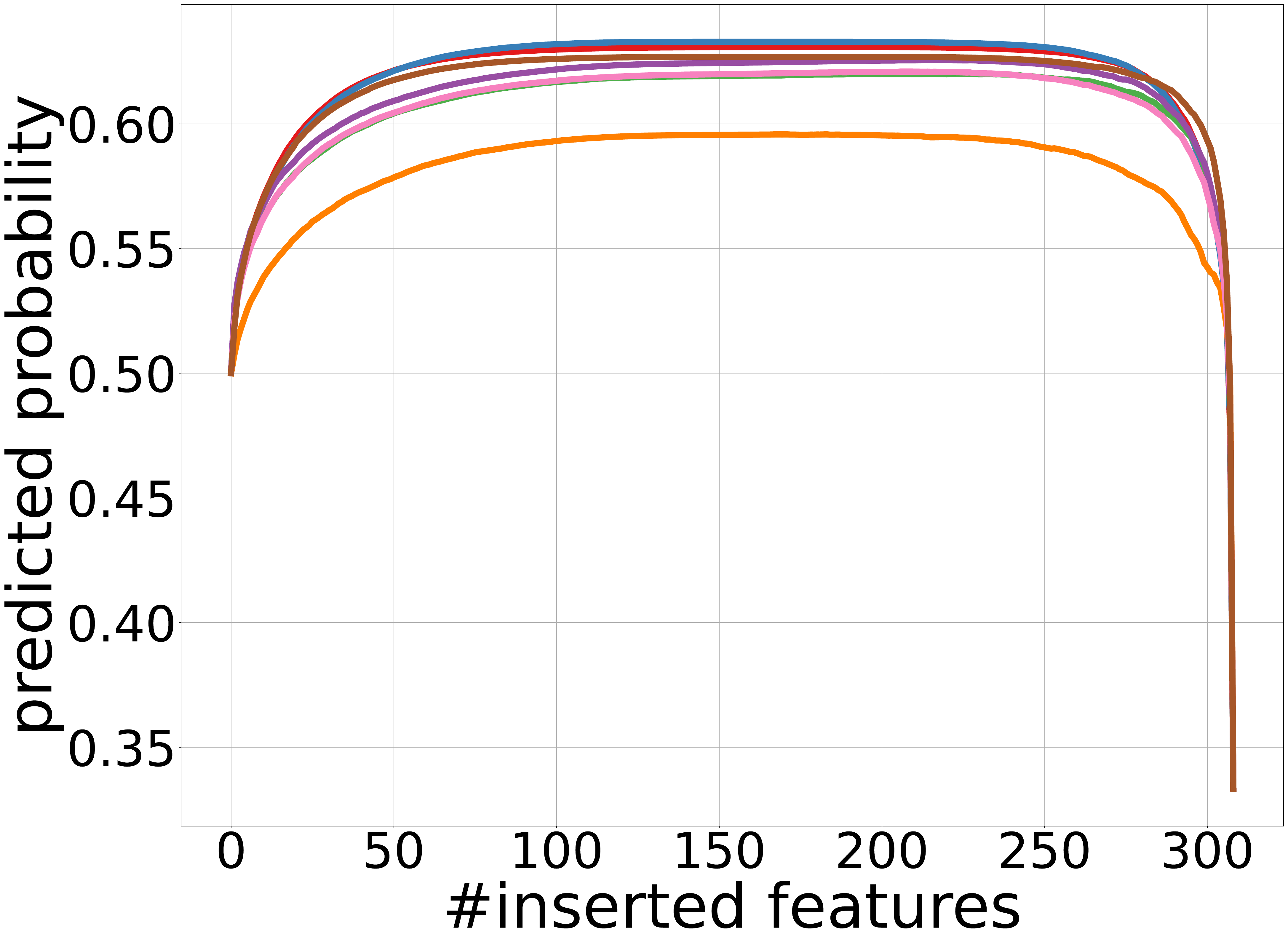} &
			\includegraphics[width=0.3\linewidth]{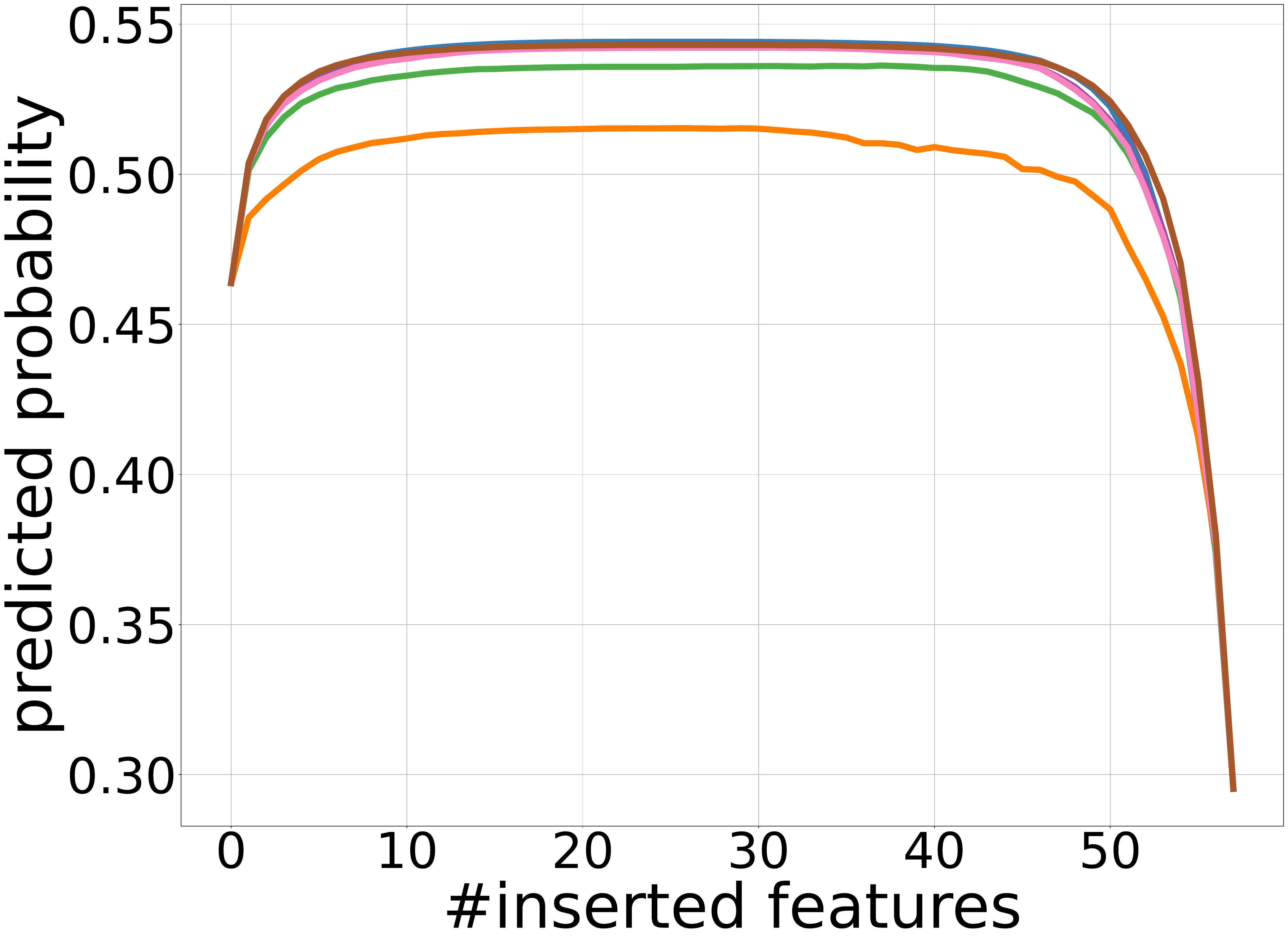} \\
			\includegraphics[width=0.3\linewidth]{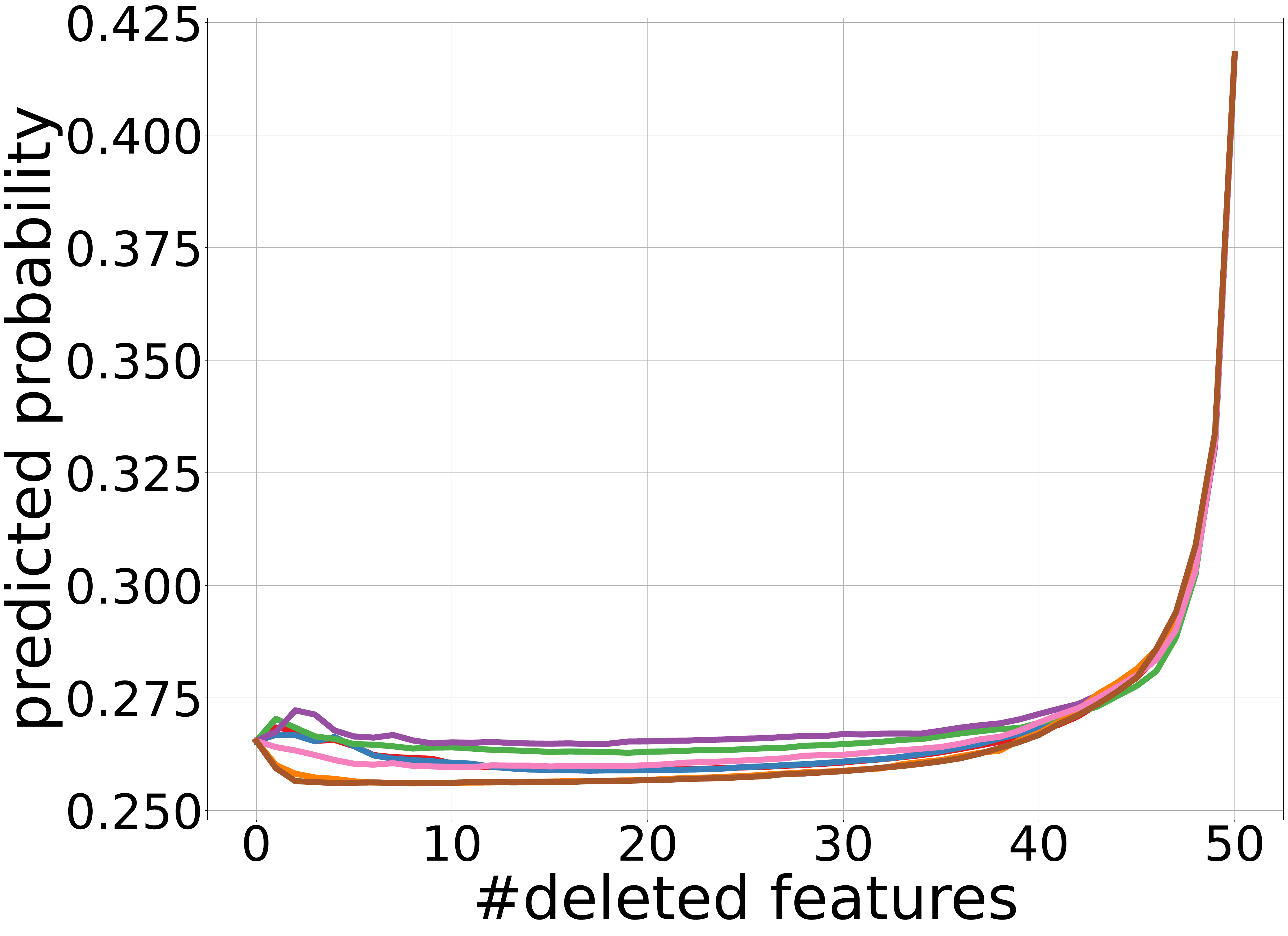} & \includegraphics[width=0.3\linewidth]{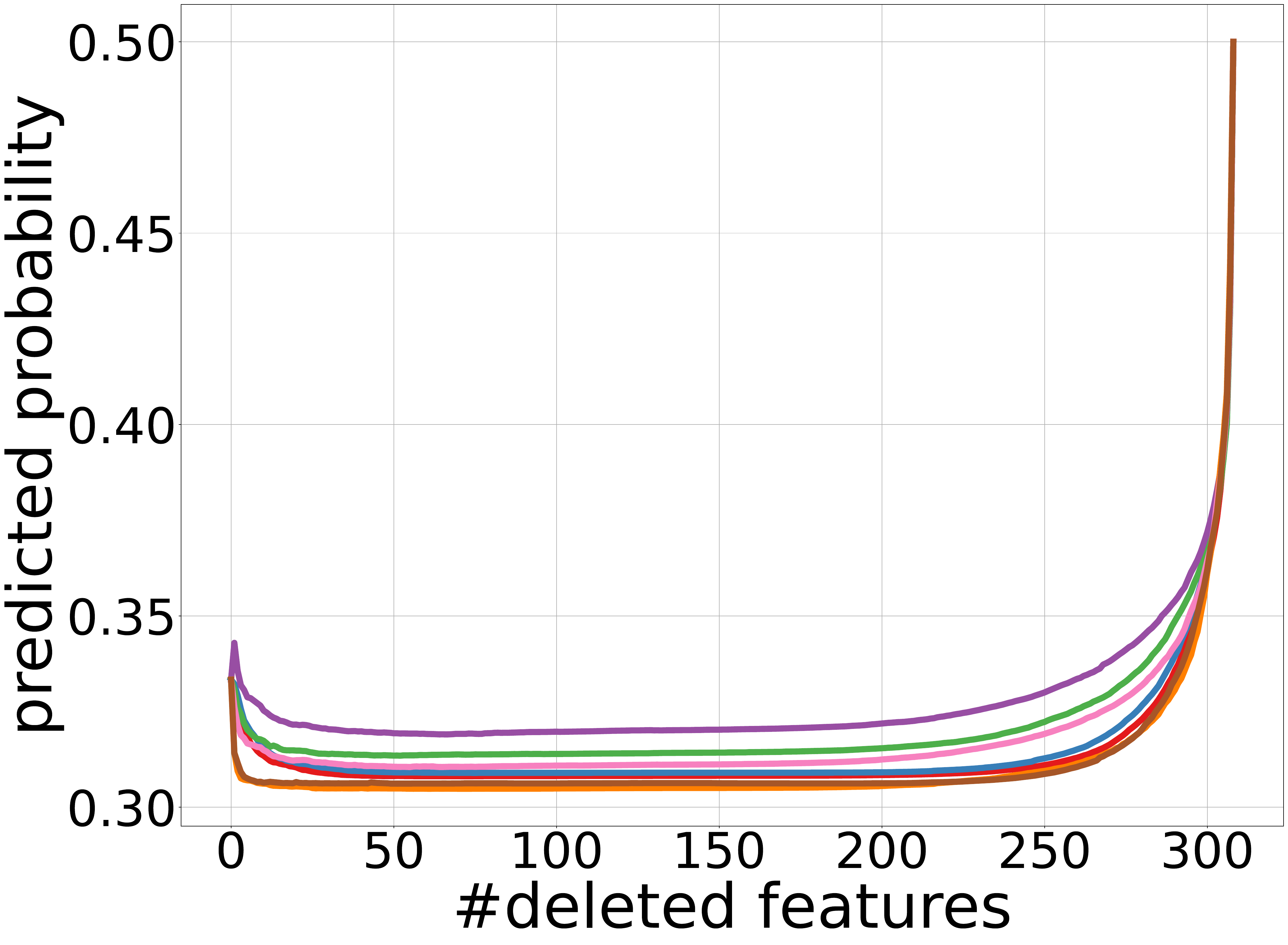} &
			\includegraphics[width=0.3\linewidth]{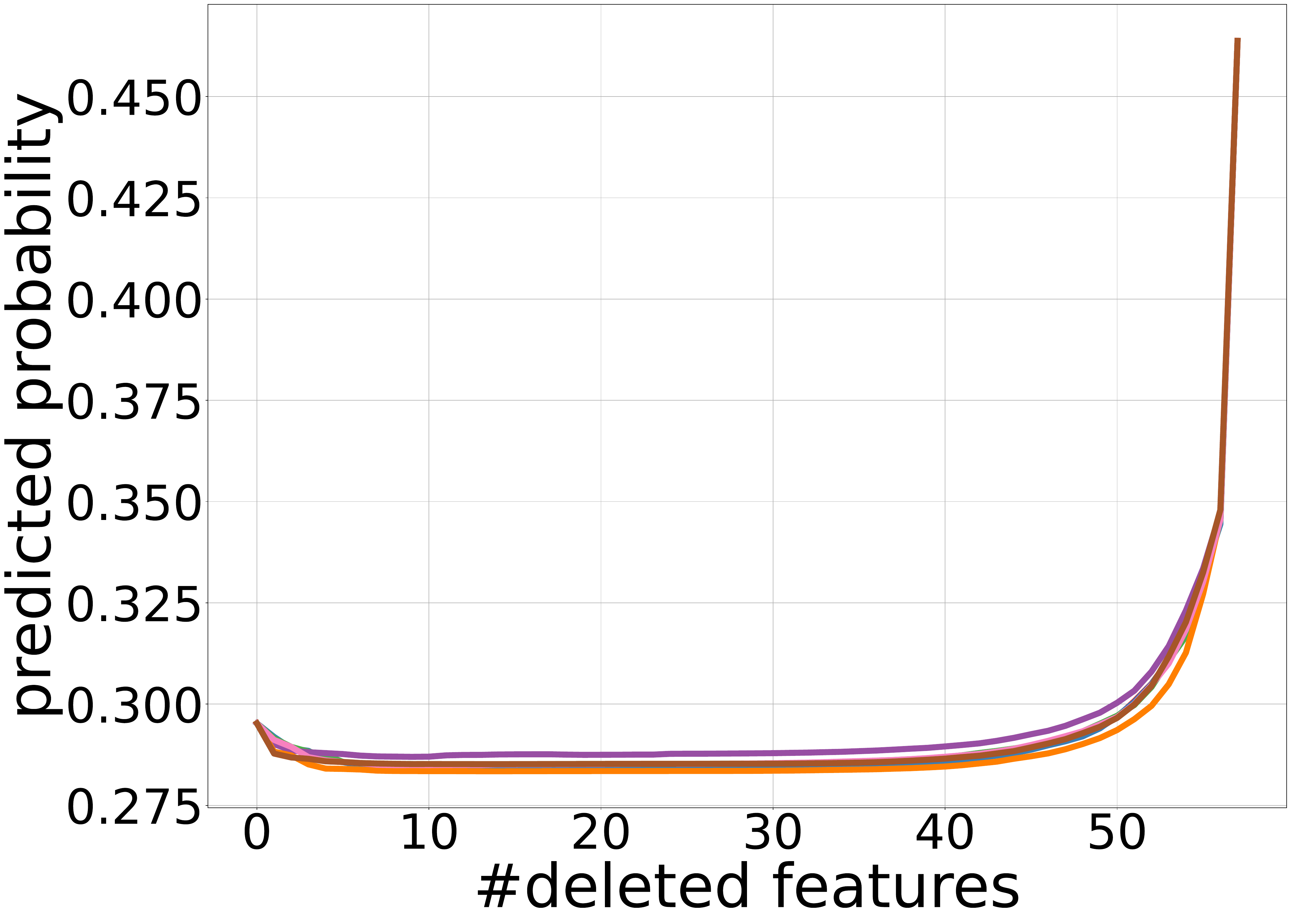} \\
			\phantom{a}MinibooNE ($D=20$) & \phantom{aaaa}philippine ($D=15$) & \phantom{aaaa}spambase ($D=15$)
		\end{tabular}
		\caption{All trained models are \textbf{gradient boosted trees} consisting of 5 individual decision trees. The first row shows the insertion curves, while the second presents the deletion curves. A higher insertion curve or a lower deletion curve indicates better performance. For our TreeGrad-Ranker, we set $ T=100 $ and $ \epsilon=5 $.
		Beta-Insertion selects the candidate Beta Shapley value that maximizes the $\mathrm{Ins}$ metric for each $f_{\mathbf{x}}$, whereas Beta-Deletion minimizes the $\mathrm{Del}$ metric. Beta-Joint selects the candidate that maximizes the joint metric $\mathrm{Ins} - \mathrm{Del}$. The output of each $f(\mathbf{x})$ is set to be the logit of \textbf{a randomly sampled non-predicted class}.}
		\label{fig:cla-other-second-adam}
	\end{figure*}

	\begin{figure*}[t]
		\centering
		\begin{tabular}{ccc}
			\multicolumn{3}{c}{\includegraphics[width=0.9\linewidth]{legend_comparison.pdf}} \\
			\includegraphics[width=0.3\linewidth]{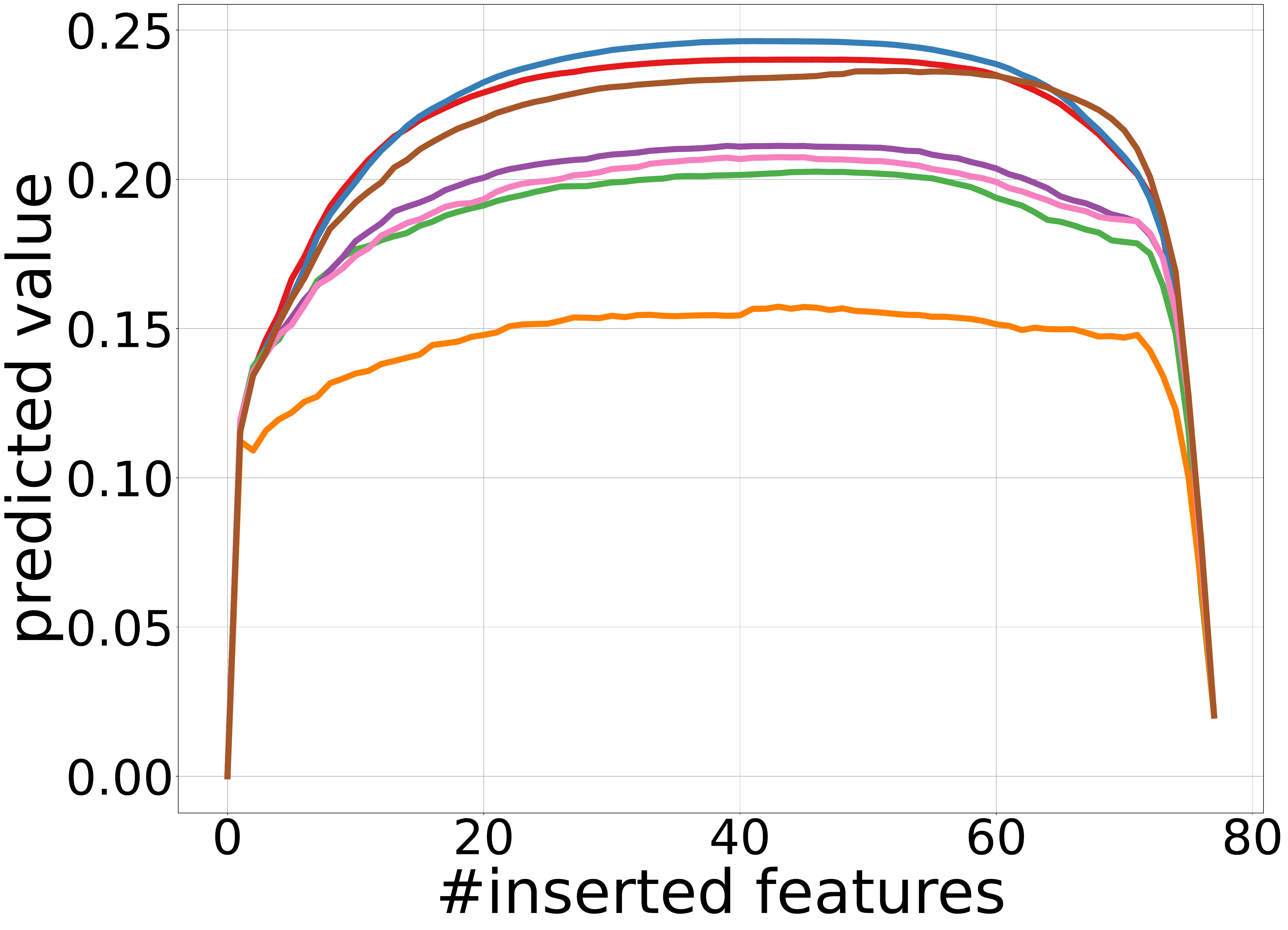} & \includegraphics[width=0.3\linewidth]{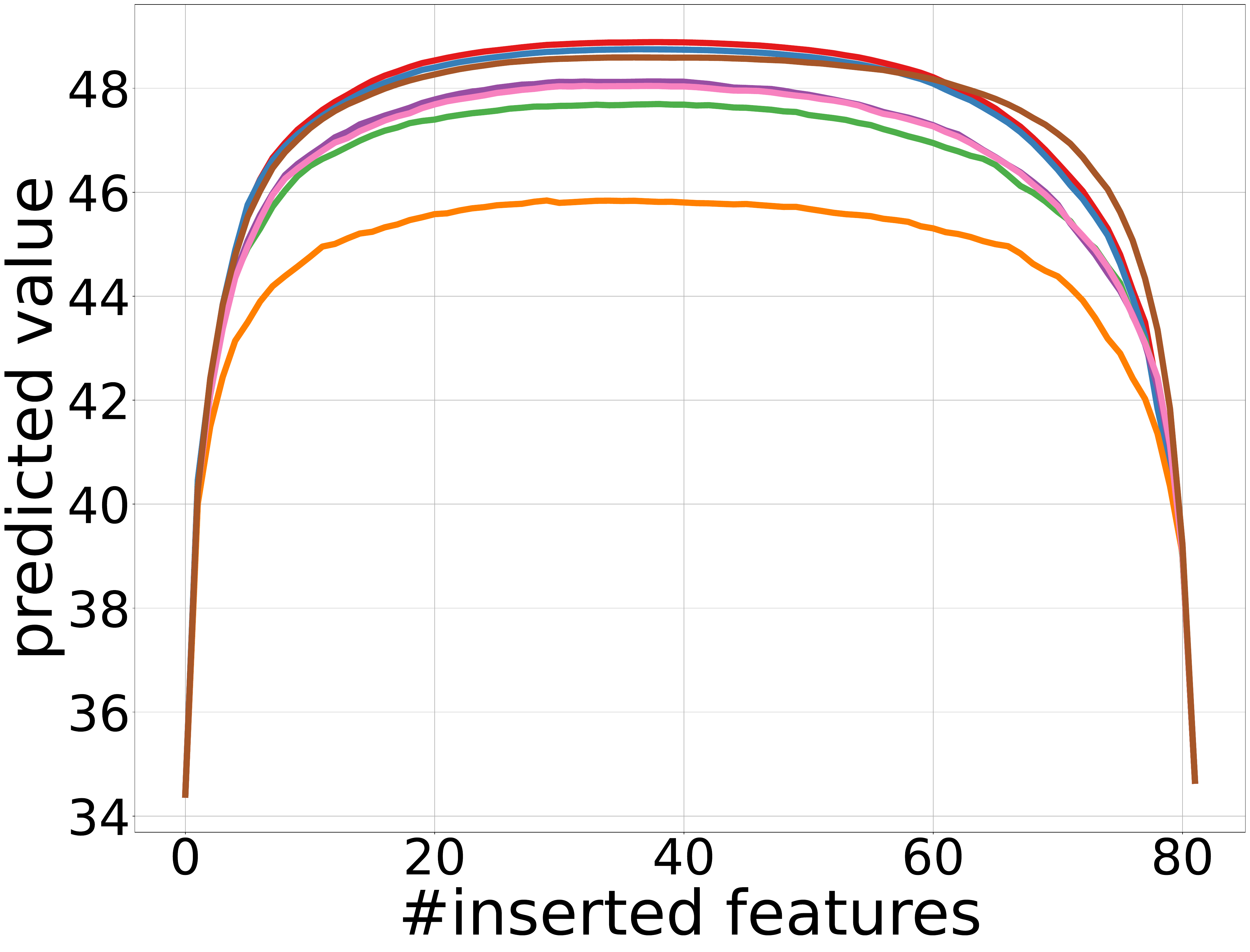} &
			\includegraphics[width=0.3\linewidth]{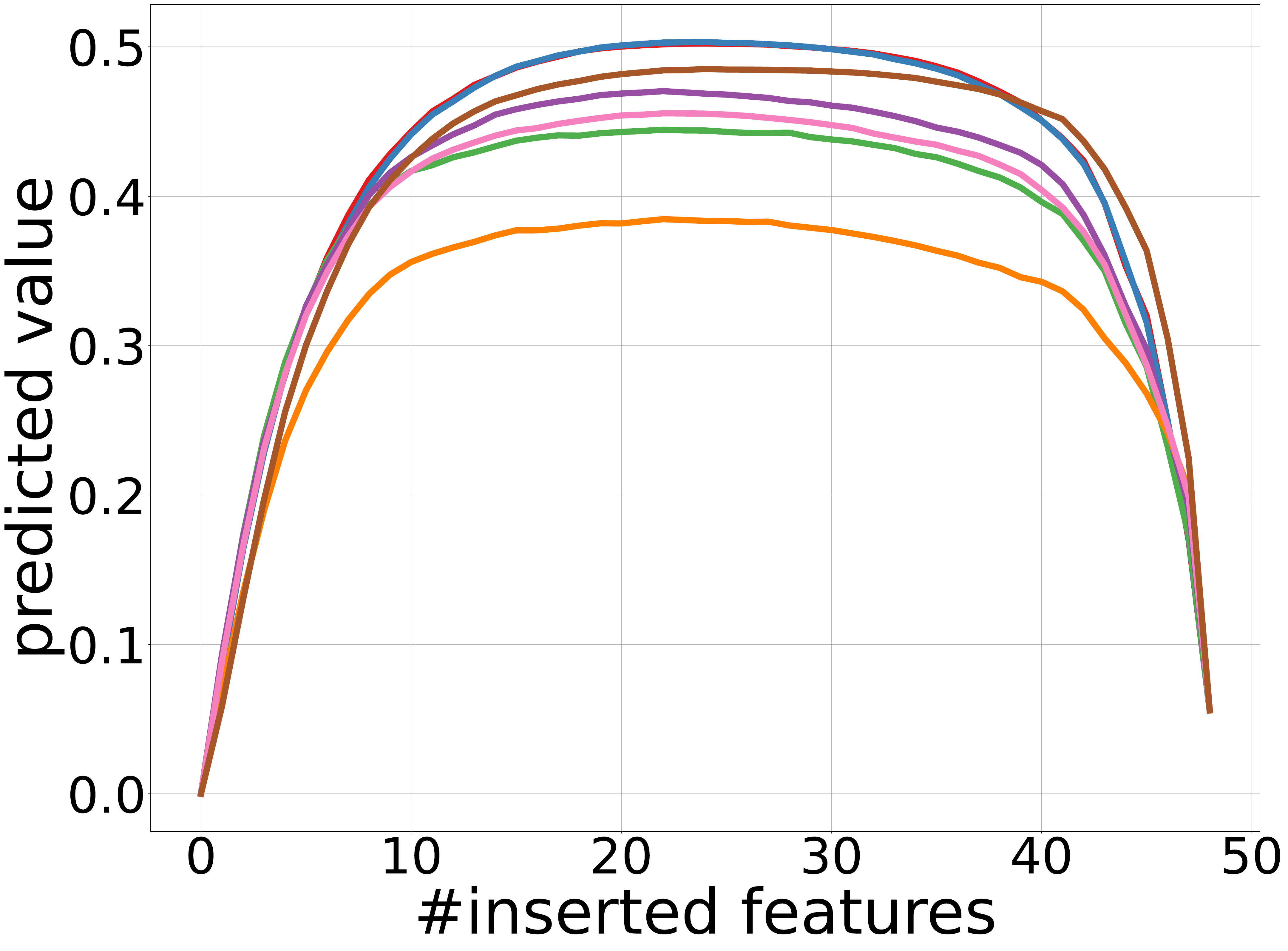} \\
			\includegraphics[width=0.3\linewidth]{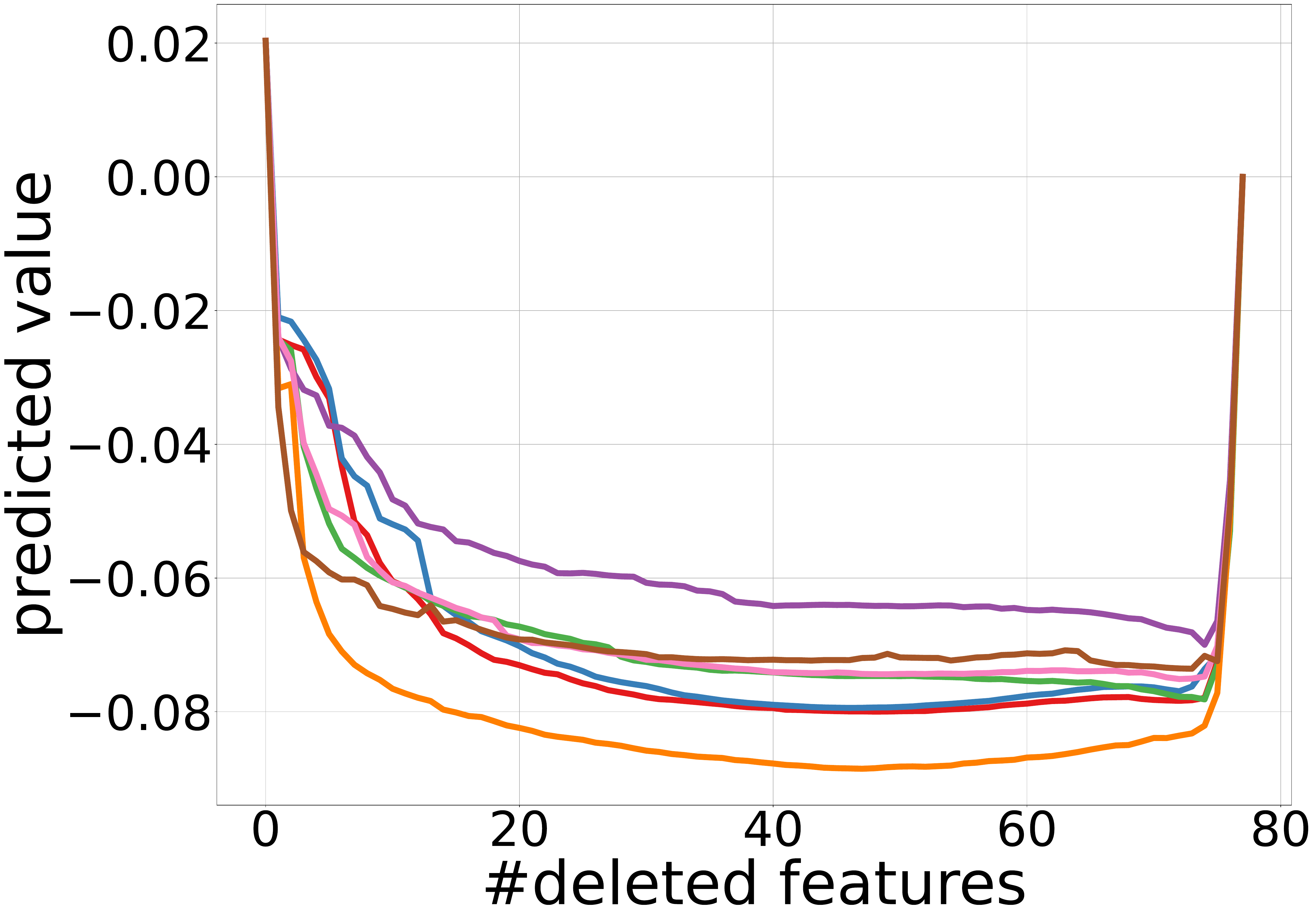} & \includegraphics[width=0.3\linewidth]{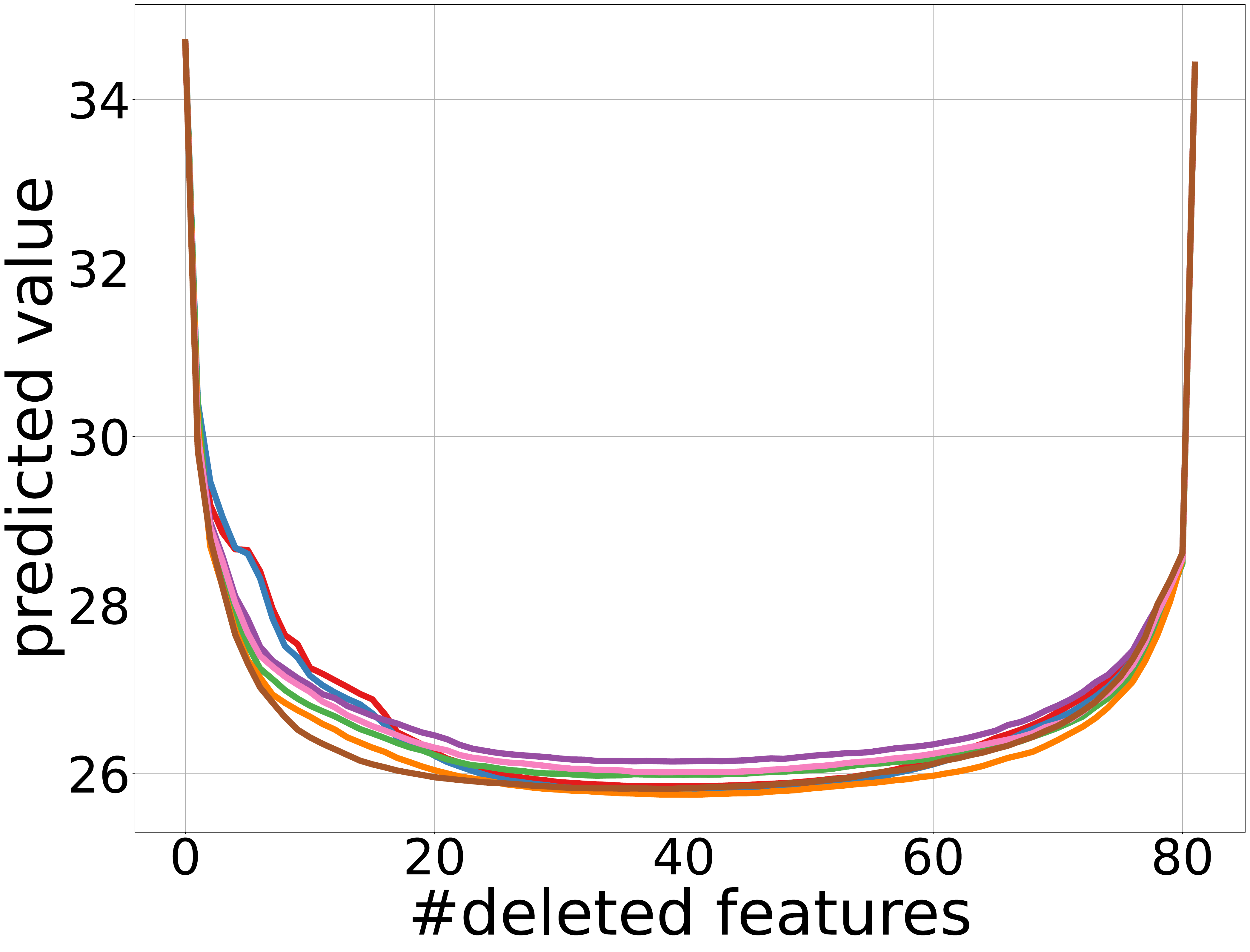} &
			\includegraphics[width=0.3\linewidth]{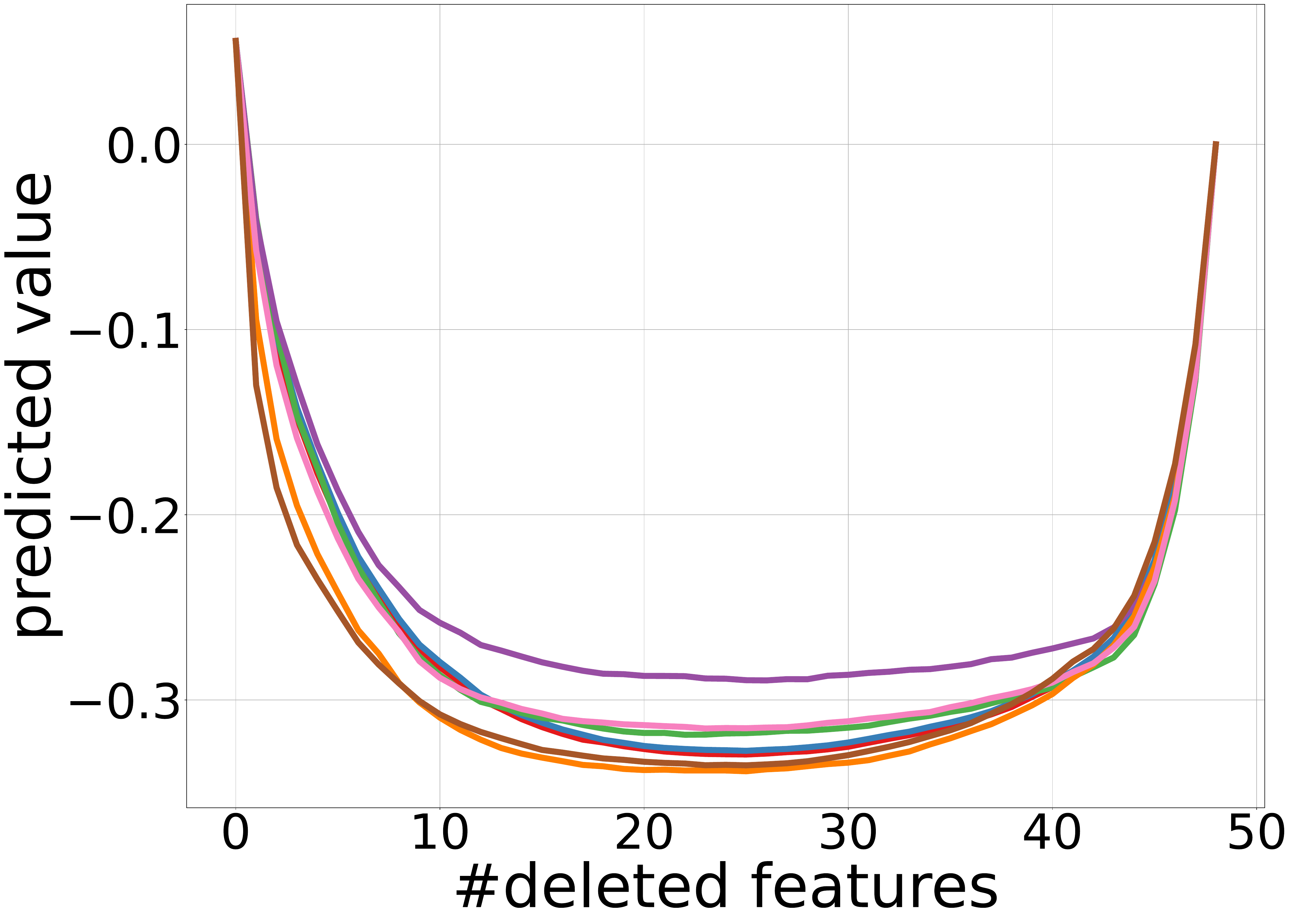} \\
			\phantom{aaaaa}BT ($D=50$) & \phantom{aaaa}superconduct ($D=10$) & \phantom{aaaa}wave\_energy ($D=30$)
		\end{tabular}
		\caption{All trained models are \textbf{gradient boosted trees} consisting of 5 individual decision trees. The first row shows the insertion curves, while the second presents the deletion curves. A higher insertion curve or a lower deletion curve indicates better performance. For our TreeGrad-Ranker, we set $ T=100 $ and $ \epsilon=5 $.
		Beta-Insertion selects the candidate Beta Shapley value that maximizes the $\mathrm{Ins}$ metric for each $f_{\mathbf{x}}$, whereas Beta-Deletion minimizes the $\mathrm{Del}$ metric. Beta-Joint selects the candidate that maximizes the joint metric $\mathrm{Ins} - \mathrm{Del}$. The datasets used are regression tasks.}
		\label{fig:regression-adam}
	\end{figure*}

	\begin{figure*}[t]
		\centering
		\begin{tabular}{ccc}
			\multicolumn{3}{c}{\includegraphics[width=0.9\linewidth]{legend_GA.pdf}} \\
			\includegraphics[width=0.3\linewidth]{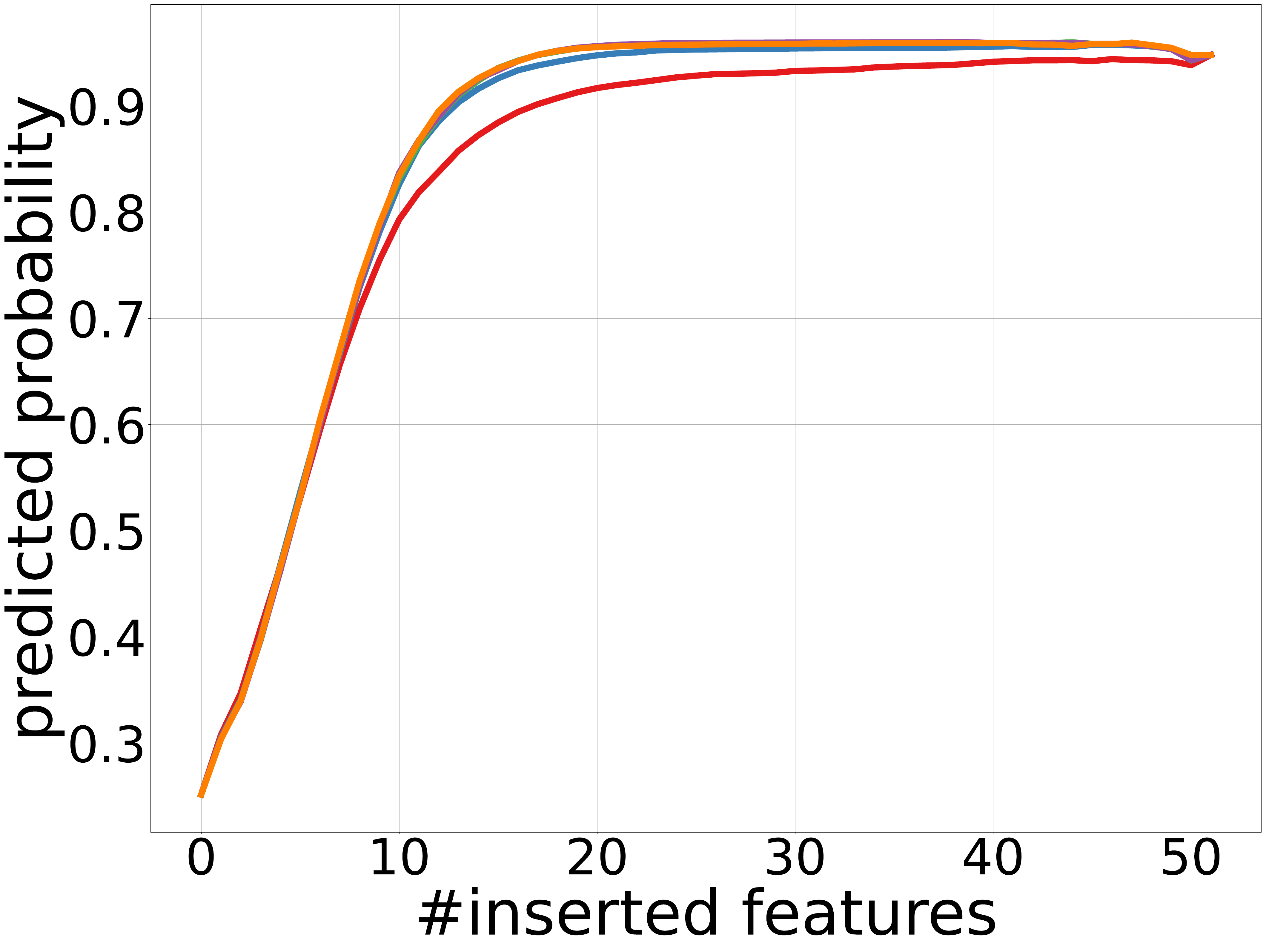} & \includegraphics[width=0.3\linewidth]{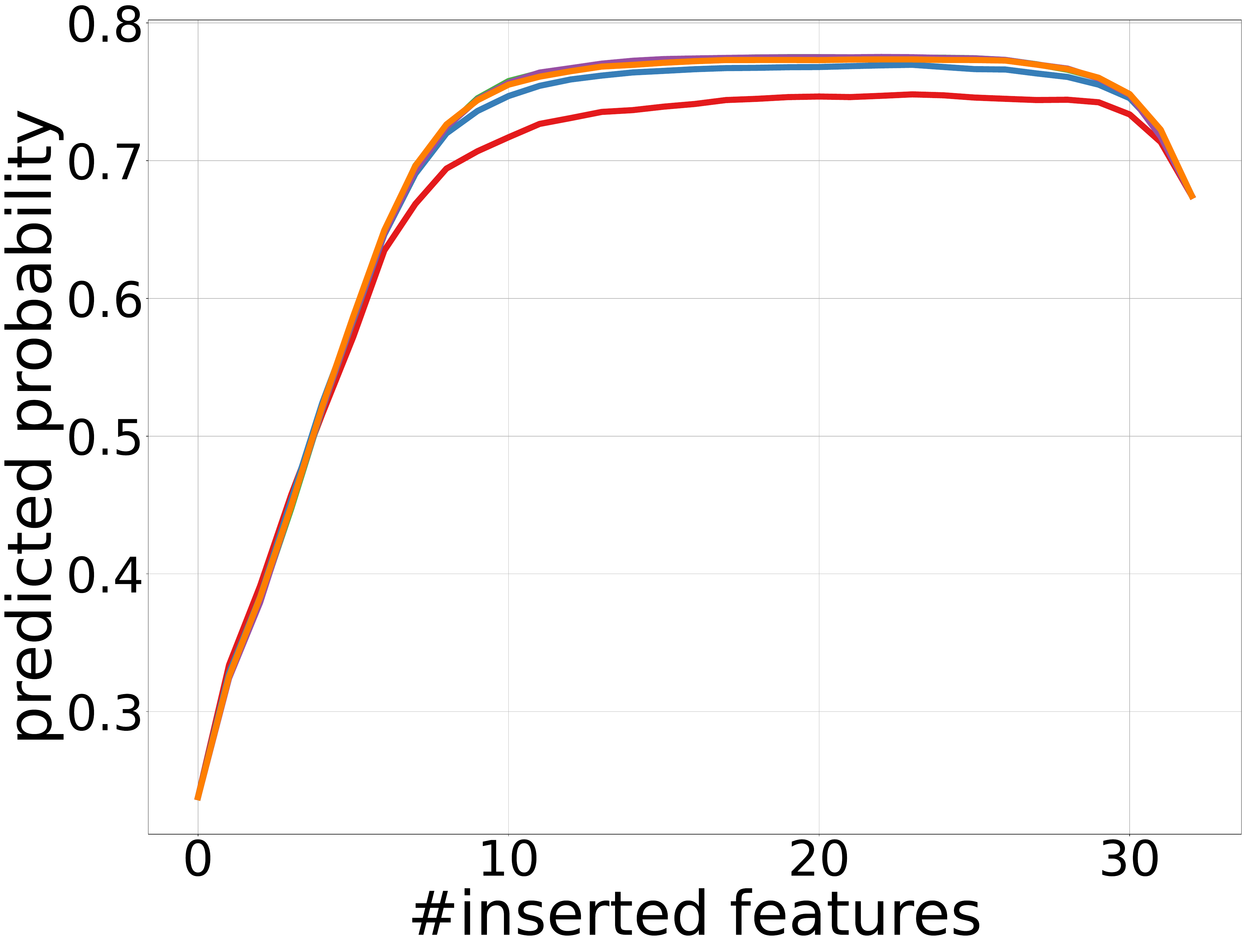} &
			\includegraphics[width=0.3\linewidth]{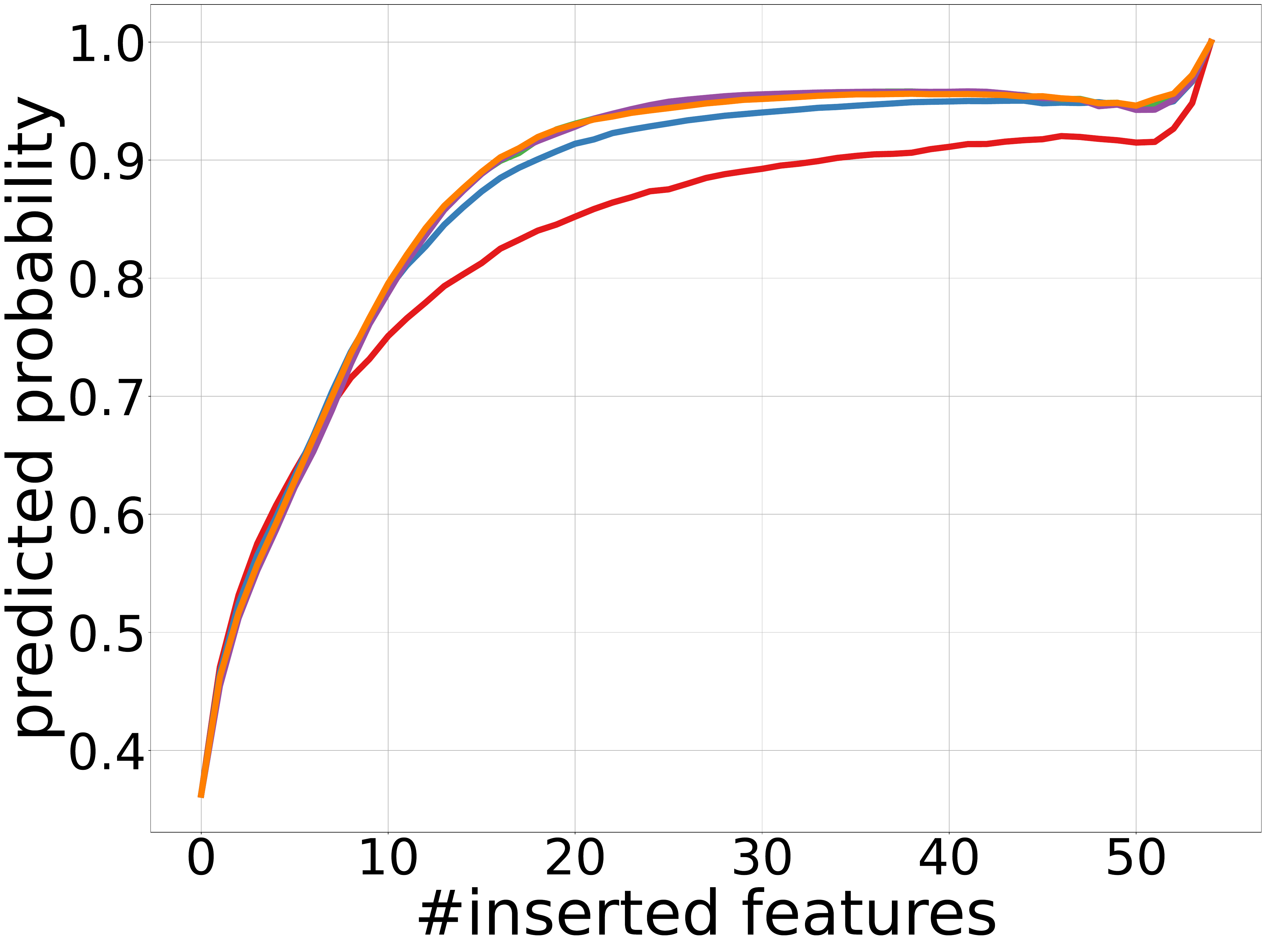} \\
			\includegraphics[width=0.3\linewidth]{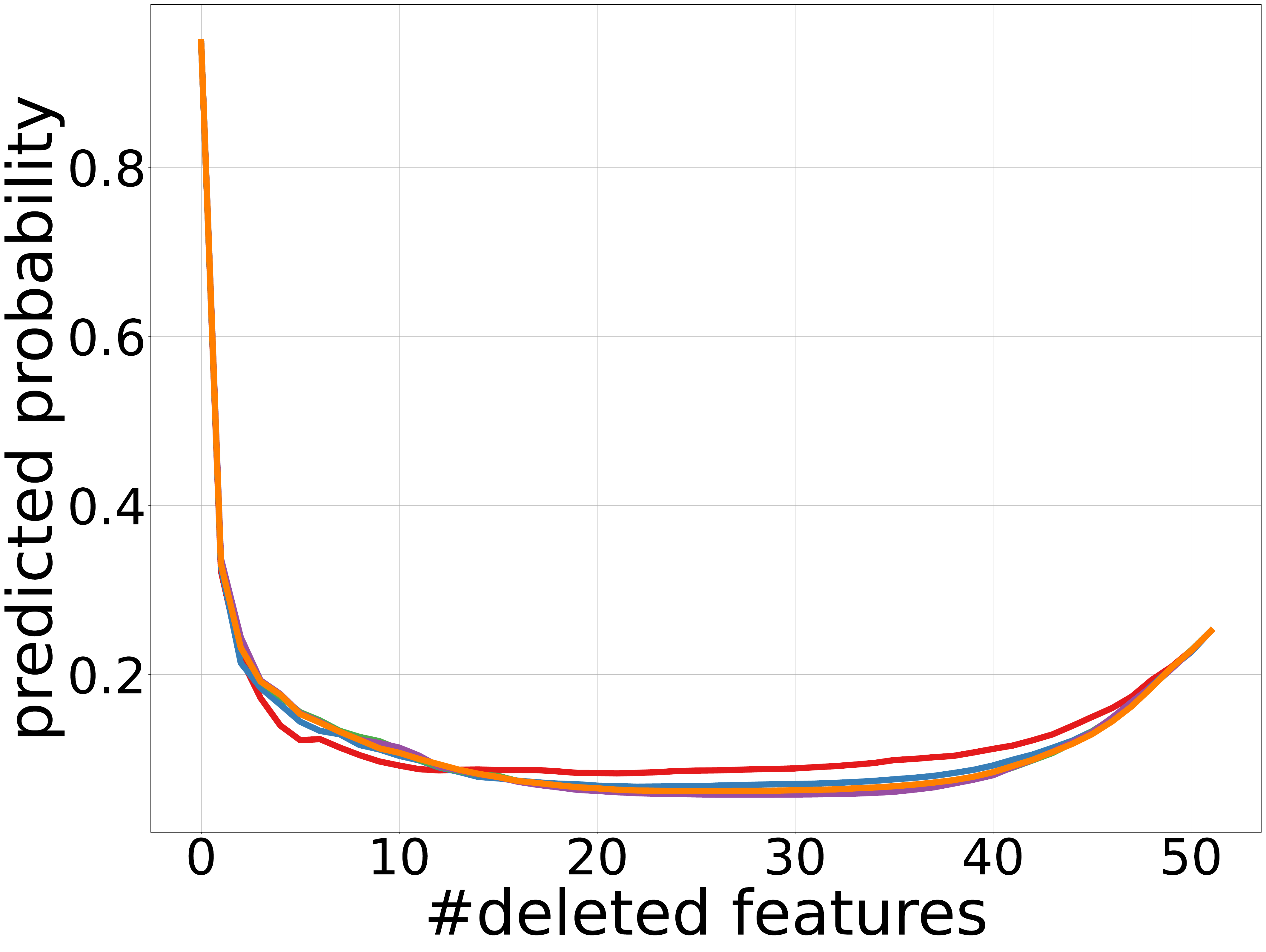} & \includegraphics[width=0.3\linewidth]{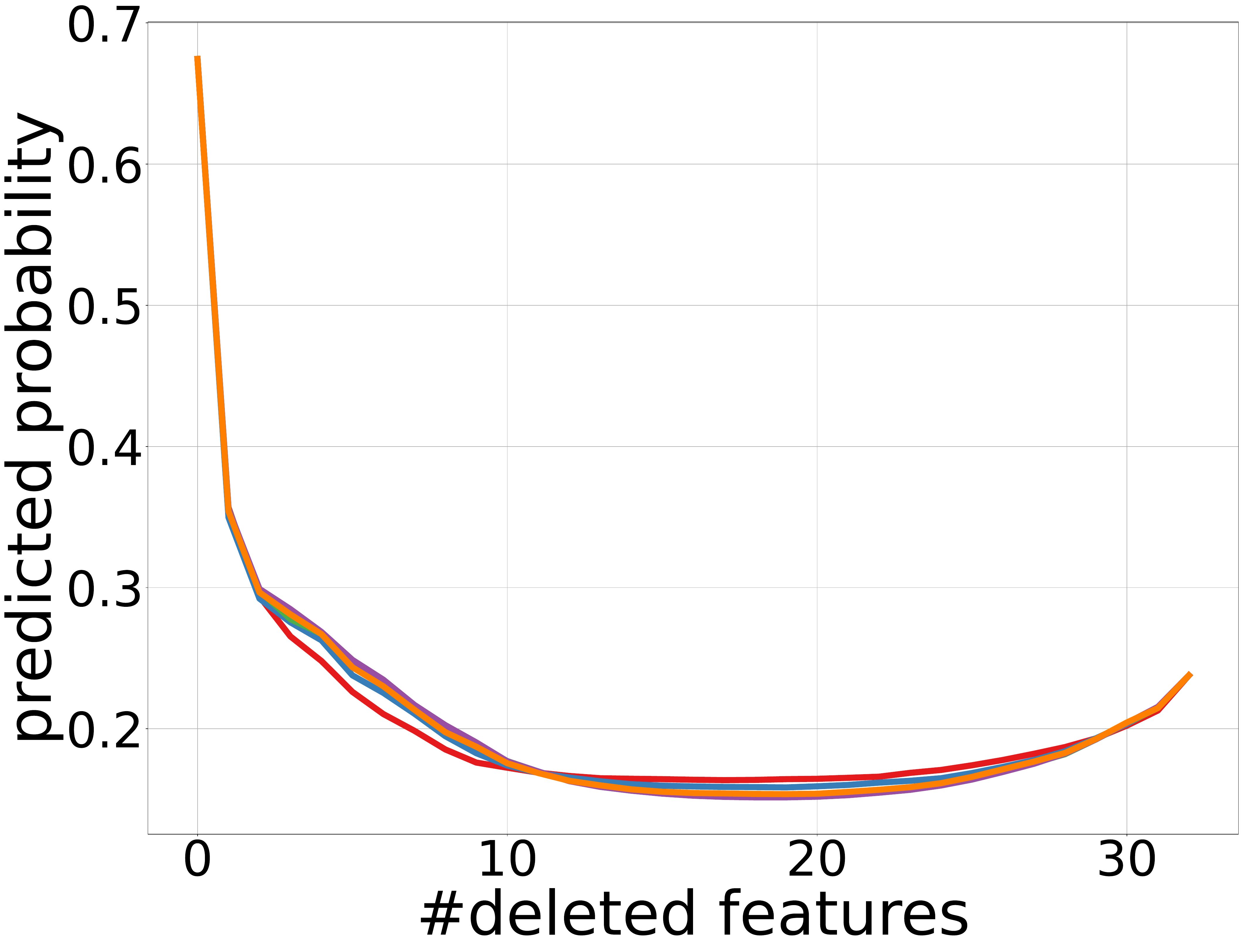} &
			\includegraphics[width=0.3\linewidth]{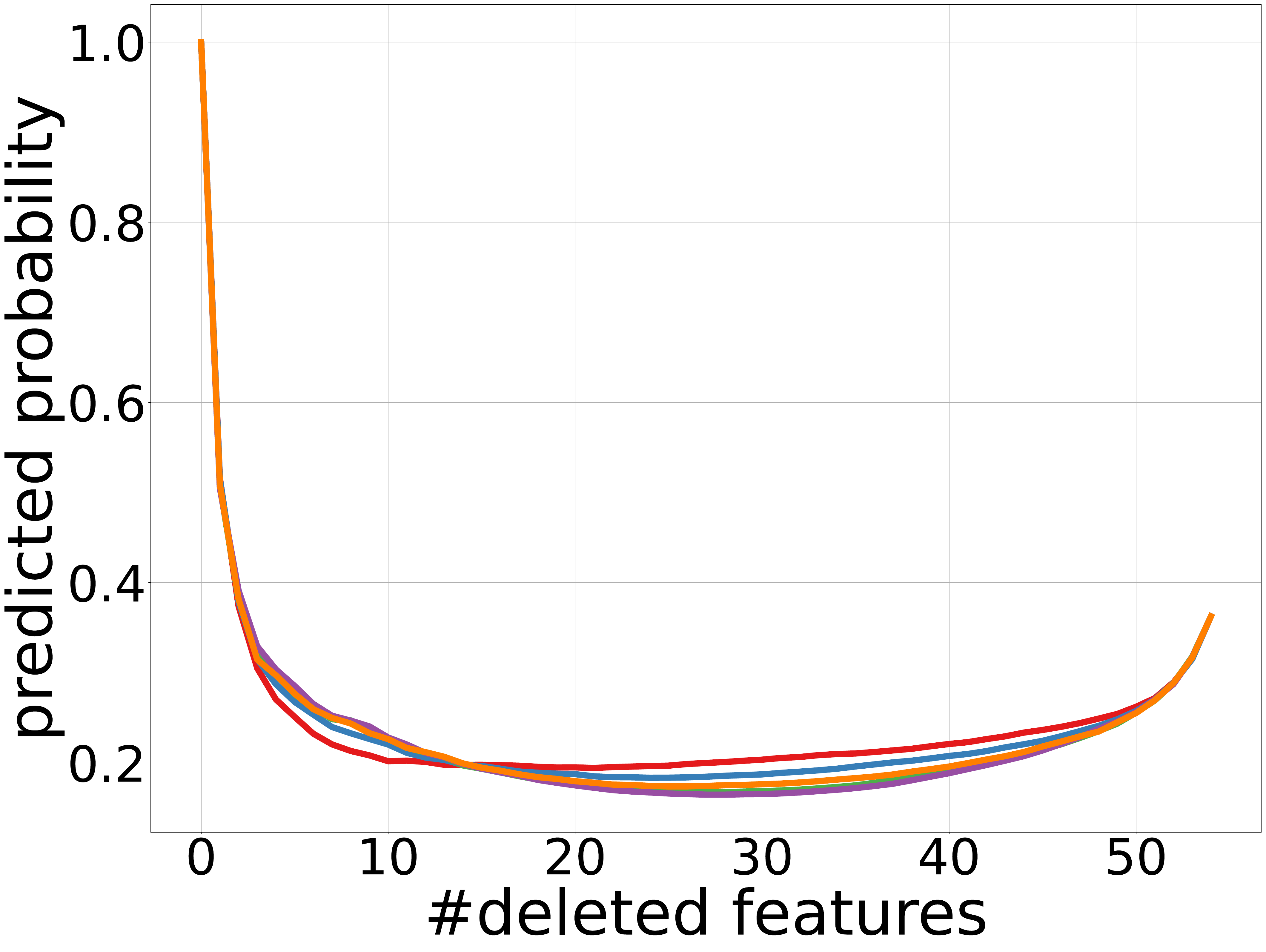} \\
			\phantom{aaaa}FOTP ($D=20$) & \phantom{aaaa}GPSP ($D=10$) & \phantom{aaaa}jannis ($D=40$)
		\end{tabular}
		\caption{Results of TreeGrad-Ranker with \textbf{GA}. All trained models are \textbf{decision trees}. The first row shows the insertion curves, while the second presents the deletion curves. A higher insertion curve or a lower deletion curve indicates better performance. 
		The output of each $f(\mathbf{x})$ is set to be the probability of \textbf{its predicted class}. }
		\label{fig:cla-pre-first-T-gd}
	\end{figure*}

	\begin{figure*}[t]
		\centering
		\begin{tabular}{ccc}
			\multicolumn{3}{c}{\includegraphics[width=0.9\linewidth]{legend_GA.pdf}} \\
			\includegraphics[width=0.3\linewidth]{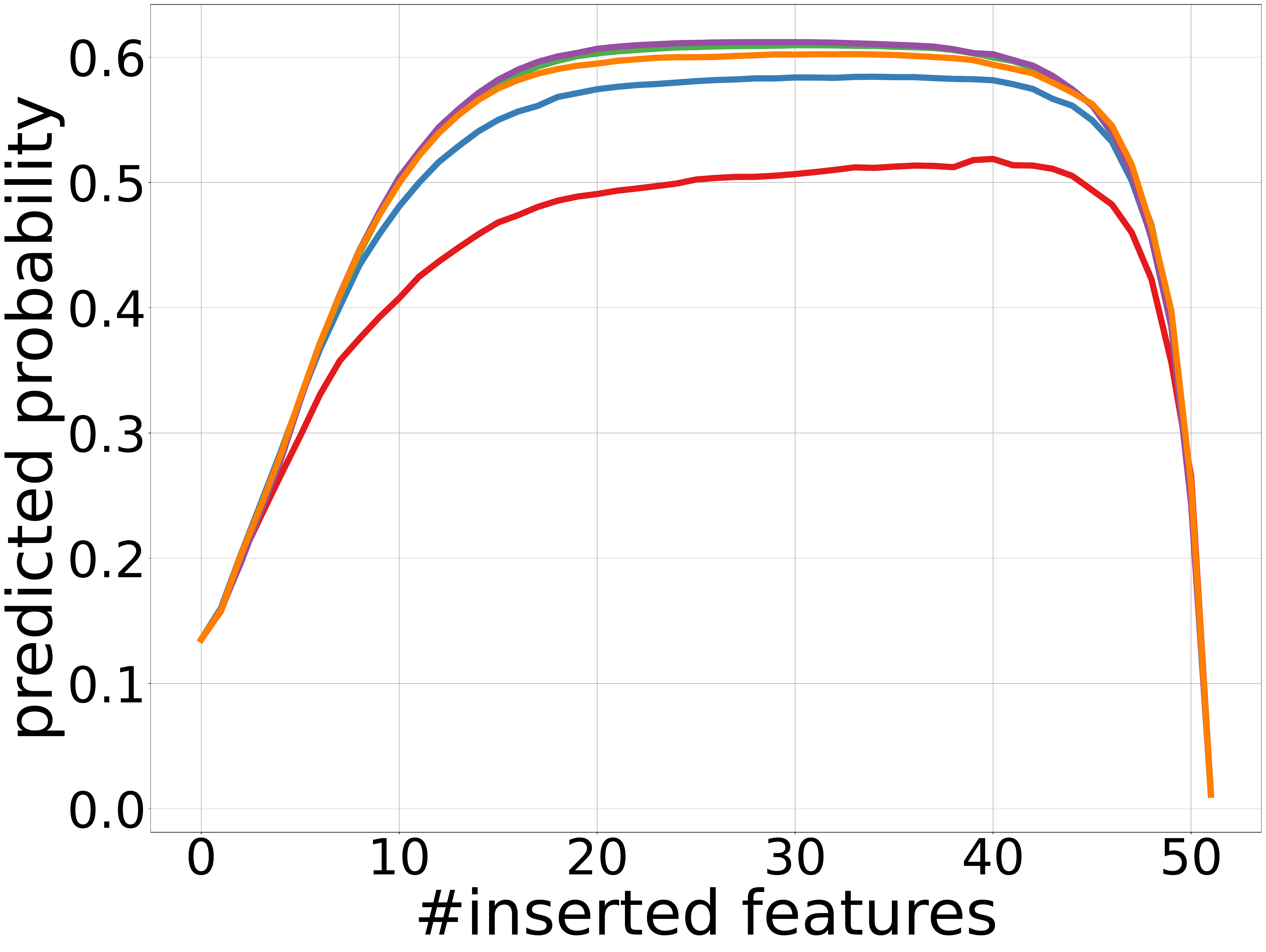} & \includegraphics[width=0.3\linewidth]{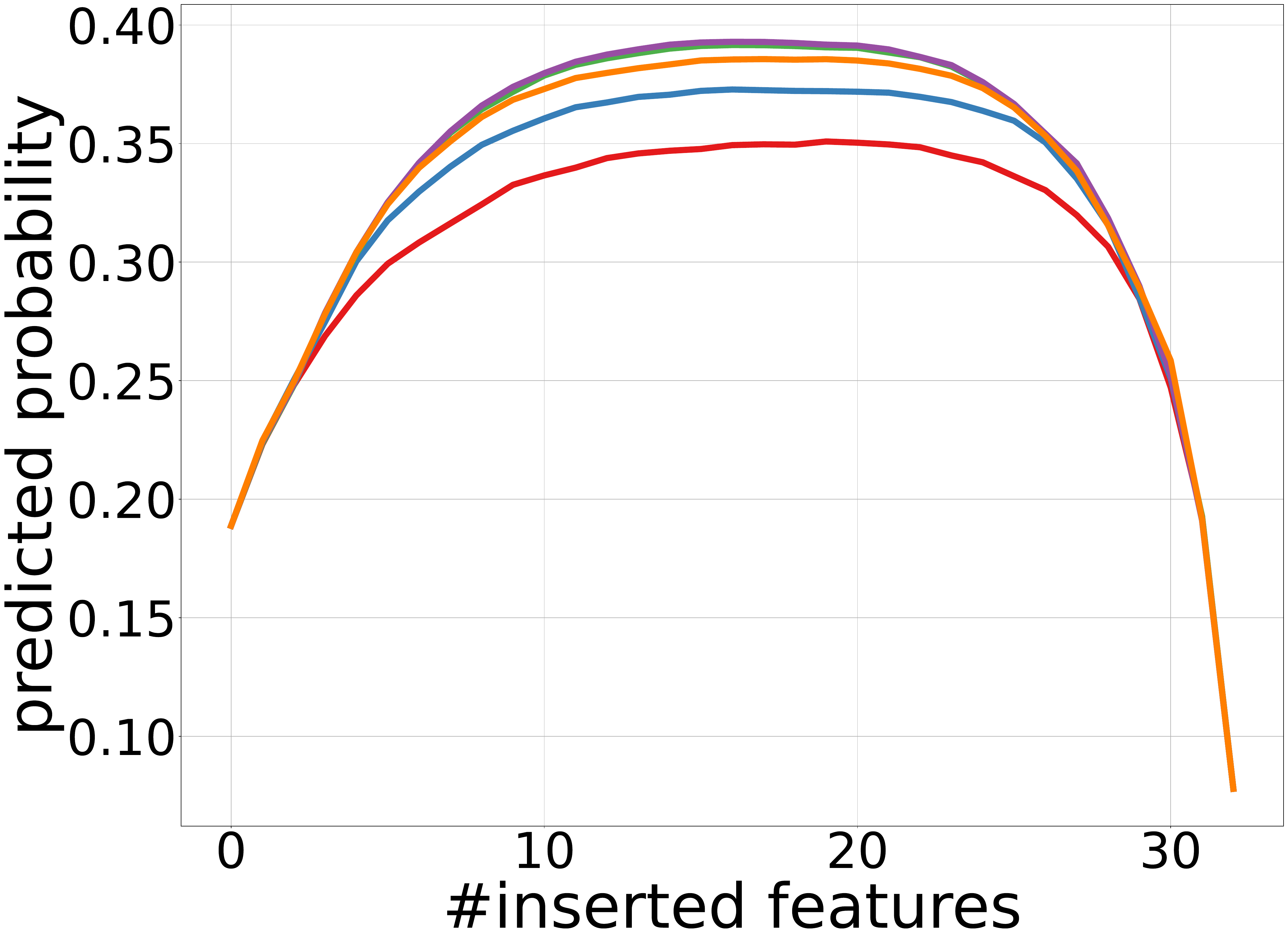} &
			\includegraphics[width=0.3\linewidth]{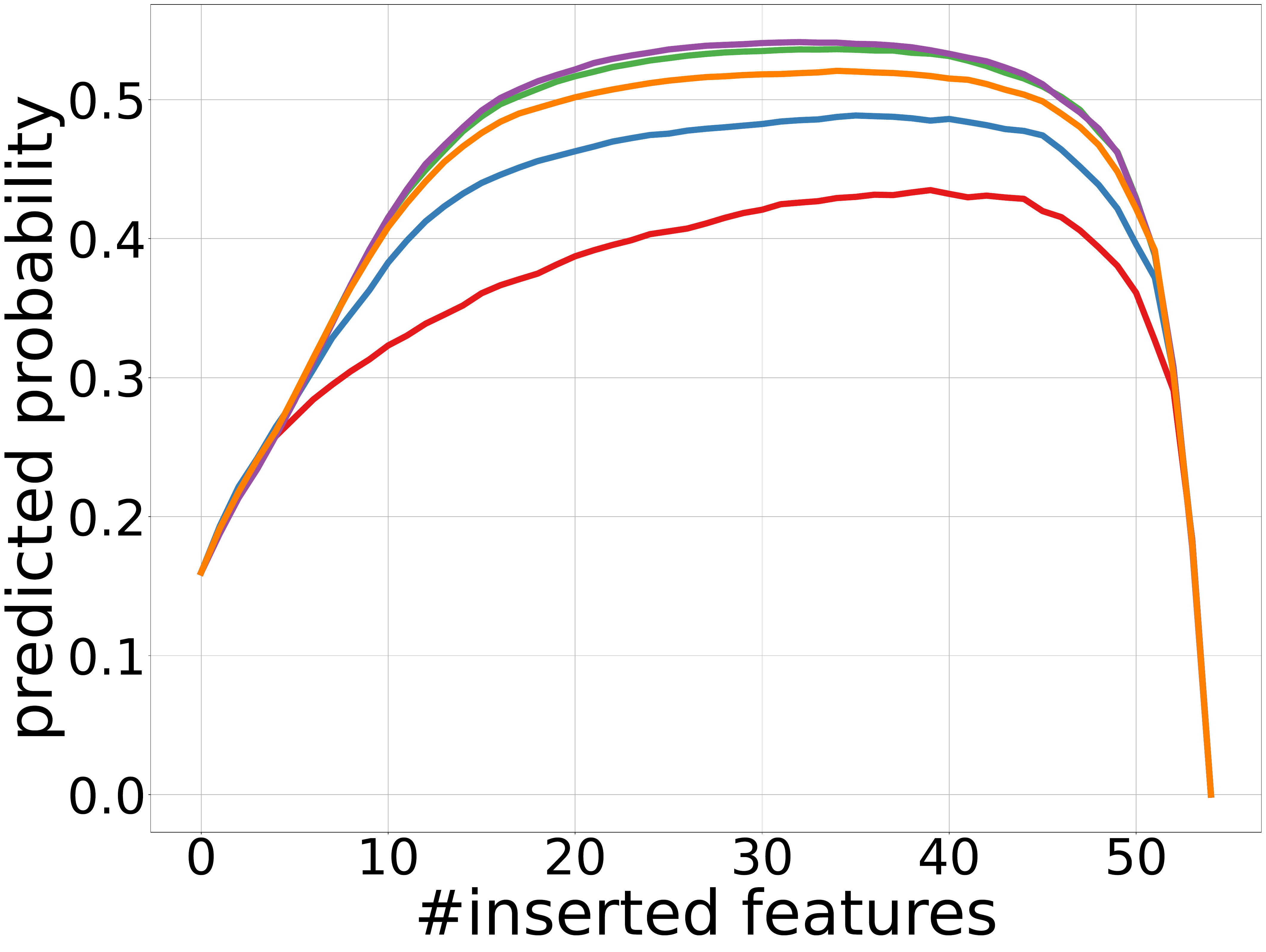} \\
			\includegraphics[width=0.3\linewidth]{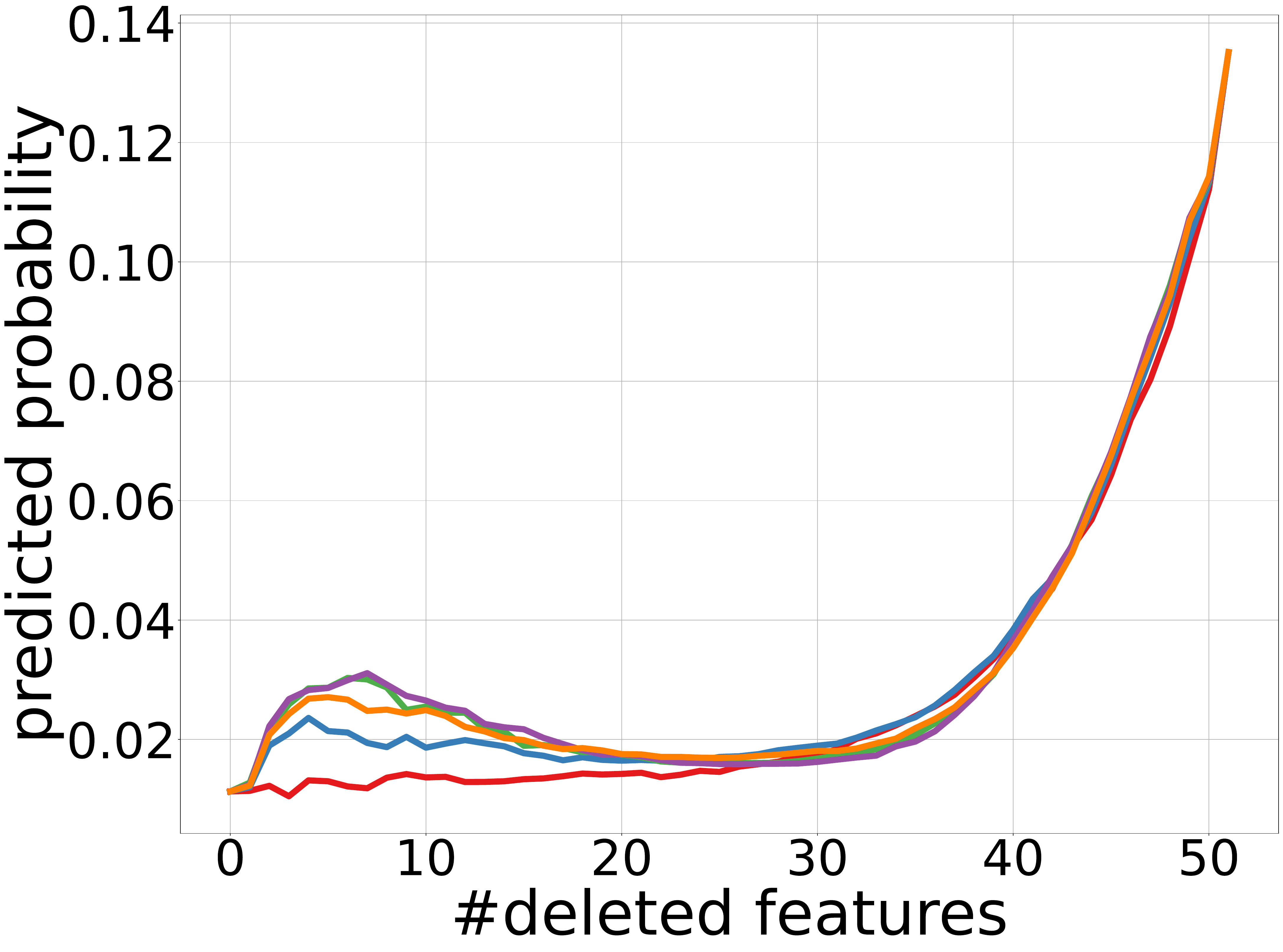} & \includegraphics[width=0.3\linewidth]{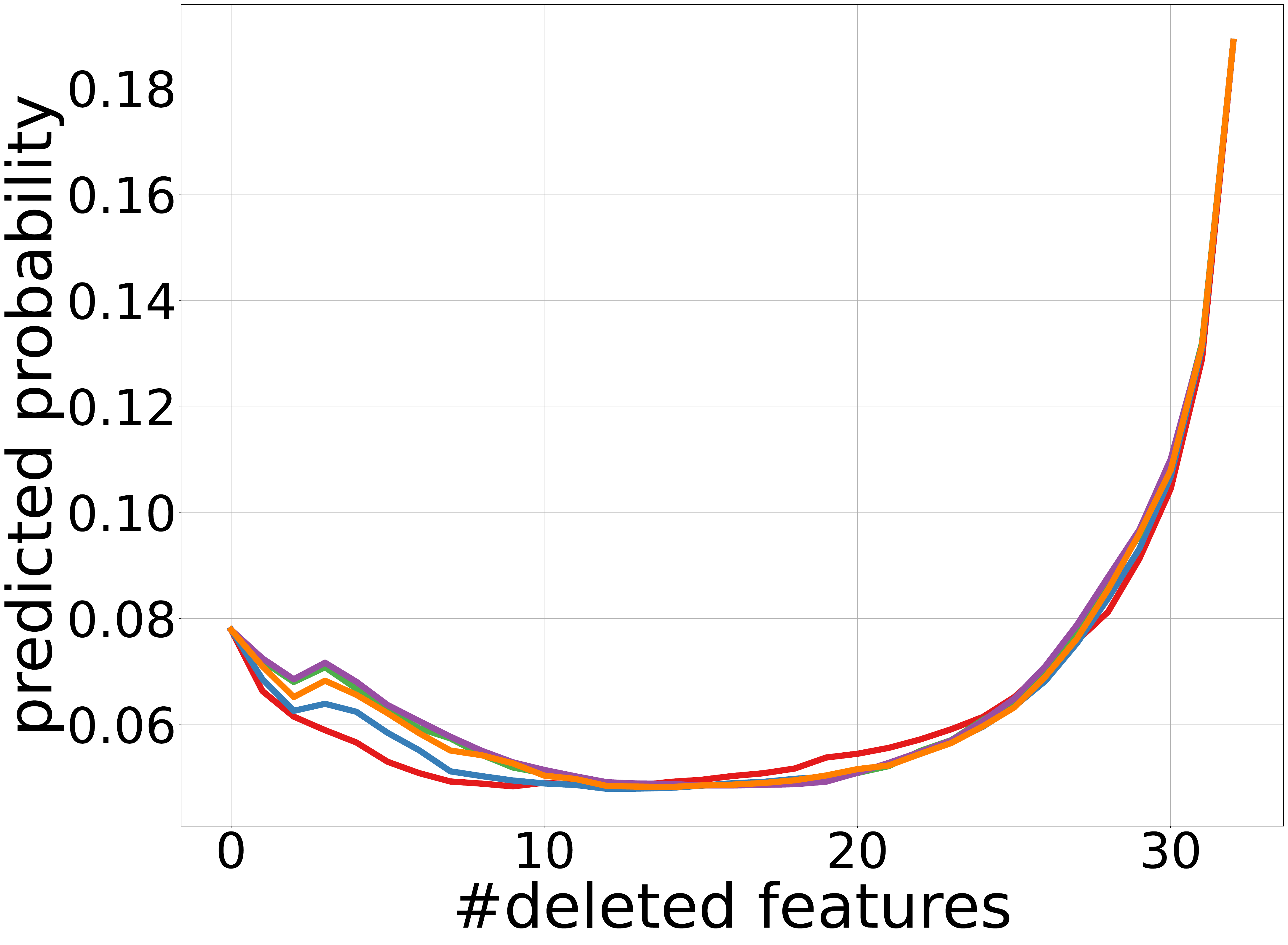} &
			\includegraphics[width=0.3\linewidth]{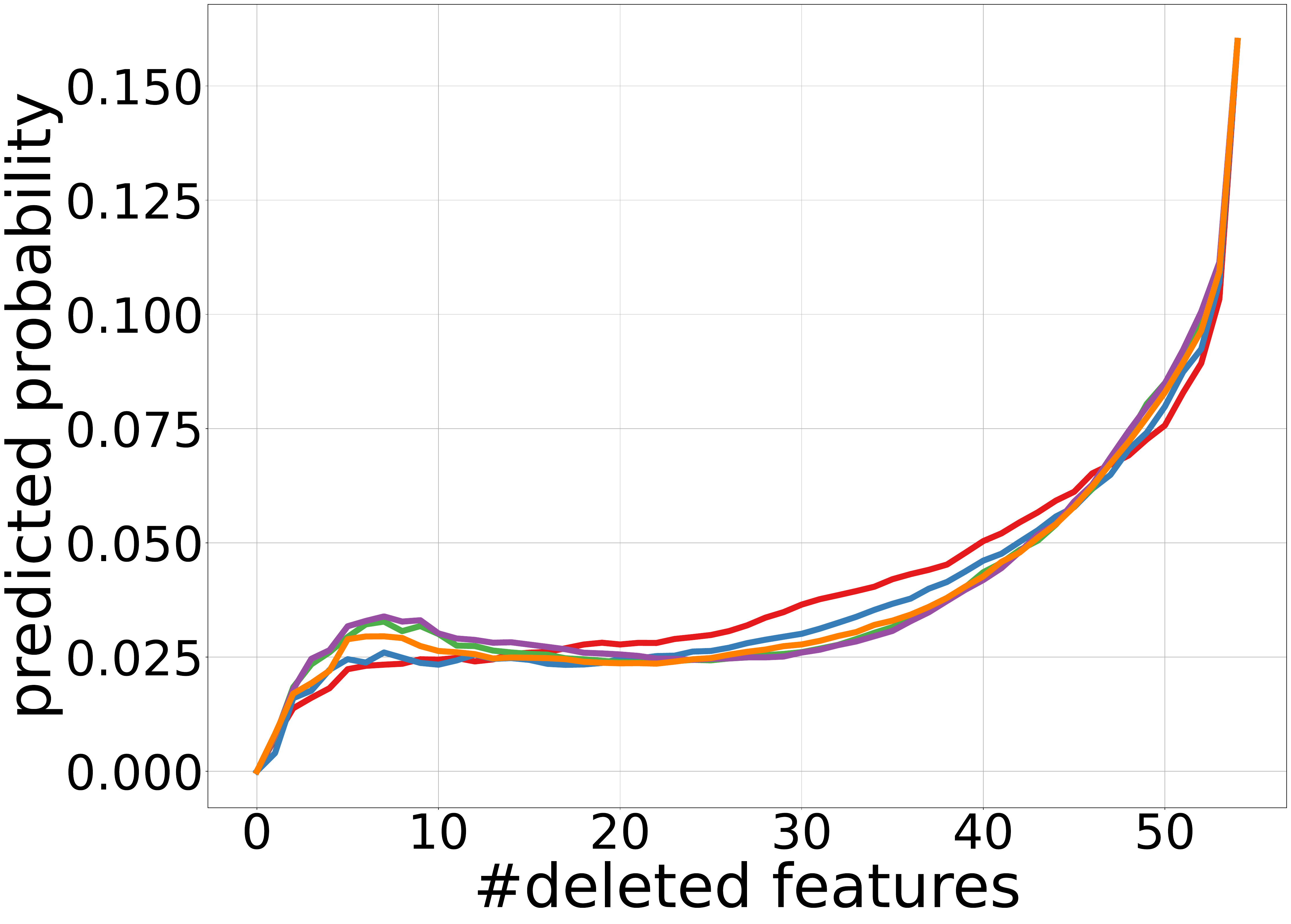} \\
			\phantom{aaaaa}FOTP ($D=20$) & \phantom{aaaaa}GPSP ($D=10$) & \phantom{aaaaa}jannis ($D=40$)
		\end{tabular}
		\caption{Results of TreeGrad-Ranker with \textbf{GA}. All trained models are \textbf{decision trees}. The first row shows the insertion curves, while the second presents the deletion curves. A higher insertion curve or a lower deletion curve indicates better performance. 
		The output of each $f(\mathbf{x})$ is set to be the probability of \textbf{a randomly sampled non-predicted class}.}
		\label{fig:cla-other-first-T-gd}
	\end{figure*}
	
	\begin{figure*}[t]
		\centering
		\begin{tabular}{ccc}
			\multicolumn{3}{c}{\includegraphics[width=0.9\linewidth]{legend_GA.pdf}} \\
			\includegraphics[width=0.3\linewidth]{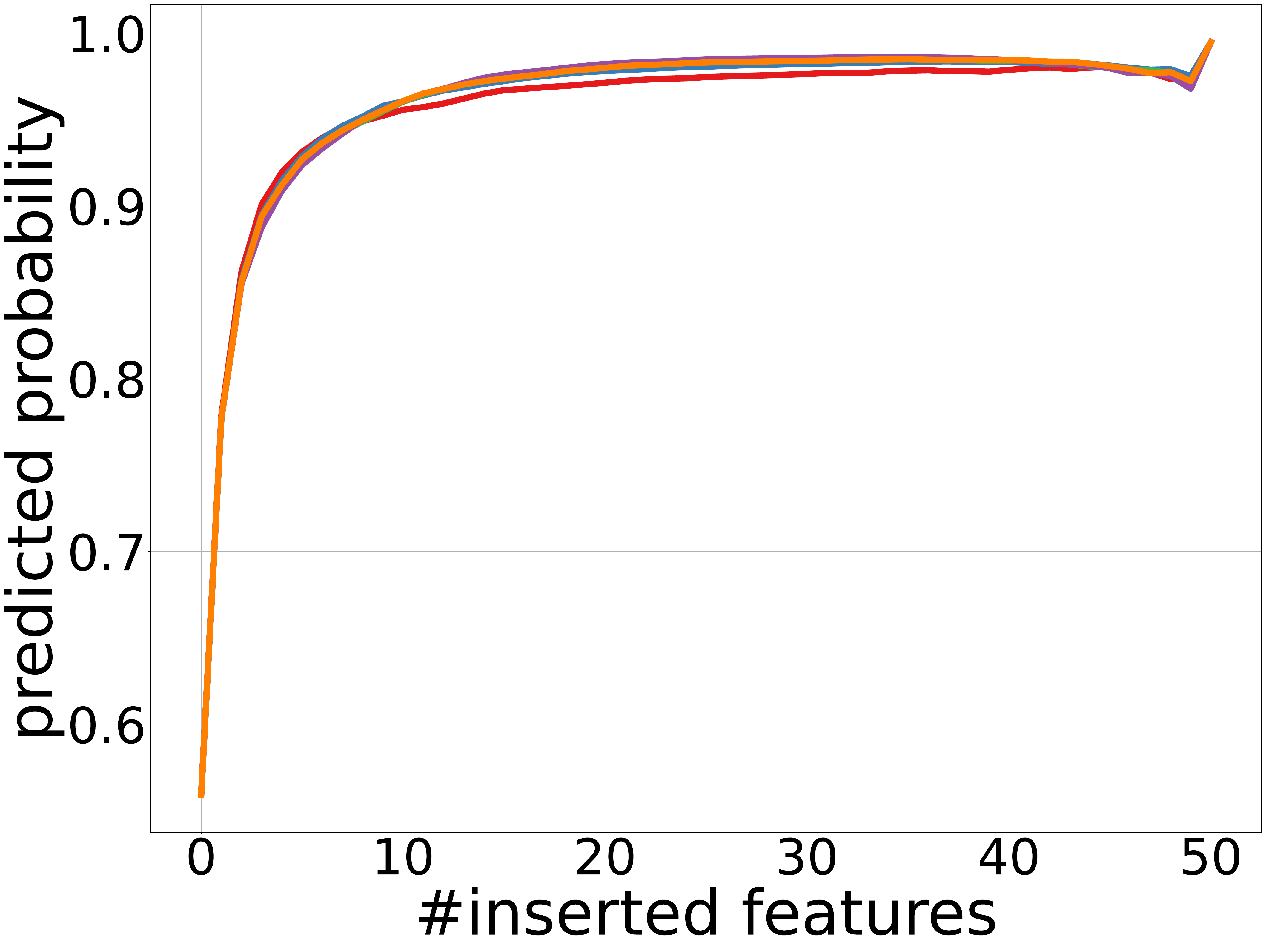} & \includegraphics[width=0.3\linewidth]{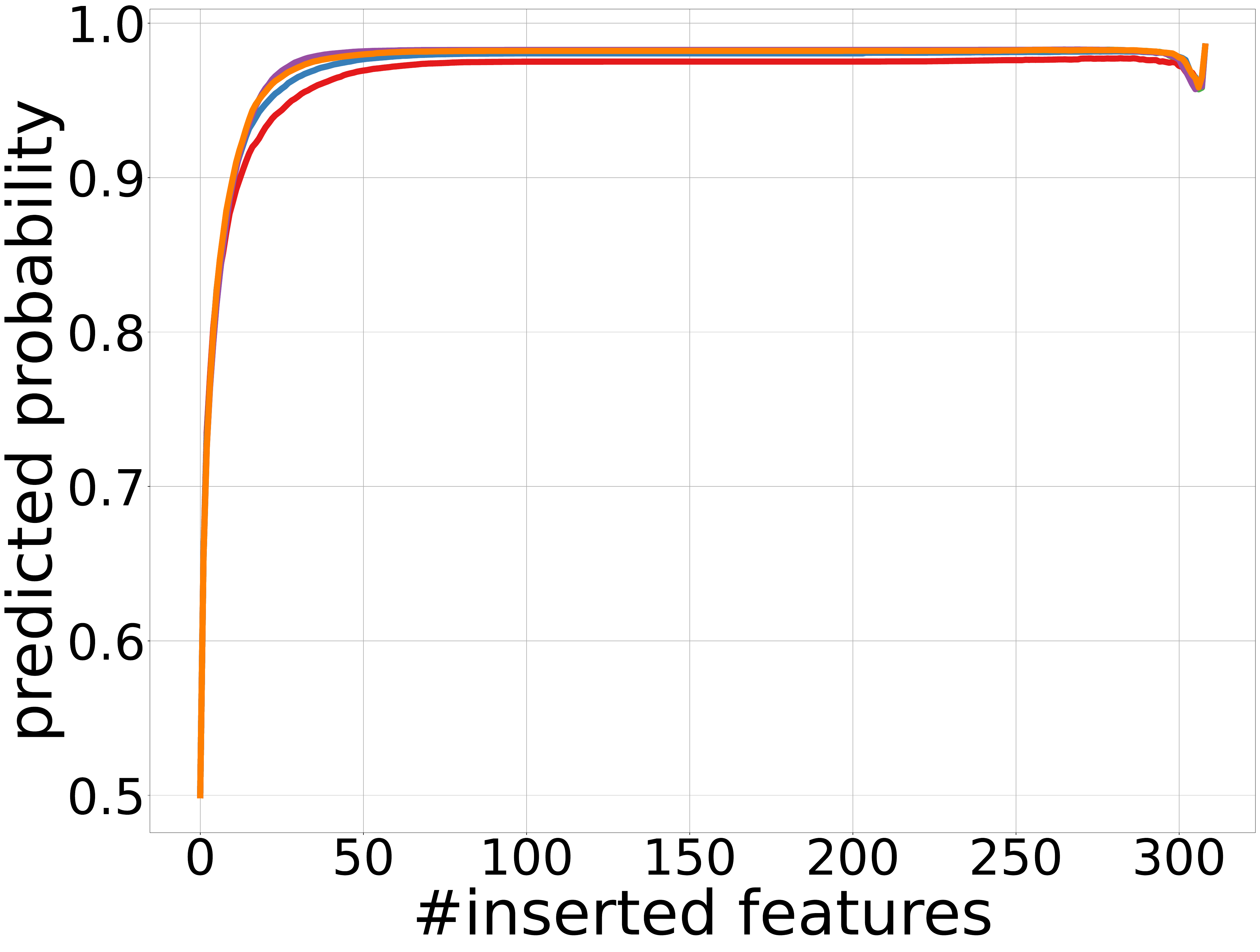} &		\includegraphics[width=0.3\linewidth]{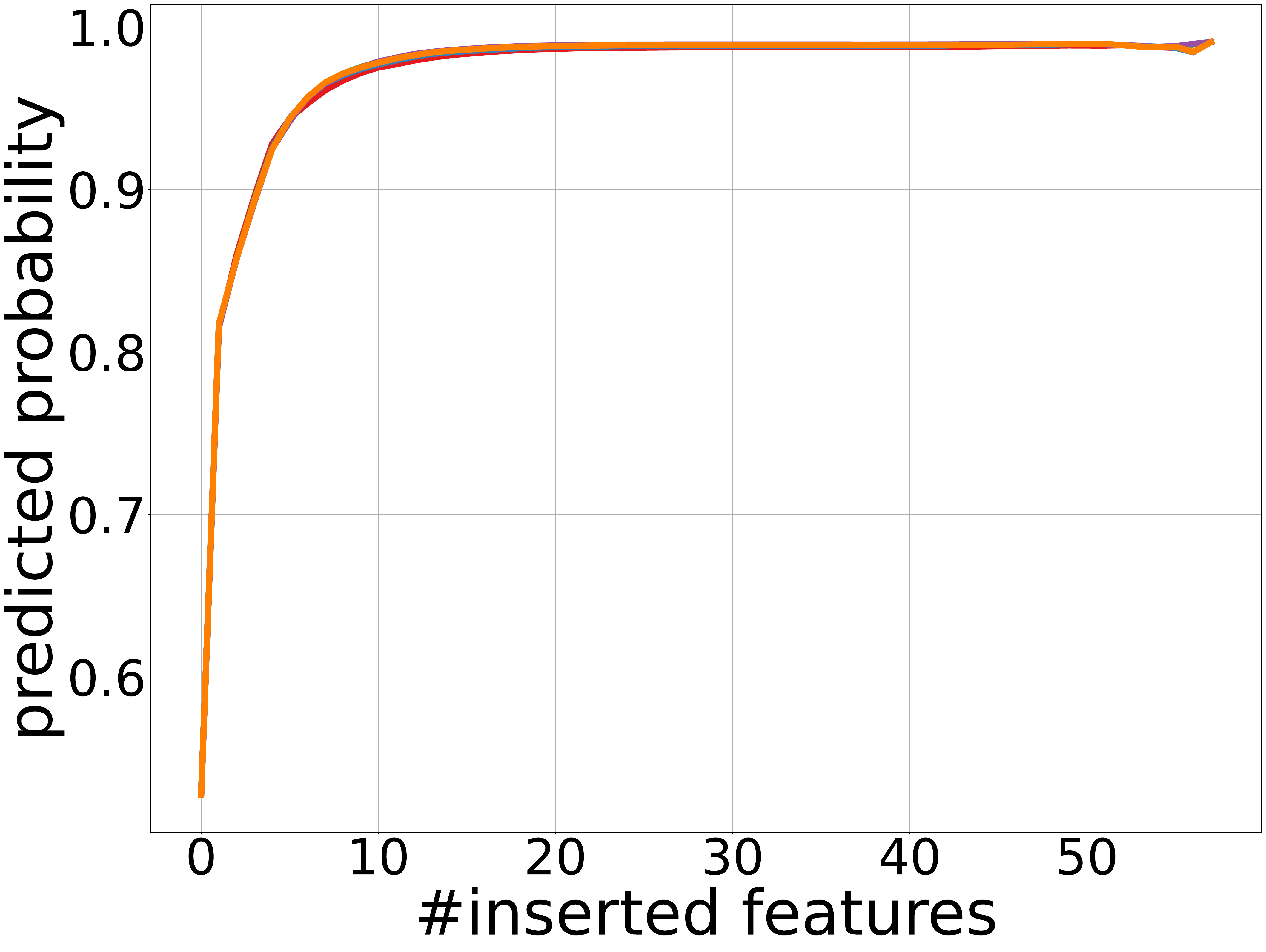} \\
			\includegraphics[width=0.3\linewidth]{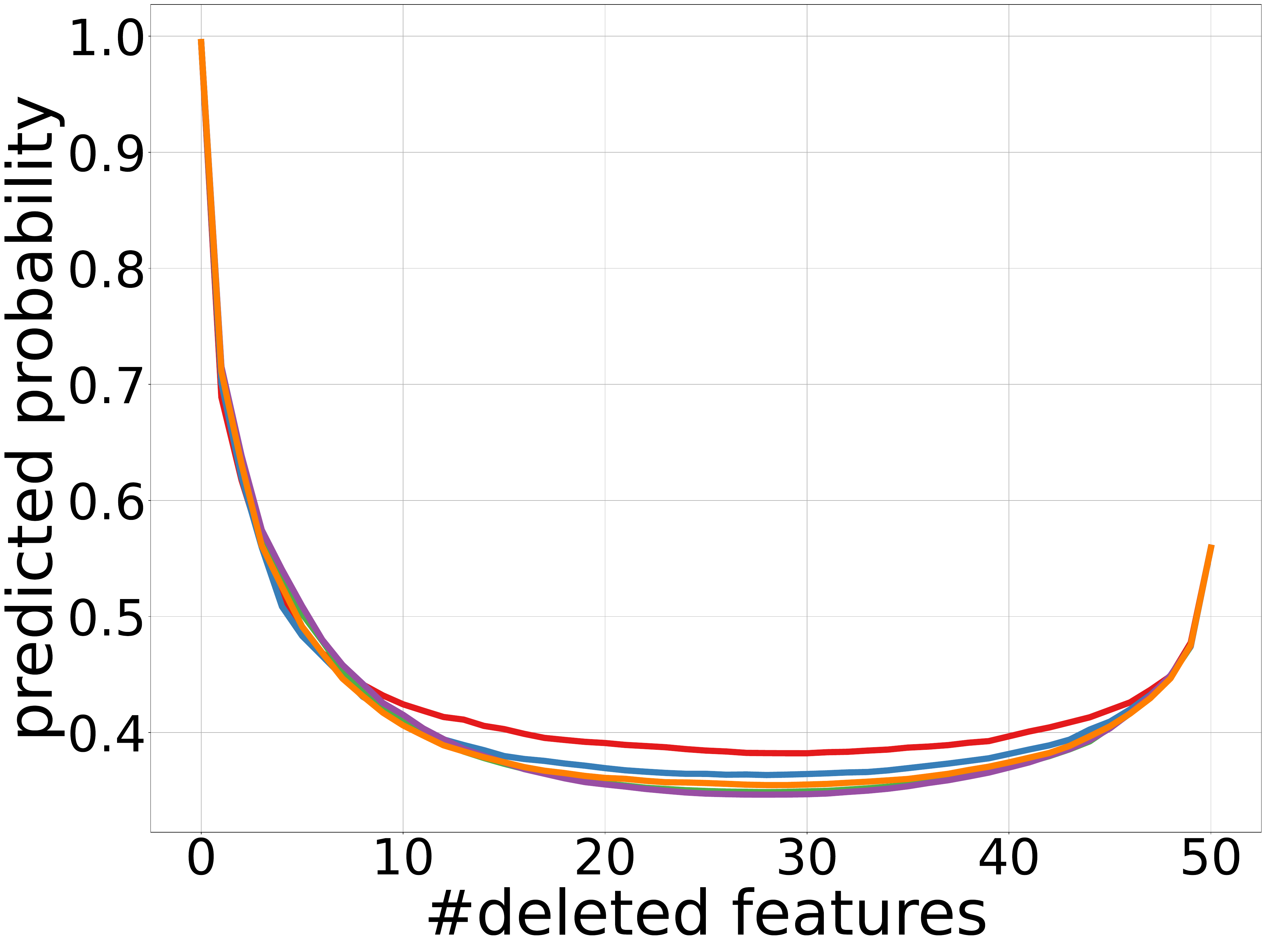} & \includegraphics[width=0.3\linewidth]{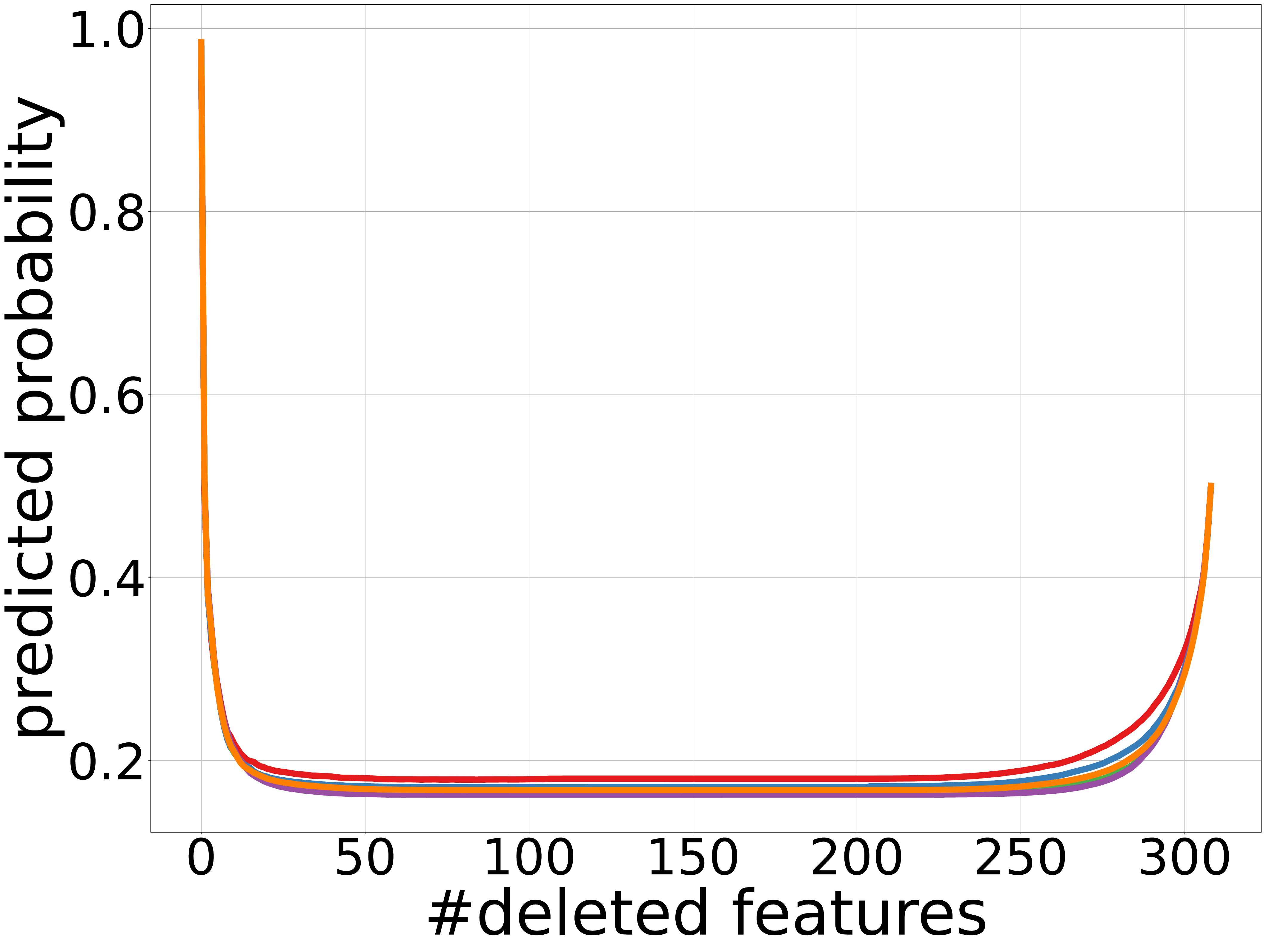} &
			\includegraphics[width=0.3\linewidth]{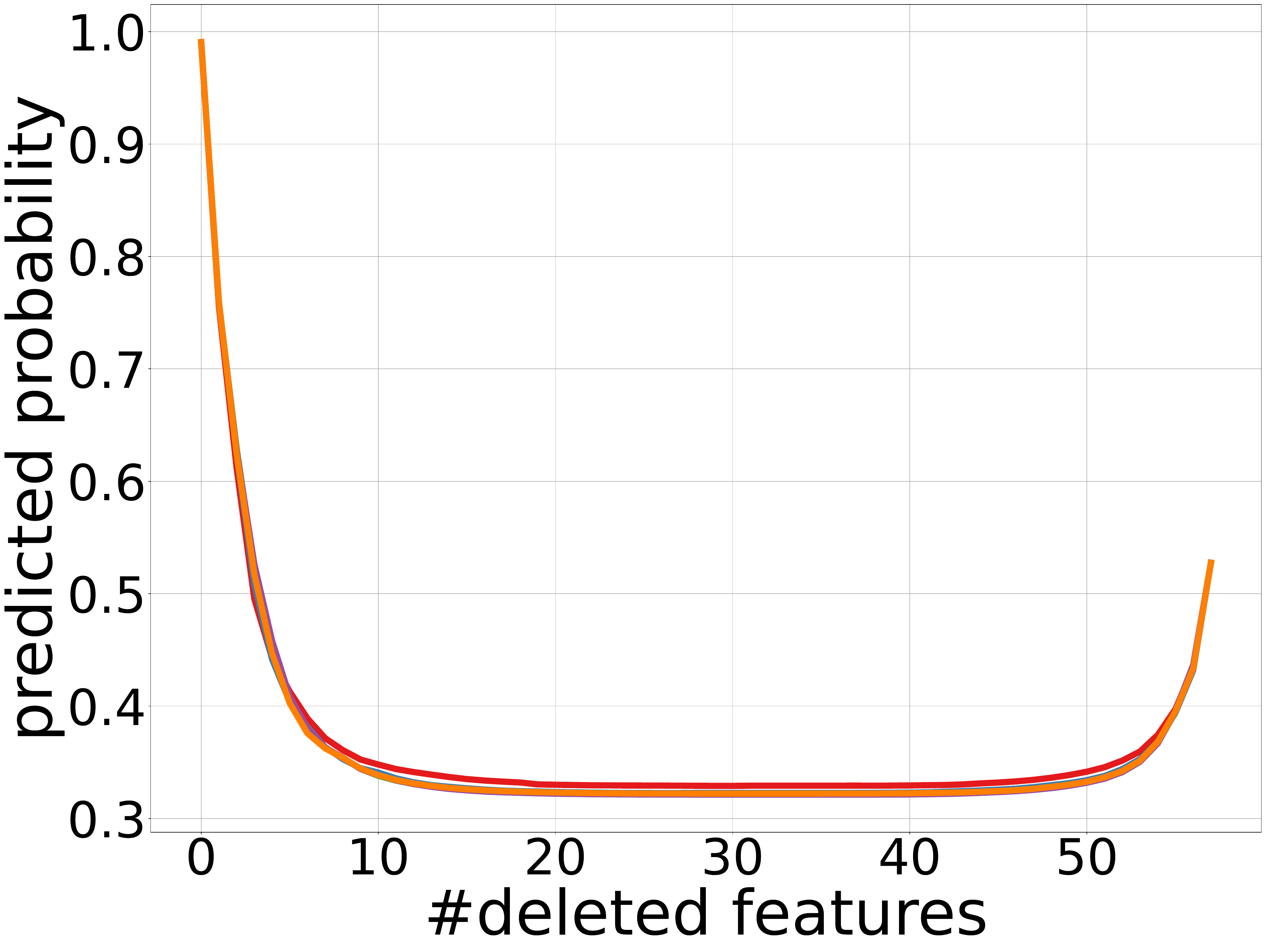} \\
			\phantom{a}MinibooNE ($D=20$) & \phantom{aaaa}philippine ($D=15$) & \phantom{aaaa}spambase ($D=15$)
		\end{tabular}
		\caption{Results of TreeGrad-Ranker with \textbf{GA}. All trained models are \textbf{decision trees}. The first row shows the insertion curves, while the second presents the deletion curves. A higher insertion curve or a lower deletion curve indicates better performance. The output of each $f(\mathbf{x})$ is set to be the probability of \textbf{its predicted class}.}
		\label{fig:cla-pre-second-T-gd}
	\end{figure*}
	
	\begin{figure*}[t]
		\centering
		\begin{tabular}{ccc}
			\multicolumn{3}{c}{\includegraphics[width=0.9\linewidth]{legend_GA.pdf}} \\
			\includegraphics[width=0.3\linewidth]{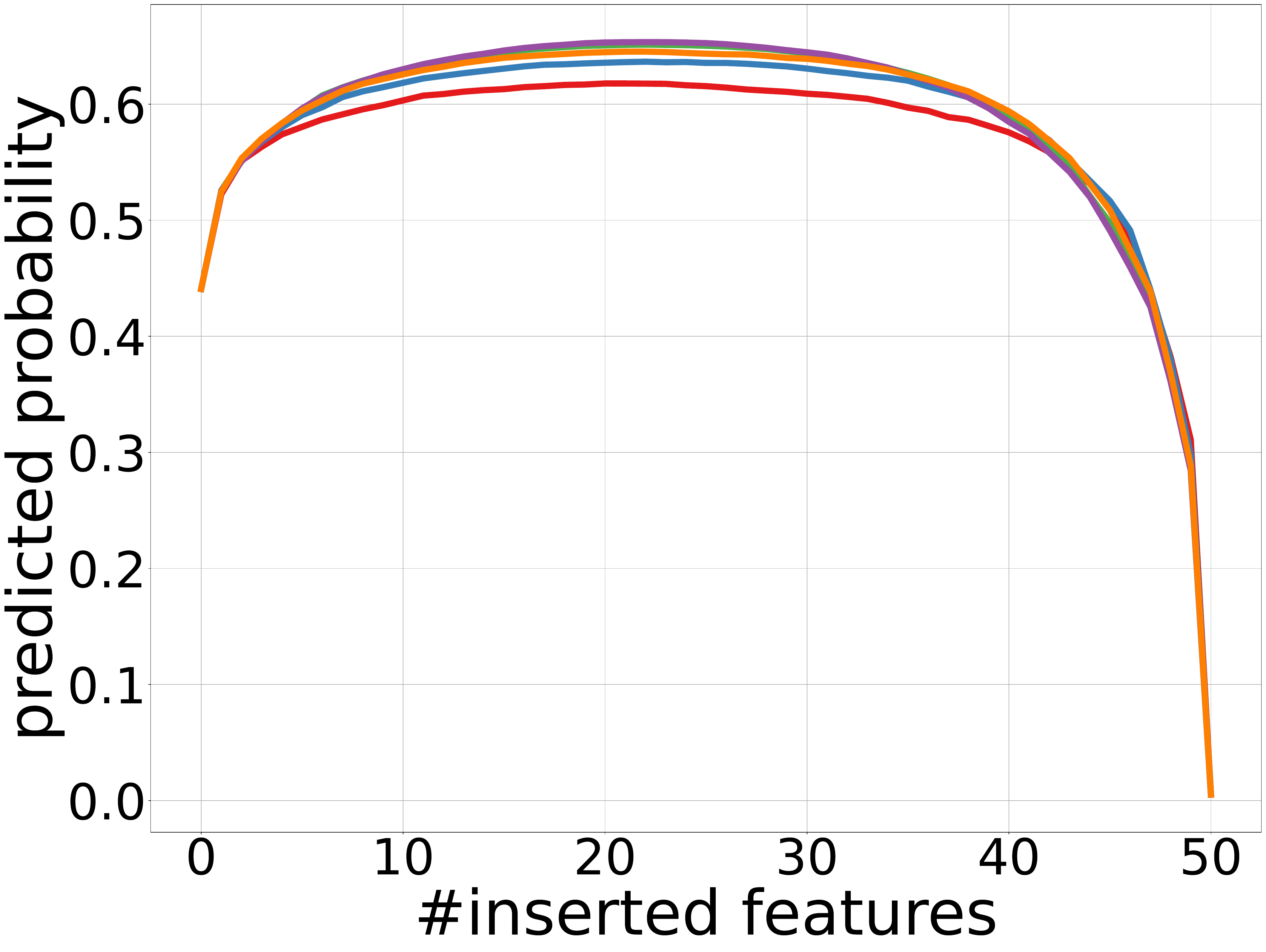} & \includegraphics[width=0.3\linewidth]{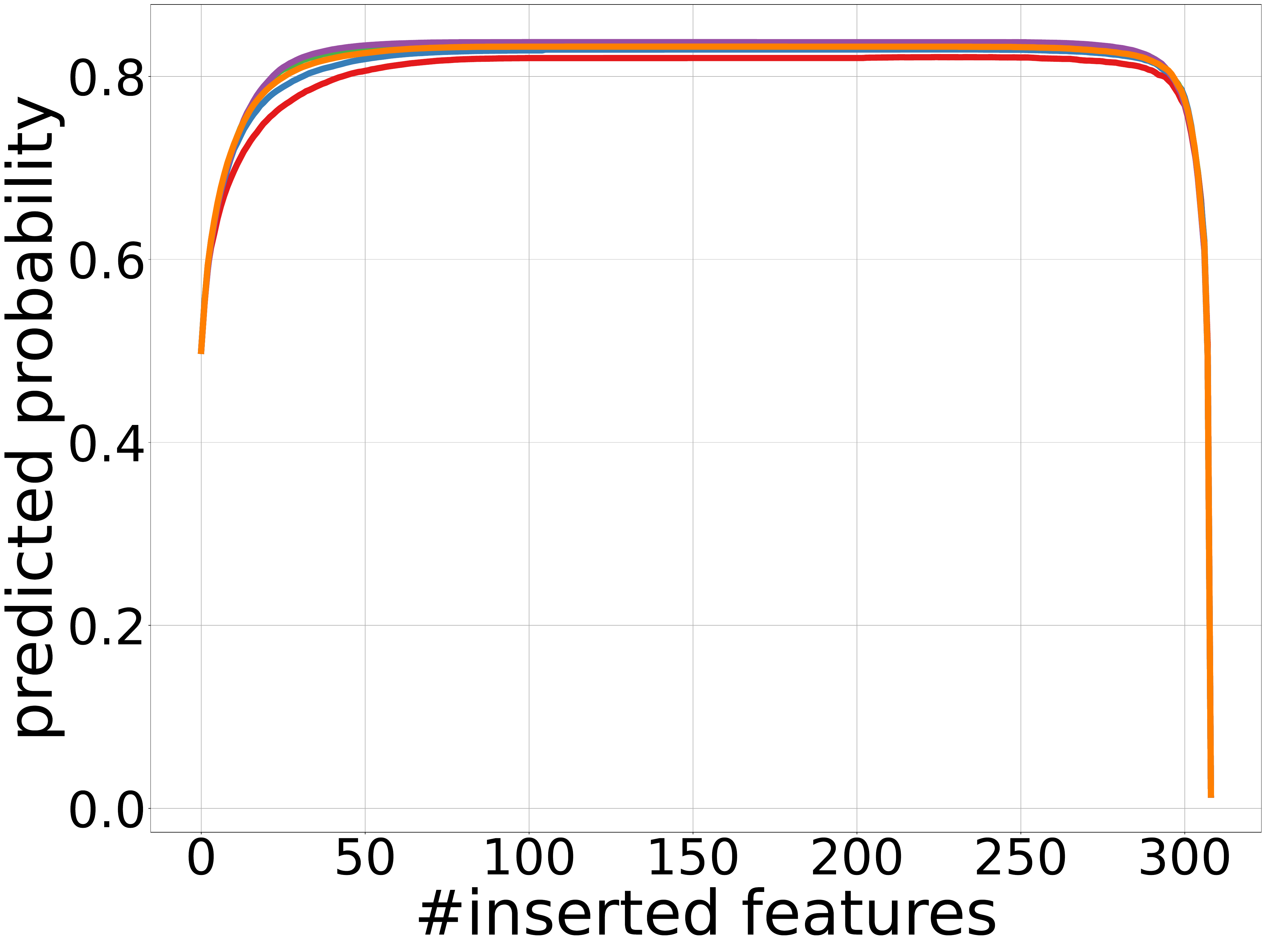} &
			\includegraphics[width=0.3\linewidth]{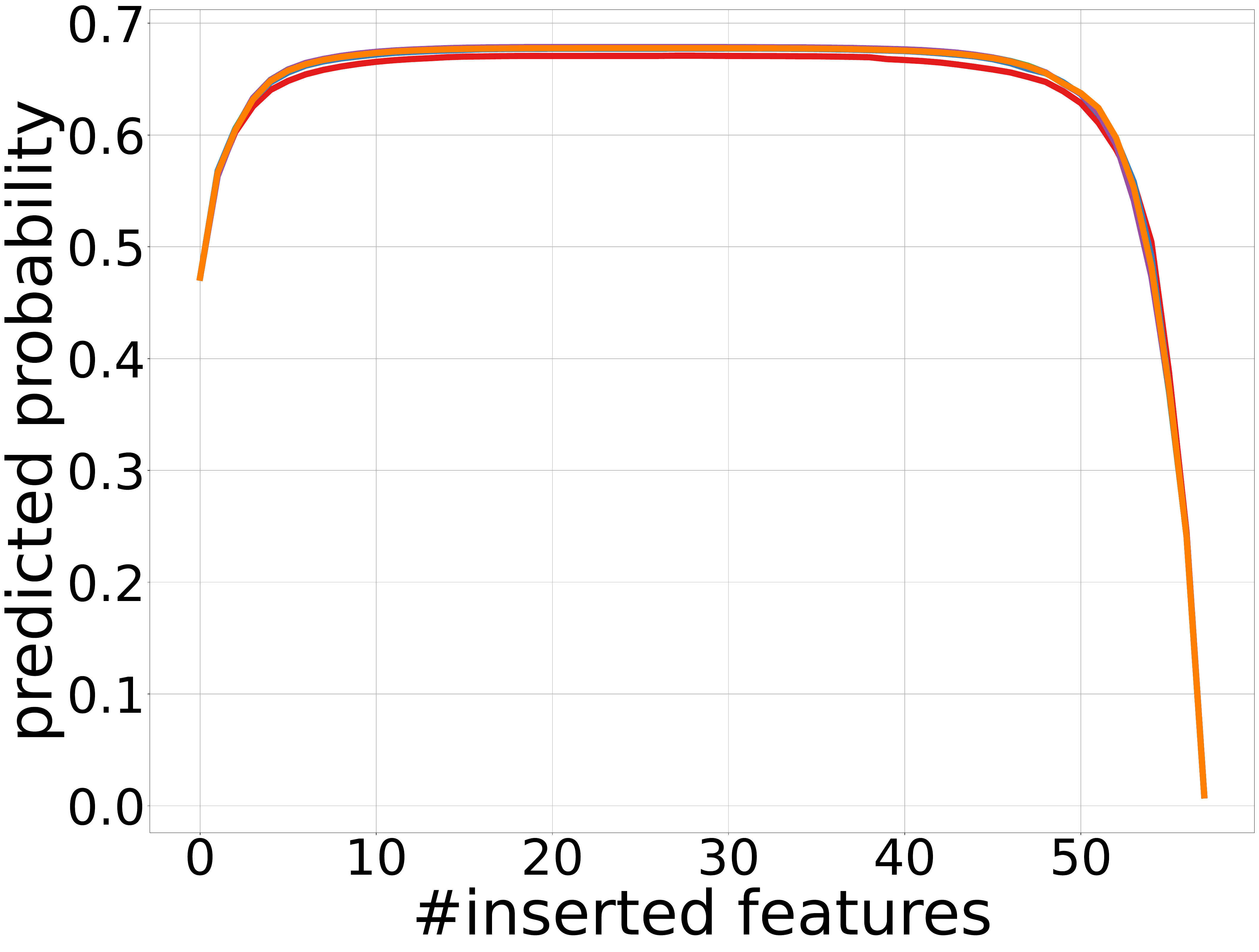} \\
			\includegraphics[width=0.3\linewidth]{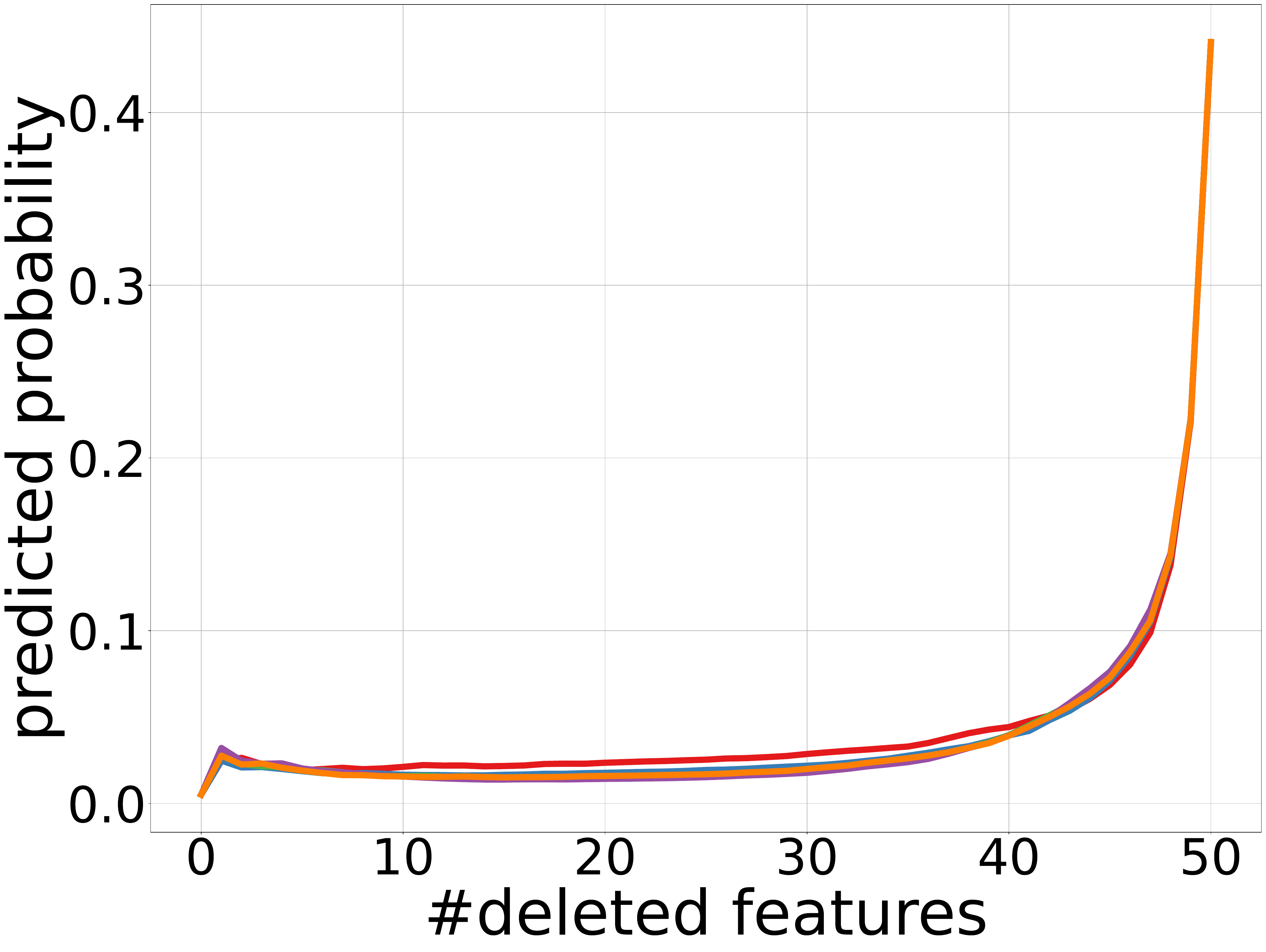} & \includegraphics[width=0.3\linewidth]{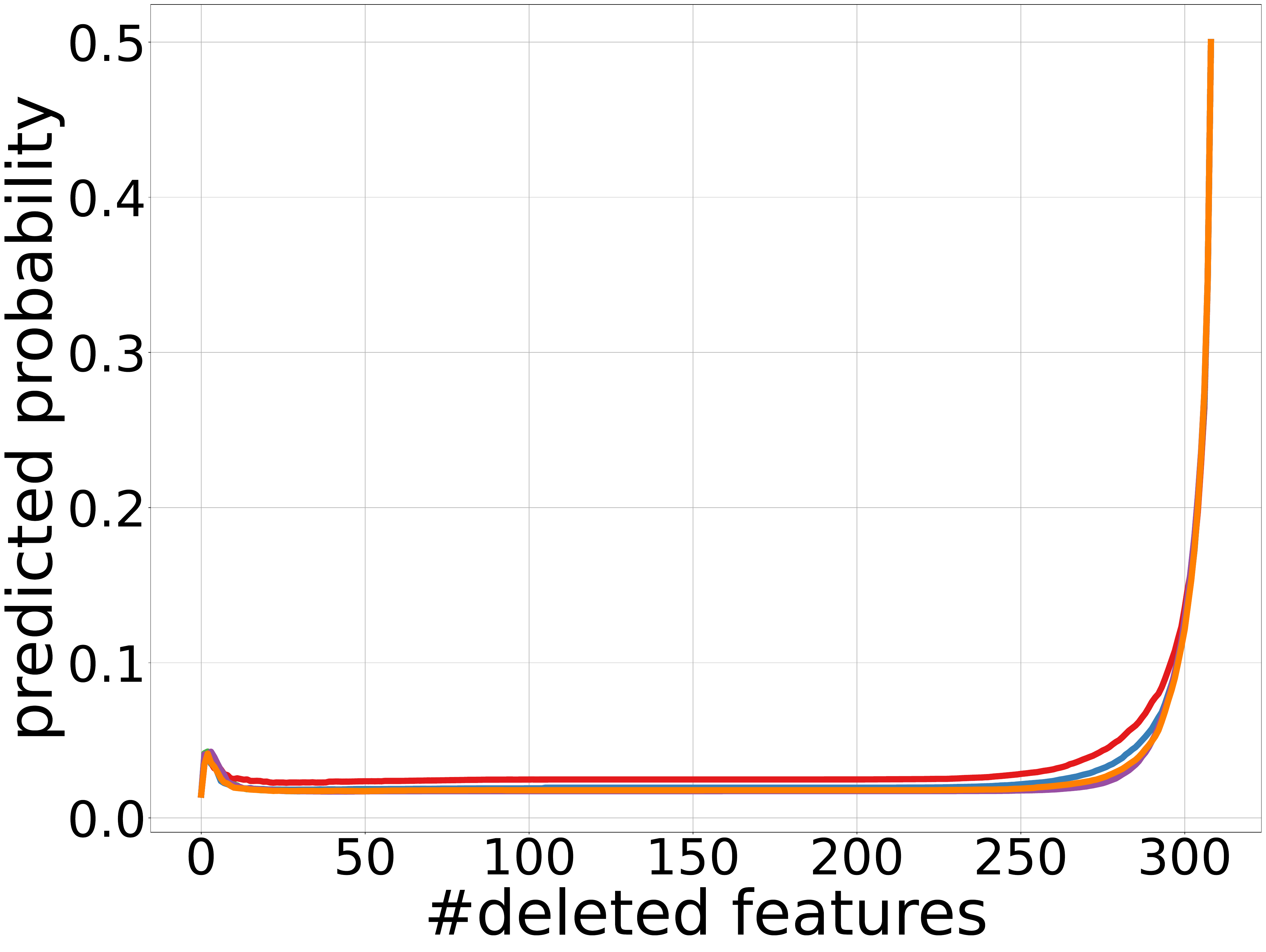} &
			\includegraphics[width=0.3\linewidth]{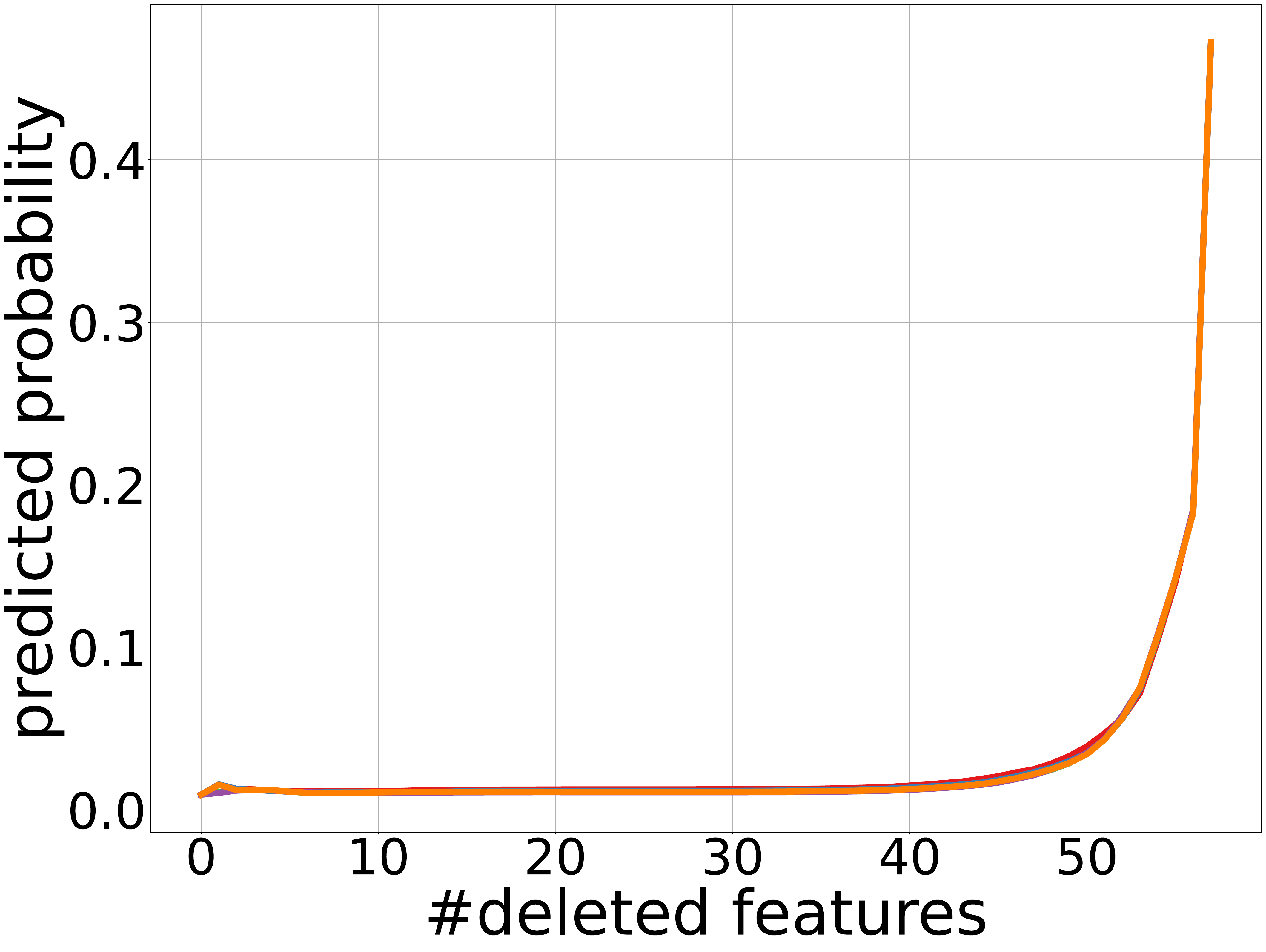} \\
			\phantom{a}MinibooNE ($D=20$) & \phantom{aaaa}philippine ($D=15$) & \phantom{aaaa}spambase ($D=15$)
		\end{tabular}
		\caption{Results of TreeGrad-Ranker with \textbf{GA}. All trained models are \textbf{decision trees}. The first row shows the insertion curves, while the second presents the deletion curves. A higher insertion curve or a lower deletion curve indicates better performance. The output of each $f(\mathbf{x})$ is set to be the probability of \textbf{a randomly sampled non-predicted class}.}
		\label{fig:cla-other-second-T-gd}
	\end{figure*}

	\begin{figure*}[t]
		\centering
		\begin{tabular}{ccc}
			\multicolumn{3}{c}{\includegraphics[width=0.9\linewidth]{legend_GA.pdf}} \\
			\includegraphics[width=0.3\linewidth]{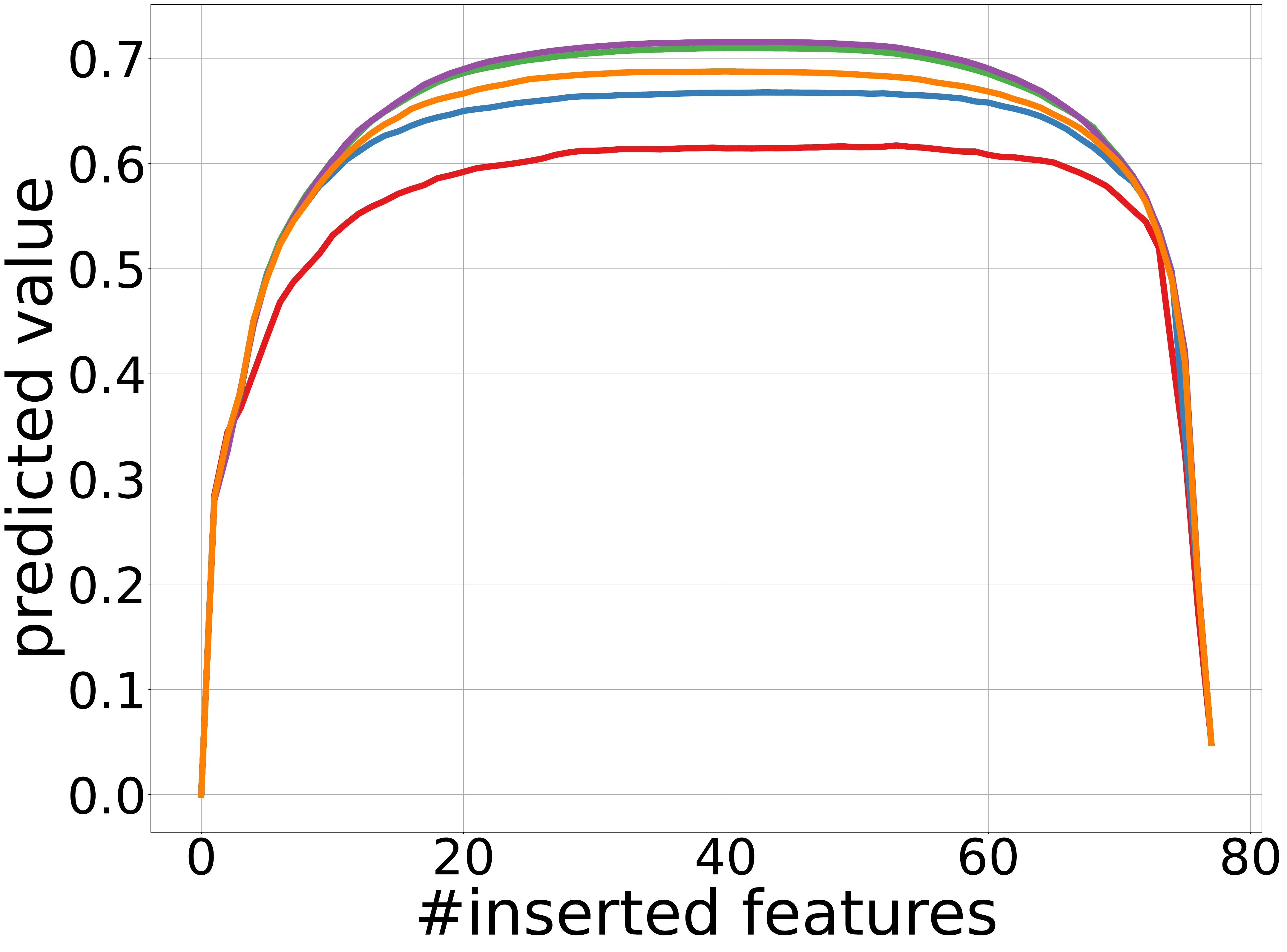} & \includegraphics[width=0.3\linewidth]{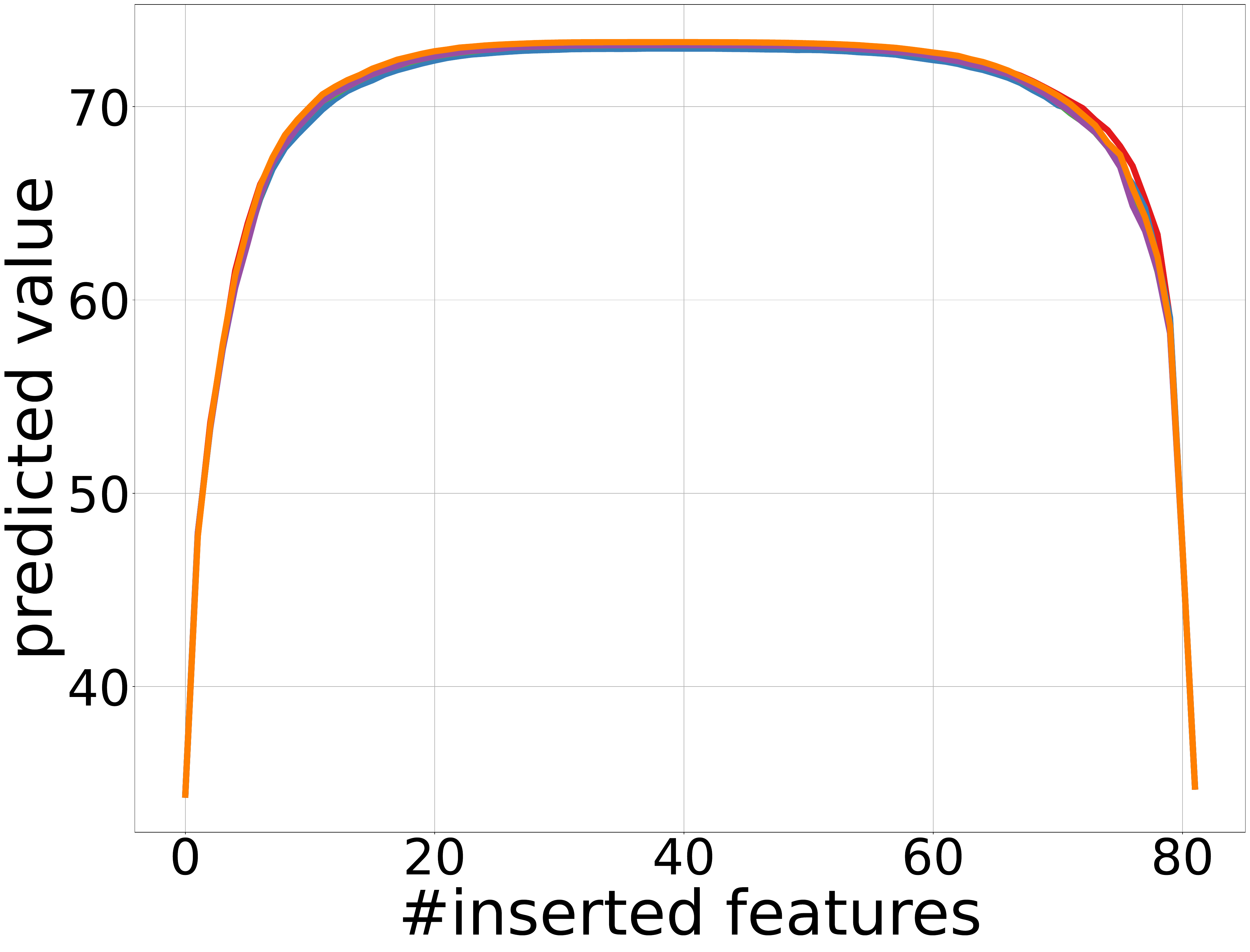} &
			\includegraphics[width=0.3\linewidth]{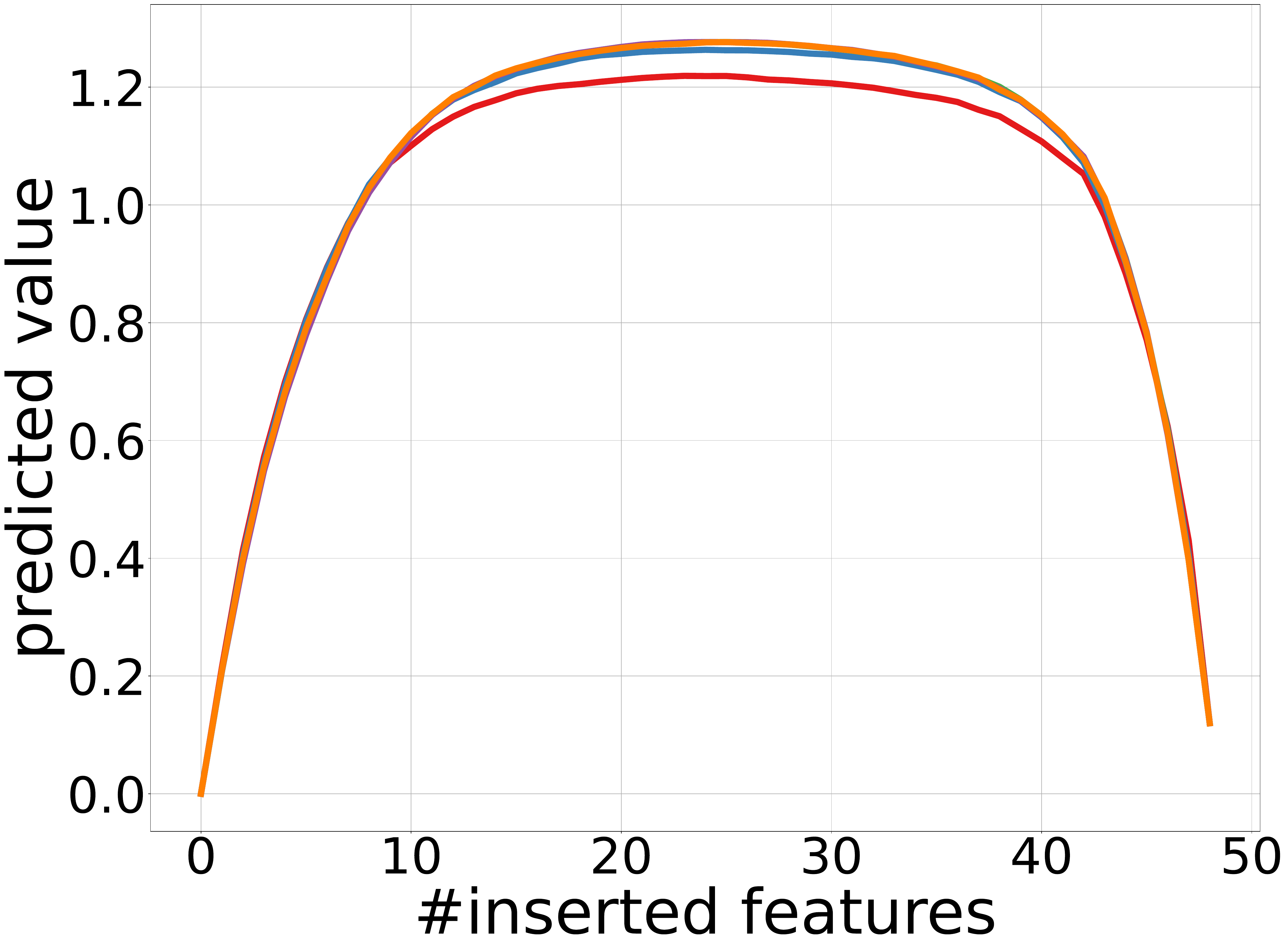} \\
			\includegraphics[width=0.3\linewidth]{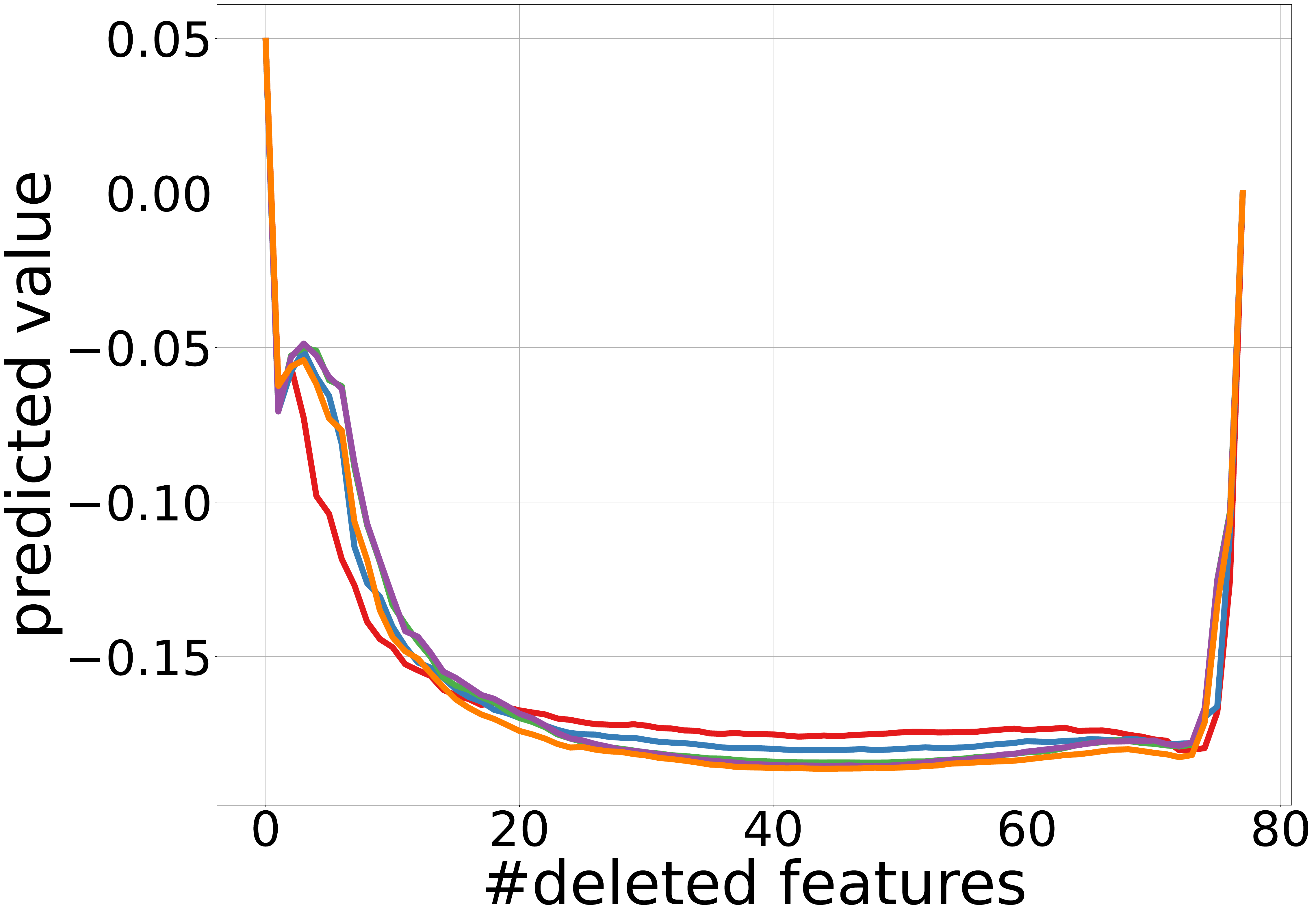} & \includegraphics[width=0.3\linewidth]{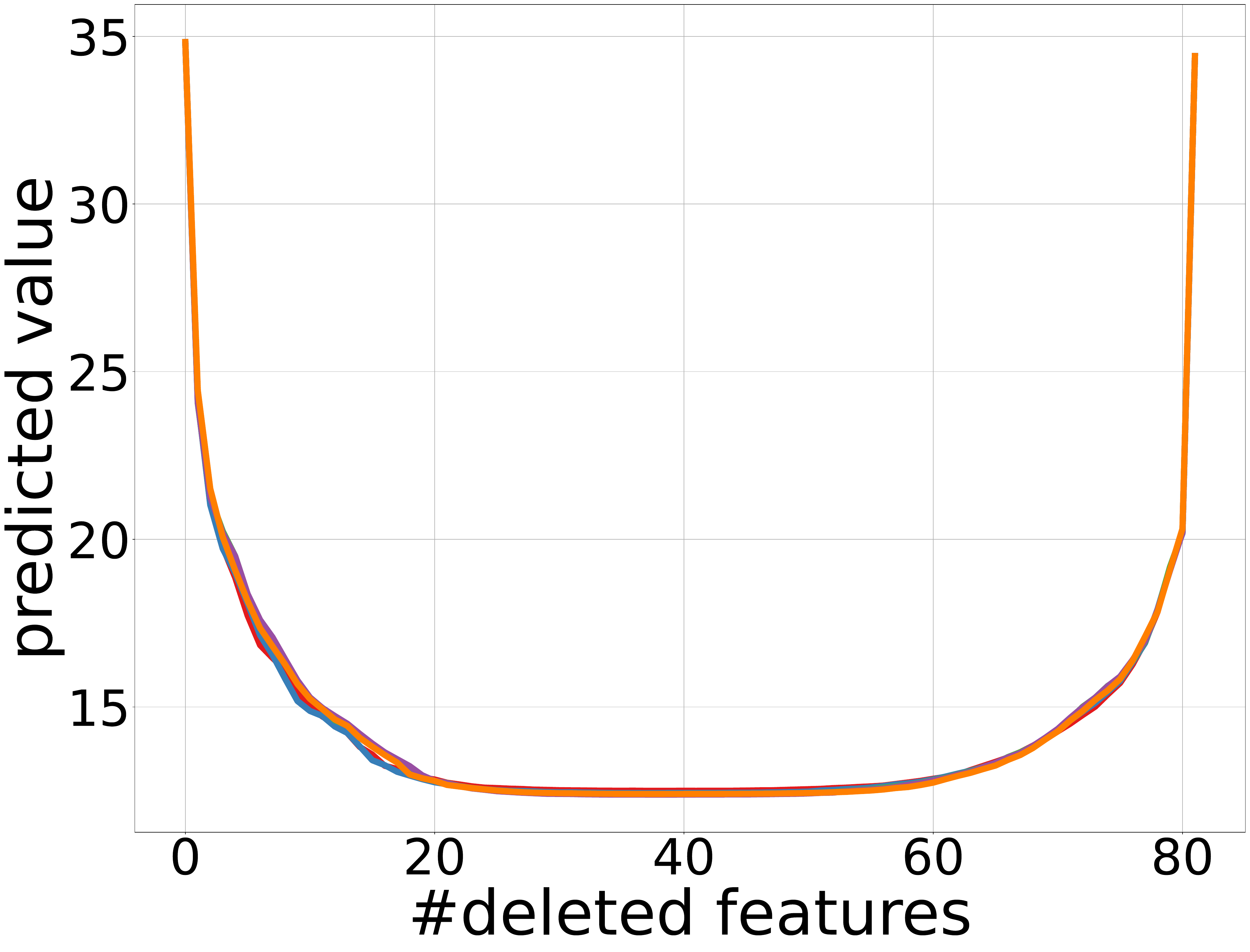} &
			\includegraphics[width=0.3\linewidth]{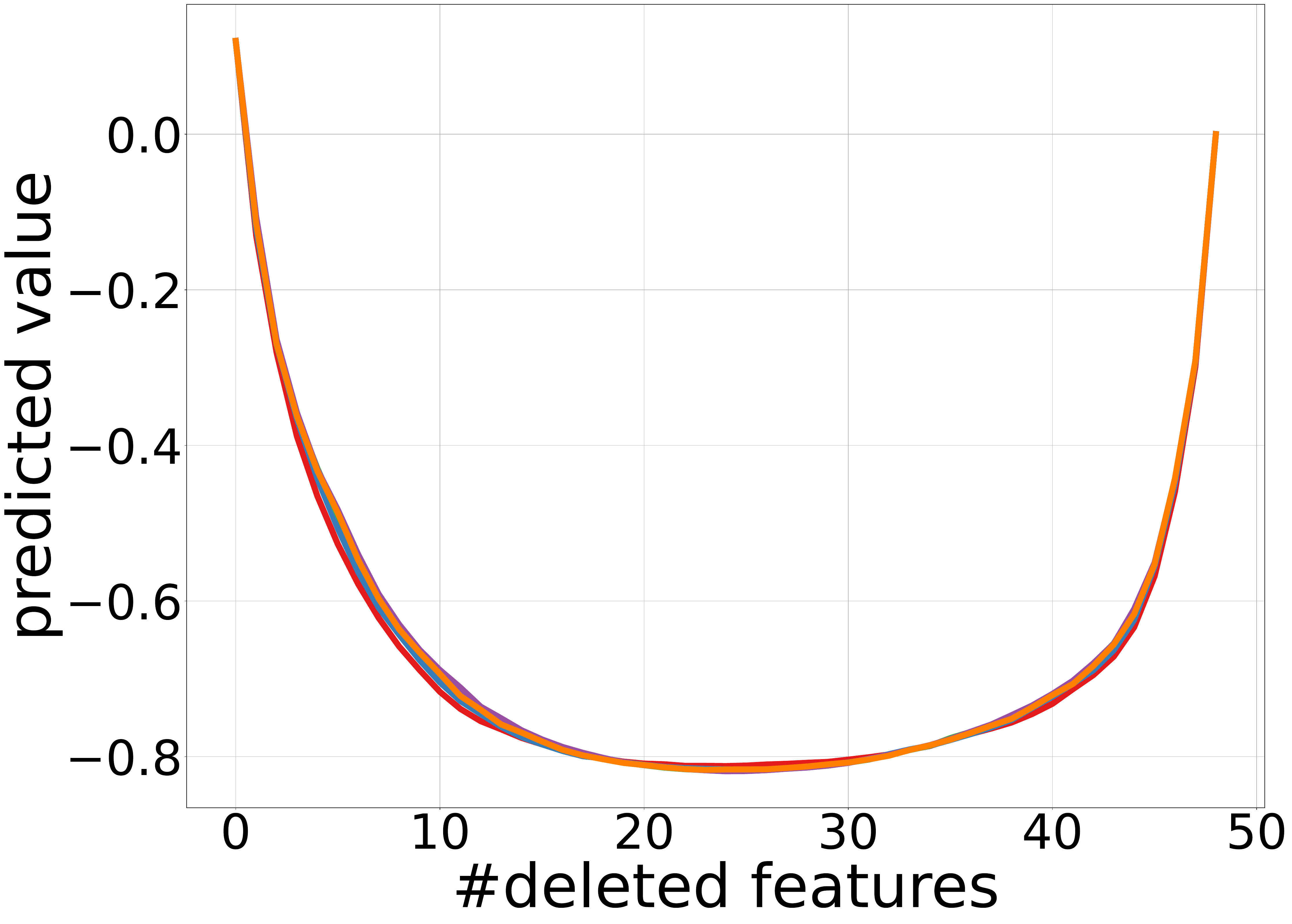} \\
			\phantom{aaaaa}BT ($D=50$) & \phantom{aaaa}superconduct ($D=10$) & \phantom{aaaa}wave\_energy ($D=30$)
		\end{tabular}
		\caption{Results of TreeGrad-Ranker with \textbf{GA}. All trained models are \textbf{decision trees}. The first row shows the insertion curves, while the second presents the deletion curves. A higher insertion curve or a lower deletion curve indicates better performance. The datasets used are regression tasks.}
		\label{fig:regression-T-gd}
	\end{figure*}

	\begin{figure*}[t]
		\centering
		\begin{tabular}{ccc}
			\multicolumn{3}{c}{\includegraphics[width=0.9\linewidth]{legend_ADAM.pdf}} \\
			\includegraphics[width=0.3\linewidth]{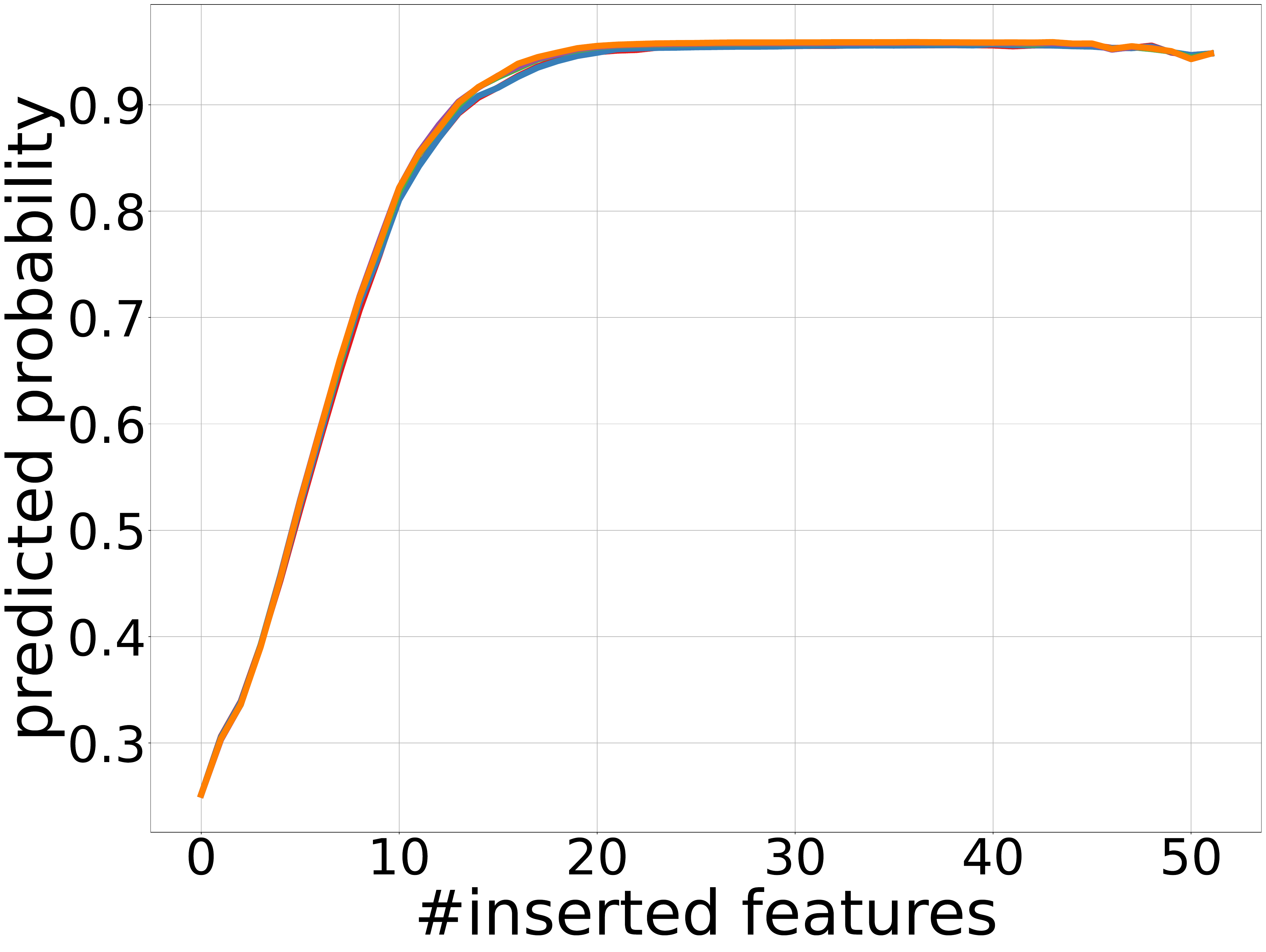} & \includegraphics[width=0.3\linewidth]{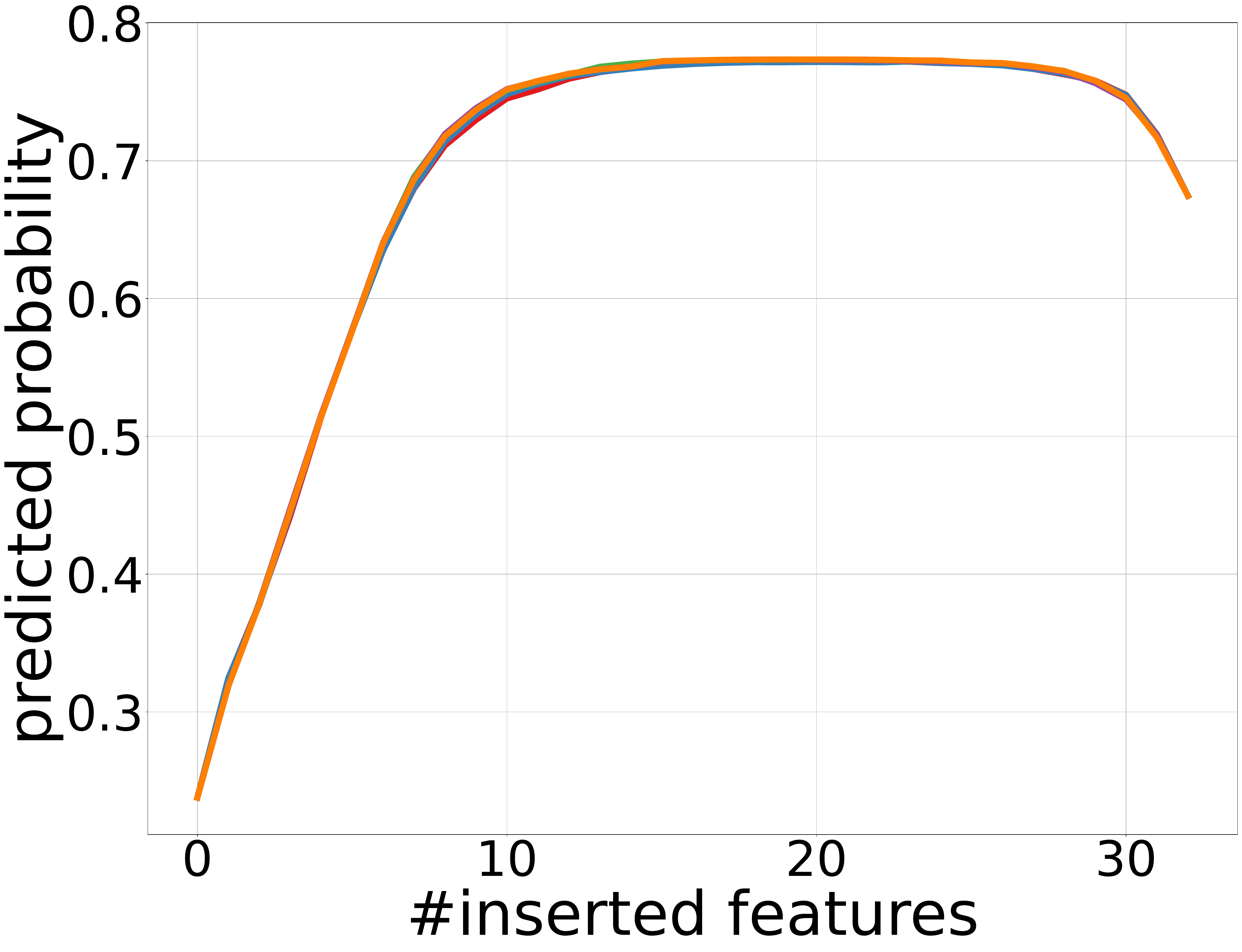} &
			\includegraphics[width=0.3\linewidth]{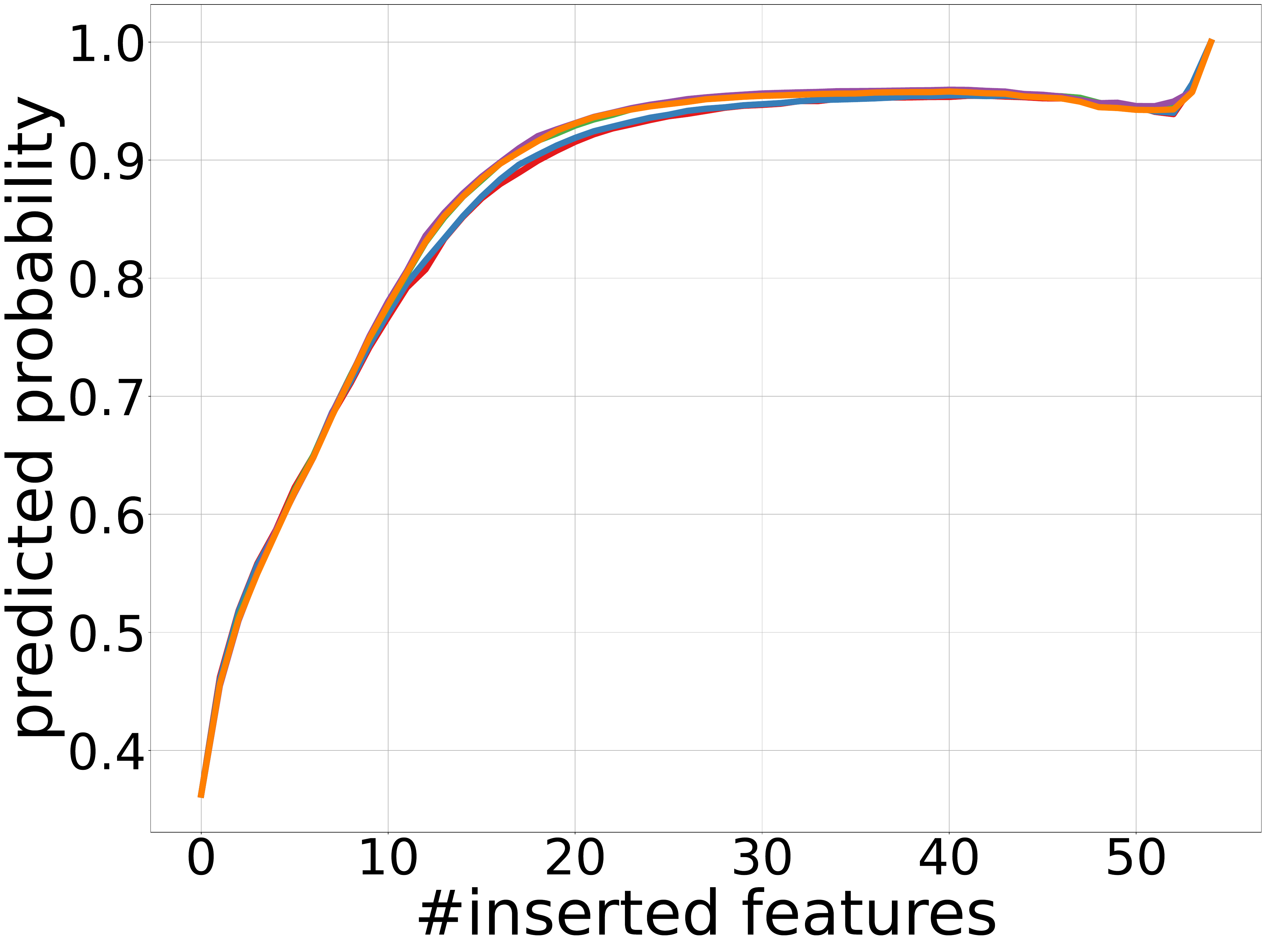} \\
			\includegraphics[width=0.3\linewidth]{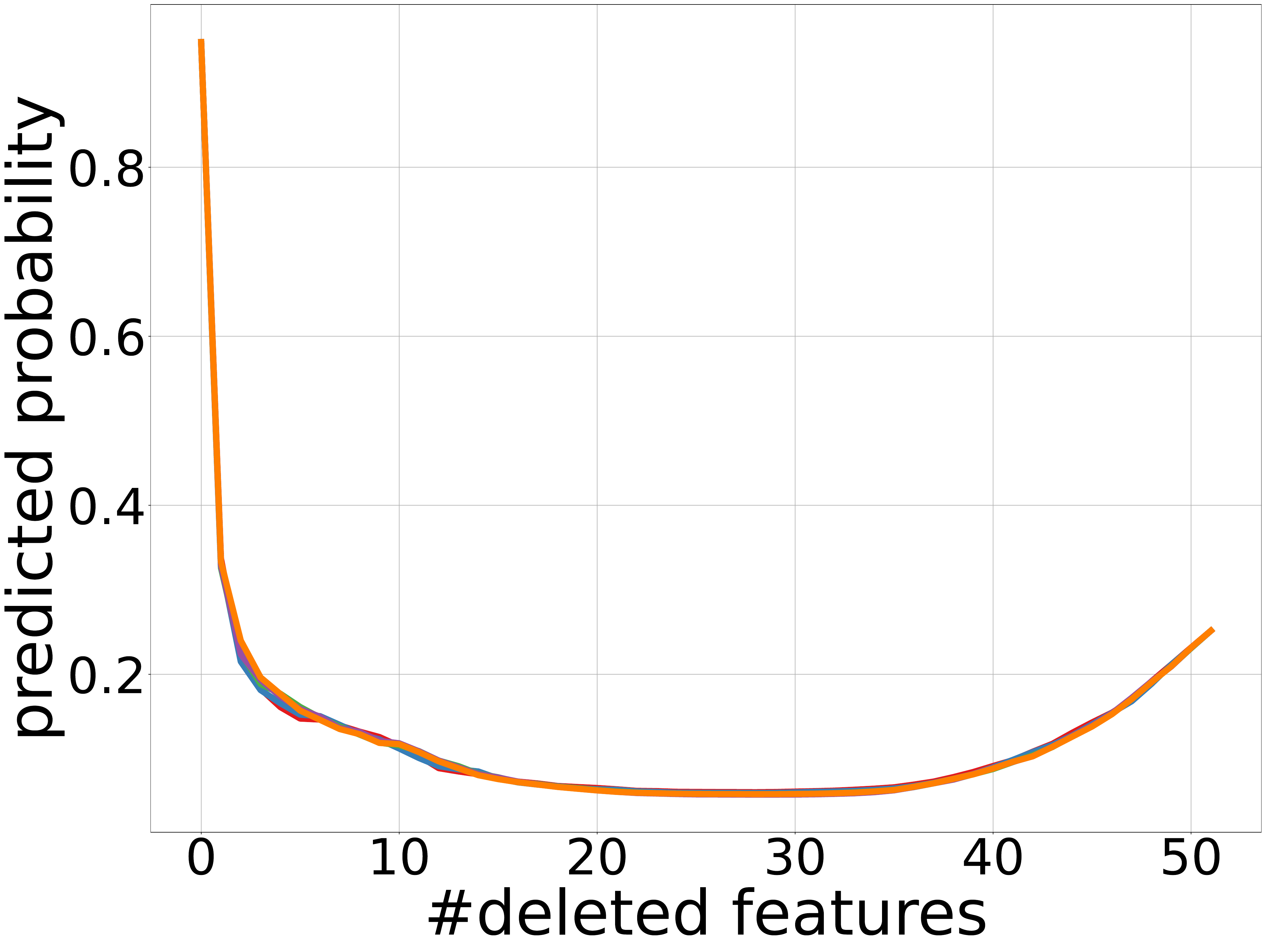} & \includegraphics[width=0.3\linewidth]{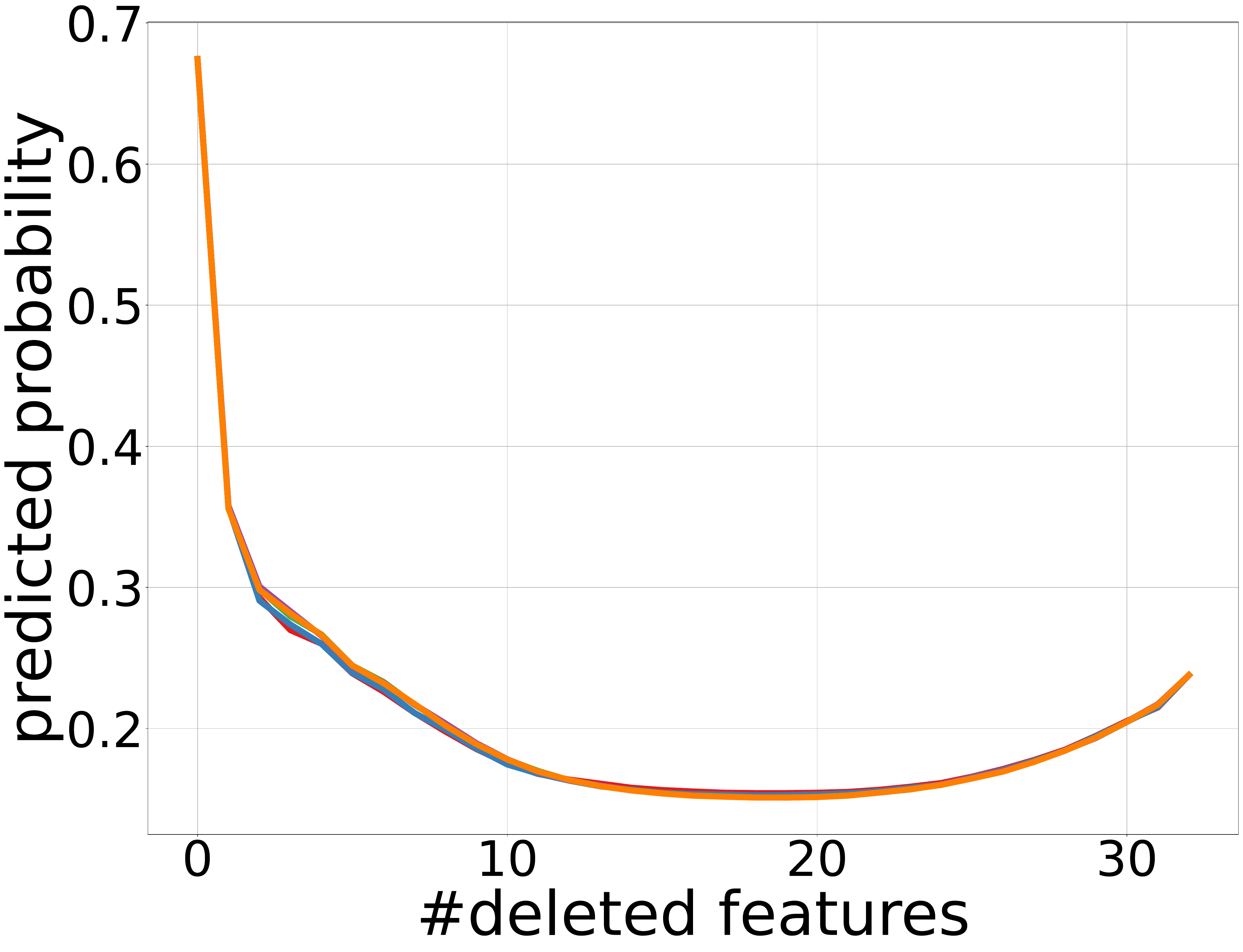} &
			\includegraphics[width=0.3\linewidth]{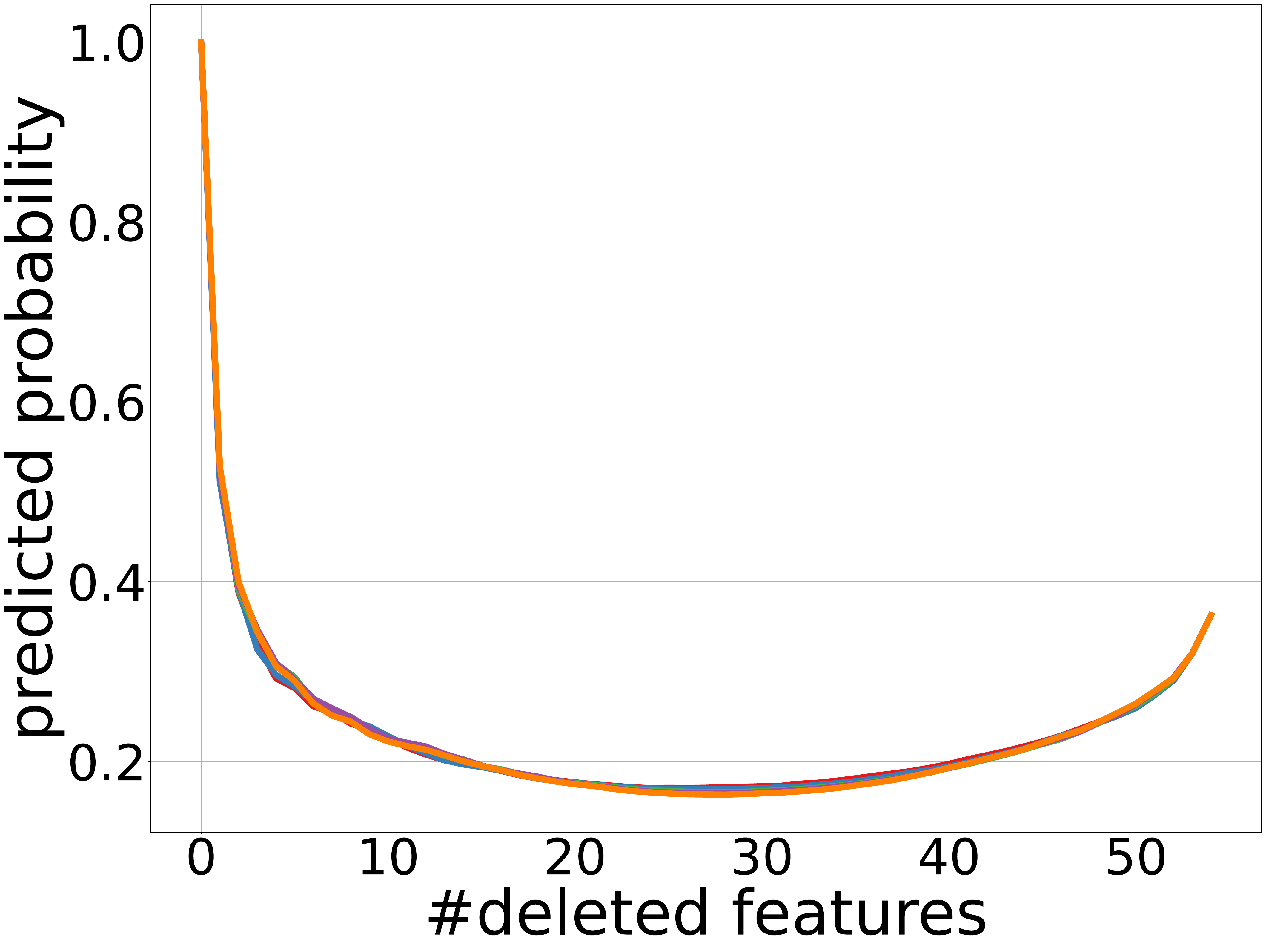} \\
			\phantom{aaaa}FOTP ($D=20$) & \phantom{aaaa}GPSP ($D=10$) & \phantom{aaaa}jannis ($D=40$)
		\end{tabular}
		\caption{Results of TreeGrad-Ranker with \textbf{ADAM}. All trained models are \textbf{decision trees}. The first row shows the insertion curves, while the second presents the deletion curves. A higher insertion curve or a lower deletion curve indicates better performance. 
		The output of each $f(\mathbf{x})$ is set to be the probability of \textbf{its predicted class}. }
		\label{fig:cla-pre-first-T-adam}
	\end{figure*}

	\begin{figure*}[t]
		\centering
		\begin{tabular}{ccc}
			\multicolumn{3}{c}{\includegraphics[width=0.9\linewidth]{legend_ADAM.pdf}} \\
			\includegraphics[width=0.3\linewidth]{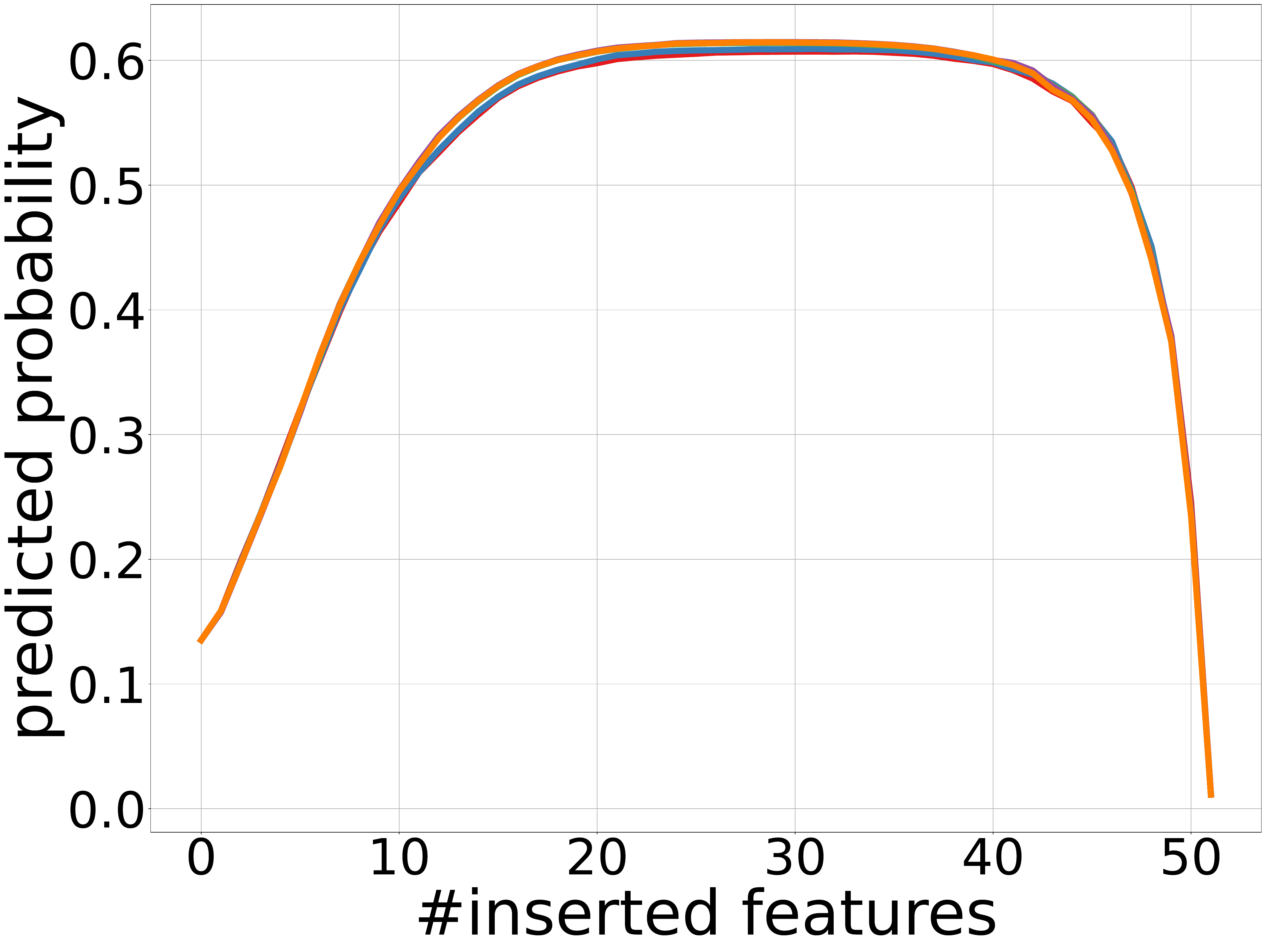} & \includegraphics[width=0.3\linewidth]{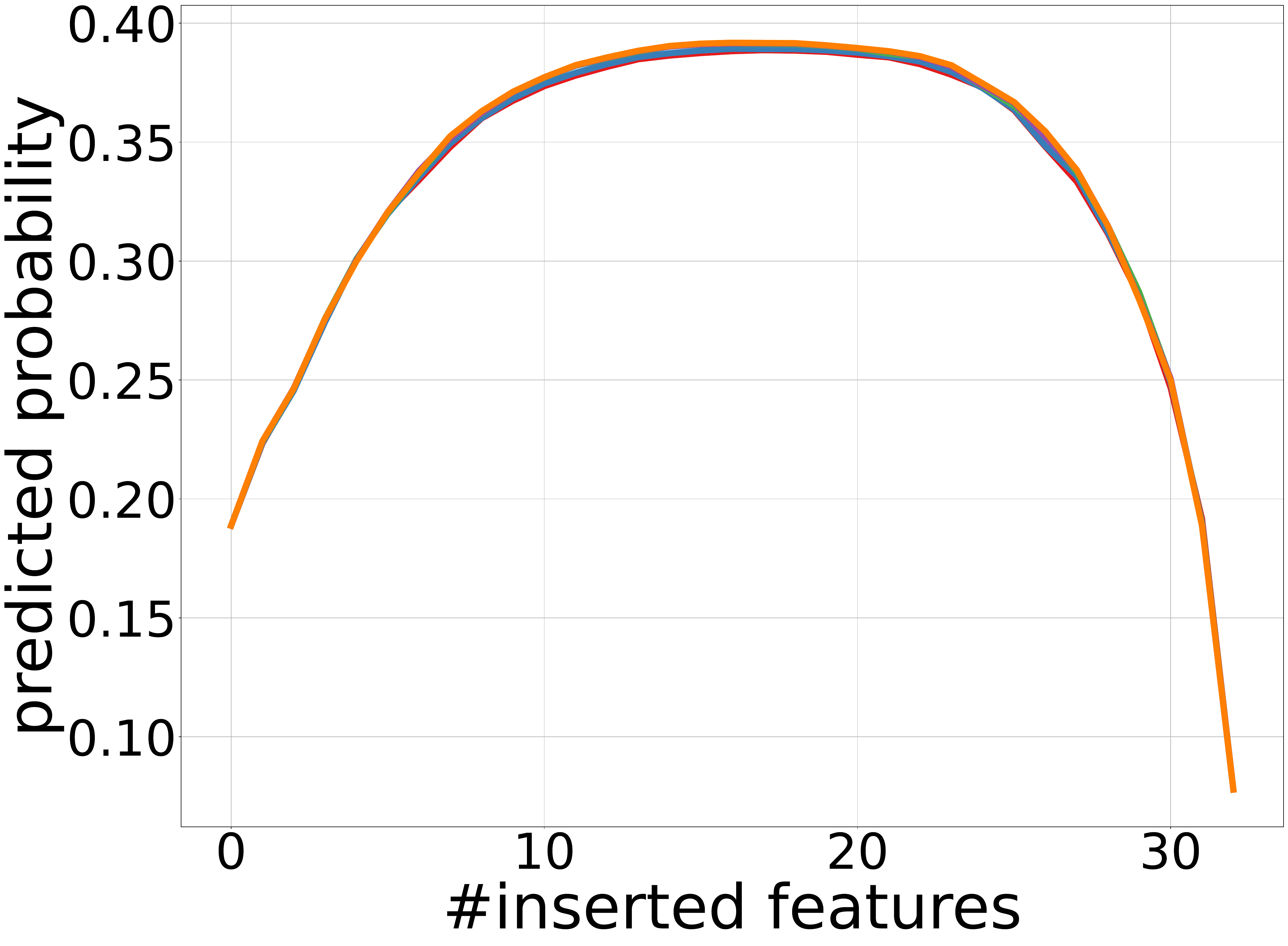} &
			\includegraphics[width=0.3\linewidth]{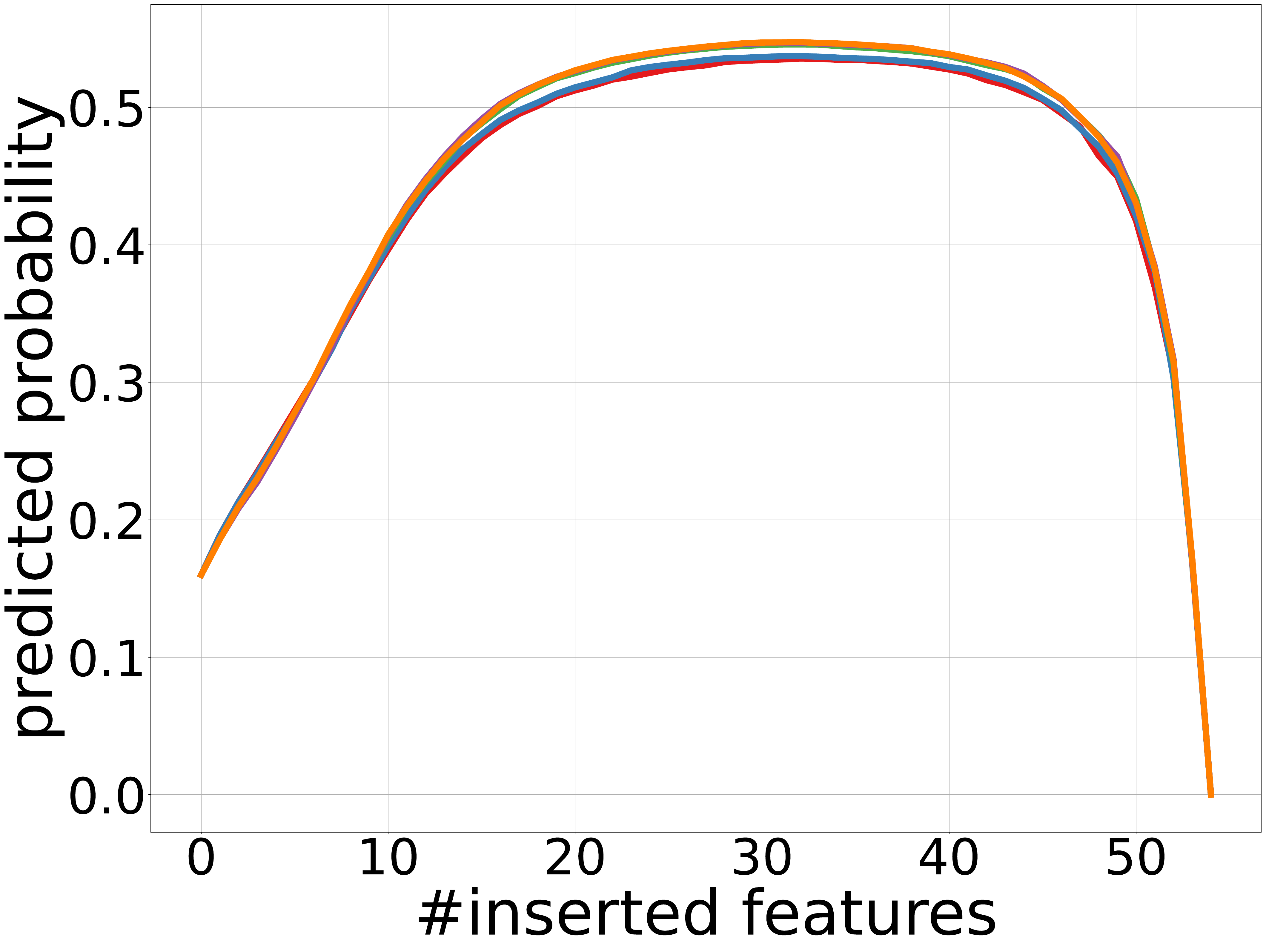} \\
			\includegraphics[width=0.3\linewidth]{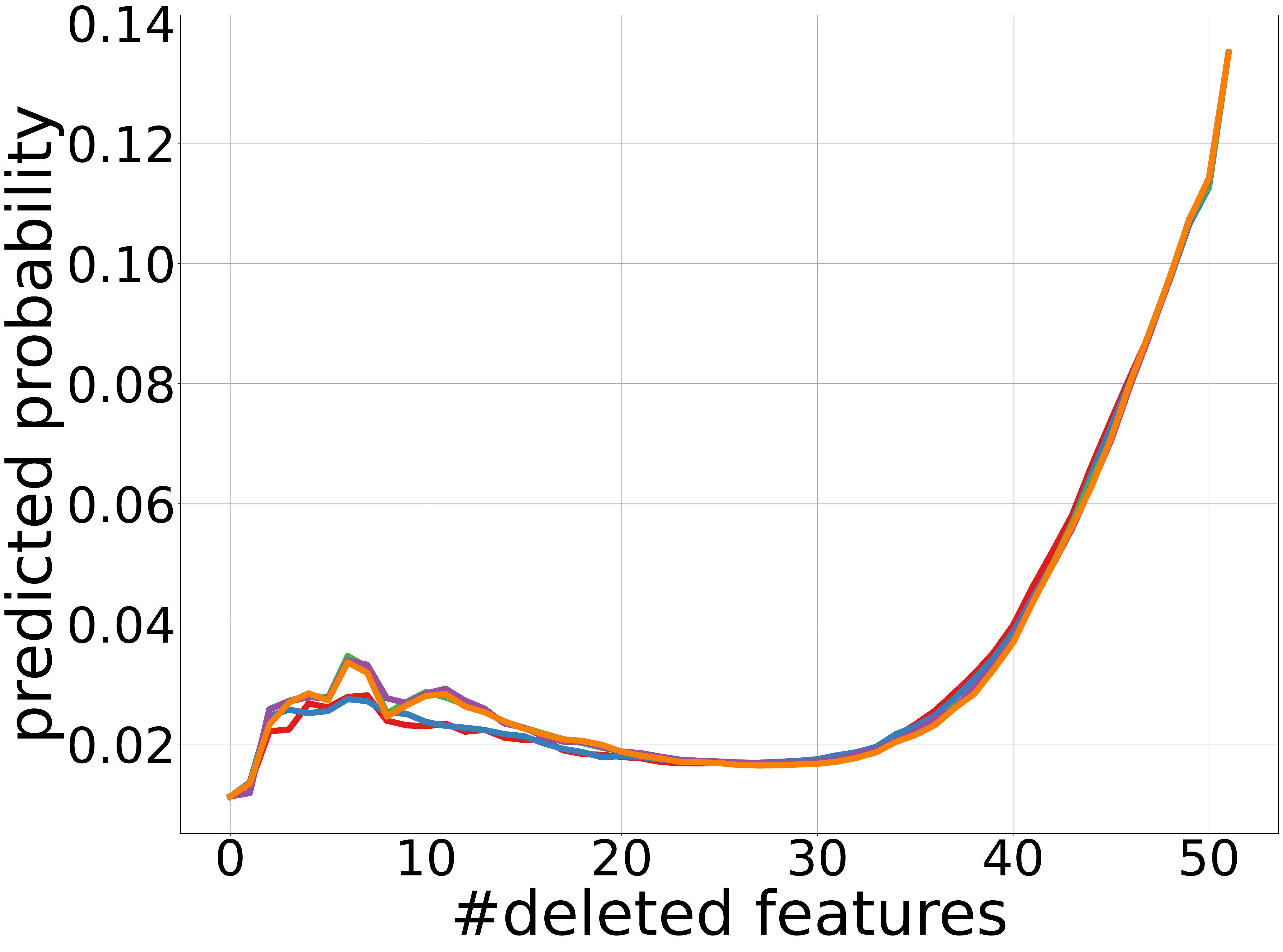} & \includegraphics[width=0.3\linewidth]{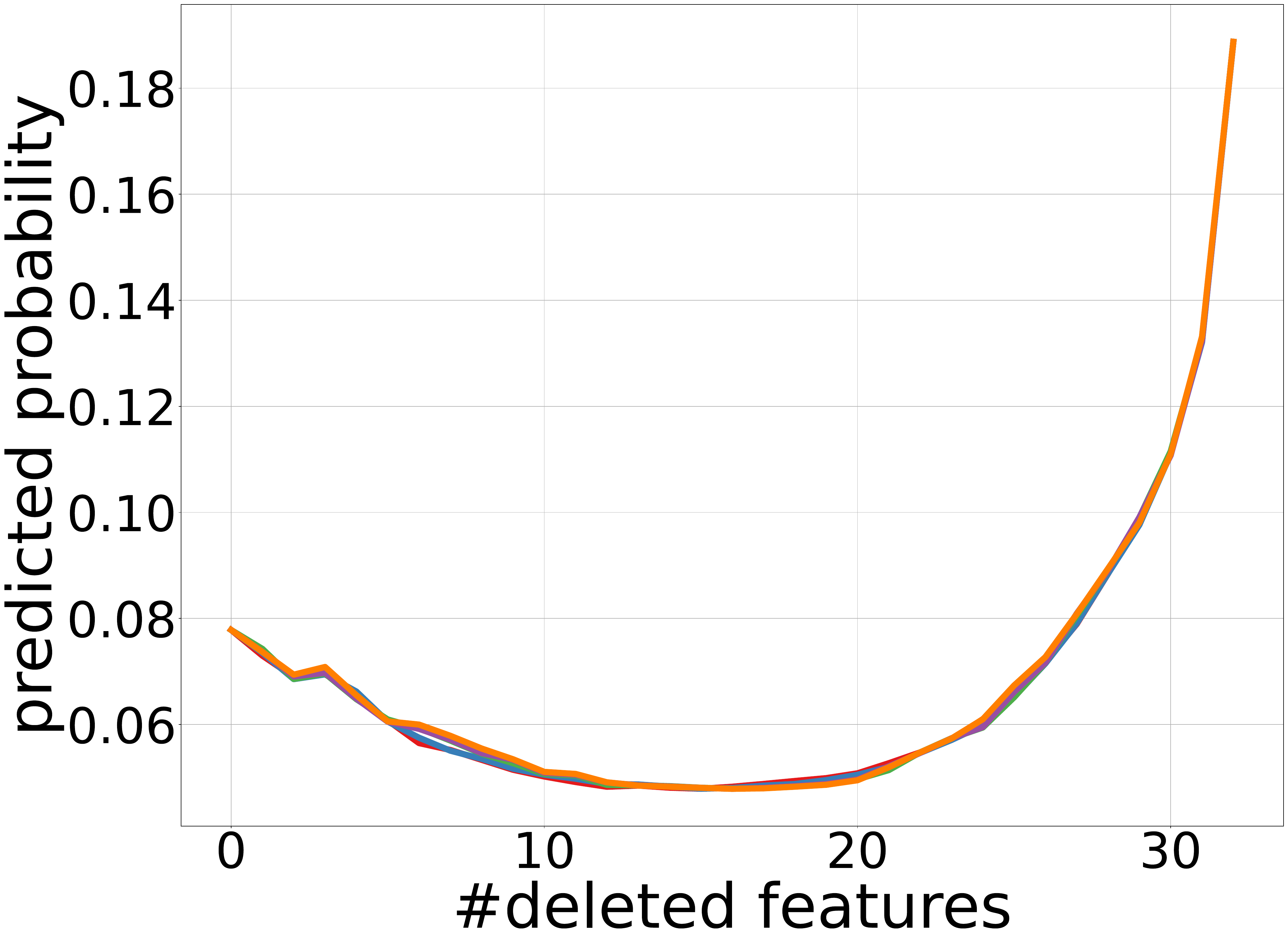} &
			\includegraphics[width=0.3\linewidth]{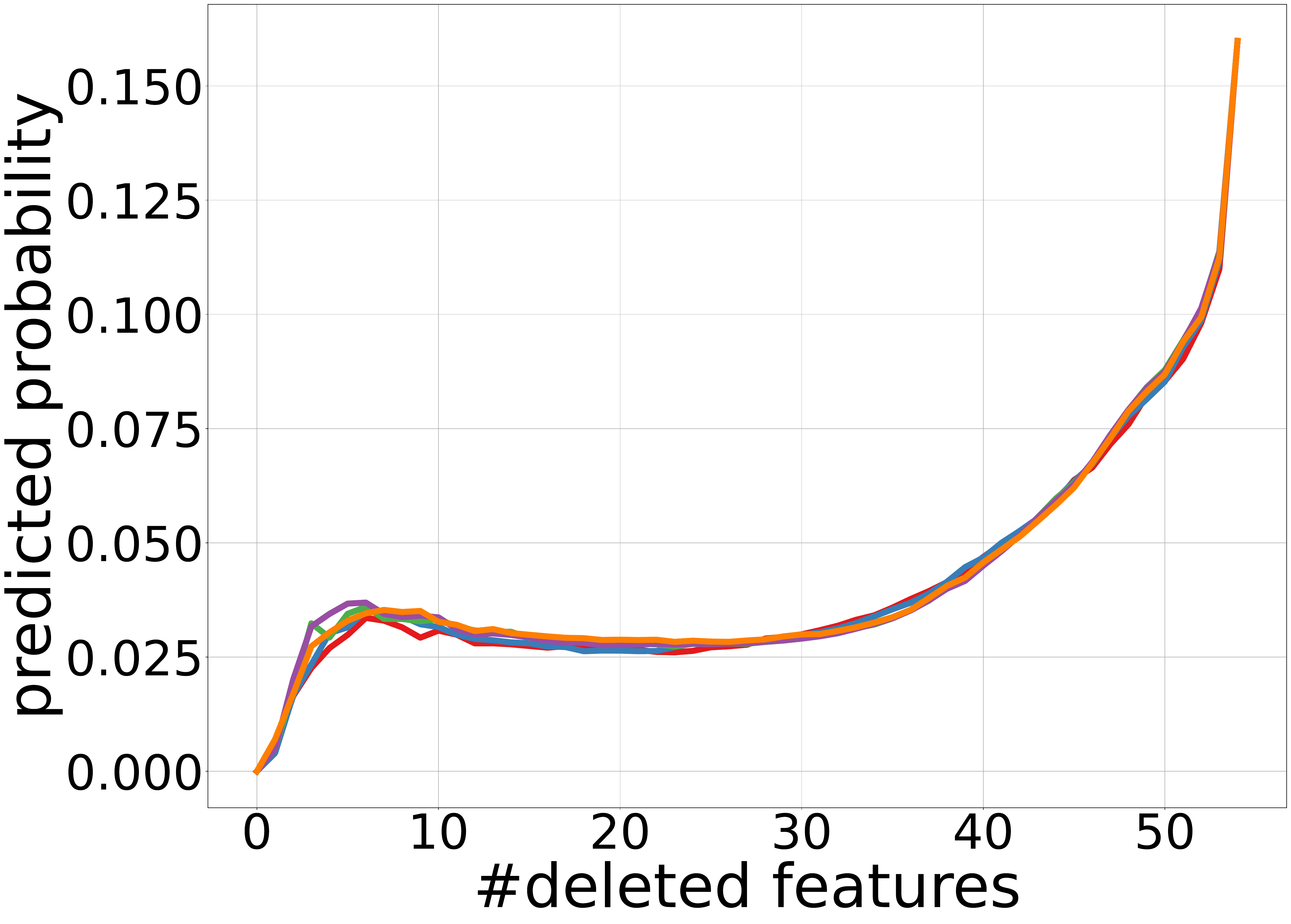} \\
			\phantom{aaaaa}FOTP ($D=20$) & \phantom{aaaaa}GPSP ($D=10$) & \phantom{aaaaa}jannis ($D=40$)
		\end{tabular}
		\caption{Results of TreeGrad-Ranker with \textbf{ADAM}. All trained models are \textbf{decision trees}. The first row shows the insertion curves, while the second presents the deletion curves. A higher insertion curve or a lower deletion curve indicates better performance. 
			The output of each $f(\mathbf{x})$ is set to be the probability of \textbf{a randomly sampled non-predicted class}.}
		\label{fig:cla-other-first-T-adam}
	\end{figure*}
	
	\begin{figure*}[t]
		\centering
		\begin{tabular}{ccc}
			\multicolumn{3}{c}{\includegraphics[width=0.9\linewidth]{legend_ADAM.pdf}} \\
			\includegraphics[width=0.3\linewidth]{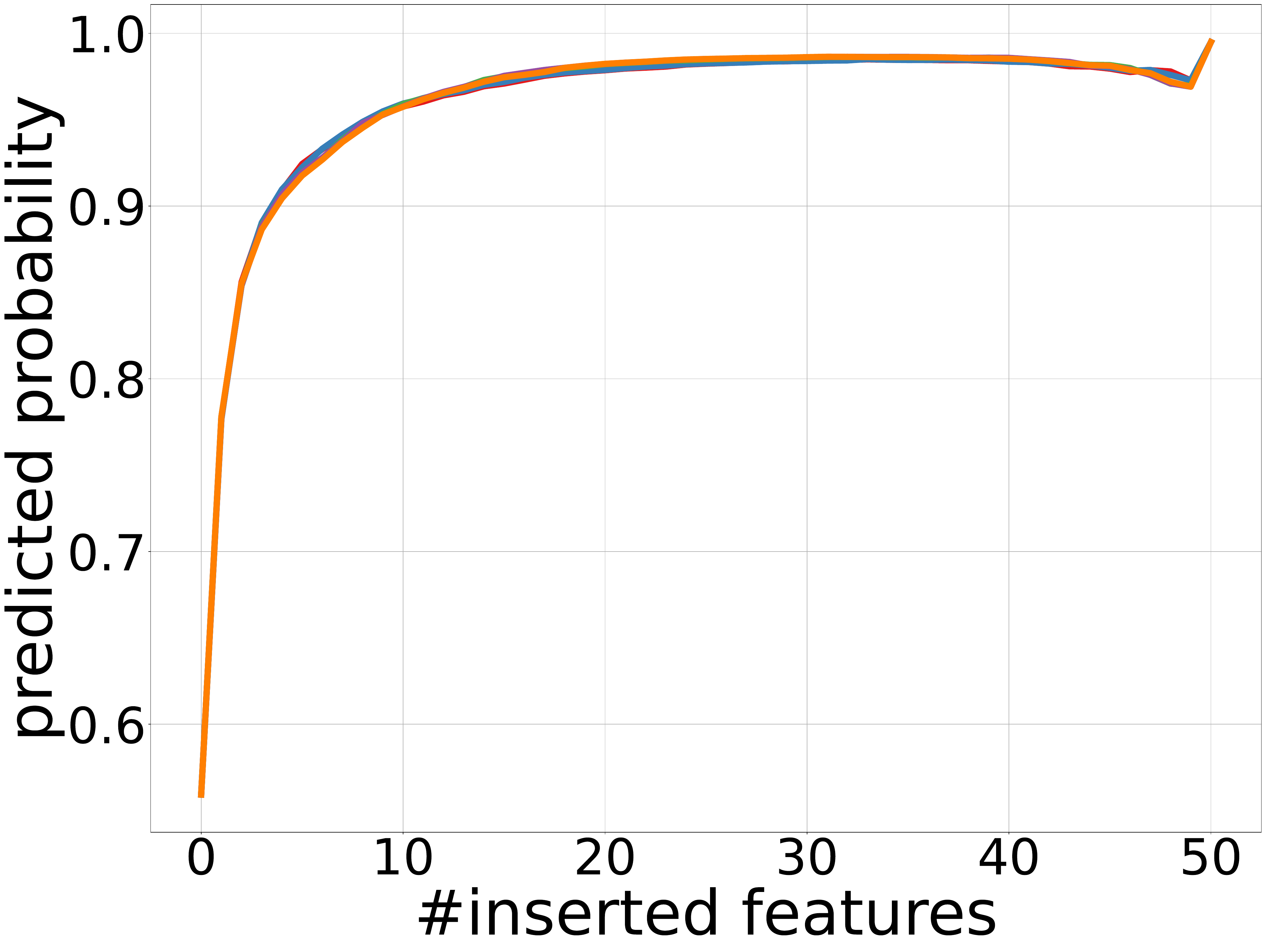} & \includegraphics[width=0.3\linewidth]{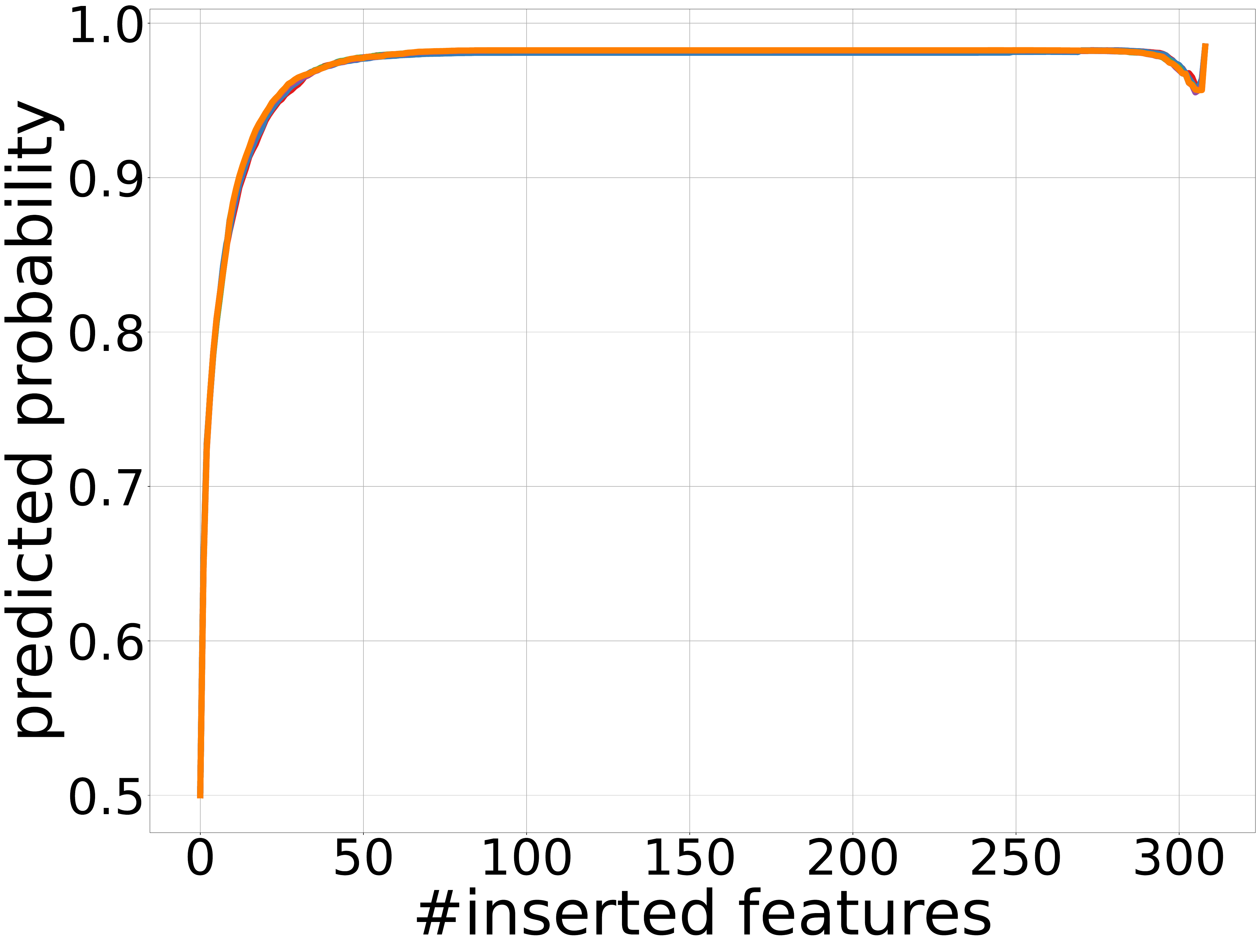} &
			\includegraphics[width=0.3\linewidth]{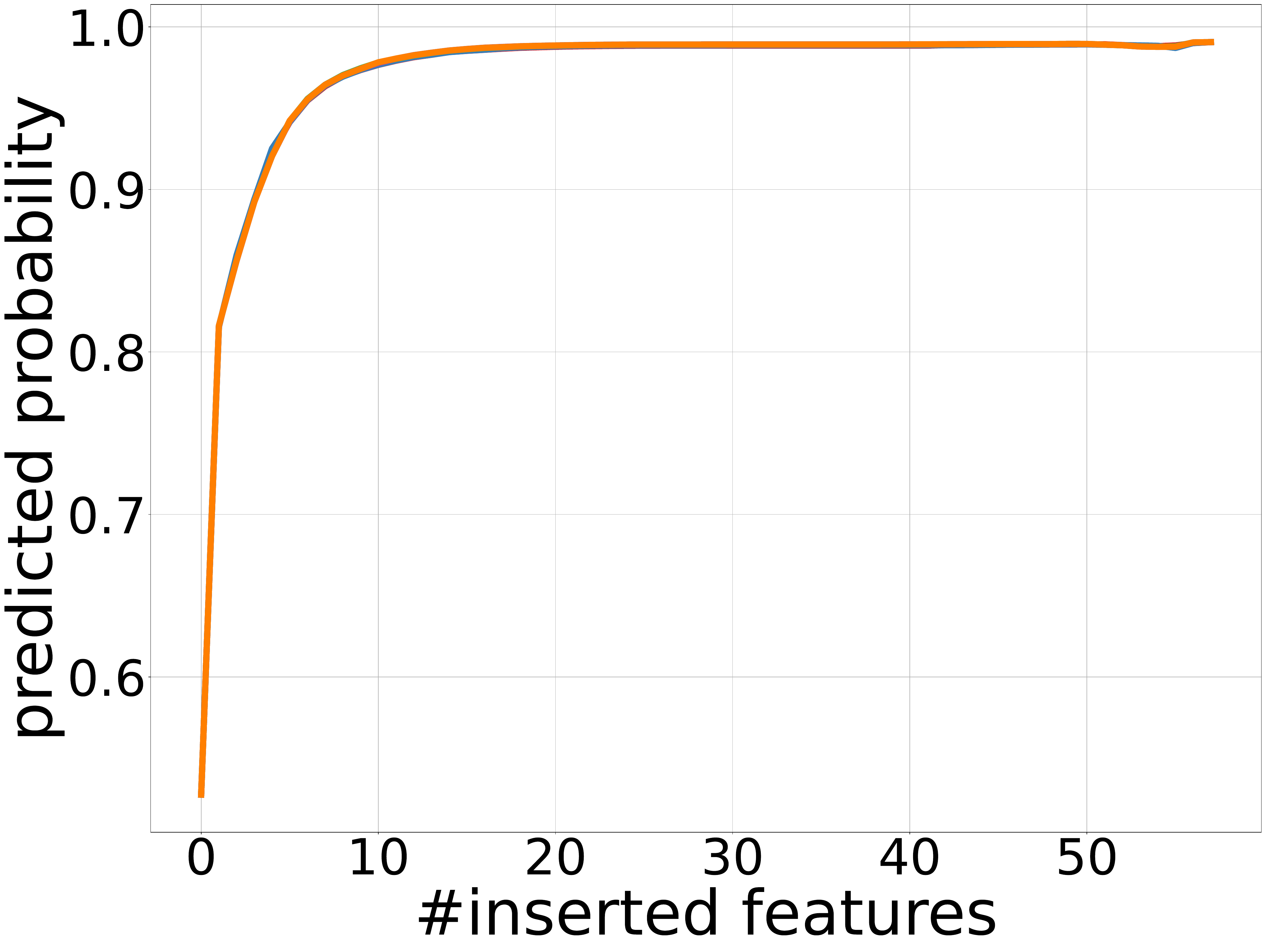} \\
			\includegraphics[width=0.3\linewidth]{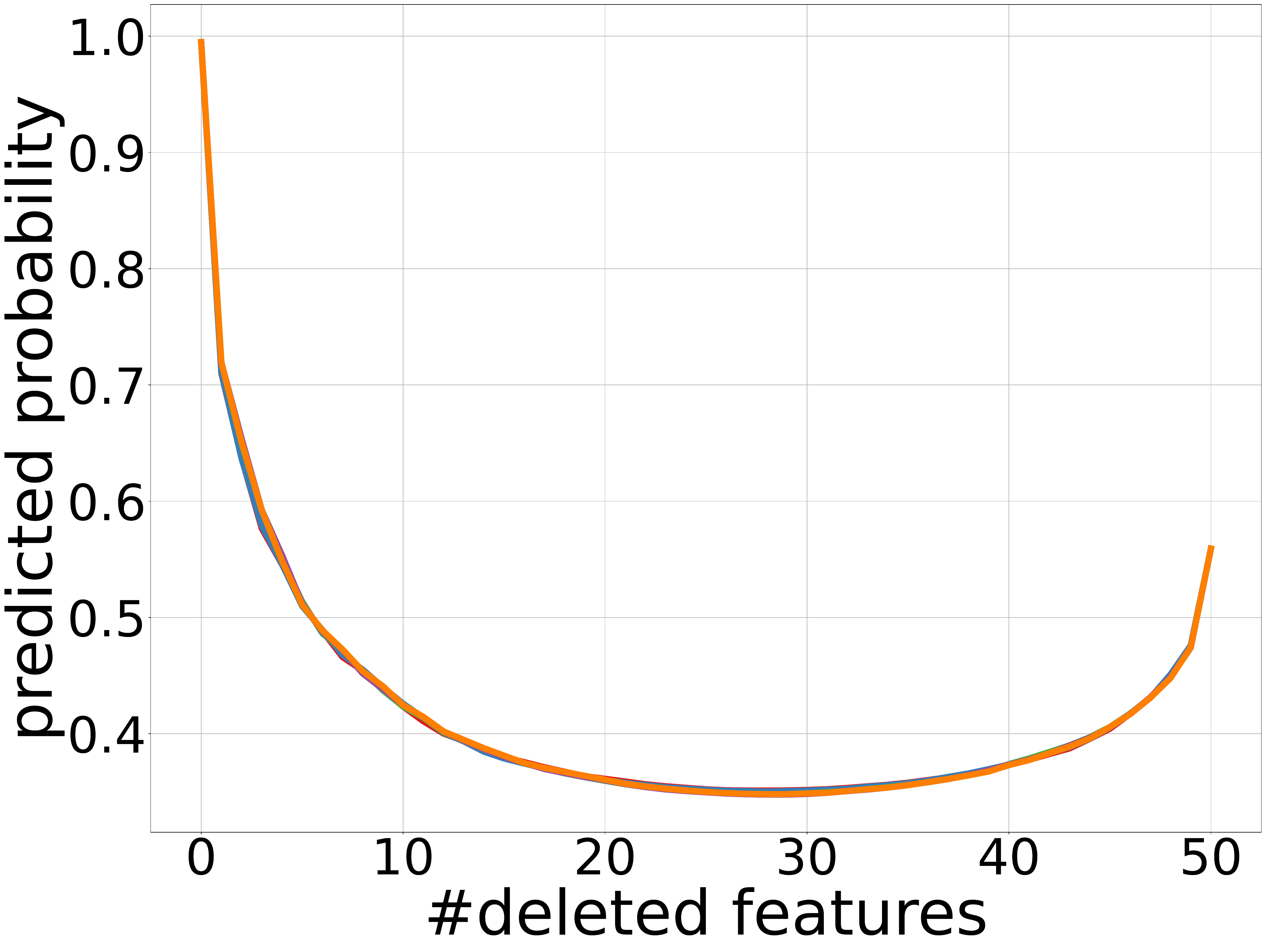} & \includegraphics[width=0.3\linewidth]{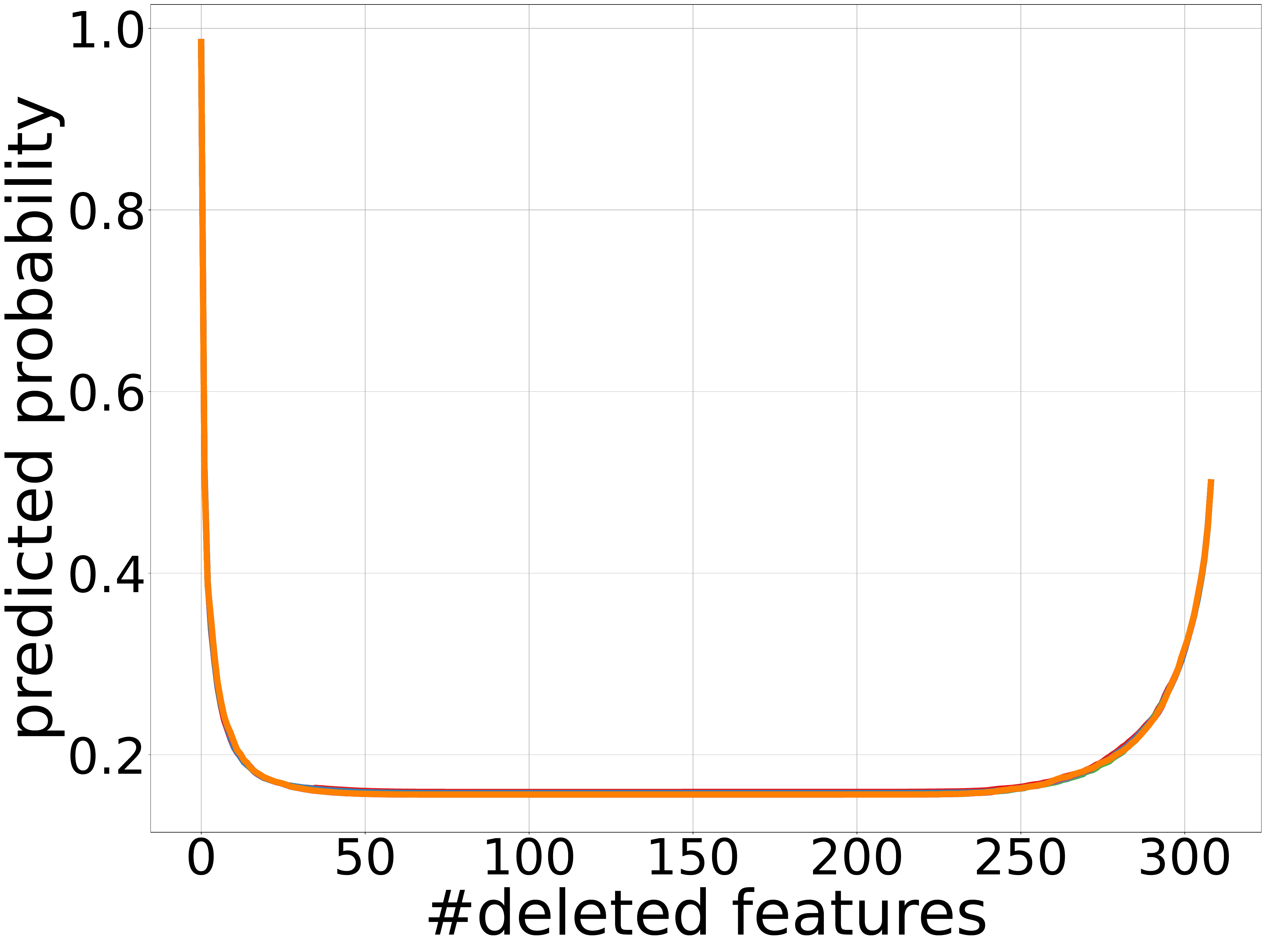} &
			\includegraphics[width=0.3\linewidth]{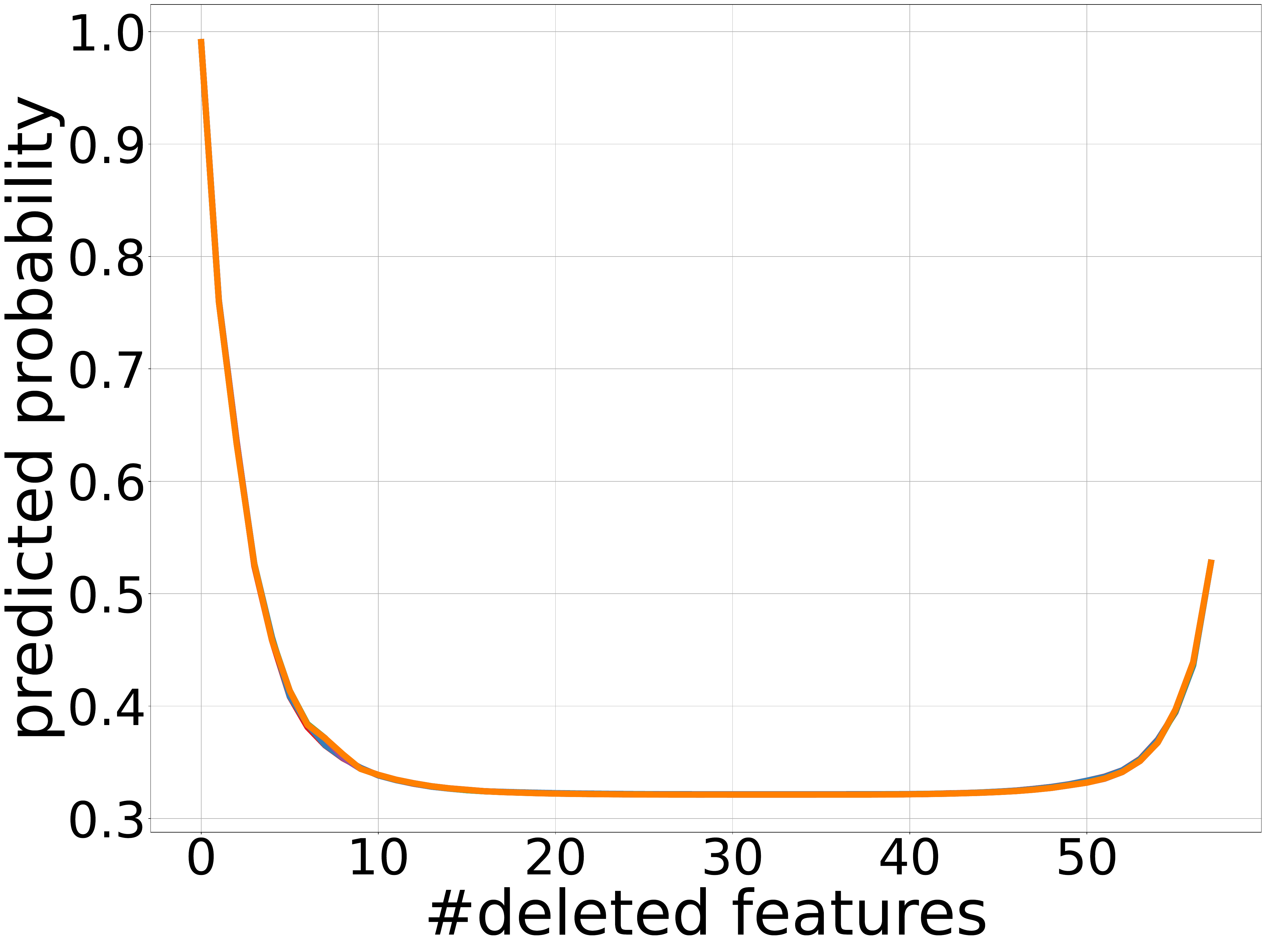} \\
			\phantom{a}MinibooNE ($D=20$) & \phantom{aaaa}philippine ($D=15$) & \phantom{aaaa}spambase ($D=15$)
		\end{tabular}
		\caption{Results of TreeGrad-Ranker with \textbf{ADAM}. All trained models are \textbf{decision trees}. The first row shows the insertion curves, while the second presents the deletion curves. A higher insertion curve or a lower deletion curve indicates better performance. The output of each $f(\mathbf{x})$ is set to be the probability of \textbf{its predicted class}.}
		\label{fig:cla-pre-second-T-adam}
	\end{figure*}
	
	\begin{figure*}[t]
		\centering
		\begin{tabular}{ccc}
			\multicolumn{3}{c}{\includegraphics[width=0.9\linewidth]{legend_ADAM.pdf}} \\
			\includegraphics[width=0.3\linewidth]{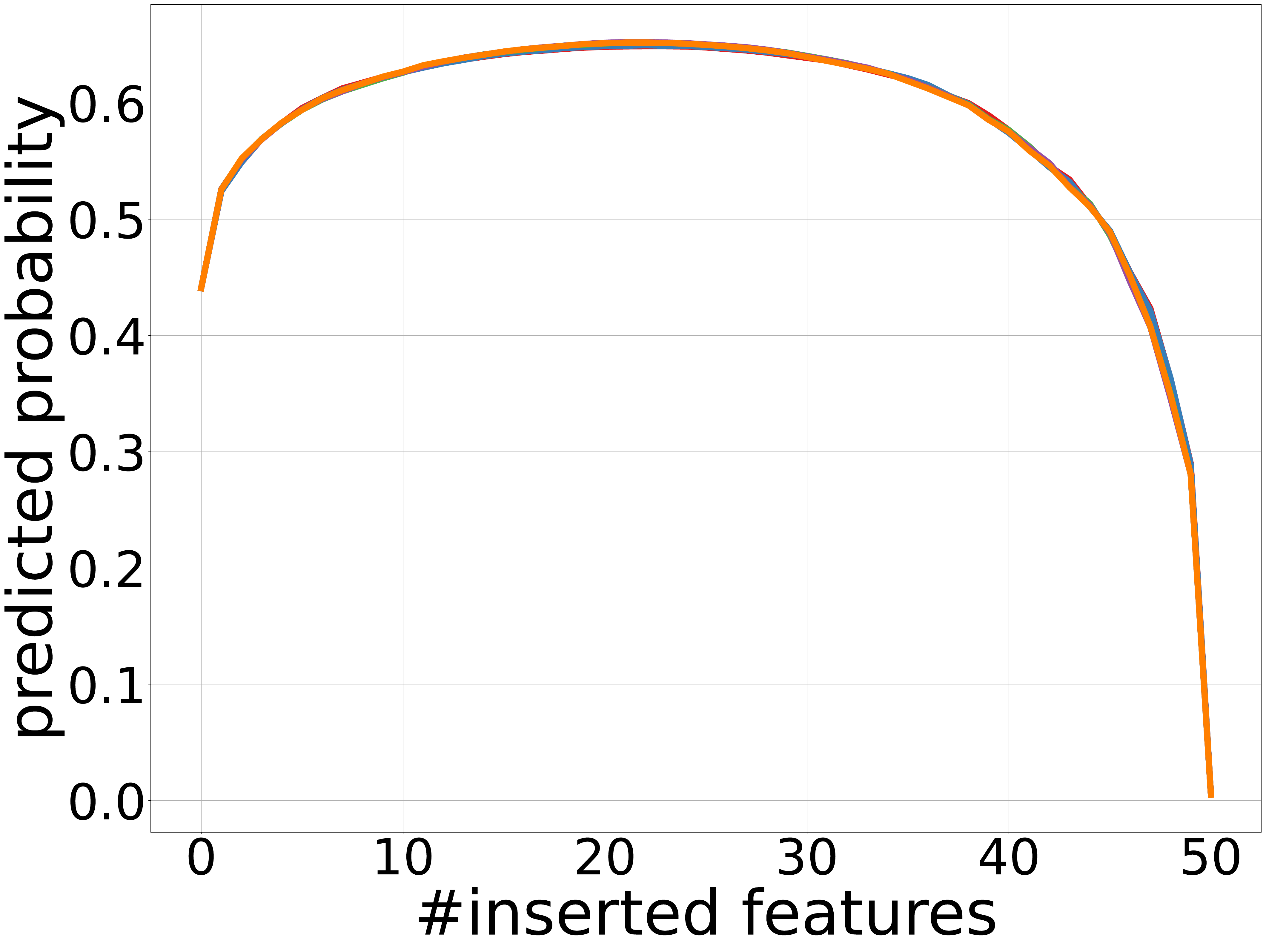} & \includegraphics[width=0.3\linewidth]{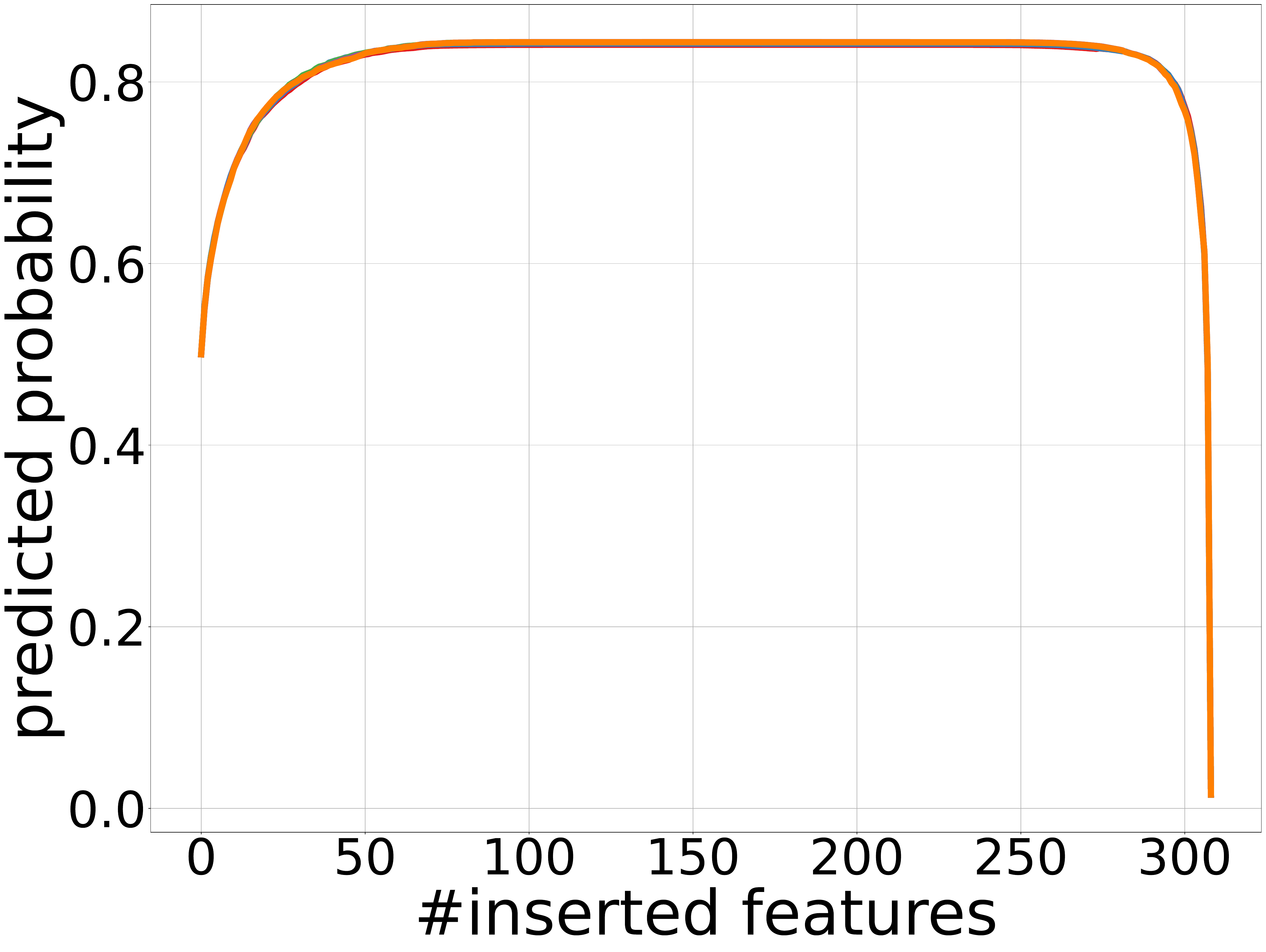} &
			\includegraphics[width=0.3\linewidth]{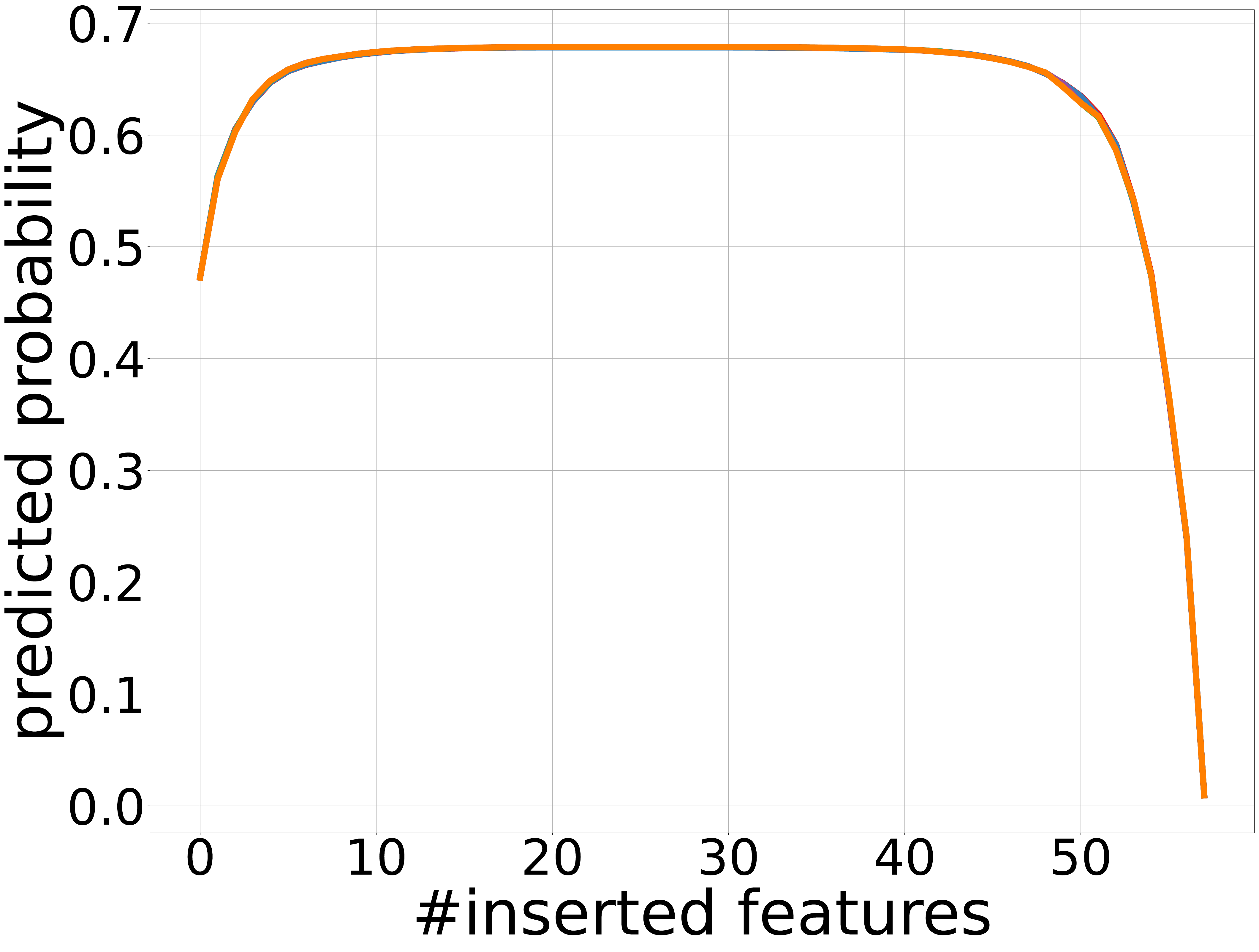} \\
			\includegraphics[width=0.3\linewidth]{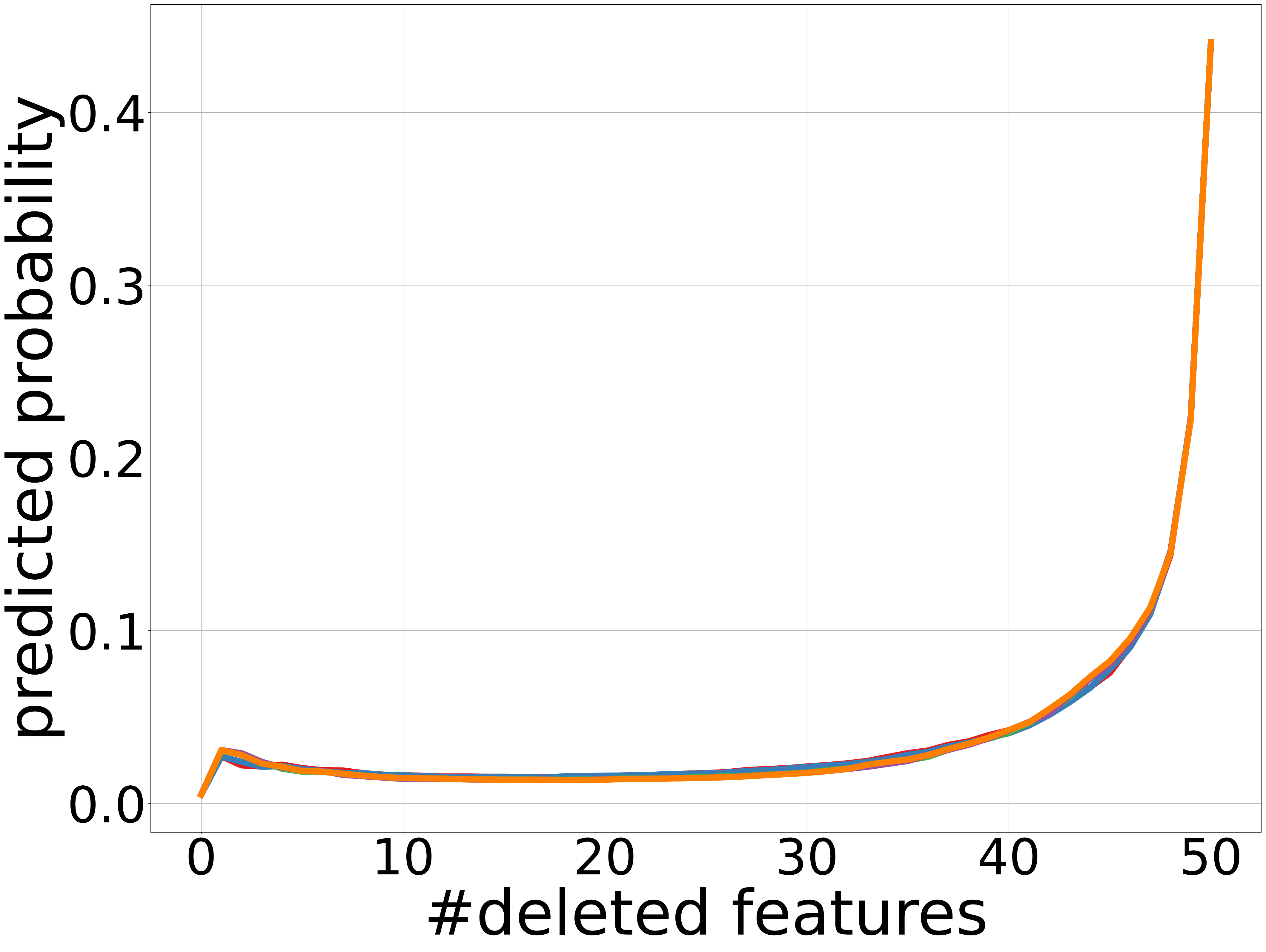} & \includegraphics[width=0.3\linewidth]{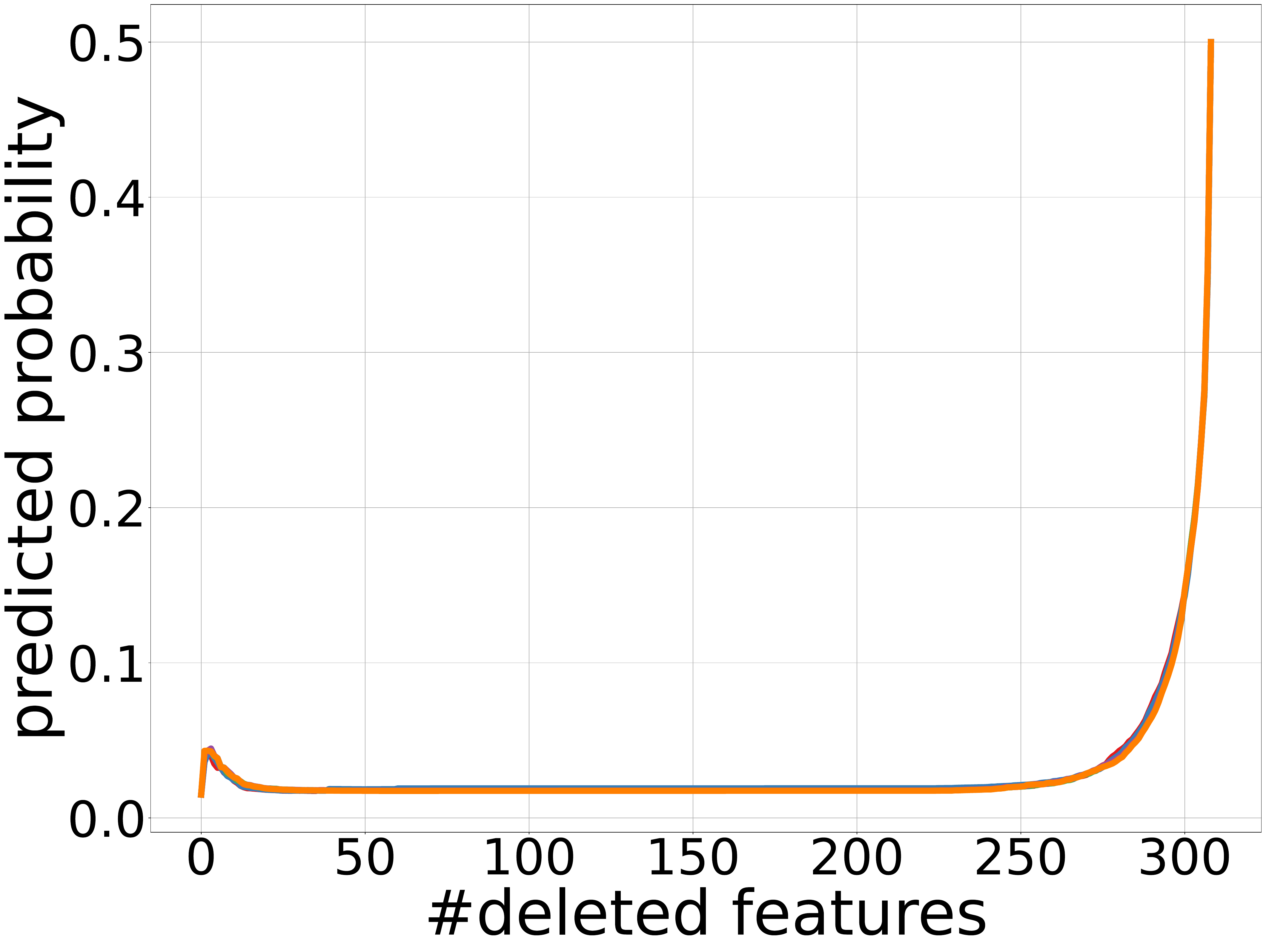} &
			\includegraphics[width=0.3\linewidth]{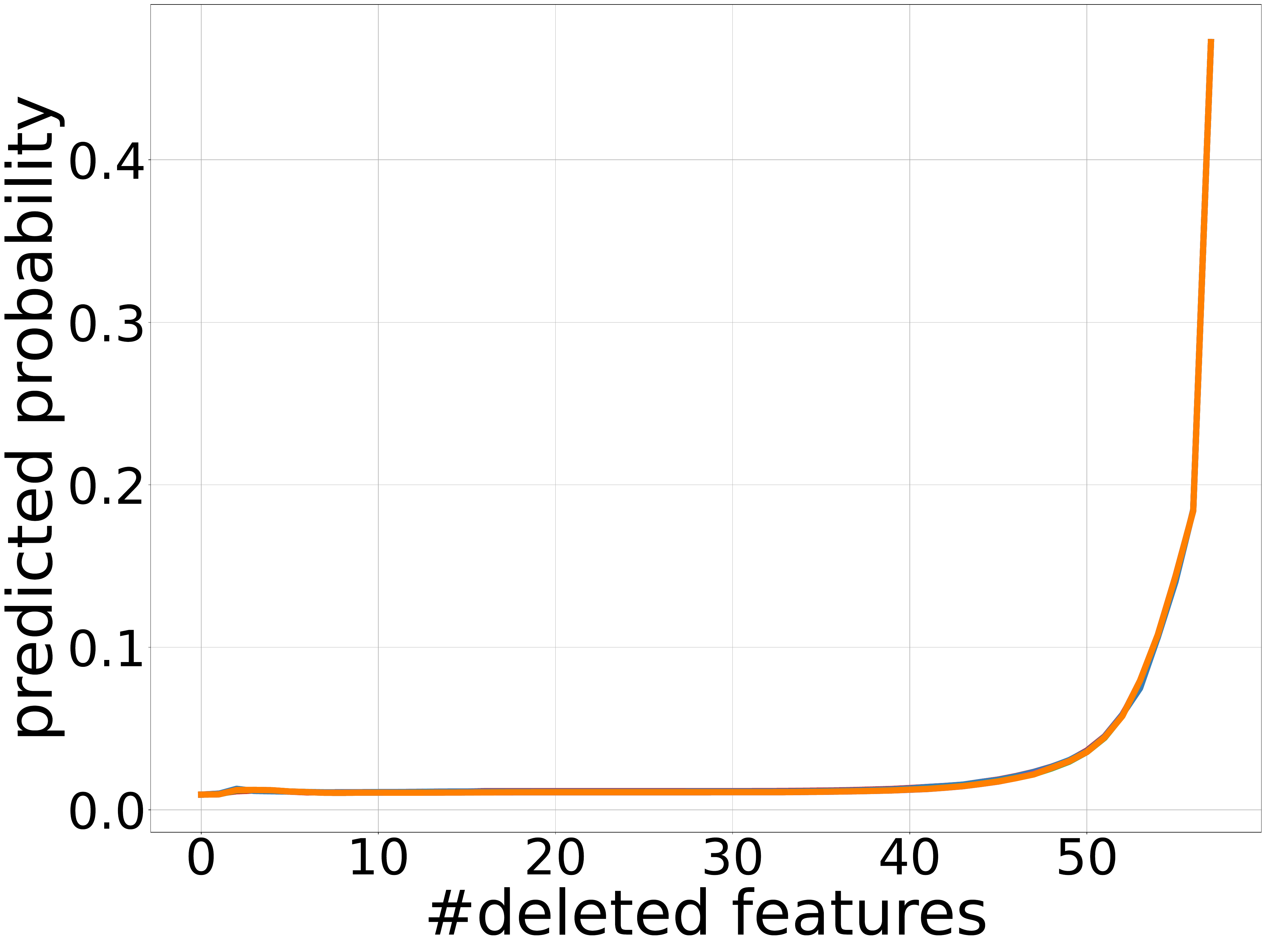} \\
			\phantom{a}MinibooNE ($D=20$) & \phantom{aaaa}philippine ($D=15$) & \phantom{aaaa}spambase ($D=15$)
		\end{tabular}
		\caption{Results of TreeGrad-Ranker with \textbf{ADAM}. All trained models are \textbf{decision trees}. The first row shows the insertion curves, while the second presents the deletion curves. A higher insertion curve or a lower deletion curve indicates better performance. The output of each $f(\mathbf{x})$ is set to be the probability of \textbf{a randomly sampled non-predicted class}.}
		\label{fig:cla-other-second-T-adam}
	\end{figure*}

	\begin{figure*}[t]
		\centering
		\begin{tabular}{ccc}
			\multicolumn{3}{c}{\includegraphics[width=0.9\linewidth]{legend_ADAM.pdf}} \\
			\includegraphics[width=0.3\linewidth]{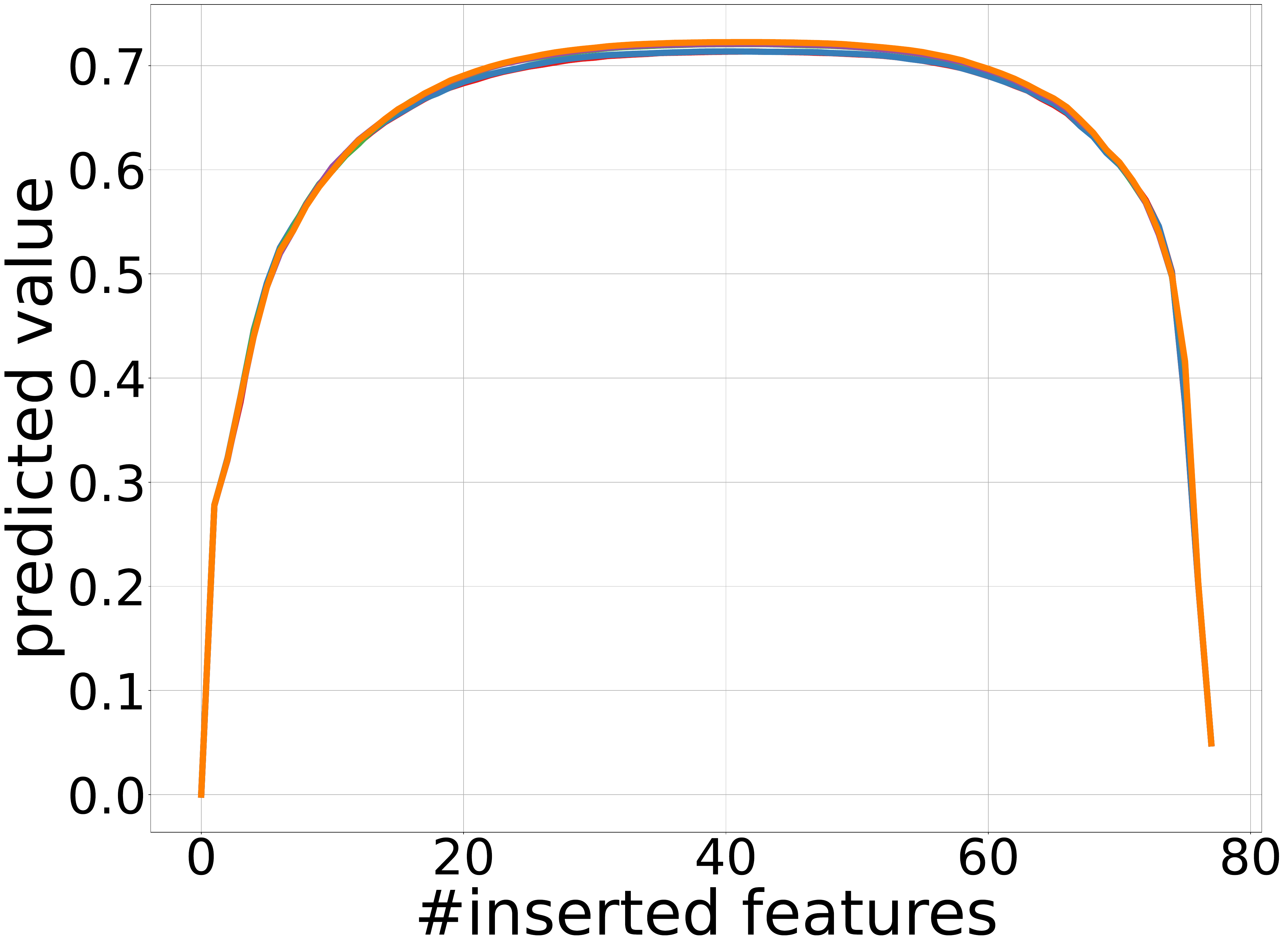} & \includegraphics[width=0.3\linewidth]{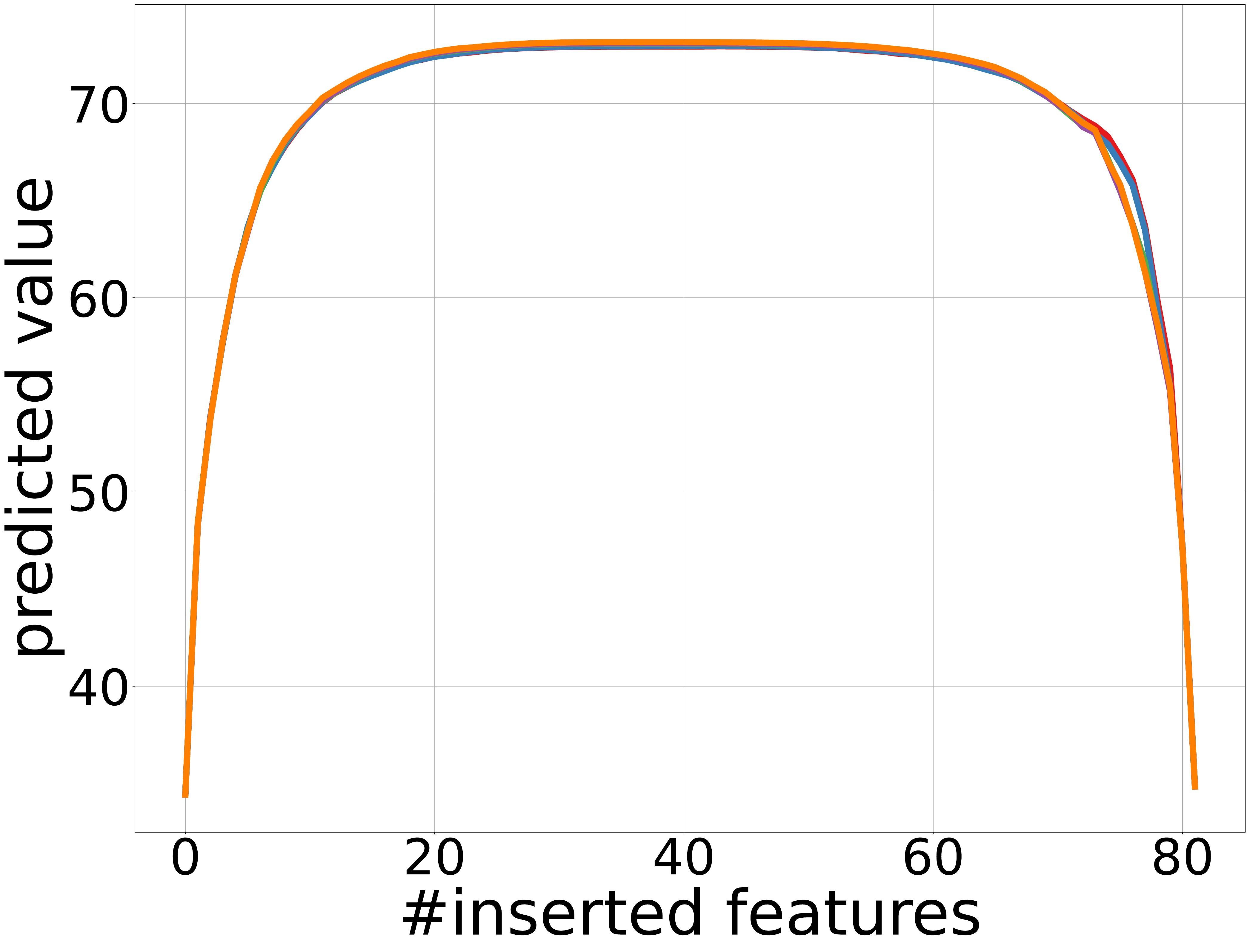} &
			\includegraphics[width=0.3\linewidth]{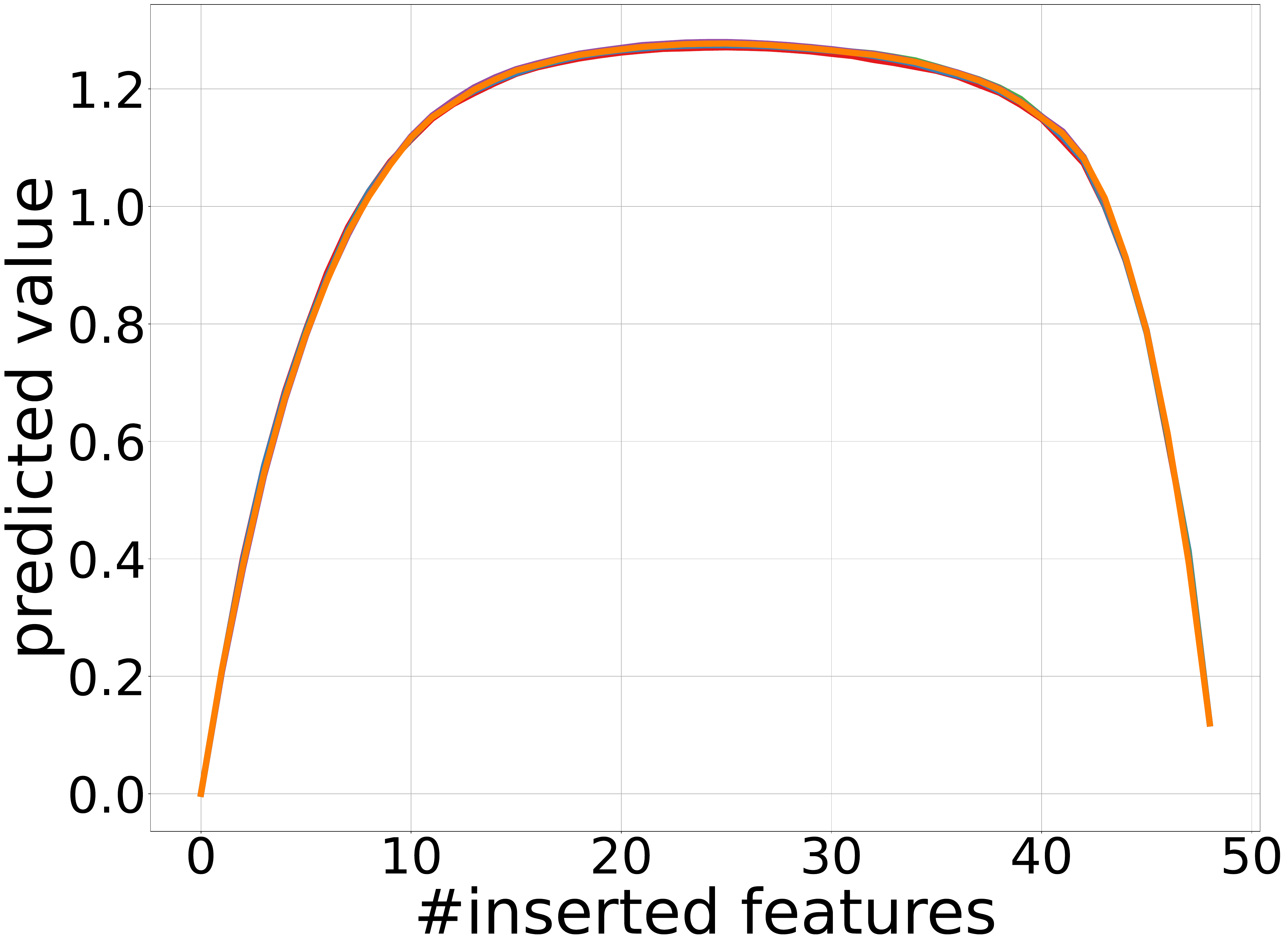} \\
			\includegraphics[width=0.3\linewidth]{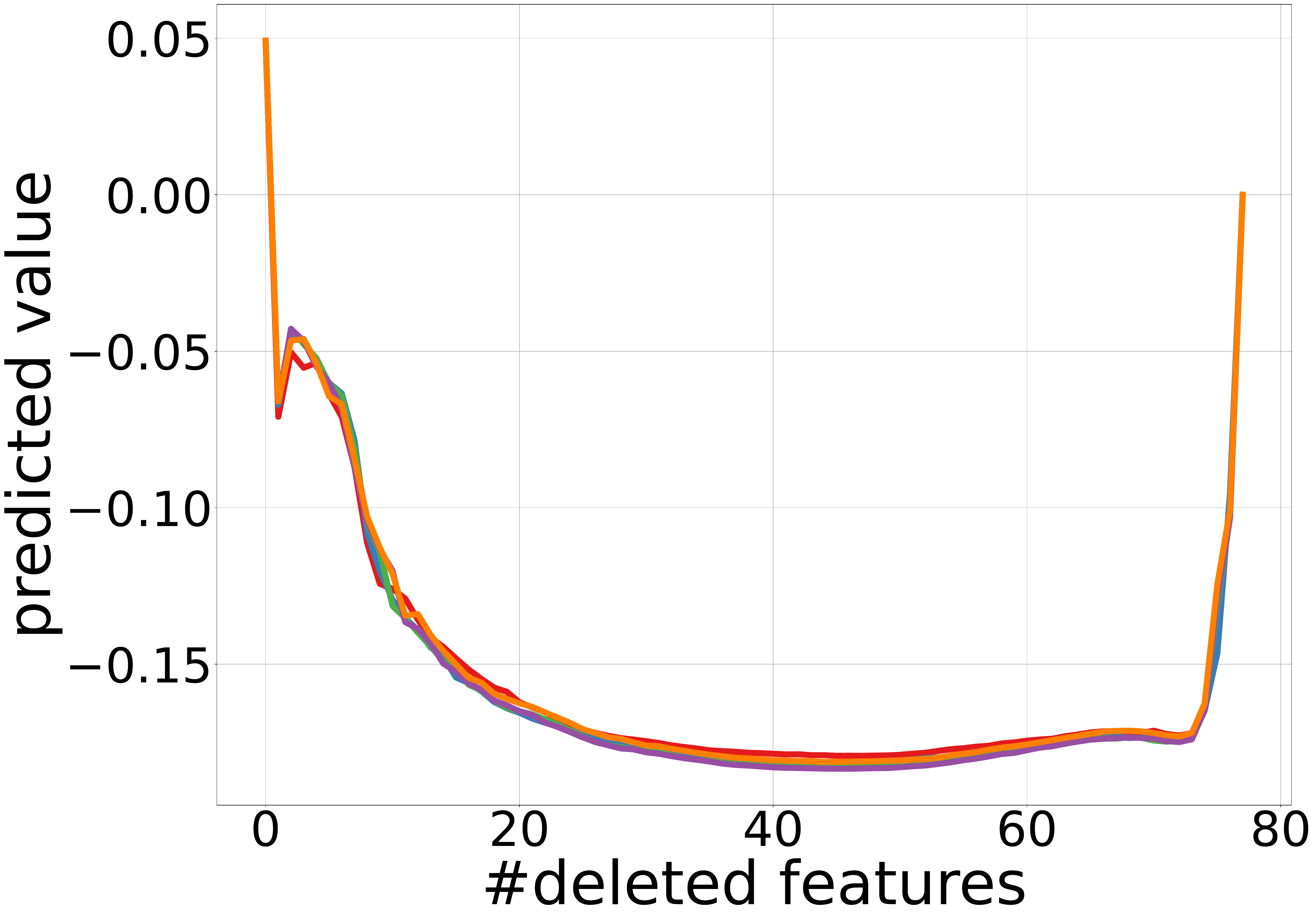} & \includegraphics[width=0.3\linewidth]{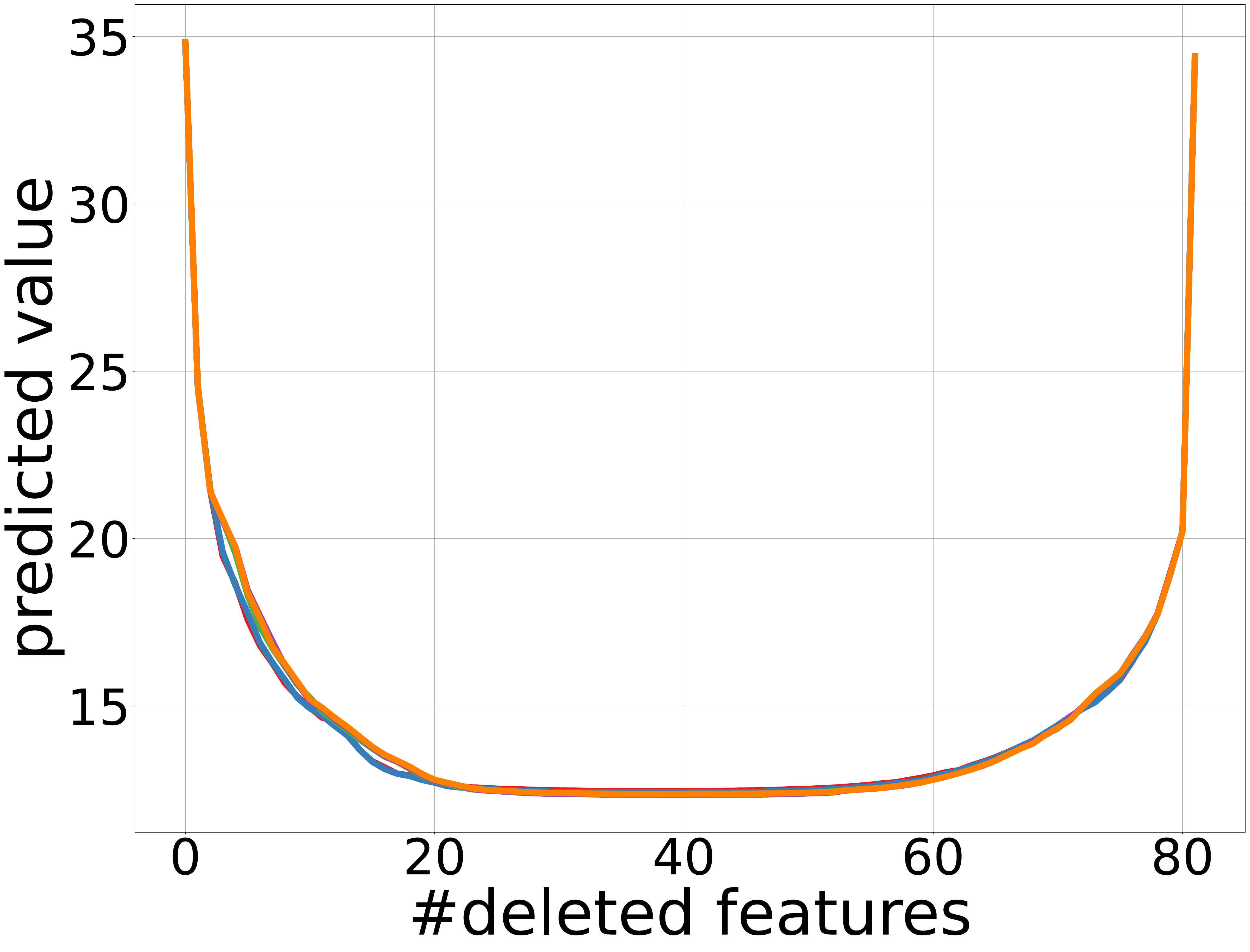} &
			\includegraphics[width=0.3\linewidth]{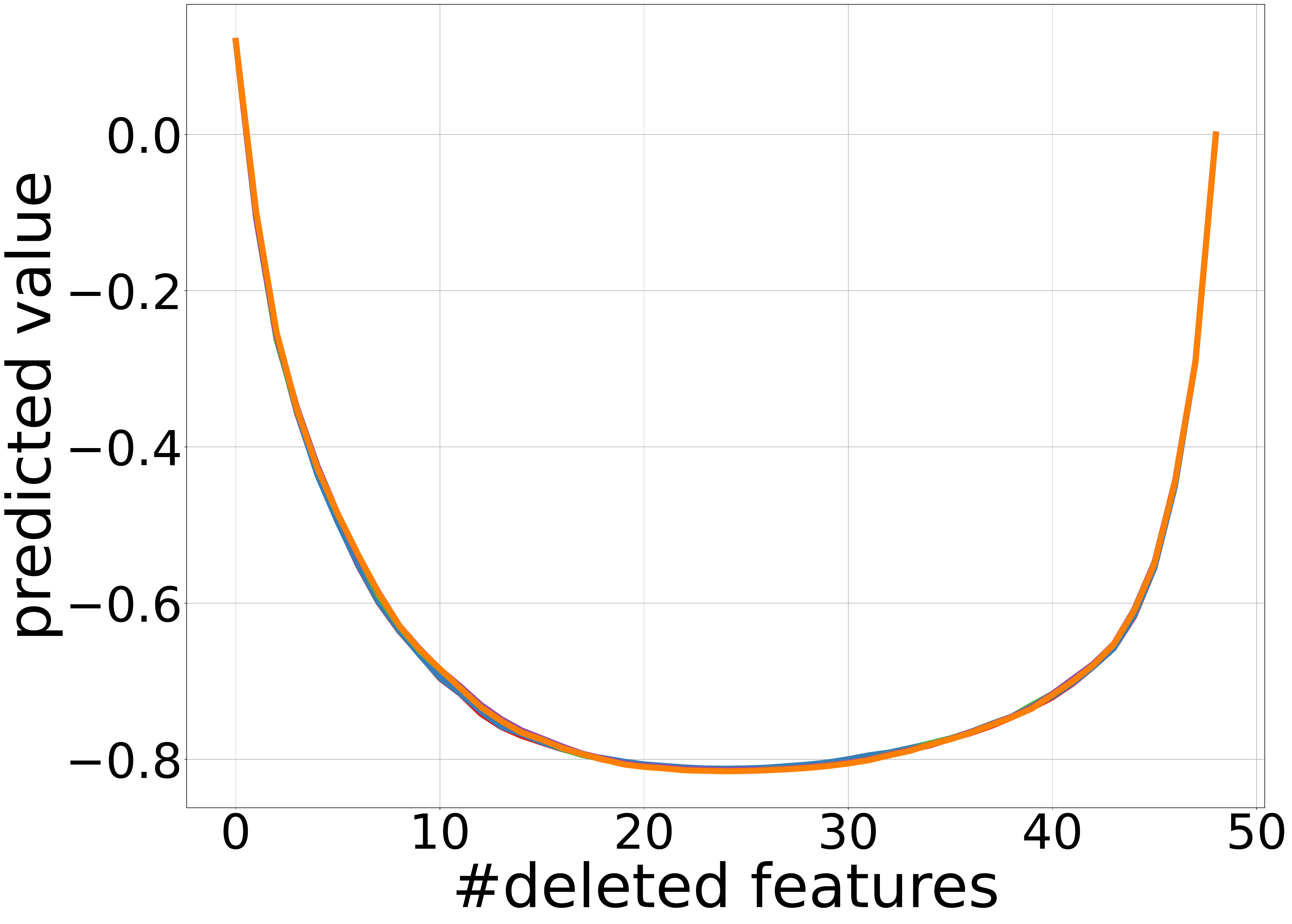} \\
			\phantom{aaaaa}BT ($D=50$) & \phantom{aaaa}superconduct ($D=10$) & \phantom{aaaa}wave\_energy ($D=30$)
		\end{tabular}
		\caption{Results of TreeGrad-Ranker with \textbf{ADAM}. All trained models are \textbf{decision trees}. The first row shows the insertion curves, while the second presents the deletion curves. A higher insertion curve or a lower deletion curve indicates better performance. The datasets used are regression tasks.}
		\label{fig:regression-T-adam}
	\end{figure*}

\end{document}